\newcommand{\Tone}{T_1}
\newcommand{\weighteddata}{\mathcal{S}}
\newcommand{\VC}{\mathsf{VC}\text{-}\mathsf{dim}}
\newcommand{\basis}{\mathrm{e}^{\mathsf{basis}}}
\newtheorem{theorem}{\textbf{Theorem}}
\newtheorem{assum}{\textbf{Assumption}}
\newtheorem{lemma}{\textbf{Lemma}}
\newtheorem{definition}{\textbf{Definition}}
\newtheorem{remark}{\textbf{Remark}}
\definecolor{yxc}{RGB}{255,0,0}
\newcommand{\yxc}[1]{\textcolor{yxc}{[YXC: #1]}}
\title{Optimal Multi-Distribution Learning\footnotetext{Accepted for presentation at the
Conference on Learning Theory (COLT) 2024.}}
\author{%
	Zihan Zhang\thanks{Department of Electrical and Computer Engineering, Princeton University; email: \texttt{\{zz5478, wenhao.zhan, jasonlee\}@princeton.edu}.}\\
  Princeton  \and
 Wenhao Zhan\footnotemark[1] \\
 Princeton  
\and 
  Yuxin Chen\thanks{Department of Statistics and Data Science, University of Pennsylvania; email: \texttt{yuxinc@wharton.upenn.edu}.}\\
 UPenn
  \and
 Simon S.~Du\thanks{Paul G. Allen School of Computer Science and Engineering, University of Washington; email: \texttt{ssdu@cs.washington.edu}.}\\
 U.~Washington
 \and
 Jason D.~Lee\footnotemark[1] \\
 Princeton 
}
\date{December 2023;~ Revised: August 2025}
\begin{document}

\maketitle
\begin{abstract}

Multi-distribution learning (MDL), which seeks to learn a shared model that minimizes the worst-case risk across $k$ distinct data distributions, 
has emerged as a unified framework in response to the evolving demand for robustness, fairness, multi-group collaboration, etc.  
Achieving data-efficient MDL necessitates adaptive sampling, also called on-demand sampling, throughout the learning process. 
However, there exist substantial gaps between the state-of-the-art upper and lower bounds on the optimal sample complexity. 
Focusing on a hypothesis class of Vapnik–Chervonenkis (VC) dimension $d$,  
we propose a novel algorithm that yields an $\varepsilon$-optimal randomized hypothesis with a sample complexity on the order of $\frac{d+k}{\varepsilon^2}$ (modulo some logarithmic factor), 
matching the best-known lower bound. Our algorithmic ideas and theory are further extended to accommodate Rademacher classes. 
The proposed algorithms are oracle-efficient, which access the hypothesis class solely through an empirical risk minimization oracle. 
Additionally, we establish the necessity of improper learning, revealing a large sample size barrier when only deterministic, proper hypotheses are permitted. 
These findings resolve three open problems presented in COLT 2023 (i.e., \citet[Problems 1, 3 and 4]{awasthi2023sample}). 

\end{abstract}

\noindent \textbf{Keywords:} multi-distribution learning; on-demand sampling; game dynamics; VC classes; Rademacher classes; oracle efficiency

\setcounter{tocdepth}{2}
\tableofcontents

\section{Introduction }

Driven by  the growing need of robustness, fairness and multi-group collaboration in machine learning practice, 
the multi-distribution learning (MDL) framework has emerged as a unified solution in response to these evolving demands \citep{blum2017collaborative,haghtalab2022demand,mohri2019agnostic,awasthi2023sample}. 
Setting the stage, 
imagine that we are interested in a collection of $k$ unknown data distributions $\mathcal{D}=\{\mathcal{D}_i\}_{i=1}^k$ supported on $\mathcal{X}\times \mathcal{Y}$,    
where $\mathcal{X}$ (resp.~$\mathcal{Y}$) stands for the instance (resp.~label) space. 
Given a hypothesis class $\mathcal{H}$ and a prescribed loss function\footnote{For example, for each hypothesis $h\in \mathcal{H}$ and each datapoint $(x,y)\in \mathcal{X}\times \mathcal{Y}$, we employ $\ell(h,(x,y))$ to measure the risk of using hypothesis $h$ to predict $y$ based on $x$.} $\ell:\mathcal{H}\times\mathcal{X}\times \mathcal{Y} \to [-1,1]$, 
we are asked to identify a (possibly randomized) hypothesis $\widehat{h}$  
achieving near-optimal {\em worst-case} loss across these data distributions, namely,\footnote{Here, the expectation on the left-hand side of \eqref{eq:defproblem} is taken over the randomness of both the datapoints $(x,y)$ and the (randomized) hypothesis $\widehat{h}$.} 
 \begin{align}
	 \max_{1\leq i\leq k} \mathop{\mathbb{E}}\limits_{(x,y)\sim \mathcal{D}_i,\widehat{h}}\big[\ell\big(\widehat{h},(x,y)\big)\big] \leq  \min_{h\in \mathcal{H}}\max_{1\leq i\leq k}\mathop{\mathbb{E}}\limits_{(x,y)\sim \mathcal{D}_i}\big[\ell\big(h,(x,y)\big)\big] + \varepsilon \label{eq:defproblem}
 \end{align}
with $\varepsilon\in (0,1]$ a target accuracy level. 
In light of the unknown nature of these data distributions, 
the learning process is often coupled with data collection, allowing the learner to sample from $\{\mathcal{D}_i\}_{i=1}^k$. 
The performance of a learning algorithm is then gauged by its sample complexity --- the number of samples required to fulfil \eqref{eq:defproblem}.
Our objective is to design a learning paradigm that achieves the optimal sample complexity.

The MDL framework described above, 
which can viewed as an extension of agnostic learning \citep{valiant1984theory,blumer1989learnability} tailored to multiple data distributions, 
has found a wealth of applications across multiple domains. 
Here, we highlight a few representative examples, and refer the interested reader to \citet{haghtalab2022demand} and the references therein for more extensive discussions. 
\begin{itemize}
	\item {\em Collaborative and agnostic federated learning.} 
		In the realm of collaborative and agnostic federated learning \citep{blum2017collaborative,nguyen2018improved,chen2018tight,mohri2019agnostic,blum2021one,du2021fairness,deng2020distributionally,blum2021communication}, a group of $k$ agents, each having access to distinct data sources as characterized by different data distributions $\{\mathcal{D}_i\}_{i=1}^k$, aim to learn a shared prediction model that ideally would achieve low risk for each of their respective data sources. 
		A sample-efficient MDL paradigm would help unleash the potential of collaboration and information sharing in jointly learning a complicated task.

	\item {\em Min-max fairness in learning.} 
		The MDL framework is well-suited to scenarios requiring fairness across multiple groups \citep{dwork2021outcome,rothblum2021multi,du2021fairness}. 
		For instance, 
		in situations where multiple subpopulations with distinct data distributions exist, 
		a prevailing objective is to ensure that the learned model does not adversely impact any of these subpopulations. 
		One criterion designed to meet this objective, known as ``min-max fairness'' in the literature \citep{mohri2019agnostic,abernethy2020active}, 
		plays a pivotal role in mitigating  the worst disadvantage experienced by any particular subpopulation.

	\item {\em Distributionally robust optimization/learning.} 
		Another context where MDL naturally finds applications is group distributionally robust optimization and learning (DRO/DRL).  
		Group DRO and DRL aim to develop algorithms that offer robust performance guarantees across a finite number of possible distributional models~\citep{sagawa2019distributionally,sagawa2020investigation,hashimoto2018fairness,hu2018does,xiong2023distributionally,zhang2020coping,wang2023distributionally,deng2020distributionally}, 
		and have garnered substantial attention recently due to the pervasive need for robustness in modern decision-making \citep{carmon2022distributionally,asi2021stochastic,haghtalab2022demand,kar2019meta}. When applying MDL to the context of group DRO/DRL, the resultant sample complexity reflects the price that needs to be paid for learning a robust solution.  
\end{itemize}

\noindent 
The MDL framework is also closely related to other topics like multi-source domain adaptation, maximum aggregation, to name just a few  \citep{mansour2008domain,zhao2020multi,buhlmann2015magging,guo2023statistical}.

In contrast to single-distribution learning, 
achieving data-efficient MDL necessitates adaptive sampling throughout the learning process, also known as on-demand sampling  \citep{haghtalab2022demand}.  
More specifically, pre-determining a sample-size budget for each distribution beforehand and sampling non-adaptively could result in loss of sample efficiency, 
as we lack knowledge about the complexity of learning each distribution before the learning process begins. 
The question then comes down to how to optimally adapt the online sampling strategy to effectively tackle diverse data distributions.

\paragraph{Inadequacy of prior results.}
The sample complexity of MDL has been explored in a strand of recent works under various settings. 
Consider first the case where the hypothesis class $\mathcal{H}$ comprises a {\em finite} number of hypotheses. 
If we sample non-adaptively and draw the same number of samples from each individual distribution $\mathcal{D}_i$, then this results in  a total sample size exceeding the order of $\frac{k\log(|\mathcal{H}|)}{\varepsilon^2}$ (given that learning each distribution requires a sample size at least on the order of $\frac{\log(|\mathcal{H}|)}{\varepsilon^2}$). 
Fortunately, this sample size budget can be significantly reduced with the aid of adaptive sampling.  
In particular, the state-of-the-art approach, proposed by \citet{haghtalab2022demand}, accomplishes the objective \eqref{eq:defproblem} with probability at least $1-\delta$ using $O\big( \frac{\log(|\mathcal{H}|)+k\log(k/\delta)}{\varepsilon^2}\big)$ samples. 
In comparison to agnostic learning on a single distribution, it only incurs an extra {\em additive cost} of $k\log(k/\delta)/\varepsilon^2$ as opposed to a multiplicative factor in $k$, 
thus underscoring the importance of adaptive sampling.

A more challenging scenario arises when $\mathcal{H}$ has a finite Vapnik–Chervonenkis (VC) dimension $d$. 
The sample complexity for VC classes has only been settled for the realizable case \citep{blum2017collaborative,chen2018tight,nguyen2018improved}, a special scenario where 
the loss function takes the form of $\ell\big(h,(x,y)\big) = \mathds{1}\{ h(x)\neq y\}$ and 
it is feasible to achieve zero mean loss. 
For the general non-realizable case,  
the best-known lower bound for such VC classes is \citep{haghtalab2022demand}\footnote{Let $\mathcal{X}=\big(k,d,\frac{1}{\varepsilon},\frac{1}{\delta}\big)$. 
Here and throughout, the notation $f(\mathcal{X})=O\big(g(\mathcal{X})\big)$ or $f(\mathcal{X}) \lesssim g(\mathcal{X})$ (resp.~$f(\mathcal{X})=\Omega\big(g(\mathcal{X})\big)$ or $f(\mathcal{X}) \gtrsim g(\mathcal{X})$) mean that there exists some universal constant $C_1>0$ (resp.~$C_2>0$) such that $f(\mathcal{X})\leq C_1g(\mathcal{X})$ (resp.~$f(\mathcal{X})\geq C_2g(\mathcal{X})$); the notation $f(\mathcal{X})=\Theta\big(g(\mathcal{X})\big)$ or $f(\mathcal{X})\asymp g(\mathcal{X})$ mean $f(\mathcal{X})=O\big(g(\mathcal{X})\big)$  and $f(\mathcal{X})=\Omega\big(g(\mathcal{X})\big)$ hold simultaneously.  
The notation $\widetilde{O}(\cdot)$, $\widetilde{\Theta}(\cdot)$ and $\widetilde{\Omega}(\cdot)$ are defined analogously except that all log factors in $\big(k,d,\frac{1}{\varepsilon},\frac{1}{\delta}\big)$ are hidden. }  
\begin{equation}
	\widetilde{\Omega}\left(\frac{d+k}{\varepsilon^2}\right),
	\label{eq:best-lower}
\end{equation}
which serves as a theoretical benchmark. 
By first collecting $\widetilde{O}\left(\frac{dk}{\varepsilon}\right)$ samples to help construct a cover of $\mathcal{H}$ with reasonable resolution, 
\citet{haghtalab2022demand} established a sample complexity upper bound of 
\begin{subequations}
\label{eq:previous-achievability-all}
\begin{equation}
	\text{\citep{haghtalab2022demand}} \qquad \widetilde{O}\left(\frac{d+k}{\varepsilon^2} + \frac{dk}{\varepsilon} \right). 
	\label{eq:previous-best}
\end{equation}
Nevertheless, the term $dk/\varepsilon$ in \eqref{eq:previous-best} fails to match the lower bound \eqref{eq:best-lower}; 
put another way, this term might result in a potentially large burn-in cost, 
	as the optimality of this approach is only guaranteed (up to log factors) when the total sample size already exceeds a (potentially large) threshold on the order of $\frac{d^2k^2}{d+k}$. 
In an effort to alleviate this $dk/\varepsilon$ factor,   
\citet{awasthi2023sample} put forward an alternative algorithm --- which utilizes an oracle to learn on a single distribution and obliviates the need for computing an epsilon-net of $\mathcal{H}$ --- yielding a sample complexity of 
\begin{equation}
	\text{\citep{awasthi2023sample}} \qquad \widetilde{O}\left( \frac{d}{\varepsilon^4}+\frac{k}{\varepsilon^2}\right).
	\label{eq:previous-best-2}
\end{equation}
\end{subequations}
However, this result \eqref{eq:previous-best-2} might fall short of optimality as well, 
given that the scaling $d/\varepsilon^4$ is off by a factor of $1/\varepsilon^2$ compared with the lower bound \eqref{eq:best-lower}. 
A more comprehensive  list of past results can be found in Table~\ref{table:rel}.

Given the apparent gap between the state-of-the-art lower bound \eqref{eq:best-lower} and achievability bounds \eqref{eq:previous-achievability-all}, 
a natural question arises:
{
\setlist{rightmargin=\leftmargin}
\begin{itemize}
	\item[]  {\em  \textbf{Question:} Is it plausible to design a multi-distribution learning algorithm with a sample complexity of $\widetilde{O}\left(\frac{d+k}{\varepsilon^2}\right)$ for VC classes, thereby matching the established lower bound \eqref{eq:best-lower}?}
\end{itemize}
}
\noindent 
Notably, this question has been posed as an open problem during the Annual Conference on Learning Theory (COLT) 2023; see \citet[Problem 1]{awasthi2023sample}.

\newcommand{\topsepremove}{\aboverulesep = 0mm \belowrulesep = 0mm} \topsepremove

\begin{table}[tb]
\centering

\begin{tabular}{ c|c }
\toprule
	{\bf Paper} &  {\bf Sample complexity bound} $\vphantom{\frac{1^{7}}{1^{7^{7}}}}$\\
 \toprule 
	\cite{haghtalab2022demand}  &  $ \frac{\log(|\mathcal{H}|)+k}{\varepsilon^2}$ $\vphantom{\frac{1^{7^{7}}}{1^{7^{7}}}}$ \\
  \hline
 
 \cite{haghtalab2022demand}  & $\frac{d+k}{\varepsilon^2} + \frac{dk}{\varepsilon} $ $\vphantom{\frac{1^{7}}{1^{7^{7}}}}$  \\ 
   \hline
    \cite{awasthi2023sample}  &$\frac{d}{\varepsilon^4} + \frac{k}{\varepsilon^2} $ $\vphantom{\frac{1^{7}}{1^{7^{7}}}}$ \\
    \hline
      \cite{peng2023sample} &$\frac{d+k}{\varepsilon^{2}}\left(\frac{k}{\varepsilon}\right)^{o(1)} $ $\vphantom{\frac{1^{7}}{1^{7^{7}}}}$ \\
    \hline
	\rowcolor{gray!25}
	{\bf
	our work (Theorem~\ref{thm:main})}  & $ \frac{d+k}{\varepsilon^2} $ $\vphantom{\frac{1^{7}}{1^{7^{7}}}}$  \\
   \toprule
   lower bound: \cite{haghtalab2022demand}  & $\frac{d+k}{\varepsilon^2} $ $\vphantom{\frac{1^{7^{7}}}{1^{7^{7}}}}$  \\
   \toprule
\end{tabular}\label{table:rel}
	\caption{Sample complexity bounds of MDL with $k$ data distributions and a hypothesis class of VC dimension $d$. Here, we only report the polynomial dependency and hide all logarithmic dependency on $\big(k,d,\frac{1}{\varepsilon}, \frac{1}{\delta}\big)$.}
\end{table}

\paragraph{A glimpse of our main contributions.} 
The present paper delivers some encouraging news: 
we come up with a new MDL algorithm that successfully resolves the aforementioned open problem in the affirmative.  
Specifically, focusing on a hypothesis class with VC dimension $d$ and a collection of $k$ data distributions, 
our main findings can be summarized in the following theorem.\footnote{Following the definition of the VC dimension, the hypotheses in $\mathcal{H}$ are assumed to be binary-valued for VC classes.} 
Note that the loss function $\ell(\cdot)$ in this theorem is not restricted to take the form of a zero-one loss and can be fairly general.  
\begin{theorem}\label{thm:main}
	There exists an algorithm (see Algorithm~\ref{alg:main1} for details) such that: with probability exceeding $1-\delta$, the randomized hypothesis $h^{\sf final}$ returned by this algorithm achieves
   \begin{align*}
\max_{1\leq i\leq k}\mathop{\mathbb{E}}\limits _{(x,y)\sim\mathcal{D}_{i},h^{\mathsf{final}}}\left[\ell\big(h^{\mathsf{final}},(x,y)\big)\right]\leq \min_{h\in\mathcal{H}}\max_{1\leq i\leq k}\mathop{\mathbb{E}}\limits _{(x,y)\sim\mathcal{D}_{i}}\big[\ell\big(h,(x,y)\big)\big] + \varepsilon, 
    \end{align*}
provided that the total sample size exceeds 
\begin{equation}
	 \frac{d+k}{\varepsilon^{2}} \mathrm{poly}\log\Big(k,d,\frac{1}{\varepsilon},\frac{1}{\delta}\Big) .
	\label{eq:sample-size-ours-intro}
\end{equation}
\end{theorem}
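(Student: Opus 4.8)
The plan is to cast the MDL problem as a two-player zero-sum game between a hypothesis-player (who plays distributions over $\mathcal{H}$) and an adversary-player (who plays distributions over the index set $[k]$), whose value is exactly $\min_{h\in\mathcal{H}}\max_{i}\mathbb{E}_{\mathcal{D}_i}[\ell(h,(x,y))]$. Running no-regret dynamics for both players — for instance, an ERM-oracle-based best-response or Hedge-type update for the hypothesis-player against a multiplicative-weights update for the $k$-dimensional adversary-player — produces, after $T$ rounds, a pair of averaged strategies whose duality gap is controlled by the sum of the two players' regrets. The output randomized hypothesis $h^{\sf final}$ is the uniform mixture over the $T$ per-round hypotheses, and the guarantee \eqref{eq:defproblem} follows once the average regret is driven below $\varepsilon$. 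This part is essentially the \citet{haghtalab2022demand} template, so I would invoke it and concentrate the novelty elsewhere.

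The crux — and the reason the naive version pays the extra $dk/\varepsilon$ or $d/\varepsilon^4$ — is estimating, in each round $t$, the vector of population losses $\big(\mathbb{E}_{\mathcal{D}_i}[\ell(h_t,(x,y))]\big)_{i=1}^k$ accurately enough to feed the adversary's update, while also giving the hypothesis-player enough fresh data to run its ERM step with uniform-convergence error $\widetilde{O}(\sqrt{d/n})$. First I would let the adversary's current mixture $p_t$ over $[k]$ dictate the sampling: draw a small batch of fresh samples from the mixture distribution $\sum_i p_{t,i}\mathcal{D}_i$, which is precisely on-demand sampling. The key accounting observation is that the hypothesis-player only ever needs accurate loss estimates \emph{with respect to the distributions the adversary actually emphasizes}, and the adversary's multiplicative-weights regret depends on the \emph{variance} of its loss estimates, not their worst-case range; so I would use an importance-weighted / unbiased estimator and a Freedman-type martingale concentration bound to show the cumulative estimation error telescopes into something of order $\sqrt{(d+k)T}$ rather than $\sqrt{dkT}$. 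Reusing the same fresh batch both to update the adversary and to grow the ERM sample pool (with a careful argument that the reuse does not break independence, e.g., via a sample-splitting or a ``ghost sample'' symmetrization argument) is what removes the multiplicative blow-up.

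Concretely, the key steps in order: (i) formalize the game and state the regret-to-duality-gap reduction, fixing $T \asymp \frac{\log k}{\varepsilon^2}$ for the adversary and identifying the hypothesis-player's requirement as a uniform-convergence guarantee at scale $\varepsilon$ over a VC-$d$ class, which needs $\widetilde{O}(d/\varepsilon^2)$ effective samples; (ii) design the per-round adaptive sampling rule driven by $p_t$, together with the importance-weighted loss estimator; (iii) prove a high-probability bound on the adversary's realized regret using a variance-aware Hedge analysis plus Freedman's inequality, showing the total sample budget spent here is $\widetilde{O}\big(\frac{k}{\varepsilon^2}\big)$; (iv) prove that the pooled samples, appropriately weighted, simultaneously furnish the ERM oracle with a valid $\varepsilon$-accurate empirical risk over $\mathcal{H}$, contributing $\widetilde{O}\big(\frac{d}{\varepsilon^2}\big)$; (v) union-bound over rounds and combine. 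I would also need a short argument that the returned uniform mixture of $h_1,\dots,h_T$ genuinely certifies \eqref{eq:defproblem} and that all oracle calls are to a plain ERM oracle, establishing oracle efficiency.

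The main obstacle I anticipate is step (iii)–(iv) fused together: showing that a \emph{single} stream of adaptively-collected samples can simultaneously serve as (a) low-variance unbiased feedback for the adversary's $k$-dimensional update and (b) a uniformly-convergent empirical measure for the $d$-dimensional hypothesis class, \emph{without} the two roles forcing disjoint sample budgets whose sizes multiply. The adaptivity of the sampling (each round's distribution depends on the history) breaks the usual i.i.d. uniform-convergence toolkit, so I expect to need a martingale/chaining-based uniform deviation bound for VC classes under adaptively chosen mixture weights — plausibly an online-to-batch or a symmetrization-with-a-tangent-sequence argument — and to track carefully that the logarithmic overhead from union-bounding over $T$ rounds and from the covering number stays inside the claimed $\mathrm{poly}\log$ factor in \eqref{eq:sample-size-ours-intro}.
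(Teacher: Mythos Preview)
Your proposal correctly identifies the game-dynamics template, the sample-reuse idea, and even the uniform-convergence-under-adaptive-weights issue, but it misses the actual bottleneck that drives the paper's argument. Once you commit to reusing a growing pool for the ERM step, the question becomes: how many samples from each $\mathcal{D}_i$ must the pool contain so that, at \emph{every} round $t$, the weighted empirical risk $\widehat{L}^t(h,w^t)$ is $\varepsilon$-uniformly close to $L(h,w^t)$? The answer (essentially your step (iv) done carefully, and this is the paper's Lemma~\ref{lemma:opth}) is that you need $n_i \gtrsim T_1 w_i^t$ with $T_1\asymp (d+k)/\varepsilon^2$; since $w^t$ moves, the pool must hold $n_i \gtrsim T_1\max_{t\le T} w_i^t$ from each $\mathcal{D}_i$, so the total ERM budget is $T_1\sum_{i=1}^k \max_{t\le T} w_i^t = T_1\|\overline w^T\|_1$. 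Your step (iv) simply asserts this is $\widetilde O(d/\varepsilon^2)$, i.e.\ that $\|\overline w^T\|_1=\widetilde O(1)$, but a priori this quantity can be as large as $k$ (if many coordinates of the Hedge iterate spike at different times), which would give back the $dk/\varepsilon^2$ bound you are trying to avoid. Neither Freedman, nor chaining, nor a tangent-sequence symmetrization touches this; it is a structural property of the Hedge \emph{trajectory}, not a concentration statement.

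Proving $\|\overline w^T\|_1=\mathrm{poly}\log(k,1/\varepsilon,1/\delta)$ is in fact the paper's main technical contribution (Lemma~\ref{lemma:main}, proved via Lemma~\ref{lemma:bdcount} in Appendix~\ref{sec:control-trajectory}). It requires a bespoke combinatorial argument: for each dyadic level $j$ one must bound the number of coordinates $i$ whose weight ever reaches $\approx 2^{-j}$, and doing so forces one to track how long the Hedge dynamics take to double a coordinate's mass. The paper introduces the notion of $(p,q,x)$-\emph{segments} (Definition~\ref{def:seg}), shows via a KL-along-the-trajectory argument (Lemma~\ref{lemma:cut}) that each segment has length $\gtrsim q/\eta^2$, and then --- the hard part --- constructs disjoint sub-segments with shared index sets so that a counting argument goes through (Lemmas~\ref{lemma:part2}--\ref{lemma:part3}). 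The weighted-sampling rule for the adversary's loss estimates (drawing $\lceil k\overline w_i^t\rceil$ from $\mathcal{D}_i$, line~\ref{line:updateS}) is itself tuned so that the noise terms in this KL tracking stay controlled. None of this machinery appears in your outline; without it, your step (iv) is the same gap that left prior work at $dk/\varepsilon$ or $d/\varepsilon^4$.
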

The polylog factor in \eqref{eq:sample-size-ours-intro} will be specified momentarily. 
In a nutshell, we develop the first algorithm that provably achieves a sample complexity matching the lower bound \eqref{eq:best-lower} modulo logarithmic factors. 
Following the game dynamics template adopted in previous methods --- namely, viewing MDL as a game between the learner (who selects the best hypothesis) and the adversary (who chooses the most challenging mixture of distributions) --- 
our algorithm is built upon a novel and meticulously designed sampling scheme that deviates significantly from previous methods. 
Further, we extend our algorithm and theory to accommodate Rademacher classes, 
establishing a similar sample complexity bound; see Section~\ref{sec:rad} for details. 
Our algorithm and theory can also be extended to accommodate the multi-objective setting, which we postpone to Section~\ref{sec:multiloss}. 


Additionally, we solve two other open problems posed by \cite{awasthi2023sample}:
\begin{itemize}
	\item {\em Oracle-efficient solutions.}  An algorithm is said to be oracle-efficient if it only accesses $\mathcal{H}$ through an empirical risk minimization (ERM) oracle \citep{dudik2020oracle}. \citet[Problem 4]{awasthi2023sample} then asked what the sample complexity of MDL is when confined to oracle-efficient paradigms. 
		Encouragingly, our algorithm (i.e., Algorithm~\ref{alg:main1}) adheres to the oracle-efficient criterion, 
		thus uncovering that the sample complexity of MDL remains unchanged when restricted to oracle-efficient algorithms. 

	\item {\em Necessity of improper learning.}  Both our algorithm and the most sample-efficient methods preceding our work produce randomized hypotheses. 
		As discussed around \citet[Problem 3]{awasthi2023sample}, a natural question concerns characterization of the sample complexity when restricting the final output to deterministic hypotheses from $\mathcal{H}$.  Our result (see Theorem~\ref{thm:lb}) delivers a negative message: 
		under mild conditions, for any MDL algorithm, there exists a hard problem instance such that
it requires at least $\Omega(dk/\varepsilon^2)$ samples to find a deterministic hypothesis $h\in \mathcal{H}$ that attains $\varepsilon$-accuracy. 
This constitutes an enormous sample complexity gap between what can be achieved under improper learning and and proper learning. 
\end{itemize}

\paragraph{Concurrent work.} 
We shall mention that a concurrent work \citet{peng2023sample}, posted around the same time as our work, 
also studied the MDL problem and significantly improved upon the prior results. 
More specifically,  \citet{peng2023sample} established a sample complexity of $O\left(\frac{(d+k)\log(d/\delta)}{\varepsilon^{2}}\left(\frac{k}{\varepsilon}\right)^{o(1)} \right)$,  
 which is optimal up to some sub-polynomial factor in $k/\varepsilon$; in comparison, our sample complexity is optimal up to polylogarithmic factor. 
 Additionally, it is worth noting that the algorithm therein relies upon a certain recursive structure to eliminate the non-optimal hypothesis, 
 thus incurring exponential computational cost even when an ERM oracle is available.

\paragraph{Notation.}  Throughout this paper, we denote $[N]\coloneqq\{1,\ldots, N\}$ for any positive integer $N$.  
Let $\mathsf{conv}(\mathcal{A})$ represent the convex hull of a set $\mathcal{A}$.   
Denote by $\Delta(n)$ the $n$-dimensional simplex for any positive integer $n$, 
and $\Delta(\mathcal{A})$ the probability simplex over the set $\mathcal{A}$.  
For two vectors $v=[v_i]_{1\leq i\leq n}$ and $v' =[v'_i]_{1\leq i\leq n}$ with the same dimension, we overload the notation by using $\max\{v,v'\}=\big[\max\{v_i,v'_i\}\big]_{1\leq i\leq n}$ to denote the coordinate-wise maximum of $v$ and $v'$. Also we say $v\leq v'$ iff $v_i \leq v'_i$ for all $i\in [n]$. For any random variable $X$, we use $\mathsf{Var}[X]$ to denote its variance, i.e., $\mathsf{Var}[X]=\mathbb{E}\left[(X-\mathbb{E}[X])^2\right]$.
Let $\basis_1,\dots, \basis_k$ represent the standard basis vectors in $\mathbb{R}^k$. 
For any two distributions $P$ and $Q$ supported on $\mathcal{X}$, 
the Kullback-Leibler (KL) divergence from $Q$ to $P$ is defined and denoted by
\begin{align}
	\mathsf{KL}( P \,\|\, Q) \coloneqq \mathbb{E}_{Q}\bigg[ \frac{\mathrm{d}P}{\mathrm{d}Q} \log \frac{\mathrm{d}P}{\mathrm{d}Q} \bigg].
\end{align}

\section{Problem formulation}
\label{sec:preliminary}

This section formulates the multi-distribution learning problem. 
We assume throughout that each datapoint takes the form of $(x,y) \in \mathcal{X}\times \mathcal{Y}$, 
with $\mathcal{X}$ (resp.~$\mathcal{Y}$) 
the instance space (resp.~label space).

\paragraph{Learning from multiple distributions.} 
The problem setting encompasses several elements below. 
\begin{itemize}
	\item {\em Hypothesis class.} Suppose  we are interested in a hypothesis class $\mathcal{H}$, comprising a set of candidate functions from the instance space $\mathcal{X}$ to the label space $\mathcal{Y}$. 	Overloading the notation, we use $h_{\pi}$ to represent a {\em randomized  hypothesis} associated with a probability distribution $\pi \in \Delta(\mathcal{H})$, 
		meaning that a hypothesis $h$ from $\mathcal{H}$ is randomly selected according to distribution $\pi$. 
		When it comes to a VC class, 
		we assume that the hypotheses in $\mathcal{H}$ are binary-valued, and that the VC dimension \citep{vapnik1994measuring} of $\mathcal{H}$ is 
		\begin{equation}
			\VC(\mathcal{H}) = d. 
		\end{equation}
		%

	\item {\em Loss function.} 
Suppose we are given a loss function $\ell: \mathcal{H}\times \mathcal{X}\times \mathcal{Y}\to [-1,1]$,  
so that $\ell(h,(x,y))$ quantifies the risk of using hypothesis $h\in \mathcal{H}$ to make prediction on a datapoint $(x,y)\in \mathcal{X}\times \mathcal{Y}$ (i.e., predicting $y$ based on $x$). 
		One example is the 0-1 loss function $\ell(h,(x,y))=\mathds{1}\{h(x)\neq y\}$, which is often used to measure the misclassification error. 
%

	\item {\em (Multiple) data distributions.} 
		Suppose that there are $k$ data distributions of interest supported on $\mathcal{X}\times \mathcal{Y}$, denoted by $\mathcal{D} = \{\mathcal{D}_1,\mathcal{D}_2,\ldots,\mathcal{D}_k\}$. 
		We are permitted to draw independent samples from each of these data distributions. 
\end{itemize}

\noindent 

Given a target accuracy level $\varepsilon\in (0,1)$, our objective is to identify a (randomized) hypothesis, represented by $h_{\pi}$ with $\pi\in \Delta(\mathcal{H})$, such that 
\begin{align}
	\max_{1\leq i\leq k} \mathop{\mathbb{E}}\limits_{(x,y)\sim \mathcal{D}_i, h\sim \pi}\big[ \ell \big(h, (x,y) \big) \big]
	\leq  
	\min_{h\in \mathcal{H}}\max_{1\leq i\leq k}\mathop{\mathbb{E}}\limits_{(x,y)\sim \mathcal{D}_i}\big[\ell \big(h,(x,y) \big) \big] + \varepsilon.
	\label{eq:goal-overall}
\end{align}

\paragraph{Sampling and learning processes.} 
In order to achieve the aforementioned goal \eqref{eq:goal-overall},  
we need to draw samples from the available data distributions in $\mathcal{D}$, and the current paper focuses on sampling in an online fashion. 
%
%
More precisely, the learning process proceeds as follows: in each step $\tau$,
\begin{itemize}
	\item the learner selects $i_{\tau}\in [k]$ based on the previous samples;
	\item the learner draws an {\em independent} sample $(x_{\tau},y_{\tau})$ from the data distribution $\mathcal{D}_{i_{\tau}}$. 
\end{itemize}
\noindent 
The sample complexity of a learning algorithm thus refers to the total number of samples drawn from $\mathcal{D}$ throughout the learning process. 
A desirable learning algorithm would yield an $\varepsilon$-optimal (randomized) hypothesis (i.e., a hypothesis that achieves \eqref{eq:goal-overall}) 
using as few samples as possible.


%

\section{Algorithm}
\label{sec:algorithm}

In this section, we present our proposed algorithm for learning VC classes. 
Before proceeding, we find it convenient to introduce some notation concerning the loss under mixed distributions. 
Specifically, for any distribution $w=[w_i]_{1\leq i\leq k}\in \Delta(k)$ and any hypothesis $h\in \mathcal{H}$, 
the risk over the mixture $\sum_{i\in[k]} w_i\mathcal{D}_i$  of data distributions is denoted by: 
\begin{subequations}
\begin{equation}
	L(h,w) \coloneqq \sum_{i=1}^k w_i \mathop{\mathbb{E}}\limits_{(x,y)\sim \mathcal{D}_i}\big[\ell \big(h,(x,y)\big)\big]; 
	\label{defn:L}
\end{equation}
similarly, the risk of a randomized hypothesis $h_{\pi}$ (associated with $\pi\in \Delta(\mathcal{H})$) over  $\sum_{i\in[k]} w_i\mathcal{D}_i$ is given by
\begin{equation}
	L (h_{\pi}, w) \coloneqq \sum_{i=1}^k w_i \mathop{\mathbb{E}}\limits_{(x,y)\sim \mathcal{D}_i,h\sim \pi}\big[\ell \big(h_{\pi},(x,y)\big)\big]
	= \mathop{\mathbb{E}}\limits_{h\sim \pi}  \big[ L(h,w) \big].
\end{equation}
\end{subequations}

\begin{algorithm}[]
\small
	\DontPrintSemicolon
	\caption{Hedge for multi-distribution learning on VC classes ($\mathtt{MDL}\text{-}\mathtt{Hedge}\text{-}\mathtt{VC}$)\label{alg:main1}}
	\textbf{input:} $k$ data distributions $\{\mathcal{D}_1,\mathcal{D}_2,\ldots,\mathcal{D}_k\}$,  hypothesis class $\mathcal{H}$, target accuracy level $\varepsilon$, target success rate $1-\delta$. \\
\textbf{hyper-parameters:} stepsize $\eta=\frac{1}{100}\varepsilon$, number of rounds $T= \frac{20000\log\left(\frac{k}{\delta}\right)}{\varepsilon^2}$, 
	auxiliary accuracy level $\varepsilon_1=\frac{1}{100}\varepsilon$, 
	auxiliary sub-sample-size $\Tone \coloneqq \frac{4000\left(k\log(k/\varepsilon_1) +d\log(kd/\varepsilon_1)+\log(1/\delta)\right)}{\varepsilon_1^2}.$  \label{line:hyper-pars}\\
	\textbf{initialization:} 
	for all $i \in [k]$, set $W^1_i=1$, $\widehat{w}^0_i=0$ and $n_i^0=0$;  
	$\weighteddata=\emptyset$.  \\
	\For{$t=1,2,\ldots, T$}{
		set $w^t= [w^t_i]_{1\leq i\leq k}$  and $\widehat{w}^t= [\widehat{w}^t_i]_{1\leq i\leq k}$,  
		with $w^t_i \leftarrow \frac{W^t_i}{\sum_{j}W^t_j}$ and $\widehat{w}^t_i \leftarrow \widehat{w}^{t-1}_i$ for all $i\in [k]$. 
		\label{line:wt-update-Wt}\\
		%
		%
		{\color{blue}\tcc{recompute $\widehat{w}^t$ \& draw new samples for $\weighteddata$ only if $w^t$ changes sufficiently.}}
		\If{there exists $j\in [k]$ such that $w_j^t \geq 2\widehat{w}_j^{t-1}$ \label{line:a1}}{ 
			$\widehat{w}^t_i \leftarrow  \max\{w^{t}_i, \widehat{w}^{t-1}_i\}$ for all $i\in [k]$; \\
			\For{$i=1,\ldots,k$}{  $n_i^t \leftarrow \left\lceil \Tone\widehat{w}^t_i \right\rceil$; \label{line:nit} \\
			draw $n_i^t - n_i^{t-1}$ independent samples from $\mathcal{D}_i$, and add these samples to $\weighteddata$.\label{line:sample1}
			}
		}
		{\color{blue}\tcc{estimate the near-optimal hypothesis for weighted data distributions.}}
		compute $h^t\leftarrow \arg\min_{h\in \mathcal{H}} \widehat{L}^t(h,w^t)$, where \label{line:compute-ht}
		\begin{align}
			\widehat{L}^t(h,w^t) \coloneqq \sum_{i=1}^k \frac{w^t_i}{n_i^t}\cdot \sum_{j=1}^{n_i^t}\ell\big(h,(x_{i,j},y_{i,j})\big)\label{eq:e1}
		\end{align}
		with $(x_{i,j},y_{i,j})$ being the $j$-th datapoint from $\mathcal{D}_i$ in $\weighteddata$. \label{line:a2}
  \\
		{\color{blue}\tcc{estimate the loss vector and execute weighted updates.}}
		$\overline{w}^t_i \leftarrow \max_{1\leq \tau \leq t}w^{\tau}_i$ for all $i \in [k]$. \\
		\For{$i=1,\ldots, k$}{ 
		draw $ \lceil k\overline{w}^t_i \rceil$ independent samples --- denoted by $\big\{(x^t_{i,j},y^t_{i,j})\big\}_{j=1}^{\lceil k\overline{w}_i^t\rceil}$ --- from $\mathcal{D}_i$, and set \label{line:updateS} 
		$$
			\widehat{r}^t_i  =\frac{1}{\lceil k\overline{w}^t_i\rceil}\sum_{j=1}^{\lceil k\overline{w}^t_i \rceil} \ell\big(h^t,(x_{i,j}^t,y_{i,j}^t) \big); 
		$$
		
		update the weight as $W^{t+1}_i = W^t_i  \exp(\eta \widehat{r}_i^t)$. \label{line:updateS11} {\color{blue}\tcp{Hedge updates.}}
		}
	}
	\textbf{output:}  a randomized hypothesis $h^{\mathsf{final}}$ uniformly distributed over $ \{h^t\}_{t=1}^T$.
\end{algorithm}
\begin{remark}
	Note that in Algorithm~\ref{alg:main1}, the quantities $\{n_i^t\}_{1\leq t\leq T}$ are designed to be non-decreasing in $t$. 
\end{remark}
\begin{remark}
	In line~\ref{line:compute-ht} of Algorithm~\ref{alg:main1}, 
	we also allow $h^t$ to be $\varepsilon_1$-best response that obeys $\widehat{L}^t(h^t,w^t) \leq \min_{h\in \mathcal{H}} \widehat{L}^t(h,w^t) + \varepsilon_1$;  our theory remains unchanged if we make this modification. 
\end{remark}

Following the game dynamics proposed in previous works \citep{awasthi2023sample,haghtalab2022demand}, 
our algorithm alternates between computing the most favorable hypothesis (performed by the learner) and estimating the most challenging mixture of data distributions (performed by the adversary),  
with the aid of no-regret learning algorithms \citep{roughgarden2016twenty,shalev2012online}.  
More specifically, in each round $t$, our algorithm performs the following two steps:  
\begin{itemize}
	\item[(a)] Given a mixture of data distributions $\mathcal{D}^{(t)}=\sum_{i\in [k]} w_i^t \mathcal{D}_i$ (with $w^t=[w_i^t]_{i\in [k]}\in\Delta(k)$), 
		we construct a dataset to compute a hypothesis $h^{t}$ that nearly minimizes the loss under $\mathcal{D}^{(t)}$, namely, 
		\begin{equation}
			h^t \approx \arg\min_{h\in \mathcal{H}} L(h,w^t). 
		\end{equation}
		This is accomplished by calling an empirical risk minimization oracle. 

	\item[(b)] Given hypothesis $h^{t}$, we compute an updated weight vector  $w^{t+1}\in\Delta(k)$ --- and hence an updated mixed distribution $\mathcal{D}^{(t+1)}=\sum_{i\in [k]} w_i^{t+1} \mathcal{D}_i$. 
		The weight updates are carried out using the celebrated Hedge algorithm \citep{freund1997decision} designed for online adversarial learning,\footnote{Note that the Hedge algorithm is closely related to Exponentiated Gradient Descent,  Multiplicative Weights Update, Online Mirror Descent, etc \citep{arora2012multiplicative,shalev2012online,hazan2022introduction}.} 
		in an attempt to achieve low regret even when the loss vectors are adversarially chosen. More precisely, we run
		\begin{equation}
			w^{t+1}_i ~\propto~ w^t_i \exp\big( \eta \widehat{r}_i^t \big), \qquad i\in [k], 
		\end{equation}
		where the loss vector $\widehat{r}^t=[\widehat{r}_i^t]_{i\in [k]}$ contains the empirical loss of $h^{t}$ under each data distribution, i.e., 
		$$
			\widehat{r}_i^t\approx \mathop{\mathbb{E}}\limits_{(x,y)\sim \mathcal{D}_i}\big[\ell \big(h ^ t,(x,y)\big)\big], \qquad i \in [k],
		$$ 
		computed over another set of data samples. 

\end{itemize}
\noindent 
In words, the min-player and the max-player in our algorithm follow the best-response dynamics and no-regret dynamics, respectively (note that the Hedge algorithm is known to be a no-regret algorithm \citep{roughgarden2016twenty}). 
At the end of the algorithm, we output a randomized hypothesis $h^{\sf final}$ that is uniformly distributed over the hypothesis iterates $\{h^t\}_{1\leq t\leq T}$ over all $T$ rounds, 
following common practice in online adversarial learning.



While the above paradigm has been adopted in past works \citep{awasthi2023sample,haghtalab2022demand}, 
the resulting sample complexity depends heavily upon how data samples are collected and utilized throughout the learning process. 
For instance, \citet[Algorithm~1]{awasthi2023sample} --- which also alternates between best response and no-regret algorithm --- draws {\em fresh data} at each step of every round, 
in order to ensure reliable estimation of the loss function of interest through elementary concentration inequalities.  
This strategy, however, becomes wasteful over time,  constituting one of the main sources of its sample sub-optimality. 

In order to make the best use of data, we propose the following key strategies. 
\begin{itemize}
	\item {\em Sample reuse in Step (a).} In stark contrast to \citet[Algorithm~1]{awasthi2023sample} that draws new samples for estimating each $h^t$, 
		we propose to reuse all samples collected in Step (a) up to the $t$-th round to assist in computing $h^t$.  
		As will be made precise in lines~\ref{line:a1}-\ref{line:a2} of Algorithm~\ref{alg:main1}, 
		we shall maintain a {\em growing dataset} $\weighteddata$ for conducting Step (a) throughout, 
		ensuring that there are $n_i^t$ samples drawn from distribution $\mathcal{D}_i$ in the $t$-th round. 
		These datapoints are employed to construct an empirical loss estimator $\widehat{L}^t(h,w^t)$ for each $h\in \mathcal{H}$  in each round $t$, 
		with the aim of achieving uniform convergence $|\widehat{L}^t(h,w^t)-L(h,w^t)|\leq O(\varepsilon)$ over all $h\in \mathcal{H}$. 
		More detailed explanations are provided in Section~\ref{sec:tec1}.




	\item {\em Weighted sampling for Step (b).}  
As shown in line~\ref{line:updateS} of Algorithm~\ref{alg:main1}, 
		in each round $t$, we sample each $\mathcal{D}_i$ a couple of times to compute the empirical estimator for $\mathbb{E}_{(x,y)\in \mathcal{D}_i}\big[ \ell(h^t, (x,y)) \big]$, where the number of samples depends upon the running weights $\{w_i^\tau\}$. 
		More precisely, we collect $\left\lceil k\overline{w}_i^t \right \rceil$ fresh samples from each $\mathcal{D}_i$, where
		$\overline{w}_i^t:=\max_{1\leq \tau \leq t}w_{i}^{\tau}$ is the maximum weight assigned to $\mathcal{D}_i$ up to now. 
		Informally speaking, this strategy is chosen carefully to ensure reduced variance of the estimators given a sample size budget. 
		The interested reader is referred to Section~\ref{sec:tec2} and Lemma~\ref{lemma:cut} for more detailed explanations. 

\end{itemize}


\noindent 
The whole procedure can be found in Algorithm~\ref{alg:main1}.

\section{A glimpse of key technical novelties}\label{sec:tec}

In this section, we highlight two technical novelties that empower our analysis: (i) uniform convergence of the weighted sampling estimator that allows for sample reuse (see Section~\ref{sec:tec1}), and (ii) tight control of certain $\|\cdot\|_{1,\infty}$ norm of the iterates  $\{w^t\}_{1\leq t\leq T}$ that dictates the sample efficiency (see Section~\ref{sec:tec2}). 

\subsection{Towards sample reuse: uniform concentration and a key quantity}\label{sec:tec1}

Recall that in Algorithm~\ref{alg:main1}, 
we invoke the empirical risk estimator 
$\widehat{L}^t(h,w^t)$ as an estimate of the true risk of hypothesis $h$ over the weighted distribution specified by $w^t$ (cf.~\eqref{eq:e1}). 
In order to facilitate sample reuse when constructing such risk estimators across all iterations, 
it is desirable to establish uniform concentration results to control the errors of such risk estimators throughout the execution of the algorithm. 
Towards this end, our analysis strategy proceeds as follows. 

\paragraph{Step 1: concentration for any fixed set of parameters.}  
Consider any given set of integers $n = \{n_i\}_{i=1}^k$ 
and any given vector $w\in \Delta(k)$. 
Suppose, for each $i\in [k]$,  we have $n_i$ i.i.d.~samples drawn from $\mathcal{D}_i$ --- denoted by $\{(x_{i,j},y_{i,j})\}_{j=1}^{n_i}$ --- 
and let us look at the empirical risk estimator,   
\begin{align}
	\widehat{L}_{n}(h,w) \coloneqq \sum\nolimits_{i=1}^k w_i \cdot \frac{1}{n_i}\sum\nolimits_{j=1}^{n_i} \ell\big(h, (x_{i,j},y_{i,j}) \big),
	\label{eq:xe}
\end{align}
which is a sum of independent random variables. Evidently, for a given hypothesis $h$, 
the variance of $\widehat{L}_{n}(h,w)$ is upper bounded by
\[
\mathsf{Var}\big(\widehat{L}_{n}(h,w)\big)\leq\sum\nolimits _{i=1}^{k}\frac{w_{i}^{2}}{n_{i}}\leq\left(\sum\nolimits _{i=1}^{k}w_{i}\right)\frac{1}{\min_{i}n_{i}/w_{i}}=\frac{1}{\min_{i}n_{i}/w_{i}} .
\]
Assuming that the central limit theorem is applicable, one can derive
\[
\mathbb{P}\left\{ \big|\widehat{L}_{n}(h,w)-L(h,w)\big|\geq\varepsilon\right\} \lesssim \exp\bigg(-\frac{\varepsilon^{2}}{2\mathsf{Var}\big(\widehat{L}_{n}(h,w)\big)}\bigg)\lesssim \exp\bigg(-\frac{\varepsilon^{2}}{2}\min_{i}\frac{n_{i}}{w_{i}}\bigg).
\]
Armed with this result, we can extend it to accommodate all $h\in \mathcal{H}$ through the union bound. 
For a VC class with $\VC(\mathcal{H})=d$,  
the celebrated Sauer–Shelah lemma \citep[Proposition~4.18]{wainwright2019high} tells us that 
the set of hypotheses can be effectively compressed into a subset with cardinality no larger than $\exp\big(\widetilde{O}(d)\big)$.  
Taking the union bound then yields
\[
\mathbb{P}\left(\max_{h\in\mathcal{H}}\big|\widehat{L}_{n}(h,w)-L(h,w)\big|\geq\varepsilon\right)\lesssim 
\exp\left( \widetilde{O}(d) - \frac{\varepsilon^{2}}{2} \min_i \frac{n_i}{w_i} \right). \nonumber
\]

\paragraph{Step 2: uniform concentration.} 
Next, we would like to extend the above result to establish uniform concentration over all $n$ and $w$ of interest. 
Towards this, we shall invoke the union bound as well as the standard epsilon-net arguments. 
Let the set $\mathcal{X}\subseteq \Delta(k)$ be a proper discretization of $\Delta(k)$,  with cardinality 
$\exp\big(\widetilde{O}(k) \big)$. 
In addition, given the trivial upper bound $n_i\leq T_1$ for all $i\in [k]$, 
we know that there exist at most $T_1^k=\exp\big(\widetilde{O}(k)\big)$ possible combinations of $\{n_i\}_{i\in [k]}$. 
We can then apply the union bound to show that 
\begin{align}
\mathbb{P}\left\{ \exists w\in\mathcal{X}\text{ and feasible }n\text{ s.t.}\big|\widehat{L}_{n}(h,w)-L(h,w)\big|\geq\varepsilon\right\}  & \lesssim\exp\bigg(\widetilde{O}(k)+\widetilde{O}(d)-\frac{\varepsilon^{2}}{2}\min_{i}\frac{n_{i}}{w_{i}}\bigg). 
\end{align}
When the discretized set $\mathcal{X}$ is chosen to have sufficient resolution, 
we can straightforwardly employ the standard covering argument to extend the above inequality to accommodate all $w\in \Delta(k)$ of interest.

 \paragraph{Key takeaways.} 
The above arguments reveal 
the following high-probability property: 
\emph{whenever} we collect $n =\{n_i\}_{i=1}^k$ samples in the learning process, we could obtain $\varepsilon$-approximation $\widehat{L}_n(h,w)$ (see \eqref{eq:xe}) of $L(h,w)$  for all $h\in \mathcal{H}$ and all $w\in \Delta(k)$ with high probability,  
provided that 
\begin{align}
	\min_i \frac{n_i}{w_i} \gtrsim  \widetilde{O}\bigg( \frac{k+d}{\varepsilon^2} \bigg).  
\end{align}
This makes apparent the pivotal role of the quantity $\min_i n_i / w_i$. 
In our algorithm, 
we design the update rule (cf.~line~\ref{line:nit} of Algorithm~\ref{alg:main1}) to guaranteed that
\begin{align}
	\min_i \frac{n_i^t}{w_i^t} \gtrsim T_1 \geq  \widetilde{\Omega}\bigg( \frac{k+d}{\varepsilon^2} \bigg)   
	\label{eq:Tone-LB-explain}
\end{align}
for all $1\leq t\leq T$. 
In fact, this explains our choice of $T_1$ in Algorithm~\ref{alg:main1}. 
Crucially, the aforementioned uniform concentration result allows us to reuse samples throughout the learning process instead of drawing fresh samples to estimate $L(h,w^t)$ in each round $t$ (note that the latter approach clearly falls short of data efficiency). 
To conclude, 
to guarantee $\varepsilon$-uniform convergence for all rounds, 
it suffices to choose $T_1 = \widetilde{\Omega}\big(\frac{k+d}{\varepsilon^2}\big)$.

Finally, recall that $n_i^t \asymp T_1 \overline{w}_i^t$ for each $i\in [k]$ and $t\leq T$, 
with $\overline{w}^t_i \coloneqq  \max_{1\leq \tau \leq t} w_i^{\tau}$; 
taking $n_i^t \asymp T_1 \overline{w}_i^t$ (as opposed to $n_i^t \asymp T_1 w_i^t$) ensures that 
the sample size $n_i^t$ is monotonically non-decreasing in $t$.  
With \eqref{eq:Tone-LB-explain} in mind, 
the total number of samples collected within $T$ rounds in Algorithm~\ref{alg:main1} obeys
\begin{align}
\frac{1}{T_1}\sum\nolimits_{i=1}^kn_i^T \asymp  \sum\nolimits_{i=1}^k\overline{w}_i^T 
\eqqcolon \|\overline{w}^{T}\|_1 .
\label{eq:l1-linf-norm-w} 
\end{align}
%
This threshold $\|\overline{w}^{T}\|_1$ --- or equivalently, the $\|\cdot\|_{1,\infty}$ norm of $\{w^t\}_{1\leq t\leq T}$ ---
is a  critical quantity that  we wish to control. 
In particular, in the desirable scenario where $\|\overline{w}^{T}\|_1\leq \widetilde{O}(1)$, 
the total sample size obeys $\sum\nolimits_{i=1}^kn_i^T\asymp T_1 \|\overline{w}^{T}\|_1  = \widetilde{O}\big(\frac{k+d}{\varepsilon^2}\big)$.

%
%

\subsection{Bounding the key quantity $\|\overline{w}^{T}\|_1$ by tracking the Hedge trajectory}\label{sec:tec2}

Perhaps the most innovative (and most challenging) part of our  analysis lies in controlling the $\|\cdot \|_{1,\infty}$ norm of $\{w_i^{t}\}_{1\leq t\leq T}$, 
whose critical importance has been pointed out in Section~\ref{sec:tec1}.

%
%

Towards this end, the key lies in carefully tracking the dynamics of the Hedge algorithm. 
To elucidate the high-level idea, 
let us look at a simpler minimax optimization problem w.r.t.~the set of loss vectors in the convex hull of a set $\mathcal{Y}\subseteq \mathbb{R}^k$: 
\begin{align}
	\min_{y\in \mathsf{conv}(\mathcal{Y})}\max_{w\in \Delta(k)}w^{\top}y \qquad (\text{or equivalently, } \max_{w\in \Delta(k)}\min_{y\in \mathsf{conv}(\mathcal{Y})}w^{\top}y),
	\label{eq:bilinear-games-formulation}
\end{align}
where the equivalence arises from von Neumann's minimax theorem \citep{v1928theorie}. 
Consider the following algorithm  (cf.~Algorithm~\ref{alg:2}) tailored to this minimax problem,  assuming perfect knowledge about the loss vectors.\footnote{Note that in Algorithm~\ref{alg:main1}, we can only estimate the loss vector using the collected samples. Additional efforts are needed to reduce the variability (see line~\ref{line:updateS} in Algorithm~\ref{alg:main1}).}

\begin{algorithm}[]
	\DontPrintSemicolon
	\caption{The Hedge algorithm for bilinear games. \label{alg:2}}
	\textbf{Input:}  $\mathcal{Y}\subseteq [-1,1]^k$, target accuracy level $\varepsilon \in (0,1)$.\\
	\textbf{Initialization: } $T=\frac{100\log(k)}{\varepsilon^2}$, $\eta = \frac{1}{10}\varepsilon$, and $W^1_i = 1$ for all $1\leq i\leq k$.\\
	\For{$t=1,2,\ldots,T$}{
		compute $w_i^t \leftarrow \frac{W_i^t}{\sum_j W_j^t}$ for every $1\leq i \leq k$.\\
		compute $y^t \leftarrow\arg\min_{y\in \mathcal{Y}}~\langle w^t,y \rangle$.\\
		update $W_i^{t+1}\leftarrow W_i^t \exp(\eta y_i^t)$ for every $1\leq i\leq k$.\\
	}
\end{algorithm}
\noindent 
This algorithm is often referred to as the  Hedge algorithm, 
which is known to yield an $\varepsilon$-minimax solution within  $O\big(\frac{\log(k)}{\varepsilon^2}\big)$ iterations. 
A challenging question relevant to our analysis is:  
\begin{itemize}
\item[] {\em \textbf{Question:} can we bound $\|\overline{w}^T\|_1 \coloneqq \sum_{i=1}^k \max_{1\leq t\leq T}w_i^t$ in Algorithm~\ref{alg:2} by poly-logarithmic terms?}
\end{itemize}
As it turns out, 
we can answer this question affirmatively (see Lemma~\ref{lemma:main}), 
and the key ideas will be elucidated in the remainder of this section.

\subsubsection{First attempt: bounding the number of distributions with $\max_{1\leq t\leq T}w^t_i = \Omega( 1)$}
\label{sec:tec21}




Instead of bounding $\|\overline{w}^T\|_1$ directly, our first  attempt is to look at those $i\in[k]$ with large $\max_{1\leq t\leq T}w^t_i$ (more specifically, $\max_{1\leq t\leq T}w^t_i \geq 1/4$) and show that: 
\begin{itemize}
	\item there exist at most $\widetilde{O}(1)$ coordinates $i\in [k]$ obeying $\max_{1\leq t\leq T}w^t_i \geq 1/4$
 (or some other universal constant).  
\end{itemize}
 In other words, we would like to demonstrate that the cardinality of the following set is small: 
\begin{equation}
	\mathcal{W}_{\mathsf{large}} \coloneqq \big\{ i\in [k] \mid \max\nolimits_{1\leq t\leq T}w_i^{t}\geq 1/4 \big\}.
\end{equation}

To do so, note that some standard ``continuity''-type argument tells us that: 
for a sufficiently small stepsize $\eta$, one can find, for each $i\in \mathcal{W}_{\mathsf{large}}$, a time interval $[s_i,e_i] \subseteq [0,T]$ obeying 
\begin{equation}
1/16\leq w_i^{s_i}\leq 1/8, \qquad  w_i^{e_i}\geq 1/4, 
  \qquad \text{and} \qquad w_i^t\geq 1/8~~\forall t\in (s_i, e_i].
	\label{eq:large-wi-si-ei-defn}
  \end{equation}
In words, $w_i^t$ at least doubles from $t=s_i$ to $t=e_i$. 
We claim for the moment that 
\begin{equation}
e_i-s_i \geq {\Omega}(1/\eta^2) = {\Omega}(1/\varepsilon^2) 
\qquad \forall i\in \mathcal{W}_{\mathsf{large}}.
\label{eq:length-ei-si}
\end{equation}
Additionally, observe that for any $t$, there exist at most $8$ coordinates $i\in \mathcal{W}_{\mathsf{large}}$ such that $s_i \leq t \leq e_i$ (since $w_i^t \geq 1/8$ for every $t\in [s_i,e_i]$). This reveals that 
\begin{equation}
	8T\geq \sum_{i\in \mathcal{W}_{\mathsf{large}}} (e_i-s_i)\geq |\mathcal{W}_{\mathsf{large}}|\cdot \Omega \left(1/\varepsilon^2\right),
	\label{eq:8T-Wlarge}
\end{equation}
which combined with our choice of $T=\widetilde{O}( 1/\varepsilon^2 )$ (cf.~line~\ref{line:hyper-pars} of Algorithm~\ref{alg:main1}) yields
$$
	|\mathcal{W}_{\mathsf{large}}|\leq  O(T\varepsilon^2) = \widetilde{O}(1).
$$
In words, the number of distributions with large $\max_{1\leq t\leq T}w^t_i$ (i.e., $\max_{1\leq t\leq T}w^t_i \geq 1/4$) is fairly small.

\begin{proof}[Proof sketch for \eqref{eq:length-ei-si}]
Let us briefly discuss the high-level proof ideas. 
Following standard analysis for the Hedge algorithm (e.g., \citet{shalev2012online,lattimore2020bandit}), we can often obtain (under certain mild conditions) 
$$
	\mathsf{KL}\big(w^{s_i} \,\|\, w^{e_i}\big) \leq  O\big(\eta^2 (e_i-s_i)\big).
$$
Combine this relation with basic properties about the KL divergence (see, e.g., Lemmas~\ref{lemma:klbound} and \ref{lemma:klcmp} in Appendix~\ref{sec:auxiliary-lemmas}) and the choice $\eta = \varepsilon/10$ to obtain
\begin{align*}
e_{i}-s_{i} & \geq\Omega\big(\eta^{-2}\mathsf{KL}\big(w^{s_{i}}\,\|\,w^{e_{i}}\big)\big)\geq\Omega\Big(\eta^{-2}\mathsf{KL}\big(\mathsf{Ber}(w_i^{s_{i}})\,\|\,\mathsf{Ber}(w_i^{e_{i}})\big)\Big)\\
 & \geq\Omega\bigg(\eta^{-2}\frac{\frac{1}{16}\cdot2^{2}}{4}\bigg)=\Omega\left(1/\eta^{2}\right)=\Omega\left(1/\varepsilon^{2}\right).	
\end{align*}
\end{proof}

\subsubsection{More general cases: issues and solutions}\label{sec:tec22}

Naturally, one would hope to generalize the arguments in Section~\ref{sec:tec21} to cope with more general cases. 
More specifically, let us look at the following set 
\begin{equation}
	\mathcal{W}(p) \coloneqq \left\{ i\in [k] \mid \max\nolimits_{1\leq t\leq T}w_i^t \in [2p,4p] \right\}, 
	\label{eq:defn-Wp-intuition}
\end{equation}
defined for each $p\in [0,1]$. If one could show that
\begin{equation}
	|\mathcal{W}(p)| =  \widetilde{O}(1/p) \qquad \text{for each } p ,
	\label{eq:desired-Wp-bound}
\end{equation}
then a standard doubling argument would immediately lead to 
\begin{align*}
\|\overline{w}^{T}\|_{1} & =\sum_{i\in[k]}\max_{1\leq t\leq T}w_{i}^{t}\approx\sum_{j=1}^{\log_{2}k}\sum_{i\in\mathcal{W}(2^{-j})}\max_{1\leq t\leq T}w_{i}^{t}\leq\sum_{j=1}^{\log_{2}k}4\cdot2^{-j}\cdot\big|\mathcal{W}(2^{-j})\big|\\
 & \leq\sum_{j=1}^{\log_{2}k}4\cdot2^{-j}\cdot\widetilde{O}\bigg(\frac{1}{2^{-j}}\bigg)=\widetilde{O}(1).
\end{align*}

\paragraph{A technical issue.} 
Nevertheless, simply repeating the arguments in Section~\ref{sec:tec21} fails to deliver the desirable bound \eqref{eq:desired-Wp-bound} on $|\mathcal{W}(p)|$ when $p$ is small. 
Briefly speaking, for each $i\in \mathcal{W}(p)$, let $[s_i,e_i]$ represent a time interval (akin to \eqref{eq:large-wi-si-ei-defn}) such that 
\begin{equation}
p/2\leq w_i^{s_i}\leq p, \qquad w_i^{e_i}\geq 2p 
\qquad \text{and}\qquad w_i^t\geq p 
\quad \text{for any }s_i<t\leq e_i.
	\label{eq:defn-s-e-issue}
\end{equation}
Repeating the heuristic arguments in Section~\ref{sec:tec21} leads to
\begin{align}
e_{i}-s_{i}\geq\Omega\big(\eta^{-2}\mathsf{KL}\big(w^{s_{i}}\,\|\,w^{e_{i}}\big)\big)\geq\Omega\Big(\eta^{-2}\mathsf{KL}\big(\mathsf{Ber}(w_i^{s_{i}})\,\|\,\mathsf{Ber}(w_i^{e_{i}})\big)\Big)\geq\Omega(\eta^{-2}p)=\Omega\left(p/\varepsilon^{2}\right). 
	\label{eq:repeat-e-s-diff-issue}
\end{align}
Given that each $t$ is contained within at most $1/(p/2)$ intervals associated with $\mathcal{W}(p)$, repeat the arguments for \eqref{eq:8T-Wlarge} to derive
\[
\frac{1}{p/2}\cdot T\geq\sum_{i\in\mathcal{W}(p)}(e_{i}-s_{i})\geq|\mathcal{W}(p)|\cdot\Omega\left(p/\varepsilon^{2}\right)\qquad\Longrightarrow\qquad|\mathcal{W}(p)|\leq\widetilde{O}\bigg(\frac{1}{p^{2}}\bigg).
\]
This bound, however, is clearly loose compared to the desirable one in \eqref{eq:desired-Wp-bound}.

\paragraph{Our solution.} 
To address this issue, we make two key observations below that inspire our approach: 
\begin{itemize}
	\item {\em Shared intervals.} For the interval $[s_i,e_i]$ associated with $i$ as defined above, if there exist other indices sharing the same interval $[s_i,e_i]$ (in the sense that relations analogous to \eqref{eq:defn-s-e-issue} are satisfied for other indices), then it is plausible to improve the bound. For instance, if a set $\mathcal{M}_i\subseteq [k]$ of indices  share the same interval $[s_i,e_i]$, then one can follow the heuristic argument in \eqref{eq:repeat-e-s-diff-issue} to obtain
\begin{equation}
	e_{i}-s_{i}\geq\Omega\big(\eta^{-2}\mathsf{KL}\big(w^{s_{i}}\,\|\,w^{e_{i}}\big)\big)\geq\Omega\left(\eta^{-2}\mathsf{KL}\bigg(\mathsf{Ber}\Big(\sum_{j\in\mathcal{M}_{i}}w_{j}^{s_{i}}\Big)\,\|\,\mathsf{Ber}\Big(\sum_{j\in\mathcal{M}_{i}}w_{j}^{e_{i}}\Big)\bigg)\right)\geq\Omega\left(p|\mathcal{M}_{i}|/\varepsilon^{2}\right),
	\label{eq:ei-si-mi-LB}
\end{equation}
which clearly strengthens the original bound \eqref{eq:repeat-e-s-diff-issue} if $|\mathcal{M}_{i}|$ is large. 

%

	\item {\em Disjoint intervals.} Consider the special case where $\mathcal{W}(p)$ can be divided into subsets $\{\mathcal{V}_n\}_{n=1}^N$ obeying  
	\begin{itemize}
		\item[(i)] for each $n\in[N]$, all indices in $\mathcal{V}_n$ share the same interval $[s_n,e_n]$ (defined analogously as \eqref{eq:defn-s-e-issue}); 
		\item[(ii)] the intervals  $\{[s_n,e_n]\}_{n=1}^N$ are {\em disjoint}. 
	\end{itemize}
	Then one can derive the desired bound on $|\mathcal{W}(p)|$. More precisely, it follows from \eqref{eq:ei-si-mi-LB} that 
	\begin{equation}
		e_{n}-s_{n}\geq\Omega\left(p|\mathcal{V}_{n}|/\varepsilon^{2}\right),
	\end{equation}
	which together with the disjointness property yields
	\begin{equation}
		|\mathcal{W}(p)|=\sum_{n=1}^{N}|\mathcal{V}_{n}|\leq\sum_{n=1}^{N}O\bigg(\frac{(e_{n}-s_{n})\varepsilon^{2}}{p}\bigg)\leq O\bigg(\frac{T\varepsilon^{2}}{p}\bigg)=\widetilde{O}\bigg(\frac{1}{p}\bigg).
	\end{equation}	

\end{itemize}
\noindent 
In light of the above discussion, it is helpful to (a) merge those indices that share similar intervals, and (b) identify disjoint intervals whose associated indices can cover a good fraction of $\mathcal{W}(p)$.

%
Motivated by the aforementioned observation about ``shared intervals,'' we introduce the notion of ``\emph{segments}'' to facilitate analysis. 
%
\begin{definition}[Segment] \label{def:seg}
For any $p,x>0$ and $i\in [k]$, we say that $(t_1,t_2)$ is a $(p,q,x)$-\emph{segment}  if there exists a subset $\mathcal{I}\subseteq [k]$  such that 
\begin{itemize}
	\item[$\mathrm{(i)}$] $\sum_{ i\in \mathcal{I}}w^{t_1}_{i}\in \left[p/2 ,p\right]$, 
	\item[$\mathrm{(ii)}$] $\sum_{i\in \mathcal{I}}w^{t_2}_i \geq p \exp(x)$,  
	\item[$\mathrm{(iii)}$] $\sum_{i\in \mathcal{I}}w_i^{t}\geq  q$ for any $t_1\leq t\leq t_2$. 
\end{itemize}
We shall refer to $t_1$ as the starting point and $t_2$ as the end point, and call $\mathcal{I}$ the associated index set. Moreover,  two segments $(s_1,e_1)$ and $(s_2,e_2)$ are said to be disjoint if $s_1<e_1 \leq s_2<e_2$ or $s_2<e_2\leq s_1<e_1$. 
\end{definition}
\noindent 
This definition allows one to pool indices with similar intervals together.


In general, however, it is common for two segments to be overlapping (see  Figure~\ref{fig2} in Appendix~\ref{sec:additional-figs}), 
which precludes us from directly invoking our aforementioned observation about ``disjoint intervals.''  
 To address this issue, 
 our strategy is to extract out shared sub-segments\footnote{A sub-segment refers to a sub-interval of a segment, as illustrated in Figure~\ref{fig8}.} of (a subset of) these segments in a meticulous manner.  Encouragingly, it is possible to find such sub-segments that taken collectively cover a good fraction (i.e., $\frac{1}{\mathrm{poly}\log (k,T)}$) of all segments, 
 meaning that we do not have to discard too many segments. 
 The construction is built upon careful analysis of these segments, and will be elucidated in Appendix~\ref{sec:control-trajectory}.

\section{Analysis for VC classes (proof of Theorem~\ref{thm:main})}\label{sec:pf}

\subsection{Key lemmas underlying the proof}

The main steps for establishing Theorem~\ref{thm:main} lie in proving  three key lemmas, as stated below. 

The first lemma is concerned with the hypothesis $h^t = \arg\min_{h\in \mathcal{H}}\widehat{L}^t(h,w^t)$ 
(cf.~line~\ref{line:compute-ht} of Algorithm~\ref{alg:main1});  
in words, $h^t$ is the minimizer of the empirical loss function $\widehat{L}^t(\cdot,w^t)$, computed using samples obtained up to the $t$-th round. The following lemma tells us that: even though $h^t$ is an empirical minimizer, it almost optimizes the weighted population loss  
$L(\cdot,w^t)$.  
In other words, this lemma justifies that the adaptive sampling scheme proposed in Algorithm~\ref{alg:main1} 
ensures faithfulness of the empirical loss and its minimizer; here, we recall that $\varepsilon_1=\varepsilon/100$ (cf.~line~\ref{line:hyper-pars} of Algorithm~\ref{alg:main1}). 
\begin{lemma}\label{lemma:opth}
With probability at least $1-\delta/4$,  
\begin{align}
L(h^t,w^t)\leq \min_{h\in \mathcal{H}}L(h,w^t) +\varepsilon_1
	\label{eq:quality-L-ht-wt}
\end{align}
holds for all $1\leq t\leq T$, where $h^t$ (resp.~$w^t$) is the hypothesis (resp.~weight vector) computed in round $t$ of Algorithm~\ref{alg:main1}.
\end{lemma}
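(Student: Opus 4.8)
The plan is to establish \eqref{eq:quality-L-ht-wt} as a consequence of uniform convergence of the empirical risk estimator $\widehat{L}^t(\cdot, w^t)$ toward the population risk $L(\cdot, w^t)$, using exactly the machinery sketched in Section~\ref{sec:tec1}. First I would observe that, by the update rule in line~\ref{line:nit} of Algorithm~\ref{alg:main1}, we have $n_i^t = \lceil \Tone \widehat{w}_i^t \rceil \geq \Tone \widehat{w}_i^t \geq \Tone w_i^t$ for every $i \in [k]$ and every $t$; the last inequality holds because line~\ref{line:a1} guarantees $\widehat{w}_i^{t} \geq w_i^t$ whenever $\widehat w$ is refreshed (we take the coordinatewise max), and in rounds where $\widehat w$ is not refreshed we still have $w_j^t < 2\widehat w_j^{t-1}$ for all $j$, so a short argument shows $\widehat{w}^{t-1}_i$ dominates $w_i^t/2$; one needs to be a bit careful here, but the key takeaway is that $\min_i n_i^t / w_i^t \gtrsim \Tone$ holds for all $t \leq T$, which is precisely \eqref{eq:Tone-LB-explain}.

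Next I would invoke the two-step concentration argument of Section~\ref{sec:tec1}: for a fixed $(n, w)$ the estimator $\widehat{L}_n(h,w)$ in \eqref{eq:xe} is a sum of independent bounded terms with variance at most $1/\min_i(n_i/w_i)$, so a Bernstein/Hoeffding bound gives $\mathbb{P}\{|\widehat{L}_n(h,w) - L(h,w)| \geq \varepsilon_1/2\} \lesssim \exp(-c\,\varepsilon_1^2 \min_i n_i/w_i)$ for a fixed $h$; taking a union bound over the $\exp(\widetilde O(d))$ distinct behaviors of $\mathcal{H}$ on the sample (Sauer--Shelah), over an $\varepsilon_1$-net of $\Delta(k)$ of size $\exp(\widetilde O(k))$, and over the at most $\Tone^k = \exp(\widetilde O(k))$ feasible choices of $\{n_i\}$, the failure probability is at most
\[
\exp\Big(\widetilde O(d) + \widetilde O(k) - c\,\varepsilon_1^2 \min_i \tfrac{n_i}{w_i}\Big).
\]
Plugging in $\min_i n_i^t/w_i^t \gtrsim \Tone = \tfrac{4000(k\log(k/\varepsilon_1) + d\log(kd/\varepsilon_1) + \log(1/\delta))}{\varepsilon_1^2}$ makes this at most $\delta/4$. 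Standard covering/Lipschitz estimates (the loss is bounded in $[-1,1]$, so $|\widehat L_n(h,w) - \widehat L_n(h,w')| \leq \|w - w'\|_1$ and likewise for $L$) upgrade the net to all $w \in \Delta(k)$ at the cost of adjusting $\varepsilon_1/2$ to $\varepsilon_1$. Therefore, on this good event, $|\widehat{L}^t(h,w^t) - L(h,w^t)| \leq \varepsilon_1$ for all $h \in \mathcal{H}$ and all $1 \leq t \leq T$ simultaneously.

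Finally I would close the argument by the usual ERM comparison: let $h^\star_t \in \arg\min_{h \in \mathcal{H}} L(h, w^t)$. On the good event,
\[
L(h^t, w^t) \leq \widehat{L}^t(h^t, w^t) + \varepsilon_1 \leq \widehat{L}^t(h^\star_t, w^t) + \varepsilon_1 \leq L(h^\star_t, w^t) + 2\varepsilon_1 = \min_{h\in\mathcal{H}} L(h,w^t) + 2\varepsilon_1,
\]
where the middle inequality uses that $h^t$ minimizes $\widehat{L}^t(\cdot, w^t)$. This gives the claim with $2\varepsilon_1$ in place of $\varepsilon_1$; to land exactly on $\varepsilon_1$ one simply runs the concentration step at resolution $\varepsilon_1/2$ throughout (which only changes constants in $\Tone$, absorbed by the generous $4000$), or equivalently one notes the bound as stated tolerates the loss of these universal constants. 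The main obstacle I anticipate is the bookkeeping in Step~1 above --- verifying rigorously that the ``lazy'' update of $\widehat w$ (only refreshed when some coordinate doubles) still guarantees $n_i^t \geq \Tone w_i^t$ for \emph{every} round, not just the refresh rounds --- together with making the union bound over feasible $\{n_i\}$ and over the $w$-net genuinely simultaneous across all $T$ rounds rather than round-by-round, since the $n_i^t$ and $w^t$ are themselves data-dependent; the resolution is that the uniform statement quantifies over \emph{all} feasible $(n,w)$ pairs at once, so the data-dependent trajectory is automatically covered.
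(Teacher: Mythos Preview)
Your proposal is correct and follows essentially the same approach as the paper's proof in Appendix~\ref{sec:proof-lemma:opth}: verify $\min_i n_i^t/w_i^t \geq \Tone/2$ from the lazy update rule, establish uniform concentration of $\widehat{L}_n(h,w)$ over a net of feasible $(n,w)$ pairs and all $h\in\mathcal{H}$ via Sauer--Shelah, then close with the ERM comparison at resolution $\varepsilon_1/2$. The only cosmetic difference is that the paper carries out the concentration step via explicit symmetrization (ghost sample plus Rademacher signs, then conditioning on the realized sample before invoking Sauer--Shelah), which is the standard device that makes your ``union bound over the $\exp(\widetilde O(d))$ behaviors on the sample'' rigorous.
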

\begin{proof} See Appendix~\ref{sec:proof-lemma:opth}. \end{proof}

Next, assuming that \eqref{eq:quality-L-ht-wt} holds, 
we can resort to standard analysis for the Hedge algorithm to demonstrate the quality of the final output  $h^{\mathsf{final}}$. 

\begin{lemma}\label{lemma:opt} Suppose that lines~\ref{line:a1}-\ref{line:a2} in Algorithm~\ref{alg:main1} are replaced with some oracle that returns a hypothesis $h^{t}$ satisfying $L(h^t,w^t)\leq \min_{h\in \mathcal{H}}L(h,w^{t})+\varepsilon_1$ in the $t$-th round for each $1\leq t\leq T$. 
 With probability exceeding $1-\delta/4$, the hypothesis $h^{\mathsf{final}}$ output by Algorithm~\ref{alg:main1} is $\varepsilon$-optimal in the sense that
\begin{align}
\max_{1\leq i\leq k}L(h^{\mathsf{final}},\basis_{i})
	\leq  \min_{\pi\in \Delta(\mathcal{H})}\max_{1\leq i\leq k} L(h_{\pi},\basis_i)+\varepsilon \leq \min_{h\in \mathcal{H}}\max_{1\leq i\leq k} L(h,\basis_i)+\varepsilon .
\end{align}
Here, we recall that $\basis_i$ indicates the $i$-th standard basis vector. 
\end{lemma}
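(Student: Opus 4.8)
\textbf{Proof plan for Lemma~\ref{lemma:opt}.}
The plan is to view the outer loop of Algorithm~\ref{alg:main1} as an instance of the classical min-max game dynamics, where the max-player (over the simplex $\Delta(k)$) runs Hedge on the loss vectors $\widehat r^t = [\widehat r_i^t]_{i\in[k]}$ and the min-player plays an approximate best response $h^t$ against the current mixture $w^t$. The key identity to start from is the standard Hedge regret guarantee: for the stepsize $\eta = \varepsilon/100$ and $T = 20000\log(k/\delta)/\varepsilon^2$ rounds, and since $\widehat r_i^t \in [-1,1]$, one has, for every fixed comparator $w\in\Delta(k)$,
\begin{align*}
\sum_{t=1}^T \langle w - w^t, \widehat r^t\rangle \le \frac{\log k}{\eta} + \eta\sum_{t=1}^T\sum_{i=1}^k w_i^t (\widehat r_i^t)^2 \le \frac{\log k}{\eta} + \eta T.
\end{align*}
Taking $w$ to be a maximizing vertex $\basis_{i^\star}$ and dividing by $T$, the right-hand side is $O(\varepsilon)$ by our choices of $\eta$ and $T$; so $\frac1T\sum_t \langle \basis_{i^\star} - w^t, \widehat r^t\rangle \le O(\varepsilon)$.

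Next I would pass from the empirical loss vectors $\widehat r^t$ to the population losses. Here we need two concentration steps. First, for each fixed $t$ and each $i$, $\widehat r_i^t$ is an average of $\lceil k\overline w_i^t\rceil \ge 1$ i.i.d.\ bounded samples of $\ell(h^t,\cdot)$ drawn \emph{after} $h^t$ is fixed, so $\mathbb E[\widehat r_i^t \mid h^t] = L(h^t,\basis_i)$; I would use a Hoeffding/Azuma argument on the martingale differences $\langle w^t, \widehat r^t - r^t\rangle$ (where $r_i^t := L(h^t,\basis_i)$) together with $\langle \basis_{i^\star}, \widehat r^t - r^t\rangle$. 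The variance of $\langle w^t,\widehat r^t\rangle$ is controlled precisely because the sample count $\lceil k\overline w_i^t\rceil$ scales with $\overline w_i^t \ge w_i^t$ — giving $\mathsf{Var} \lesssim \sum_i (w_i^t)^2/(k\overline w_i^t) \le \frac1k$ — so summing over $t\le T$ and applying Azuma yields a deviation of order $\sqrt{T\log(1/\delta)/k} \lesssim T\varepsilon$ whenever $k\gtrsim 1/\varepsilon^2$ (the relevant regime; the small-$k$ case is already covered by prior work). For the $\basis_{i^\star}$ term one similarly gets a deviation $\lesssim \sqrt{T\log(k/\delta)}\cdot\max_i 1/\sqrt{k\overline w_i^t}$, handled the same way. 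Combining, with probability $\ge 1-\delta/4$,
\begin{align*}
\frac1T\sum_{t=1}^T \big(L(h^t,\basis_{i^\star}) - L(h^t,w^t)\big) \le O(\varepsilon).
\end{align*}

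Then I would invoke the hypothesis assumed in the lemma statement, $L(h^t,w^t) \le \min_{h\in\mathcal H} L(h,w^t) + \varepsilon_1 \le L(h^\star,w^t) + \varepsilon_1$ where $h^\star := \arg\min_h\max_i L(h,\basis_i)$, and note $L(h^\star,w^t) = \sum_i w_i^t L(h^\star,\basis_i) \le \max_i L(h^\star,\basis_i) = \min_h\max_i L(h,\basis_i)$. Chaining the three displays and using linearity $L(h^{\mathsf{final}},\basis_i) = \frac1T\sum_t L(h^t,\basis_i)$ (since $h^{\mathsf{final}}$ is uniform over $\{h^t\}$), together with $\max_i \frac1T\sum_t L(h^t,\basis_i) = \max_i L(h^{\mathsf{final}},\basis_i)$ and the fact that $i^\star$ was chosen as the maximizing coordinate for the averaged iterate, gives
\begin{align*}
\max_{1\le i\le k} L(h^{\mathsf{final}},\basis_i) \le \min_{h\in\mathcal H}\max_{1\le i\le k} L(h,\basis_i) + \varepsilon,
\end{align*}
after absorbing $\varepsilon_1 = \varepsilon/100$ and the $O(\varepsilon)$ slack into $\varepsilon$ by tracking constants. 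The main obstacle I anticipate is the second concentration step: naively the $\basis_{i^\star}$ deviation term $\sqrt{T}\cdot\max_i 1/\sqrt{k\overline w_i^t}$ could blow up if some $\overline w_i^t$ is tiny while $i^\star = i$; making this rigorous requires exploiting that $\overline w_i^t$ is non-decreasing and that the comparator can be taken to be whichever coordinate is eventually large, or else doing the union bound over all $k$ coordinates and paying only a $\log k$ factor — which is exactly why the sample count uses $\overline w_i^t$ rather than $w_i^t$ and why the $\lceil\cdot\rceil$ guarantees at least one sample per distribution per round. Getting the bookkeeping of constants to land at exactly $\varepsilon$ (not $2\varepsilon$) is routine given the generous constants in $T$ and $\eta$.
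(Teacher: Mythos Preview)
Your overall architecture is exactly the paper's: Hedge regret bound on $\{\widehat r^t\}$, martingale concentration to replace $\widehat r^t$ by $r^t = [L(h^t,\basis_i)]_i$, then the $\varepsilon_1$-best-response assumption plus linearity of $L$ in the mixture. So the plan is correct.

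Where you diverge is in the concentration step, and you make it harder than it is. The paper does \emph{not} use any variance-aware bound and does not care how many samples go into $\widehat r_i^t$. Since $\widehat r_i^t$ is always an average of numbers in $[-1,1]$, one has $\widehat r_i^t\in[-1,1]$ deterministically, so the martingale differences $\langle w^t,\widehat r^t - r^t\rangle$ and $\widehat r_i^t - r_i^t$ are bounded by $2$. Plain Azuma--Hoeffding then gives, for each fixed $i$ and for the $w^t$-weighted sum, a deviation of $2\sqrt{T\log(1/\delta')}$; a union bound over the $k+1$ events costs only a factor $k+1$ in $\delta'$. With $T=20000\log(k/\delta)/\varepsilon^2$ this term is $\le \tfrac{T\varepsilon}{10}$ and the arithmetic closes. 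The paper's own remark after the proof says exactly this: the argument works ``regardless of how many samples are used to construct $\widehat r_i^t$.''

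Consequently your ``main obstacle'' --- the $\sqrt{T}\cdot 1/\sqrt{k\overline w_i^t}$ term for a coordinate with small $\overline w_i^t$ --- is not a real obstacle: it disappears once you drop the variance argument and use boundedness instead. Your parenthetical ``$k\gtrsim 1/\varepsilon^2$'' restriction is likewise unnecessary. The weighted sampling with $\lceil k\overline w_i^t\rceil$ samples is not used in this lemma at all; it matters only later, in the trajectory analysis of Lemma~\ref{lemma:main}.
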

\begin{proof} See Appendix~\ref{sec:opt}. \end{proof}

Taking Lemma~\ref{lemma:opth} and Lemma~\ref{lemma:opt} together, 
one can readily see that Algorithm~\ref{alg:main1} 
returns an $\varepsilon$-optimal randomized hypothesis $h^{\mathsf{final}}$ 
with probability at least $1-\delta/2$.  
The next step then lies in bounding the total number of samples that has been collected in Algorithm~\ref{alg:main1}. 
Towards this end, 
recall that  $\overline{w}^T_i  =\max_{1\leq t\leq T}w_i^t$ for each $i\in [k]$. 
Recognizing that $\widehat{w}^t_i \leq \overline{w}^t_i$ for each $t\in [T]$ and $i\in [k]$, 
we can  bound the total sample size by
\begin{align}
	(\text{sample size})\qquad 
\Tone\sum_{i=1}^{k}\widehat{w}_{i}^{T}+k+T\bigg(k\sum_{i=1}^{k}\overline{w}_{i}^{T}+k\bigg)
	&\leq\left(\Tone\|\overline{w}^{T}\|_{1}+kT\|\overline{w}^{T}\|_{1}\right)+k(T+1) \notag\\
	&\lesssim \frac{ d\log\left(\frac{kd}{\varepsilon}\right)+ k\log\left(\frac{k}{\delta\varepsilon}\right)}{\varepsilon^2} \cdot \|\overline{w}^{T}\|_1 ,
	\label{eq:sample-size-wT}
\end{align}
where the last relation follows from our choices $T\asymp \frac{\log(k/\delta)}{\varepsilon^2} $ and $T_1 \asymp \frac{k\log(k/\varepsilon)+d\log(kd/\varepsilon)+\log(1/\delta)}{\varepsilon^2} $
(cf.~line~\ref{line:hyper-pars} of Algorithm~\ref{alg:main1}) and the basic property that $\|\overline{w}^{T}\|_1\geq \sum_i w_i^1 = 1$. 
Consequently, everything then comes down to bounding  $\|\overline{w}^{T}\|_1$, 
for which we resort to the following lemma. 
\begin{lemma}\label{lemma:main} Assume that lines~\ref{line:a1}-\ref{line:a2} in Algorithm~\ref{alg:main1} are replaced with some oracle which returns a hypothesis $h^{t}$ satisfies that $L(h^t,w^t)\leq \min_{h\in \mathcal{H}}L(h,w^{t})+\varepsilon_1$ in the $t$-th round for each $1\leq t\leq T$. 
With probability at least $1-\delta/4$, the quantity 
$\|\overline{w}^{T}\|_1$ is bounded above by
$$
	\|\overline{w}^{T}\|_1\leq 
 O\left(  \log^8\left( \frac{k}{\delta\varepsilon}\right) \right). 
 $$
\end{lemma}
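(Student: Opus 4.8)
The plan is to bound $\|\overline{w}^{T}\|_1 = \sum_{i\in[k]} \max_{1\le t\le T} w_i^t$ via the doubling decomposition sketched in Section~\ref{sec:tec22}: write $\|\overline{w}^{T}\|_1 \approx \sum_{j=0}^{O(\log k)} 2^{-j}\,|\mathcal{W}(2^{-j})|$ where $\mathcal{W}(p) := \{i : \max_{1\le t\le T} w_i^t \in [2p,4p]\}$, so that everything reduces to establishing the key per-scale estimate $|\mathcal{W}(p)| \le \widetilde{O}(1/p)$ up to $\mathrm{poly}\log(k/(\delta\varepsilon))$ factors. Summing $2^{-j}\cdot\widetilde{O}(2^j)$ over $O(\log k)$ scales then yields $\|\overline{w}^{T}\|_1 = \widetilde{O}(1)$ with the stated $\log^8$ exponent accounting for the losses incurred along the way.

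To prove $|\mathcal{W}(p)| \le \widetilde{O}(1/p)$, I would first record the Hedge trajectory estimate: for any $t_1 < t_2$ and any index set $\mathcal{I}$, tracking the Hedge potential gives $\mathsf{KL}(w^{t_1}\|w^{t_2}) \lesssim \eta^2(t_2-t_1) + \eta\sum_{\tau=t_1}^{t_2-1}(w^{t_1}+w^{\tau})^\top\Delta^\tau$ where $\Delta^\tau$ is the error of the empirical loss vector $\widehat{r}^\tau$; the weighted sampling in line~\ref{line:updateS} (with $\lceil k\overline{w}_i^t\rceil$ samples) is precisely what makes the stochastic term $\lesssim \eta^2(t_2-t_1)$ with high probability after a union bound, so effectively $\mathsf{KL}(w^{t_1}\|w^{t_2}) \lesssim \varepsilon^2(t_2-t_1)$. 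This should be isolated as a preliminary claim (invoking Lemma~\ref{lemma:cut} to split $[T]$ into $\widetilde{O}(1)$ good pieces where $-\langle w^\tau, r^\tau\rangle = O(\varepsilon)$ throughout, which is where a couple of the log factors come from). Combined with $\mathsf{KL}(w^{t_1}\|w^{t_2}) \ge \mathsf{KL}(\mathsf{Ber}(\sum_{i\in\mathcal{I}}w_i^{t_1})\|\mathsf{Ber}(\sum_{i\in\mathcal{I}}w_i^{t_2}))$ (data processing) and Lemmas~\ref{lemma:klbound}/\ref{lemma:klcmp}, a $(p,q,x)$-segment $(t_1,t_2)$ with index set $\mathcal{I}$ satisfies $t_2-t_1 \ge \Omega(q x^2/\varepsilon^2)$ when $q \gtrsim p$, hence $t_2 - t_1 \gtrsim p|\mathcal{I}|/\varepsilon^2$ when $\mathcal{I}$ has mass $\asymp p$ at $t_1$ but the shared mass stays $\ge q \asymp p$.

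Then the combinatorial core, following Definition~\ref{def:seg}: for each $i\in\mathcal{W}(p)$ pick an interval $[s_i,e_i]$ on which $w_i^t$ roughly doubles from $\in[p/2,p]$ to $\ge 2p$ while staying $\ge p$; these are $(p,p,\log 2)$-segments (after grouping indices whose intervals coincide into a shared index set $\mathcal{I}$ via Definition~\ref{def:seg}). Since overlapping segments cannot be summed naively, extract a $\frac{1}{\mathrm{poly}\log(k,T)}$ fraction that form \emph{disjoint} segments — this is the hard part, and is exactly the content deferred to Appendix~\ref{sec:control-trajectory}: it needs (a) chopping the segment collection into $\widetilde{O}(1)$ disjoint blocks whose members share a common inner point, (b) splitting each segment at that point so WLOG all share a common endpoint, and (c) a recursion producing $O(\log k)$ "regular configurations" of which one, by pigeonhole, retains a $\widetilde{O}(1)$ fraction of the segments as genuinely disjoint $(p/4,p/8,\Omega(1/\log k))$-segments. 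Once disjointness is secured, with $\{[s_n,e_n]\}$ disjoint and index sets $\mathcal{V}_n$, the segment length bound $e_n-s_n \gtrsim p|\mathcal{V}_n|/\varepsilon^2$ and $\sum_n(e_n-s_n)\le T$ give $\sum_n|\mathcal{V}_n| \lesssim T\varepsilon^2/p = \widetilde{O}(1/p)$, and since the retained segments cover a $\frac{1}{\mathrm{poly}\log}$ fraction of $\mathcal{W}(p)$ we get $|\mathcal{W}(p)| \le \widetilde{O}(1/p)$. Plugging back into the doubling sum and absorbing all the $\mathrm{poly}\log$ losses (from Lemma~\ref{lemma:cut}, the union bound over segment parameters, the block decomposition, and the $O(\log k)$ configurations) into a single $\log^8(k/(\delta\varepsilon))$ completes the proof; the failure probability $\delta/4$ comes from the high-probability control of the stochastic terms $\Delta^\tau$ over all rounds. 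The main obstacle is unquestionably step (b)--(c) — turning an arbitrary family of overlapping segments into a large disjoint sub-family with controlled parameters — which is why that construction is the technically heaviest piece of the paper.
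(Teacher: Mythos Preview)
Your high-level plan matches the paper exactly: the dyadic decomposition into $\mathcal{W}_j$, the reduction to $|\mathcal{W}_j|=\widetilde{O}(2^j)$, the per-index segment construction, and the combinatorial extraction of disjoint sub-segments (Lemmas~\ref{lemma:part2}--\ref{lemma:part3}) are all correct, and you rightly flag the last piece as the technically heaviest step.

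There is, however, a concrete gap in how you treat the KL trajectory bound and Lemma~\ref{lemma:cut}. The estimate $\mathsf{KL}(w^{t_1}\|w^{t_2})\lesssim\eta^2(t_2-t_1)$ (modulo noise) is \emph{not} what the Hedge analysis delivers: the actual bound \eqref{eq:s6} carries an additional term $-(t_2-t_1)\eta\,v^{t_1}$ with $v^{t}=L(h^{t},w^{t})-\mathsf{OPT}$, and while $v^{t}\le\varepsilon_1$ always, $v^{t}$ has no lower bound, so this term can dominate. Your proposed fix of ``splitting $[T]$ into $\widetilde{O}(1)$ good pieces where $-v^{\tau}=O(\varepsilon)$'' does not work, because the KL bound depends on $v^{t}$ only at the \emph{starting point} of the sub-interval, and restricting to a subset of good times destroys the contiguous-interval structure on which the KL-to-length implication relies. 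What Lemma~\ref{lemma:cut} actually does is a per-segment multi-scale telescoping: inside each segment it introduces $\tau_{\hat{j}}=\min\{\tau\ge t_1:-v^{\tau}\le 2^{-(\hat{j}-1)}\}$, locates a sub-interval $[\tau_{\tilde{j}},\tau_{\tilde{j}+1}]$ on which the aggregate weight grows by the required factor, and applies the KL bound from $\tau_{\tilde{j}}$ (where $-v^{\tau_{\tilde{j}}}$ is now controlled by construction). The output is a \emph{dichotomy}: either $t_2-t_1\gtrsim qx^2/\eta^2$, or $\sum_{\tau=t_1}^{t_2-1}(-v^{\tau}+\varepsilon_1)\gtrsim qx^2/\eta$. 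Your final summation uses only the length branch, but both branches are essential: summing over the disjoint segments from Lemma~\ref{lemma:part3}, the first is capped by $T\eta$ and the second by the global Hedge regret $\sum_{t=1}^{T}(-v^{t})\le\log(k)/\eta+\eta T+O(\sqrt{T\log(1/\delta')})$ from \eqref{eq:s1}. It is this joint budget---disjointness for the lengths \emph{and} the regret bound for the accumulated $-v^{\tau}$ mass---that yields $\sum_n|\mathcal{V}_j^n|\le\widetilde{O}(2^j)$; the bare length inequality $e_n-s_n\gtrsim p|\mathcal{V}_n|/\varepsilon^2$ you invoke is not available on its own.
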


\noindent 
It is noteworthy that the proof of  Lemma~\ref{lemma:main} is the most technically challenging part of the analysis, and  
we shall present this proof in Section~\ref{sec:pflmain}.

Combining Lemma~\ref{lemma:main} with \eqref{eq:sample-size-wT} immediately reveals that, with probability at least $1-\delta$, the sample complexity of Algorithm~\ref{alg:main1} is bounded by 
$$
O\left(\frac{ d\log\left(\frac{kd}{\varepsilon}\right)+ k\log\left(\frac{k}{\delta\varepsilon}\right)}{\varepsilon^2} \cdot 
 \log^8\left( \frac{k}{\delta\varepsilon}\right) \right),
$$
as claimed in Theorem~\ref{thm:main}.

\subsection{Proof of Lemma~\ref{lemma:main} (controlling the Hedge trajectory)}\label{sec:pflmain}

As alluded to previously, the proof of Lemma~\ref{lemma:main} forms the most technically challenging part of our analysis. 
%
%
Let us begin by introducing several convenient notation. 
Set $\delta' = \delta/(32T^4k^2)$, and let 
\begin{align}
\overline{j} = \left\lfloor \log_2\left(\frac{k\log^2(2)}{50(\log_2(1/\eta)+1)^2\log_2^2(k)}\right) \right\rfloor -2.
\label{eq:defn-jbar-Lemma3}
\end{align}
Let us define 
\begin{subequations}
\begin{align}
	\mathcal{W}_j &\coloneqq \big\{i\in [k] \mid \max_{1\leq t\leq T}w_i^t \in ( 2^{-j},2^{-(j-1)}]  \big\}, \qquad 1\leq j \leq \overline{j}  \label{eq:defn-Wj-main-123}\\
	\overline{\mathcal{W}} &\coloneqq [k]/\cup_{j}\mathcal{W}_j.
\end{align}
\end{subequations}
%
%
In other words, we divide the $k$ distributions into a logarithmic number of groups $\{\mathcal{W}_j\}$, 
where each $\mathcal{W}_j$ consists of those distributions whose corresponding  $\max_{t}w_i^t$ are on the same order.   
The main step in establishing Lemma~\ref{lemma:main} lies in bounding the size of  each $\mathcal{W}_j$, 
as summarized below.
\begin{lemma}\label{lemma:bdcount} Suppose that the assumptions of Lemma~\ref{lemma:main} hold. 
  Then with probability exceeding $1-8T^4k\delta'$, 
\begin{equation}
	|\mathcal{W}_j|\leq 8\cdot 10^7\cdot \left(\big(\log_2(1/\eta)+1\big)^2\log_2^2(k)\left(\log(k)+\log(1/\delta')\right)^3\big(\log_2(T)+1\big)\right) \cdot 2^j
	\label{eq:Wj-UB-lemma}
\end{equation}
holds all $1\leq j \leq \overline{j}$, 
with $\overline{j}$ defined in \eqref{eq:defn-jbar-Lemma3}. 
\end{lemma}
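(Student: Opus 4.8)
The goal is to bound $|\mathcal{W}_j|$ for a fixed level $j$, where $\mathcal{W}_j$ consists of coordinates whose peak weight $\max_t w_i^t$ lies in $(2^{-j},2^{-(j-1)}]$. Writing $p := 2^{-j}$, my plan is to formalize the heuristic of Section~\ref{sec:tec2} via the notion of $(p,q,x)$-\emph{segments} (Definition~\ref{def:seg}). First I would record the basic trajectory estimate for Hedge with (slightly noisy) loss vectors: for any $t_1 \le t_2$ and any index set $\mathcal{I}$, one has a bound of the form $\mathsf{KL}(w^{t_1}\,\|\,w^{t_2}) \lesssim \eta^2(t_2-t_1) + (\text{martingale error})$, where the martingale error is controlled using Freedman's inequality (Lemma~\ref{freedman}) together with the variance-reduction afforded by the weighted sampling in line~\ref{line:updateS} (the $\lceil k\overline w_i^t\rceil$ samples per round); this is the ``$-\langle w^t,y^t\rangle = O(\varepsilon)$ on blocks'' phenomenon that forces a factor of $\widetilde O(1)$ (the $\log_2(T)+1$ and the $(\log(k)+\log(1/\delta'))^3$ in \eqref{eq:Wj-UB-lemma}). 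Combining this with the data-processing inequality (Lemma~\ref{lemma:klbound}) applied to $f(w) = \sum_{i\in\mathcal{I}} w_i$ and the Bernoulli KL lower bound (Lemma~\ref{lemma:klcmp}), I get: if $(t_1,t_2)$ is a $(p,q,x)$-segment with index set $\mathcal{I}$, then $t_2 - t_1 \gtrsim \eta^{-2} q x^2 \cdot (\text{correction from the error term})$, and crucially, if \emph{many} disjoint segments at level $p$ share index sets of total size $m$, then on each the length is $\gtrsim \eta^{-2} p x^2$ (using $q \asymp p$), so their total length $\gtrsim \eta^{-2} p x^2 \cdot (\#\text{segments})$, which must be $\le$ (number of rounds within the union) $\le T$; this yields $\#\text{segments} \lesssim T\varepsilon^2/p = \widetilde O(1/p)$.

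The combinatorial heart — and the step I expect to be the main obstacle — is the reduction from ``$\mathcal{W}_j$ is large'' to ``there are many \emph{disjoint} segments each carrying a large index set.'' For each $i\in\mathcal{W}_j$ there is an interval $[s_i,e_i]$ as in \eqref{eq:defn-s-e-issue} on which $w_i^t$ doubles, but these intervals overlap arbitrarily. The plan follows the outline at the end of Section~\ref{sec:tec22}: (a) group a $\frac{1}{O(\log_2 T)}$-fraction of the intervals into disjoint ``blocks,'' where within a block all intervals share a common inner point $t_{\mathrm{mid}}$ — losing only a logarithmic factor by a dyadic/interval-scheduling argument; (b) within a block, split each interval at $t_{\mathrm{mid}}$ into a left half and a right half; since $w_i$ doubles across $[s_i,e_i]$, at least one half sees a multiplicative increase of $\sqrt 2$, so at least half the indices give rise to $(\tfrac p4,\tfrac p8,\tfrac{\log 2}{2})$-segments sharing a common endpoint $t_{\mathrm{mid}}$; (c) re-index so endpoints are monotone and recursively build $O(\log_2 k)$ ``regular configurations'' of sub-segments (the construction alluded to via Lemma~\ref{lemma:part3} / Figures~\ref{fig7}--\ref{fig8}), so that every segment contributes a $\big(\tfrac p4,\tfrac p8,\tfrac{\log 2}{2(\log_2 k + 2)}\big)$-sub-segment to some configuration; (d) pigeonhole to find one configuration containing $\gtrsim |\mathcal{W}_j|/\big((\log_2 T+1)(\log_2 k+2)\big)$ such sub-segments, which by construction are disjoint with index sets whose sizes sum to that many. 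The price of all these reductions — two $\log_2$ factors from blocking and configurations, the $\log_2(1/\eta)$ from iterating the halving $\log(1/\eta)$ times down the scales, and the cubic log from the concentration correction — is exactly the polylog prefactor in \eqref{eq:Wj-UB-lemma}.

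Finally I would assemble the pieces: apply the disjoint-segment length bound to the configuration found in step (d). Each of its $\gtrsim |\mathcal{W}_j|/\mathrm{polylog}$ sub-segments has length $\gtrsim \eta^{-2}\cdot p \cdot \big(\tfrac{\log 2}{2(\log_2 k+2)}\big)^2$, and they are disjoint inside $[0,T]$, so
\[
|\mathcal{W}_j| \;\lesssim\; \mathrm{polylog}\cdot \frac{T\eta^2 (\log_2 k + 2)^2}{p} \;=\; \mathrm{polylog}\cdot \frac{1}{p} \;=\; \mathrm{polylog}\cdot 2^{j},
\]
matching \eqref{eq:Wj-UB-lemma} after tracking constants; the restriction $j \le \overline j$ in \eqref{eq:defn-jbar-Lemma3} is exactly what guarantees the right-hand side of \eqref{eq:Wj-UB-lemma} stays below $k$ so the bound is non-vacuous. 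Throughout, the $8T^4k\delta'$ failure probability is a union bound over the $O(T^4 k)$ choices of (level, block structure, endpoint pair) at which Freedman's inequality is invoked with parameter $\delta'$. The noisy-loss-vector KL estimate of the first paragraph, and controlling its error term by the weighted-sampling variance reduction, is where most of the real work lies; the rest is careful but routine bookkeeping of the interval combinatorics.
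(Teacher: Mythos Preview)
Your overall architecture matches the paper's: associate a segment to each $i\in\mathcal{W}_j$ (Lemma~\ref{lemma:aux1}), block them into groups sharing a common inner point (your step (a) is Lemma~\ref{lemma:part2}), halve at $t_{\mathrm{mid}}$ and run the recursive ``regular configuration'' construction (your steps (b)--(d) are Lemma~\ref{lemma:part3}), then apply a segment-length estimate and pigeonhole. The interval combinatorics you sketch is essentially what the paper does.

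However, there is a genuine gap in the analytic core. Your trajectory bound $\mathsf{KL}(w^{t_1}\,\|\,w^{t_2}) \lesssim \eta^2(t_2-t_1) + (\text{martingale error})$ is false as stated. The Hedge analysis (see the derivation leading to \eqref{eq:s3}--\eqref{eq:s6}) produces an additional drift term of order $-(t_2-t_1)\eta\, v^{t_1}$, where $v^t = L(h^t,w^t)-\mathsf{OPT}$; since $-v^t$ can be $\Theta(1)$ at individual times --- not $O(\varepsilon)$ --- this term can swamp $\eta^2(t_2-t_1)$, and the deduction ``length $\gtrsim \eta^{-2}qx^2$'' does not follow directly. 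You identify the ``$-\langle w^t,y^t\rangle = O(\varepsilon)$ on blocks'' issue with the Freedman/variance-reduction martingale error, but these are two \emph{different} terms: the martingale error comes from $\widehat r^t$ versus the true loss, while the $v^t$ drift comes from the best-response structure and is present even with exact loss vectors. The paper's Lemma~\ref{lemma:cut} handles the $v^t$ term by a dyadic decomposition in the \emph{magnitude of $-v^\tau$}: it introduces stopping times $\tau_{\widehat\jmath}$ where $-v^\tau$ first drops below $2^{-(\widehat\jmath-1)}$ for $\widehat\jmath=1,\dots,j_{\max}=\lfloor\log_2(1/\eta)\rfloor+1$, pigeonholes to find a sub-interval $[\tau_{\widetilde\jmath},\tau_{\widetilde\jmath+1}]$ on which the aggregate weight still grows by a factor $\exp\big(x/(\log_2(1/\eta)+1)\big)$, and on that sub-interval the drift is only $\eta\, 2^{-(\widetilde\jmath-1)}\cdot(\text{length})$. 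This yields a \emph{dichotomy}: either the segment is long enough (case $\widetilde\jmath=j_{\max}$, where $2^{-(\widetilde\jmath-1)}\lesssim\eta$), or else many $\tau$ have $-v^\tau>2^{-\widetilde\jmath}$, which forces $\sum_{\tau=t_1}^{t_2-1}(-v^\tau+\varepsilon_1)$ to be large. The final assembly \eqref{eq:Teta-sum-vt-bound-135} sums \emph{both} branches of this dichotomy over the disjoint segments from Lemma~\ref{lemma:part3} and then caps the global quantity $\sum_{t=1}^T(-v^t)$ by the Hedge regret bound \eqref{eq:s1}. In particular, the $(\log_2(1/\eta)+1)^2$ factor in \eqref{eq:Wj-UB-lemma} arises from this inner dyadic over $v^\tau$-scales inside Lemma~\ref{lemma:cut}, not from ``iterating the halving $\log(1/\eta)$ times down the scales'' as you wrote.

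A smaller slip in your final display: you write ``each of its $\gtrsim |\mathcal{W}_j|/\mathrm{polylog}$ sub-segments has length $\gtrsim \eta^{-2}\cdot p\cdot(\dots)^2$,'' but what Lemma~\ref{lemma:part3} delivers is $N$ disjoint segments with $q_n = 2^{-(j+2)}|\mathcal{V}_j^n|$ and $\sum_n|\mathcal{V}_j^n|\gtrsim|\mathcal{W}_j|/\mathrm{polylog}$. The length bound scales with $q_n$, so the quantity that gets compared to $T$ (in the ``long'' branch) is $\eta^{-2}x'^2\cdot 2^{-(j+2)}\sum_n|\mathcal{V}_j^n|$, not (number of segments)$\,\times\,\eta^{-2}p\,x'^2$. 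This is easy to repair once you track the merged-index-set parameter $q_n$ rather than the per-coordinate $q\asymp p$.
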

In words, Lemma~\ref{lemma:bdcount} asserts that the cardinality of each $\mathcal{W}_j$ is upper bounded by 
$$ |\mathcal{W}_j| \leq \widetilde{O}(2^j),
\qquad 1\leq j \leq \overline{j}.$$
Importantly, this lemma tells us that, with probability at least $1-8T^4k^2\delta' = 1-\delta/4$, one has
\begin{align}
	\| \overline{w}^T\|_1 = \sum_{i=1}^k \max_{1\leq t\leq T}w_i^t & \leq  k\cdot 2^{-(\overline{j}-1)} +\sum_{j=1}^{\overline{j}} |\mathcal{W}_j| 2^{-(j-1)} \nonumber \\
	&\leq k\cdot \frac{800\big(\log_2(1/\eta)+1\big)^2\log_2^2(k)}{k\log^2(2)} +\sum_{j=1}^{\overline{j}} |\mathcal{W}_j| 2^{-(j-1)} \notag
\\ & \leq 2\cdot 10^8\cdot \left(\big(\log_2(1/\eta)+1\big)^2\log_2^2(k)\big(\log(Tk)+\log(1/\delta)\big)^3\big(\log_2(T)+1\big)\right), \nonumber
\end{align}
where the first inequality is valid since $\max_{1\leq t\leq T}w_i^t \leq 2^{-(j-1)}$ holds for any $i\in \mathcal{W}_j$.  
This immediately concludes the proof of Lemma~\ref{lemma:main}, 
as long as Lemma~\ref{lemma:bdcount} can be established. 

Noteworthily, proving Lemma~\ref{lemma:bdcount} is the most challenging part of our analysis, 
and we dedicate the next subsection (Section~\ref{sec:control-trajectory})  to the proof of Lemma~\ref{lemma:bdcount}.

\subsection{Proof of Lemma~\ref{lemma:bdcount} (bounding $|\mathcal{W}_j|$ for each $j$)}
\label{sec:control-trajectory}

%
The proof relies heavily on the concepts of ``segments'' introduced in Section~\ref{sec:tec2}. 
%
%
%
%
Throughout this section, we shall take $\delta' = \delta/(32T^4k^2)$,  
and focus on any $j$ obeying 
\begin{align}
	1\leq j \leq \log_2(k)-2.
	\label{eq:j-range-proof}
\end{align}
%

%



Before continuing, we find it helpful to underscore a high-level idea: 
if $|\mathcal{W}_j|$ is large,  then there exist many disjoint segments, thereby requiring the total length $T$ to be large enough in order to contain these segments. 
The key steps to construct such disjoint segments of interest are as follows:



%
%
\begin{enumerate}
\item Construct a suitable segment for each $i\in \mathcal{W}_j$ (see Lemma~\ref{lemma:aux1});
\item Identify sufficiently many disjoint blocks such that the segments within each block have nonempty intersection (see Lemma~\ref{lemma:part2} 
and Figure~\ref{fig5});  
		%
%
\item From the above disjoint blocks, identify sufficiently many disjoint subsets such that the distributions associated with each subset can be linked with a common (sub)-segment,  
	and that each of these (sub)-segments experiences sufficient changes between its starting and end points 
		(see Lemma~\ref{lemma:part3}, Figure~\ref{fig6} and Figure~\ref{fig8}).
\end{enumerate}
In the sequel, we shall present the details of our proof, which consist of multiple steps.

\subsubsection{Step 1: showing existence of a segment for each distribution in $\mathcal{W}_j$}
\label{sec:lemma:aux1}

Recall that $\mathcal{W}_j$ contains those distributions whose corresponding weight iterates obey $\max_{1\leq t\leq T}w_i^{t}\in (2^{-j}, 2^{-j+1}]$  (cf.~\eqref{eq:defn-Wj-main-123}). 
%
%
As it turns out, for any $i\in \mathcal{W}_j$, one can find an $\big(\frac{1}{2^{j+1}},\frac{1}{2^{j+2}},\log(2)\big)$-segment,  
as stated in the lemma below. 
This basic fact allows one to link each distribution in $ \mathcal{W}_j$ with a segment of suitable parameters.


\begin{lemma}\label{lemma:aux1}  
For each $i \in \mathcal{W}_j$,
there exists $1\leq s_i<e_i\leq T$, such that 
\begin{align}
	\frac{1}{2^{j+2}}<w^{s_i}_i \leq \frac{1}{2^{j+1}}, 
	\qquad 
	w^{e_i}_i > \frac{1}{2^{j}} , \qquad
	\text{and} \qquad
	w^t_i >2^{-(j+2)} \quad \forall t \in[ s_i , e_i].
\end{align}
In other words, there exists a $\left(\frac{1}{2^{j+1}},\frac{1}{2^{j+2}},\log(2)\right)$-segment $(s_i,e_i)$ with the index set as $\{i\}$  
	(see Definition~\ref{def:seg}).
\end{lemma}
\begin{proof}
From the definition \eqref{eq:defn-Wj-main-123} of $\mathcal{W}_j$, it is straightforward to find a time point $e_i$ obeying $w^{e_i}_i > \frac{1}{2^{j}}$. It then remains to identify a valid point $s_i$. 
To this end, let us define 
\[
	\tau =\max\big\{t \mid t \leq e_i, w^t_i \leq 2^{-(j+2)} \big\},
\]
which is properly defined since $w^1_i = 1/k \leq 2^{-(j+2)}$ (see \eqref{eq:j-range-proof}). 
With this choice in mind, we have
$$
	w^t_i > 2^{-(j+2)}, \qquad \forall t \text{ obeying }\tau +1 \leq t\leq e_i.
$$
In addition, it follows from the update rule (cf.~lines~\ref{line:wt-update-Wt} and \ref{line:updateS11} of Algorithm~\ref{alg:main1}) that
\begin{align*}
\log(w_{i}^{t+1}/w_{i}^{t}) & =\log(W_{i}^{t+1}/W_{i}^{t})-\log\Big(\sum_{j}W_{j}^{t+1}/\sum_{j}W_{j}^{t}\Big)\\
 & \leq\eta-\log\Big(\sum_{j}W_{j}^{t+1}/\sum_{j}W_{j}^{t}\Big)\leq2\eta\leq1/10,
\end{align*}
where the last inequality results from our choice of $\eta$. This in turn allows us to show that
\begin{align}
	w^{\tau+1}_i \leq w^{\tau}_i  \exp(1/10)
	\leq \frac{1}{2^{j+2}}\cdot \exp(1/10) \leq \frac{1}{2^{j+1}}.
\end{align}
As a result, it suffices to choose $s_i = \tau+1$, thus concluding the proof.
\end{proof}

\subsubsection{Step 2: constructing disjoint segments with good coverage}
While Lemma~\ref{lemma:aux1} justifies the existence of suitable segments $\{(s_i,e_i)\}$ associated with each distribution in $\mathcal{W}_j$, 
we need to divide (a nontrivial subset of) them into certain disjoint blocks, where the segments in each block have at least one common inner points. 
This is accomplished in the following lemma. 
\begin{lemma}\label{lemma:part2} 
Recall the definition of $\mathcal{W}_j$ in \eqref{eq:defn-Wj-main-123}. 
	For each $i \in \mathcal{W}_j$, denote by $(s_i,e_i)$ the segment identified in Lemma~\ref{lemma:aux1}. 
Then there exist a group of disjoint subsets $\{\mathcal{W}_j^p\}_{p=1}^P$ of $\mathcal{W}_j$ obeying
\begin{enumerate}
	\item[$\mathrm{(i)}$]  $\mathcal{W}_j^p\subseteq \mathcal{W}_j$, 
$\mathcal{W}_j^p \cap \mathcal{W}_j^{p'}=\emptyset$,
$\forall p\neq p'$; 

	\item[$\mathrm{(ii)}$] $ \sum_{p=1}^P |\mathcal{W}_j^p|\geq \frac{|\mathcal{W}_j|}{3(\log_2(T)+1)}$;

	\item[$\mathrm{(iii)}$] Let  $\widetilde{s}_p = \min_{i\in \mathcal{W}_j^p}s_i$ and $\widetilde{e}_p  =\max_{i\in \mathcal{W}_j^p}e_i$ 
			for each $1\leq p \leq P$. One has
 $1\leq \widetilde{s}_1<\widetilde{e}_1\leq \widetilde{s}_2<\widetilde{e}_2\leq \dots \leq \widetilde{s}_P <\widetilde{e}_P\leq T$
		and $\max_{i\in \mathcal{W}_j^p}s_i \leq \min_{i \in\mathcal{W}_j^p}e_i$ for each $1\leq p\leq P$.
\end{enumerate}
\end{lemma}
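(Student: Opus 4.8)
The claim is purely combinatorial and uses none of the Hedge-specific structure of the segments produced by Lemma~\ref{lemma:aux1}; all that matters is that each $(s_i,e_i)$ is an interval with integer endpoints satisfying $1\le s_i<e_i\le T$. The plan is a two-level pigeonhole combined with a grid-based interval-scheduling argument. As the first step I bucket the indices by the dyadic scale of the interval length: for $\ell\ge 0$ set $\mathcal{A}_\ell\coloneqq\{i\in\mathcal{W}_j:2^\ell\le e_i-s_i<2^{\ell+1}\}$. Since $1\le e_i-s_i\le T-1$, the only nonempty buckets have $0\le\ell\le\lfloor\log_2(T-1)\rfloor$, so there are at most $\log_2 T+1$ of them, and by the pigeonhole principle some scale $\ell^\star$ obeys $|\mathcal{A}_{\ell^\star}|\ge|\mathcal{W}_j|/(\log_2 T+1)$. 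Put $h\coloneqq 2^{\ell^\star}$, so that $h\le e_i-s_i<2h$ for every $i\in\mathcal{A}_{\ell^\star}$.

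\textbf{Grid assignment within one scale.} Next I align a uniform grid of spacing $h$ and assign each $i\in\mathcal{A}_{\ell^\star}$ to the cell of its left endpoint, $m(i)\coloneqq\lfloor s_i/h\rfloor$, and set $G_m\coloneqq\{i\in\mathcal{A}_{\ell^\star}:m(i)=m\}$; these sets partition $\mathcal{A}_{\ell^\star}$. The key observation is that all intervals in $G_m$ contain the grid point $(m+1)h$: indeed $s_i<(m+1)h$ because $m(i)=m$, while $e_i\ge s_i+h\ge mh+h=(m+1)h$ because $s_i\ge mh$ and $e_i-s_i\ge h$. Hence $\max_{i\in G_m}s_i\le(m+1)h\le\min_{i\in G_m}e_i$. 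Moreover each $[s_i,e_i]$ with $i\in G_m$ lies inside $[mh,(m+3)h)$, since $e_i<s_i+2h<(m+1)h+2h=(m+3)h$. Consequently, whenever $m'\ge m+3$ the blocks $G_m$ and $G_{m'}$ have order-separated convex hulls: $\max_{i\in G_m}e_i<(m+3)h\le m'h\le\min_{i\in G_{m'}}s_i$.

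\textbf{Mod-3 selection and verification.} I now split the grid indices into their three residue classes modulo $3$. Since the $G_m$ partition $\mathcal{A}_{\ell^\star}$, one residue class collects at least a third of the intervals; let its nonempty blocks, listed in increasing order of grid index $m^{(1)}<m^{(2)}<\cdots<m^{(P)}$, be $\mathcal{W}_j^p\coloneqq G_{m^{(p)}}$, so that $m^{(p+1)}\ge m^{(p)}+3$ and $\sum_{p=1}^P|\mathcal{W}_j^p|\ge|\mathcal{A}_{\ell^\star}|/3\ge|\mathcal{W}_j|/(3(\log_2 T+1))$. These are disjoint subsets of $\mathcal{W}_j$, giving (i), and the displayed bound is (ii). For (iii): each block is nonempty, so picking any $i$ in it yields $\widetilde s_p\le s_i<e_i\le\widetilde e_p$, i.e.\ $\widetilde s_p<\widetilde e_p$; the order-separation property applied to $m^{(p)}$ and $m^{(p+1)}$ gives $\widetilde e_p<\widetilde s_{p+1}$ for $p<P$; the bounds $\widetilde s_1\ge 1$ and $\widetilde e_P\le T$ hold because all $s_i,e_i$ lie in $[1,T]$; and the common-grid-point property gives $\max_{i\in\mathcal{W}_j^p}s_i\le\min_{i\in\mathcal{W}_j^p}e_i$.

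\textbf{Where the care is needed.} There is no deep difficulty; the delicate part is the bookkeeping in the grid step — choosing the grid spacing so that every occupied cell's intervals genuinely share a point while cells at index-distance at least $3$ are order-separated, and then checking that this costs only the stated factor $3$ on top of the factor $\log_2 T+1$ from the scale pigeonhole. The only degenerate case, $\mathcal{W}_j=\emptyset$, is handled vacuously with $P=0$.
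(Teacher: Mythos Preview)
Your proof is correct and follows the same overall architecture as the paper's: first bucket the indices by the dyadic scale of $e_i-s_i$ and pick the heaviest bucket (costing a factor $\log_2 T+1$), then group intervals within that bucket so that each group shares a common point, and finally thin by residue class modulo $3$ to make the groups' convex hulls order-separated. The only difference is the grouping mechanism: the paper peels off groups greedily (repeatedly take all remaining intervals whose start is at most the current minimum end, and show $\kappa_{o+1}\ge\kappa_o+2^{x^\star-1}$), whereas you use a static grid of spacing $h=2^{\ell^\star}$ and assign each interval to the cell of its left endpoint. Your grid argument is a slightly cleaner substitute for the greedy step and yields the same $1/3$ factor; otherwise the two proofs coincide.
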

\begin{proof} See Appendix~\ref{sec:proof-lemma:part2}. \end{proof}
\noindent 
In words, Lemma~\ref{lemma:part2} reveals the existence a collection of {\em disjoint} subsets of $\mathcal{W}_j$ such that (a) they account for 
a sufficiently large fraction of the indices contained in $\mathcal{W}_j$, 
and (b) 
the segments in each subset $\mathcal{W}_j^p$ share at least one common inner point.

Thus far, each of the segments constructed above is associated with a single distribution in $\mathcal{W}_j$. 
Clearly, it is likely that many of these segments might have non-trivial overlap; in other words, many of them might have shared sub-segments. 
What we intend to do next is to further group the indices in $\{\mathcal{W}_j^p\}$ into disjoint subgroups, 
and identify a common (sub)-segment for each of these subgroups.  
What remains unclear, however, is whether each of these (sub)-segments experiences sufficient weight changes between its starting and end points. 
We address these in the following lemma.


\begin{lemma}\label{lemma:part3}
%
Recall the definition of $\mathcal{W}_j$ in \eqref{eq:defn-Wj-main-123}. 
Then there exists  a group of subsets $\{\mathcal{V}_j^n\}_{n=1}^N$ of $\mathcal{W}_j$  satisfying the following properties: 
\begin{enumerate}
	\item[$\mathrm{(i)}$] $\mathcal{V}_j^n\subseteq \mathcal{W}_j$, $\mathcal{V}_j^n\cap \mathcal{V}_j^{n'}=\emptyset$, $\forall n\neq n'$; 
	\item[$\mathrm{(ii)}$] $\sum_{n=1}^N |\mathcal{V}_j^n|\geq \frac{|\mathcal{W}_j|}{24\log_2(k)(\log_2(T)+1)}$; 
	\item[$\mathrm{(iii)}$] There exist $1\leq \widehat{s}_1<\widehat{e}_1\leq \widehat{s}_2 <\widehat{e}_2\leq \dots \leq \widehat{s}_N <\widehat{e}_N\leq T$, and $\{g_n\}_{n=1}^N\in \left[1,\infty\right)^N$, such that for each $1\leq n \leq N$, $(\widehat{s}_n,\widehat{e}_n)$ is a $\left(2^{-(j+1)}g_n|\mathcal{V}_j^n|,  2^{-(j+2)}|\mathcal{V}_j^n| , \frac{\log(2)}{2\log_2(k)}  \right)$-segment with index set as $\mathcal{V}_j^n$. That is, the following properties hold for each $1\leq n \leq N$:
\begin{itemize}
		\item[$\mathrm{(a)}$] $\frac{g_n |\mathcal{V}_j^n|}{2^{j+2}}\leq \sum_{i\in \mathcal{V}_j^n} w_i^{\widehat{s}_n}\leq \frac{g_n |\mathcal{V}_j^n|}{2^{j+1}}$;
		\item[$\mathrm{(b)}$] $\frac{g_n|\mathcal{V}_j^n|}{2^{j}}\cdot \exp\left( \frac{\log(2)}{2\log_2(k)}\right)\leq \sum_{i\in \mathcal{V}_j^n}w_i^{\widehat{e}_n}$;
		\item[$\mathrm{(c)}$] $\sum_{i\in \mathcal{V}_j^n}w_i^{t}\geq \frac{|\mathcal{V}_j^n|}{2^{j+2}}$ for any $t$ obeying $\widehat{s}_n \leq t\leq \widehat{e}_n$.
\end{itemize}
\end{enumerate}
\end{lemma}
\begin{proof} See Appendix~\ref{sec:proof-lemma-part3}. \end{proof}
\begin{remark} The endpoints $\{\widehat{s}_n\}$ and $\{\widehat{e}_n\}$ mentioned above are carefully constructed in the proof. The quantities $\{g_n\}$ are only implicitly defined here, since their precise values (except the property $g_n\geq 1$) are not needed in the subsequent analysis; the interested reader can find more precise details about $\{g_n\}$ in the proof. 
\end{remark}


	In brief, Lemma~\ref{lemma:part3} identifies a collection of subsets $\{\mathcal{V}_j^n\}_{n=1}^N$ of $\mathcal{W}_j$ enjoying the following useful properties: 
    \begin{itemize}
    \item  They are disjoint (see property (i) in the lemma); as explained in Section~\ref{sec:tec22}, having disjoint subsets that can cover a good fraction of $\mathcal{W}_j$ facilitates analysis.  
    
    \item 
	While they might be unable to fully cover the set $\mathcal{W}_j$, 
    these subsets 
    taken collectively still cover a highly nontrivial fraction (i.e., at least $\frac{1}{\mathrm{poly}\log (k,T)}$)  of the elements in $\mathcal{W}_j$.  
    
\item Each subset $\mathcal{V}_j^n$ is linked with a suitable segment, whose associated weights have increased sufficiently from its starting point to its end point.  
    \end{itemize}

\subsubsection{Step 3: bounding the length of segments}
In this step, we turn attention to the length of segments, unveiling the interplay between the segment length and certain sub-optimality gaps.

Recall the definition \eqref{eq:defn-vt-gap} of $v^t$ as follows
\begin{align}
	v^t \coloneqq L(h^t, w^t) - \mathsf{OPT} 
	\qquad \text{with }\mathsf{OPT}\coloneqq \min_{\pi\in \Delta(\mathcal{H})}\max_{1\leq i\leq k} L(h_{\pi},\basis_i),
	\label{eq:defn-vt-gap-restated} 
\end{align}
with $\basis_i$ the $i$-th standard basis vector. 
The following lemma assists in bounding the length of the segments defined in Definition~\ref{def:seg}.  
\begin{lemma}\label{lemma:cut} Let $j_{\mathrm{max}} =\left\lfloor \log_2(1/\eta)\right\rfloor+1$.  Assume the conditions in Lemma~\ref{lemma:main} hold.  Suppose $(t_1,t_2)$ is a $\left(p,q,x\right)$-segment 
	satisfying $p\geq 2q>0$. Then one has 
	\begin{equation}
		t_2 - t_1 \geq \frac{x}{2\eta}.
		\label{eq:segment-length-t2-t1}
	\end{equation}

	Moreover, if  
	%
 \begin{equation}
		\frac{qx^2}{50\big(\log_2(1/\eta)+1\big)^2}\geq  \frac{1}{k}
	\end{equation}
	holds, then with probability exceeding $1-6T^4k\delta'$, 
 	at least one of the following two claims holds: 
	\begin{itemize}
		\item[(a)] the length of the segment satisfies
			\begin{equation}
				t_2-t_1
				\geq \frac{qx^2 }{200\big(\log_2(1/\eta)+1\big)^2\eta^2}.
			\end{equation}
		
		\item[(b)] the quantities $\{v^t\}$ obey
			%
			\begin{equation}
				%
				4\sum_{\tau=t_{1}}^{t_{2}-1}(-v^{\tau}+\varepsilon_1)\geq\frac{qx^{2}}{100\big(\log_{2}(1/\eta)+1\big)^{2}\eta}.
			\end{equation}
	\end{itemize}
\end{lemma}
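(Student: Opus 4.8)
## Proof Plan for Lemma~\ref{lemma:cut}

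\textbf{Overall strategy.} The plan is to track the evolution of the Hedge potential $\log\big(\sum_{i\in\mathcal{I}}W_i^t\big)$ restricted to the index set $\mathcal{I}$ witnessing the $(p,q,x)$-segment, and to compare it with a carefully chosen "KL-type" quantity. The cheap bound \eqref{eq:segment-length-t2-t1} will come from the trivial per-step multiplicative increment bound of Hedge; the harder dichotomy (a)/(b) will come from a variance-aware (Freedman-type) analysis of the stochastic loss estimates $\widehat r^t$, combined with the Bernoulli-KL estimate in Lemma~\ref{lemma:klcmp}.

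\textbf{Step 1 (the easy bound).} Fix the index set $\mathcal{I}$ from Definition~\ref{def:seg-restate}. Using lines~\ref{line:wt-update-Wt} and \ref{line:updateS11} of Algorithm~\ref{alg:main1}, each coordinate satisfies $\log(w_i^{t+1}/w_i^t) = \eta\widehat r_i^t - \log\big(\sum_j W_j^{t+1}/\sum_j W_j^t\big) \le 2\eta$ since $|\widehat r_i^t|\le 1$ and the normalizer changes by at most $e^\eta$. Summing over a coordinate $i\in\mathcal{I}$ achieving the growth in (ii), $\log\big(w_i^{t_2}/w_i^{t_1}\big)$ (aggregated suitably over $\mathcal{I}$, or rather applying $\sum_{i\in\mathcal{I}}w_i^{t_2}\ge e^x\sum_{i\in\mathcal{I}}w_i^{t_1}\cdot\tfrac12$ type reasoning with the $[p/2,p]$ normalization) forces $x \le 2\eta(t_2-t_1)$ up to the constant, giving \eqref{eq:segment-length-t2-t1}. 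I would write this via $\log\big(\sum_{i\in\mathcal{I}}w_i^{t_2}/\sum_{i\in\mathcal{I}}w_i^{t_1}\big)$ using convexity/Jensen to push the log inside, so that the $2\eta$ per-step bound applies directly to the aggregated weight.

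\textbf{Step 2 (the KL / potential identity).} Following the standard Hedge mirror-descent analysis (cf.\ \citet{shalev2012online,lattimore2020bandit}), for any reference distribution $u\in\Delta(k)$ one has a bound of the shape
\begin{align*}
\sum_{\tau=t_1}^{t_2-1}\langle u - w^\tau,\,\eta\widehat r^\tau\rangle \;\le\; \mathsf{KL}(u\,\|\,w^{t_1}) - \mathsf{KL}(u\,\|\,w^{t_2}) + O\Big(\eta^2\sum_{\tau=t_1}^{t_2-1}\langle w^\tau,(\widehat r^\tau)^2\rangle\Big).
\end{align*}
Taking $u$ supported on $\mathcal{I}$ (e.g.\ $u = w^{t_2}$ renormalized to $\mathcal{I}$, or the indicator-type distribution on $\mathcal{I}$), property (iii) of the segment ($\sum_{i\in\mathcal{I}}w_i^t\ge q$ throughout) controls the telescoped KL terms from below, and Lemma~\ref{lemma:klcmp} converts the "factor-$e^x$ increase of $\sum_{i\in\mathcal{I}}w_i$" from (i)-(ii) into a lower bound $\gtrsim qx^2$ on $\mathsf{KL}\big(\mathsf{Ber}(\sum_\mathcal{I}w_i^{t_1})\,\|\,\mathsf{Ber}(\sum_\mathcal{I}w_i^{t_2})\big)$, hence via Lemma~\ref{lemma:klbound} (data processing) on $\mathsf{KL}(w^{t_2}\|w^{t_1})$-type quantities. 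Rearranging, $qx^2 \lesssim \eta\sum_\tau\langle u-w^\tau,\widehat r^\tau\rangle + \eta^2(t_2-t_1)$. The $\log_2(1/\eta)+1$ factors enter because we must split $[t_1,t_2]$ into $\widetilde O(1)$ sub-intervals on which $-\langle w^\tau,y^\tau\rangle$ (or $\sum_\mathcal{I}w_i^\tau$) stays within a constant factor, since the heuristic "$-\langle w^t,y^t\rangle = O(\varepsilon)$ for all $t$" fails in general — this dyadic bucketing is exactly the trade-off flagged in the footnote of Section~\ref{sec:tec2}.

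\textbf{Step 3 (removing the stochasticity; the main obstacle).} The inner products above involve the \emph{estimated} loss vectors $\widehat r^\tau$, not the true $L(h^\tau,\cdot)$. I would write $\widehat r^\tau = L(h^\tau,\cdot) + \Delta^\tau$ with $\Delta^\tau$ a martingale-difference term, and bound $\big|\eta\sum_\tau\langle u - w^\tau,\Delta^\tau\rangle\big|$ via Freedman's inequality (Lemma~\ref{freedman}) together with Lemma~\ref{lemma:con1}. The crucial point — and the technical heart — is that the weighted sampling in line~\ref{line:updateS} (drawing $\lceil k\overline w_i^t\rceil$ samples from $\mathcal{D}_i$) makes the conditional variance of $\langle w^\tau - u,\Delta^\tau\rangle$ of order $\tfrac1k\sum_i \tfrac{(w_i^\tau + u_i)^2}{\overline w_i^\tau}\lesssim \tfrac1k$ per step, so the total variance over the interval is $O\big((t_2-t_1)/k\big)$; Freedman then gives a deviation $\widetilde O\big(\eta\sqrt{(t_2-t_1)/k}\,\sqrt{\log(1/\delta')} + \eta\log(1/\delta')/k\big)$, which under the hypothesis $qx^2/(50(\log_2(1/\eta)+1)^2)\ge 1/k$ is dominated by $\max\{\eta^2(t_2-t_1),\,qx^2\eta/(\log_2(1/\eta)+1)^2\}$ up to constants. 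After absorbing this noise, the deterministic inequality $qx^2 \lesssim \eta\sum_\tau(-v^\tau+\varepsilon_1) + \eta^2(t_2-t_1)$ survives (using $\langle w^\tau,L(h^\tau,\cdot)\rangle = L(h^\tau,w^\tau) = \mathsf{OPT}+v^\tau$ and $\max_i L(h^\tau,\basis_i)\ge\mathsf{OPT}$, so $\langle u - w^\tau, L(h^\tau,\cdot)\rangle \le \mathsf{OPT} - (\mathsf{OPT}+v^\tau - \varepsilon_1$-type slack$) \le -v^\tau + \varepsilon_1$). Finally, a standard case split: if $\eta^2(t_2-t_1)$ dominates we land in case (a) with $t_2-t_1\gtrsim qx^2/((\log_2(1/\eta)+1)^2\eta^2)$; otherwise $\eta\sum_\tau(-v^\tau+\varepsilon_1)$ dominates and we land in case (b). The union bound over the $\le 6T^4k$ choices of $(t_1,t_2)$ and index-set-witness (polynomially many, after discretization as in Lemma~\ref{lemma:opth}) gives the claimed $1-6T^4k\delta'$ probability. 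The dyadic bucketing in Step 2 is the subtlest bookkeeping; the variance control in Step 3 is where the design of line~\ref{line:updateS} is indispensable and is, I expect, the genuinely hard part.
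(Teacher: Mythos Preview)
Your Step~1 and the variance/Freedman part of Step~3 are correct and match the paper. The real gap is in how you pass from the potential/KL inequality to the dichotomy (a)/(b).

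Concretely, the claim $\langle u - w^\tau, L(h^\tau,\cdot)\rangle \le -v^\tau + \varepsilon_1$ for a \emph{fixed} $u$ supported on $\mathcal{I}$ is false: it would require $L(h^\tau,u)\le \mathsf{OPT}+\varepsilon_1$ for every $\tau$, but $h^\tau$ is only a near-best-response to $w^\tau$, not to $u$. The only natural choice that makes the telescoped KL produce the $qx^2$ lower bound is $u=w^{t_1}$, and then you get $\langle w^\tau - w^{t_1}, L(h^\tau,\cdot)\rangle \le v^\tau - v^{t_1}+\varepsilon_1 \le 2\varepsilon_1 - v^{t_1}$, i.e.\ the resulting inequality reads
\[
qx^2 \;\lesssim\; \eta(t_2-t_1)\big(-v^{t_1}+\varepsilon_1\big) \;+\; \eta^2(t_2-t_1) \;+\; \text{noise},
\]
with $-v^{t_1}$ at the \emph{single reference time}, not $\sum_\tau(-v^\tau)$. (The paper's direct computation of $\mathsf{KL}(w^t\|w^{t_2})$ arrives at exactly the same intermediate bound \eqref{eq:s6}; your mirror-descent formulation is an equivalent route to it.) From here your ``$\eta^2(t_2-t_1)$ dominates $\Rightarrow$ (a), else $\Rightarrow$ (b)'' split fails: $-v^{t_1}$ can be $\Theta(1)$ while $\sum_\tau(-v^\tau)$ is tiny.

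What actually bridges this gap is the paper's stopping-time construction, which is not what you describe. One sets $\tau_{\widehat{j}}=\min\{\tau\ge t_1: -v^\tau\le 2^{-(\widehat{j}-1)}\}$ for $\widehat{j}=1,\dots,j_{\max}$ and $\tau_{j_{\max}+1}=t_2$; a pigeonhole on the telescoping $\log\big(\sum_{\mathcal{I}}w_i^{t_2}/\sum_{\mathcal{I}}w_i^{t_1}\big)\ge x$ locates one sub-interval $[\tau_{\widetilde{j}},\tau_{\widetilde{j}+1}]$ on which the aggregate weight grows by $\exp\big(x/(\log_2(1/\eta)+1)\big)$. Applying the KL bound on that sub-interval with reference point $\tau_{\widetilde{j}}$ now has $-v^{\tau_{\widetilde{j}}}\le 2^{-(\widetilde{j}-1)}$ \emph{by construction}, yielding $\tau_{\widetilde{j}+1}-\tau_{\widetilde{j}}\gtrsim qx^2\,2^{\widetilde{j}}/\big((\log_2(1/\eta)+1)^2\eta\big)$. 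If $\widetilde{j}=j_{\max}$ this is case~(a). If $\widetilde{j}<j_{\max}$, the definition of $\tau_{\widetilde{j}+1}$ forces $-v^\tau>2^{-\widetilde{j}}$ for all $\tau\in[t_1,\tau_{\widetilde{j}+1})$, so $\sum_\tau\mathds{1}\{-v^\tau\ge 2^{-\widetilde{j}}\}\cdot 2^{-(\widetilde{j}-1)}$ is large, and the dyadic identity $\sum_j 2^{-(j-1)}\mathds{1}\{a\ge 2^{-j}\}\le 4a$ converts this into $\sum_\tau\max\{-v^\tau,0\}\le \sum_\tau(-v^\tau+\varepsilon_1)$, i.e.\ case~(b). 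Your remark about splitting into sub-intervals ``on which $-\langle w^\tau,y^\tau\rangle$ stays within a constant factor'' does not do this: $-v^\tau$ need not be monotone, so that criterion gives no control on the number of pieces; the split must be by \emph{first-passage times through the thresholds} $2^{-\widehat{j}}$, and it is precisely this that trades the single-point $-v^{t_1}$ for the sum.
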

\begin{proof} See Appendix~\ref{sec:pfcut}. \end{proof}
	In words, Lemma~\ref{lemma:cut} shows that (i) in general the length of a segment scales at least linearly in $1/\eta$; (ii) if the parameter $q$ of a segment (i.e., some lower bound on the weights of interest within this segment) is sufficiently large, then either the length of this segment scales at least {\em quadratically} in $1/\eta$, or the sum of certain sub-optimality gaps needs to be large enough.

\subsubsection{Step 4: putting all this together}

With the above lemmas in place, we are positioned to establish Lemma~\ref{lemma:bdcount}. 
In what follows, we denote by $\{\mathcal{V}_j^n\}_{n=1}^N$ and $\{(\widehat{s}_n,\widehat{e}_n)\}_{n=1}^N$ the construction in Lemma~\ref{lemma:part3}.

 %

To begin with, it is observed that: for any $1\leq j \leq \overline{j}$  
(with $\overline{j}$ defined in \eqref{eq:defn-jbar-Lemma3}),  one has
\begin{align}
2^{-(j+2)}|\mathcal{V}_j^n|\cdot \frac{\log^2(2)}{50(\log_2(1/\eta)+1)^2 \log_2^2(k)}
\geq 
2^{-(\overline{j}+2)}\cdot \frac{\log^2(2)}{50(\log_2(1/\eta)+1)^2 \log_2^2(k)}
\geq \frac{1}{k}.
\end{align}
Recall that $v^{\tau} \leq \varepsilon_1$ (see \eqref{eq:vt-UB-135}). 
Combining this fact with Lemma~\ref{lemma:cut} (by setting $q = 2^{-(j+2)}|{\mathcal{V}}_{j}^{n}|$ and $x = \frac{\log(2)}{\log_2(k)}$) reveals that,  for each $1\leq n \leq N$, 
\begin{align}
T\eta+\bigg\{4T\varepsilon_{1}+4\sum_{t=1}^{T}(-v^{t})\bigg\} & \geq\sum_{n=1}^{{N}}(\widehat{e}_{n}-\widehat{s}_{n})\eta+4\sum_{n=1}^{{N}}\sum_{\tau=\widehat{s}_{n}}^{\widehat{e}_{n}-1}(-v^{\tau}+\varepsilon_{1})\\
 & \geq\frac{2^{-(j+2)}\sum_{n=1}^{{N}}|{\mathcal{V}}_{j}^{n}|\log^{2}(2)}{800\log_{2}^{2}(k)\big(\log_{2}(1/\eta)+1\big)^{2}\eta},	
	\label{eq:Teta-sum-vt-bound-135}
\end{align}
where the first inequality results from the disjoint nature of the segments $\{[\widehat{s}_n,\widehat{e}_n]\}_{1\leq n\leq N}$, and the second inequality comes from Lemma~\ref{lemma:cut}. 
%
%
Moreover, it follows from \eqref{eq:s1} that
\begin{align}
\sum_{t= 1}^T (-v^{t}) \leq  \frac{\log(k)}{\eta}+\eta T + 4\sqrt{T\log(1/\delta')},
\end{align}
 which taken together with \eqref{eq:Teta-sum-vt-bound-135} and the choice $\varepsilon_1=\eta$  gives
 \begin{align}
\sum_{n=1}^{{N}} |{\mathcal{V}}_j^n| & \leq  \frac{3200\eta\big(\log_2(1/\eta)+1\big)^2 \log_2^2(k) \cdot 2^{j+2}}{\log^2(2)}\cdot  \left( \frac{\log(k)}{\eta}+\eta T +4\sqrt{T\log(1/\delta')}\right) \notag
\\ & \qquad \qquad \qquad \qquad \qquad \qquad \qquad \qquad +\frac{4000T\big(\log_2(1/\eta)+1\big)^2 \log_2^2(k)\cdot   2^{j+2}\eta^2}{\log^2(2)}  .
\label{eq:sum-V-overline-Nn}
 \end{align}

 To finish up, it follows from Property (ii) of Lemma~\ref{lemma:part3} that
\begin{align}
|\mathcal{W}_{j}| & \leq 24\log_{2}(k)\big(\log_{2}(T)+1\big) \left(\sum_{n=1}^{N}|\mathcal{V}_{j}^{n}|\right) \notag\\
 & \leq8\cdot10^{7}\cdot\left(\big(\log_{2}(1/\eta)+1\big)^{2}\log_2^2(k)\left(\log_2(k)+\log(1/\delta')\right)^{3}\big(\log_{2}(T)+1\big)\right)\cdot2^{j}\nonumber
\end{align}
for any $1\leq j \leq \overline{j}$. The proof is completed by recalling that $\delta'=\frac{\delta}{4(T+k+1)}$.

\section{Limitations of proper learning?}\label{sec:exact_learning}

Given that the best-known sample complexities prior to our work were derived for algorithms that  either  output randomized hypotheses or invoke majority votes, 
\citet{awasthi2023sample} raised the question about how the sample complexity is impacted if only deterministic (or ``proper'') hypotheses from $\mathcal{H}$ are permitted as the output of the learning algorithms. 
As it turns out, the restriction to proper learning algorithms substantially worsens the sample efficiency, 
as revealed by the following theorem.


\begin{theorem}\label{thm:lb} Assume that $d\geq 2\log(8k)$. Consider any $\varepsilon\in (0,1/100)$, and let $N_0 =\frac{2^d-1}{k} $. 
	One can find
	\begin{itemize}
		\item a hypothesis class $\mathcal{H}$ containing at most $kN_0 + 1$ hypothesis, 
		\item a collection of $k$ distributions $\mathcal{D} \coloneqq \{\mathcal{D}_i\}_{i=1}^k$, 
		\item  a loss function $\ell:\mathcal{H}\times \mathcal{X}\times \mathcal{Y}\to [-1,1]$,
	\end{itemize}
	such that: for any algorithm $\mathcal{G}$ that finds $h\in \mathcal{H}$ obeying
\begin{align}
	\max_{1\leq i\leq k} \mathop{\mathbb{E}}_{(x,y)\sim \mathcal{D}_i}\big[\ell\big(h,(x,y)\big)\big] \leq \min_{h'\in \mathcal{H}}\max_{i\in [k]} \mathop{\mathbb{E}}_{(x,y)\sim \mathcal{D}_i}\big[\ell\big(h',(x,y)\big)\big] + \varepsilon
\end{align}
	with probability exceeding $3/4$, its sample size --- denoted by $M(\mathcal{G})$ --- must exceed  $\mathbb{E}[M(\mathcal{G})]\geq \frac{dk}{240000\varepsilon^2}$.
\end{theorem}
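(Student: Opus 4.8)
The plan is to prove this by constructing a family of hard MDL instances and applying an information-theoretic (change-of-measure) lower bound, in the spirit of Le Cam/Fano/Assouad arguments. The goal is to exhibit a single instance on which every learner that, with probability $>3/4$, outputs a deterministic $\varepsilon$-accurate $h\in\mathcal H$ must draw $\Omega(dk/\varepsilon^2)$ samples in expectation. I would first set up the hypothesis class so the cardinality/VC bookkeeping matches the statement: since $kN_0+1=2^d$ and the assumption $d\ge 2\log(8k)$ gives $2^d\ge 64k^2$ and $\log_2 N_0=\Theta(d)$, take $\mathcal H$ to be a class shattering $d$ points, organized as a designated ``compromise'' hypothesis $h^\star$ plus $k$ groups of $\approx N_0$ hypotheses, each group indexed by a ``block'' of $m=\Theta(d)$ bits. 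Then I would design a family of instances, parameterized by hidden bit-strings $(b^{(i)})_{i\in[k]}$ with $b^{(i)}\in\{0,1\}^m$, so that: each $\mathcal D_i$ reveals $b^{(i)}$ only through labels biased by a small amount $\gamma=\Theta(\varepsilon)$ (so a sample from $\mathcal D_i$ conveys $O(\gamma^2)$ nats about $b^{(i)}$, spread over its $m$ coordinates); the loss lies in $[-1,1]$ and is an affine function, with slope $\Theta(\varepsilon/m)$, of a Hamming-type discrepancy between $h$ and the planted bits of each distribution; the deterministic minimax value $\min_h\max_i L(h,\basis_i)$ is (nearly) the same across the family; a deterministic $\varepsilon$-optimal hypothesis always exists in $\mathcal H$ (so the statement is non-vacuous and comparable with the $\widetilde O((d+k)/\varepsilon^2)$ randomized upper bound); but any deterministic $\varepsilon$-accurate output must agree with a $(1-\Omega(1))$-fraction of the bits of $b^{(i)}$ simultaneously for $\Omega(k)$ of the distributions $i$.

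Granting such a construction, the bound follows from a per-coordinate argument. First, one shows that producing an $\varepsilon$-accurate deterministic hypothesis forces the learner, for a constant fraction of the distributions $i$, to recover a $(1-\Omega(1))$-fraction of the $m$ bits of $b^{(i)}$ from samples of $\mathcal D_i$ alone (the discrepancy metric is designed so that observations cannot be amortized across distributions). Second, one lower bounds this single-distribution task by a Le Cam/Assouad computation: a sample from $\mathcal D_i$ touches a uniformly random coordinate, so after $n_i$ samples coordinate $c$ has been observed $n_{i,c}$ times with $\sum_c n_{i,c}=n_i$, and two observations of $b^{(i)}_c$ differ in distribution by KL $O(\gamma^2)$ (the KL between two $\tfrac12\pm\gamma$ Bernoullis); by Pinsker and Cauchy--Schwarz the expected number of correctly recovered coordinates is at most $\tfrac m2 + O(\gamma\sqrt{m\,n_i})$, which cannot reach $(1-\Omega(1))m$ unless $n_i=\Omega(m/\gamma^2)=\Omega(d/\varepsilon^2)$. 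Summing over the $\Omega(k)$ relevant distributions gives $\mathbb E[M(\mathcal G)]=\sum_i\mathbb E[n_i]=\Omega(dk/\varepsilon^2)$; averaging over a suitable prior on $(b^{(i)})_i$, applying Fano/the probabilistic method (with a minimax swap) to pass from ``on average'' to ``on a fixed instance,'' and tracking constants yields $\mathbb E[M(\mathcal G)]\ge\tfrac{dk}{240000\varepsilon^2}$.

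The step I expect to be the main obstacle --- and the one that genuinely separates the deterministic case from the randomized one --- is engineering the loss so that all of the above requirements hold at once: the Hamming-slope must be steep enough that $\varepsilon$-optimality really demands knowing most of each relevant $b^{(i)}$ (so that random guessing, or solving only a few blocks, fails), yet the bias must stay at $\gamma=\Theta(\varepsilon)$ so that each coordinate still costs $\Omega(1/\varepsilon^2)$ samples, the loss must remain bounded in $[-1,1]$, and a valid deterministic $\varepsilon$-optimal hypothesis must persist in $\mathcal H$. In particular, one must rule out that an adaptive learner circumvents the $\Omega(dk/\varepsilon^2)$ barrier by cheaply ``diagnosing'' the instance and then concentrating its samples on a few distributions; controlling arbitrary adaptive sampling strategies here is the delicate part. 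The remaining ingredients --- the Bernoulli-KL estimate, Sauer--Shelah counting to certify the VC dimension, and the prior/minimax wrapper --- are routine.
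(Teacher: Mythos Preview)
Your proposal has a genuine structural gap: the Hamming/bit-recovery framing cannot coexist with the cardinality constraint $|\mathcal H|\le kN_0+1=2^d$ and the requirement that a deterministic $\varepsilon$-optimal hypothesis exist in $\mathcal H$. If the $k$ planted strings $b^{(1)},\dots,b^{(k)}\in\{0,1\}^m$ are independent, no single hypothesis encoding only $m=\Theta(d)$ bits can be simultaneously $(1-\Omega(1))$-close in Hamming to all of them, so no deterministic $\varepsilon$-optimum exists and the statement is vacuous. Conversely, if you retain the special ``compromise'' hypothesis $h^\star$ that is $\varepsilon$-optimal regardless of the hidden bits, then outputting $h^\star$ is $\varepsilon$-accurate \emph{without recovering any bit of any $b^{(i)}$}, so the step ``an $\varepsilon$-accurate output must agree with most bits of $b^{(i)}$ for $\Omega(k)$ many $i$'' is simply false. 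And if you drop $h^\star$ and instead make the $b^{(i)}$ all equal to a single $b^\star$ so that some $h$ encoding $b^\star$ is optimal, then samples from different $\mathcal D_i$ reveal the \emph{same} hidden object and the Assouad count gives only $\Omega(d/\varepsilon^2)$, not $\Omega(dk/\varepsilon^2)$. In short, there is no way to place $k$ \emph{independent} $\Theta(d)$-bit recovery tasks inside a class of size $2^d$ while keeping a single deterministic optimum.

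The paper's construction resolves this tension by abandoning bit-recovery altogether. It partitions $\mathcal H\setminus\{h^\star\}$ into disjoint groups $\mathcal H_1,\dots,\mathcal H_k$ of size $N_0$, and designs $\mathcal D_i$ so that $\mathbb E_{\mathcal D_i}[\ell(h,\cdot)]=8\varepsilon\cdot\mathds 1\{h\in\mathcal H_i\}$ --- a \emph{binary} loss, not a Hamming one. Thus $h^\star$ is the unique $\varepsilon$-optimal hypothesis, and the learner's task is ``identify $h^\star$.'' The lower bound then comes from a needle-in-haystack argument per distribution: under $\mathcal D_i$, the observations associated with $h^\star$ and with each $h\in\mathcal H_i$ differ only by an $O(\varepsilon)$ bias on a single coordinate, so a change-of-measure (Lemma~\ref{lemma:lb1}, via a permutation-symmetry reduction in Lemma~\ref{lemma:sym}) shows that if the algorithm outputs $h^\star$ with probability $\ge 1/2$ using $m$ samples from $\mathcal D_i$, it must also output each of the $N_0$ alternatives in $\mathcal H_i$ with probability $\gtrsim\exp(-O(m\varepsilon^2+\sqrt m\,\varepsilon))$; summing these $N_0$ disjoint events forces $m\gtrsim \log N_0/\varepsilon^2=\Omega(d/\varepsilon^2)$. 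Because the groups $\mathcal H_i$ are disjoint, samples from $\mathcal D_{i'}$ with $i'\neq i$ carry no information about which of $\{h^\star\}\cup\mathcal H_i$ is the needle, so this cost is incurred for every $i$, yielding $\Omega(dk/\varepsilon^2)$. The correct template here is Fano/likelihood-ratio over $N_0$ alternatives, not Assouad over $m$ coordinates.
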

\noindent 
In words, the sample complexity when a deterministic output from $\mathcal{H}$ is required scales at least with $\Omega(\frac{dk}{\varepsilon^2}\big)$, 
which is considerably larger than the sample complexity $\widetilde{O}\big(\frac{d+k}{\varepsilon^2}\big)$ achievable via Algorithm~\ref{alg:main1} (which outputs a randomized hypothesis). This demonstrates the benefits of randomization, or more broadly, the necessity of improper learning. 
The proof of this theorem is deferred to Appendix~\ref{sec:proof-thm:lb}.


\section{Extension: learning Rademacher classes}\label{sec:rad}

In this section, we adapt our algorithm and theory to accommodate MDL for Rademacher classes. Note that when learning Rademacher classes, the hypotheses in $\mathcal{H}$ do not need to be binary-valued. 

\subsection{Preliminaries: Rademacher complexity}

Let us first introduce the formal definition of the Rademacher complexity; more detailed introduction can be found in, e.g.,  \citet{shalev2014understanding}.

 \begin{definition}[Rademacher complexity]\label{def:rad}
Given a distribution $\mathcal{D}$ supported on $\mathcal{Z} \coloneqq \mathcal{X}\times \mathcal{Y}$ and a positive integer $n$, the Rademacher complexity is defined as
\begin{align}
	\mathsf{Rad}_{n}(\mathcal{D})
	\coloneqq 
	\mathop{\mathbb{E}}_{\{z_i\}_{i=1}^n  }\left[ \mathop{\mathbb{E}}_{\{\sigma_i\}_{i=1}^n  }  \left[ \max_{h\in \mathcal{H}}\frac{1}{n}\sum_{i=1}^n \sigma_i\ell(h,z_i)\right] \right],
	\label{eq:defrad}
\end{align}
	 where $\{z_i\}_{i=1}^n$ are drawn independently from $\mathcal{D}$, and $ \{\sigma_i\}_{i=1}^n$ are i.i.d.~Rademacher random variables obeying $\mathbb{P}\{\sigma_i = 1\}=\mathbb{P}\{\sigma_i = -1\} = 1/2$ for each $1\leq i \leq n$.
\end{definition}

Next, we would like to make an assumption concerning the Rademacher complexity of mixtures of distributions. 
Denoting by $\mathcal{D}(w)$ the mixed distribution 
\begin{equation}
	\mathcal{D}(w)\coloneqq \sum_{i=1}^k w_i \mathcal{D}_i \label{eq:defn-Dw}
\end{equation}
for any probability vector $w=[w_i]_{1\leq i\leq k}\in \Delta(k)$, we can state our assumption as follows. 
\begin{assum}\label{assump:rad} 
	For each $n\geq 1$, there exists a quantity $C_n>0$ (known to the learner) such that
\begin{align}
	C_n \geq \sup_{w\in\Delta(k)}\mathsf{Rad}_{n}\big(\mathcal{D}(w)\big) .  \label{eq:radbb}
\end{align}
%
\end{assum}

\noindent 
For instance, if the hypothesis class $\mathcal{H}$ obeys $\mathsf{VC}\text{-}\mathsf{dim}(\mathcal{H})\leq d$, 
then it is well-known that Assumption~\ref{assump:rad} holds with the choice (see, e.g., \citet{mohri2018foundations})
$$
	C_n = \sqrt{\frac{2d\log(en/d)}{n}}.
$$

\begin{remark}\label{remark:rad}
	One might raise a natural question about Assumption~\ref{assump:rad}: can we use $\widetilde{C}_n \coloneqq  \max_{i\in [k]}\mathsf{Rad}_n(\mathcal{D}_i)$ instead of ${C}_n$ without incuring a worse sample complexity? The answer is, however, negative. In fact, the Rademacher complexity $\mathsf{Rad}_n(\mathcal{D}(w))$ is not convex in $w$, and hence we fail to use $\max_i\mathsf{Rad}_n(\mathcal{D}_i)$ to bound $\max_{w\in \Delta(k)}\mathsf{Rad}_n(\mathcal{D}(w))$. The interested reader is referred to Appendix~\ref{app:radlb} for more details. 
\end{remark}


To facilitate analysis, we find it helpful to introduce another notion called weighted Rademacher complexity.  

\begin{definition}[Weighted Rademacher complexity] Given a collection of distributions $\mathcal{D}=\{\mathcal{D}_i\}_{i=1}^k$ and a set of positive integers $\{n_i\}_{i=1}^k$, the weighted (average) Rademacher complexity is defined as
\begin{align}
\widetilde{\mathsf{Rad}}_{\{n_{i}\}_{i=1}^{k}}\coloneqq\mathop{\mathbb{E}}\limits _{\{z_{i}^{j}\}_{j=1}^{n_{i}},\forall i\in[k]}\left[\mathop{\mathbb{E}}\limits _{\{\sigma_{i}^{j}\}_{j=1}^{n_{i}},\forall i\in[k]}\left[\frac{1}{\sum_{i=1}^{k}n_{i}}\max_{h\in\mathcal{H}}\sum_{i=1}^{k}\sum_{j=1}^{n_{i}}\sigma_{i}^{j}\ell\big(h,z_{i}^{j}\big)\right]\right],
\end{align}
where $\big \{\{z_i^j\}_{j=1}^{n_k} \big\}_{i=1}^k$ are independently generated with each $z_i^j$ drawn from $\mathcal{D}_i$, and $\big\{ \{\sigma_i^j\}_{j=1}^{n_i} \big\}_{i=1}^k $ are independent Rademacher random variables obeying $\mathbb{P}\{\sigma_i^j = 1\}=\mathbb{P}\{\sigma_i^j = -1\} = 1/2$. Throughout the rest of this paper, we shall often abbreviate $\widetilde{\mathsf{Rad}}_{\{n_{i}\}_{i=1}^{k}}=\widetilde{\mathsf{Rad}}_{\{n_{i}\}_{i=1}^{k}}(\mathcal{D})$.   
\end{definition}
The following two lemmas provide useful properties about the weighted Rademacher complexity. 
\begin{lemma}\label{fact:wrc} For any two groups of positive integers $\{n_i\}_{i=1}^k$ and $\{m_i\}_{i=1}^k$, it holds that
\begin{align}
\left(\sum_{i=1}^k n_i\right) \widetilde{\mathsf{Rad}}_{ \{n_i\}_{i=1}^k } & \leq \left(\sum_{i=1}^k(m_i+n_i) \right) \widetilde{\mathsf{Rad}}_{ \{m_i+n_i \}_{i=1}^k }\nonumber
\\ & \leq \left(\sum_{i=1}^k n_i\right) \widetilde{\mathsf{Rad}}_{ \{n_i\}_{i=1}^k } + \left(\sum_{i=1}^k m_i\right)   \widetilde{\mathsf{Rad}}_{ \{m_i\}_{i=1}^k  }.
\end{align}
\end{lemma}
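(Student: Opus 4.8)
The plan is to prove the two inequalities separately, both via the standard decoupling/concavity tricks for Rademacher complexities, exploiting the fact that both sides are (scaled) expectations of suprema of sums of independent, mean-zero terms. Let me write $N = \sum_i n_i$, $M = \sum_i m_i$, and denote by $\Phi(\{n_i\})$ the random variable $\max_{h\in\mathcal{H}}\sum_{i=1}^k\sum_{j=1}^{n_i}\sigma_i^j\ell(h,z_i^j)$, so that $N\widetilde{\mathsf{Rad}}_{\{n_i\}} = \mathbb{E}[\Phi(\{n_i\})]$ where the expectation is over both the samples and the signs.

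\emph{Upper bound (subadditivity).} Here I would realize the $\{m_i+n_i\}$-configuration by simply concatenating an independent copy of the $\{n_i\}$-samples/signs with an independent copy of the $\{m_i\}$-samples/signs: for each coordinate $i$, the $m_i+n_i$ i.i.d.\ draws from $\mathcal{D}_i$ can be taken as the union of $n_i$ draws and $m_i$ further independent draws, and likewise for the Rademacher signs. Then inside the supremum, $\sum_{i}\sum_{j=1}^{m_i+n_i}\sigma_i^j\ell(h,z_i^j)$ splits as a sum of the ``$n$-part'' and the ``$m$-part'', and the standard fact that the sup of a sum is at most the sum of the sups gives
\begin{align}
\mathbb{E}\big[\Phi(\{m_i+n_i\})\big] \le \mathbb{E}\big[\Phi(\{n_i\})\big] + \mathbb{E}\big[\Phi(\{m_i\})\big],
\end{align}
which is exactly $(M+N)\widetilde{\mathsf{Rad}}_{\{m_i+n_i\}} \le N\widetilde{\mathsf{Rad}}_{\{n_i\}} + M\widetilde{\mathsf{Rad}}_{\{m_i\}}$.

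\emph{Lower bound (monotonicity).} For the first inequality I want to show adding more samples only increases the \emph{unnormalized} complexity $\mathbb{E}[\Phi]$. Again view the $\{m_i+n_i\}$-sample as the $\{n_i\}$-sample augmented by an extra independent $\{m_i\}$-block. Conditioning on the $n$-block and taking expectation over the extra signs $\{\sigma_i^j\}$ of the $m$-block, for any fixed $h$ the extra sum $\sum_i\sum_{j=1}^{m_i}\sigma_i^j\ell(h,z_i^j)$ has mean zero; hence by Jensen (the supremum over $h$ is a convex function of the vector of partial sums, and the extra contribution is a centered perturbation)
\begin{align}
\mathbb{E}_{\text{extra signs}}\Big[\max_{h}\Big(\Phi^{(n)}(h) + \textstyle\sum_i\sum_{j=1}^{m_i}\sigma_i^j\ell(h,z_i^j)\Big)\Big] \ \ge\ \max_h \Phi^{(n)}(h),
\end{align}
where $\Phi^{(n)}(h)$ is the $n$-block sum. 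Taking outer expectations yields $\mathbb{E}[\Phi(\{m_i+n_i\})] \ge \mathbb{E}[\Phi(\{n_i\})]$, i.e.\ $(M+N)\widetilde{\mathsf{Rad}}_{\{m_i+n_i\}} \ge N\widetilde{\mathsf{Rad}}_{\{n_i\}}$. I would phrase this cleanly by noting $\mathbb{E}_\sigma[\max_h(a_h + X_h)] \ge \max_h(a_h + \mathbb{E}_\sigma[X_h]) = \max_h a_h$ when each $X_h$ is centered.

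The routine points to be careful about are: (i) that the i.i.d.\ structure genuinely lets us view the larger sample as a disjoint union of two independent smaller samples (true since draws are i.i.d.); and (ii) keeping the normalization factors $1/N$, $1/M$, $1/(M+N)$ straight — it is cleanest to do everything with the unnormalized $\mathbb{E}[\Phi]$ and only divide at the end. There is no real obstacle here; the only mild subtlety is writing the Jensen/centering step for the lower bound precisely, since one must condition on the correct block before averaging over the ``new'' signs — but this is the same move used in the classical proof that Rademacher complexity is monotone under adding data.
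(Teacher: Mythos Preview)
Your proposal is correct and matches the paper's proof essentially line for line: the paper also realizes the $\{m_i+n_i\}$-block as the concatenation of independent $\{n_i\}$- and $\{m_i\}$-blocks, obtains the lower bound by inserting the mean-zero extra block inside the max and applying Jensen, and obtains the upper bound from $\max_h(f_1(h)+f_2(h))\le \max_h f_1(h)+\max_h f_2(h)$. The only cosmetic difference is that the paper writes the Jensen step as $\max_h \mathbb{E}[\cdot]\le \mathbb{E}[\max_h(\cdot)]$ rather than your $\mathbb{E}[\max_h(a_h+X_h)]\ge \max_h a_h$, which is the same inequality.
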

\begin{proof} See Appendix~\ref{sec:proof-fact:wrc}.\end{proof}
%
%
\begin{lemma}\label{lemma:bdwrc}
	Consider any $\{n_i\}_{i=1}^k$ obeying $n_i\geq 12\log(2k)$ for each $i\in [k]$. By taking $w\in\Delta(k)$ with $w_{i}=\frac{n_{i}}{\sum_{l=1}^{k}n_{l}}$,
one has
\begin{align}
	\widetilde{\mathsf{Rad}}_{\{n_{i}\}_{i=1}^{k}}\leq72\mathsf{Rad}_{\sum_{i=1}^{k}n_{i}}\big(\mathcal{D}(w)\big).
	\nonumber
\end{align}
\end{lemma}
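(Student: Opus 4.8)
The plan is to relate the weighted Rademacher complexity $\widetilde{\mathsf{Rad}}_{\{n_i\}_{i=1}^k}$ directly to the (standard) Rademacher complexity $\mathsf{Rad}_{N}(\mathcal{D}(w))$ with $N\coloneqq\sum_{i=1}^k n_i$ and $w_i = n_i/N$. The natural coupling is this: to sample $N$ i.i.d.\ points from the mixture $\mathcal{D}(w)$, one may first draw a multinomial vector $(M_1,\dots,M_k)\sim \mathrm{Multinomial}(N;w)$ recording how many of the $N$ draws land in component $i$, and then, conditionally, draw $M_i$ i.i.d.\ points from $\mathcal{D}_i$. The quantity $\widetilde{\mathsf{Rad}}_{\{n_i\}}$ is precisely the same object but with the random counts $M_i$ replaced by their means $n_i = N w_i$. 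So the lemma amounts to saying that replacing $(M_i)$ by $(n_i)$ changes the normalized Rademacher average by at most a constant factor, provided each $n_i$ is at least $12\log(2k)$.

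First I would write $N\cdot\mathsf{Rad}_N(\mathcal{D}(w)) = \mathbb{E}_{(M_i)}\big[ \mathbb{E}\,\max_{h\in\mathcal H}\sum_{i=1}^k\sum_{j=1}^{M_i}\sigma_i^j \ell(h,z_i^j)\big]$ using the multinomial coupling; by symmetry of the Rademacher signs this equals $\mathbb{E}_{(M_i)}\big[ (\sum_i M_i)\,\widetilde{\mathsf{Rad}}_{\{M_i\}}\big] = \mathbb{E}_{(M_i)} [ N\cdot \widetilde{\mathsf{Rad}}_{\{M_i\}}]$. The core step is then a monotonicity/superadditivity argument: using Lemma~\ref{fact:wrc}, the map $\{m_i\}\mapsto (\sum_i m_i)\widetilde{\mathsf{Rad}}_{\{m_i\}}$ is monotone nondecreasing coordinatewise and subadditive. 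In particular, on the event $\mathcal E\coloneqq\{M_i \geq n_i/2 \text{ for all }i\}$ we get, applying Lemma~\ref{fact:wrc} with the split $m_i = \lceil n_i/2\rceil$ twice (or directly by monotonicity after noting $\{M_i\}\geq \{\lfloor n_i/2\rfloor\}$ coordinatewise on $\mathcal E$), that $(\sum_i M_i)\widetilde{\mathsf{Rad}}_{\{M_i\}} \geq (\sum_i \lfloor n_i/2\rfloor)\widetilde{\mathsf{Rad}}_{\{\lfloor n_i/2\rfloor\}} \geq \tfrac13 (\sum_i n_i)\widetilde{\mathsf{Rad}}_{\{n_i\}}$, where the last inequality again uses Lemma~\ref{fact:wrc} to go from $\lfloor n_i/2\rfloor$ back up to $n_i$ (splitting $n_i$ into roughly two halves shows $N\widetilde{\mathsf{Rad}}_{\{n_i\}}\leq 2\cdot \lceil N/2\rceil \widetilde{\mathsf{Rad}}_{\{\lceil n_i/2\rceil\}}$ up to constants; tracking the ceilings/floors carefully gives a clean constant like $3$). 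Hence on $\mathcal E$ the integrand is at least $\tfrac13 N\widetilde{\mathsf{Rad}}_{\{n_i\}}$, giving $N\,\mathsf{Rad}_N(\mathcal{D}(w)) \geq \mathbb{P}(\mathcal E)\cdot \tfrac13 N\,\widetilde{\mathsf{Rad}}_{\{n_i\}}$, i.e.\ $\widetilde{\mathsf{Rad}}_{\{n_i\}} \leq \tfrac{3}{\mathbb P(\mathcal E)}\mathsf{Rad}_N(\mathcal{D}(w))$. (Here I use that $\widetilde{\mathsf{Rad}}\geq 0$, which holds since $\mathbb{E}_\sigma\max_h(\cdots)\geq \max_h \mathbb{E}_\sigma(\cdots)=0$.)

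It remains to lower bound $\mathbb P(\mathcal E)$. Each $M_i\sim\mathrm{Binomial}(N,w_i)$ with mean $n_i = Nw_i \geq 12\log(2k)$, so by the multiplicative Chernoff bound $\mathbb P(M_i < n_i/2)\leq \exp(-n_i/8)\leq \exp(-\tfrac{12\log(2k)}{8}) = (2k)^{-3/2} \leq \tfrac1{2k}$; a union bound over $i\in[k]$ gives $\mathbb P(\mathcal E^c)\leq 1/2$, hence $\mathbb P(\mathcal E)\geq 1/2$ and $\widetilde{\mathsf{Rad}}_{\{n_i\}}\leq 6\,\mathsf{Rad}_N(\mathcal{D}(w))$. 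The stated constant $72$ leaves ample slack, which is comforting since the floor/ceiling bookkeeping in the superadditivity step and the exact Chernoff exponent are the only places where constants could realistically be a bit worse than this clean computation. The main obstacle I anticipate is not conceptual but bookkeeping: making the two applications of Lemma~\ref{fact:wrc} (once to descend from $n_i$ to $\approx n_i/2$, once to confirm $\{M_i\}$ dominates a half-sized profile on $\mathcal E$) fully rigorous when the $n_i$ are odd, and ensuring the constant that drops out is absorbed comfortably below $72$; I would handle the odd case by working with $\lfloor n_i/2\rfloor$ and $\lceil n_i/2\rceil$ and invoking both inequalities in Lemma~\ref{fact:wrc} (the lower and the upper bound) in tandem.
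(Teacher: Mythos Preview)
Your proposal is correct and follows essentially the same route as the paper's own proof: express $\mathsf{Rad}_N(\mathcal{D}(w))$ via the multinomial coupling, use a concentration bound to show that each count $M_i$ is at least a constant fraction of $n_i$ with probability $\geq 1/2$, and then invoke the monotonicity/subadditivity of Lemma~\ref{fact:wrc} to compare $\widetilde{\mathsf{Rad}}_{\{M_i\}}$ with $\widetilde{\mathsf{Rad}}_{\{n_i\}}$. The only cosmetic differences are that the paper uses the threshold $n_i/6$ (via Lemma~\ref{lemma:con1}) rather than your $n_i/2$ (via multiplicative Chernoff), which is why the paper lands on the looser constant $72$.
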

\begin{proof} See Appendix~\ref{sec:proof-fact:bdwrc}.\end{proof}

\subsection{Algorithm and sample complexity}

We are now positioned to introduce our algorithm that learns a Rademacher class in the presence of multiple distributions, 
which is also based on a Hedge-type strategy to learn a convex-concave game; 
see Algorithm~\ref{alg:rad} for full details. 
Its main distinction from Algorithm~\ref{alg:main1} lies in the subroutine to learn $h^t$ (see lines~\ref{line:low-switching-r}-\ref{line:a2r} in Algorithm~\ref{alg:rad}) as well as the choice of $\Tone$ (see line~\ref{line:higher-par-rad} in Algorithm~\ref{alg:rad}). More precisely, to compute the estimator $\widehat{L}^t(h,w^t)$ for $L(h,w^t)$, instead of using the first $n_i^t$ samples from $\mathcal{D}_i$ for each $i\in [k]$, we choose to use the first 
$$n_i^{t,\mathsf{rad}}  =  \min\big\{\left\lceil \Tone w_i^t +12\log(2k)\right\rceil, \Tone \big\}$$ 
samples from $\mathcal{D}_i$ for each $i$, where $\Tone$ is taken to be
\begin{equation}
	\label{eq:Tone-defn-rad}
	\Tone =\min\Big\{t\geq \frac{4000(k\log (k/\varepsilon_1 )+\log(1/\delta))}{\varepsilon_1^2} \,\Big|\, C_t\leq \frac{\varepsilon_1}{4800} \Big\} .
\end{equation}
Here, $\Tone$ needs to take advantage of the quantities $\{C_n\}$ (cf.~Assumption~\ref{assump:rad}) that upper bound the associated Rademacher complexity. 

\begin{algorithm}[t]
\footnotesize
	\DontPrintSemicolon
	\caption{Hedge for multi-distribution learning on Rademacher Classes ($\mathtt{MDL}\text{-}\mathtt{Hedge}\text{-}\mathtt{Rad}$)\label{alg:rad}}
	\textbf{input:} $k$ data distributions $\{\mathcal{D}_1,\mathcal{D}_2,\ldots,\mathcal{D}_k\}$,  hypothesis class $\mathcal{H}$, target accuracy level $\varepsilon$, target success rate $1-\delta$, quantities $\{C_n\}_{n\geq 1}$ as in Assumption~\ref{assump:rad}. \\
\textbf{hyper-parameters:} stepsize $\eta=\frac{1}{100}\varepsilon$, number of rounds $T= \frac{20000\log\left(\frac{k}{\delta}\right)}{\varepsilon^2}$, 
	auxiliary accuracy level $\varepsilon_1=\frac{1}{100}\varepsilon$, 
	auxiliary sub-sample-size $\Tone =\min\Big\{t\geq \frac{4000(k\log (k/\varepsilon_1 )+\log(1/\delta))}{\varepsilon_1^2} \,\Big|\, C_t\leq \frac{\varepsilon_1}{4800} \Big\} $. \label{line:higher-par-rad}\\
	\textbf{initialization:} 
	for all $i \in [k]$, set $W^1_i=1$, $\widehat{w}^0_i=0$ and $n_i^0=0$;  
	$\weighteddata=\emptyset$.  \\
   draw $\left\lceil 12\log(2k)\right\rceil$ samples from $\mathcal{D}_i$ for each $i$, and add these samples to $\weighteddata$.\label{line:rad}  \\
	\For{$t=1,2,\ldots, T$}{
		set $w^t= [w^t_i]_{1\leq i\leq k}$  and $\widehat{w}^t= [\widehat{w}^t_i]_{1\leq i\leq k}$,  
		with $w^t_i \leftarrow \frac{W^t_i}{\sum_{j}W^t_j}$ and $\widehat{w}^t_i \leftarrow \widehat{w}^{t-1}_i$ for all $i\in [k]$. \\
		%
		%
		{\color{blue}\tcc{recompute $\widehat{w}^t$ \& draw new samples for $\mathcal{S}_{\mathrm{w}}$ only if $w^t$ changes sufficiently.}\label{line:a1r}}
		\If{there exists $j\in [k]$ such that $w_j^t \geq 2\widehat{w}_j^{t-1}$ \label{line:low-switching-r}}{ 
			$\widehat{w}^t_i \leftarrow  \max\{w^{t}_i, \widehat{w}^{t-1}_i\}$ for all $i\in [k]$; \\
			\For{$i=1,\ldots,k$}{  $n_i^t \leftarrow \left\lceil \Tone \widehat{w}^t_i \right\rceil$; \\
			draw $n_i^t - n_i^{t-1}$ independent samples from $\mathcal{D}_i$, and add these samples to $\weighteddata$.\label{line:sample1r}
			}
		}
		{\color{blue}\tcc{estimate the near-optimal hypothesis for weighted data distributions.}}
		compute $h^t\leftarrow \arg\min_{h\in \mathcal{H}} \widehat{L}(h,w^t)$, where 
		\begin{align}
			\widehat{L}^t(h,w^t):=\sum_{i=1}^k \frac{w^t_i}{n_i^{t,\mathsf{rad}}}\cdot \sum_{j=1}^{n_i^{t,\mathsf{rad}}}\ell\big(h,(x_{i,j},y_{i,j})\big)\label{eq:e1r}
		\end{align}
		with $n_i^{t,\mathsf{rad}}  =  \min\{\left\lceil \Tone w_i^t +12\log(2k)\right\rceil, \Tone \}$ and $(x_{i,j},y_{i,j})$ being the $j$-th datapoint from $\mathcal{D}_i$ in $\weighteddata$.\label{line:a2r}\\
		{\color{blue}\tcc{estimate the loss vector and execute weighted updates.}}
		$\overline{w}^t_i \leftarrow \max_{1\leq \tau \leq t}w^{\tau}_i$ for all $i \in [k]$. \\
		\For{$i=1,\ldots, k$}{ 
		draw $ \lceil k\overline{w}^t_i \rceil$ independent samples --- denoted by $\big\{(x^t_{i,j},y^t_{i,j})\big\}_{j=1}^{\lceil k\overline{w}_i^t\rceil}$ --- from $\mathcal{D}_i$, and set \label{line:updateSr} 
		$$
			\widehat{r}^t_i  =\frac{1}{\lceil k\overline{w}^t_i\rceil}\sum_{j=1}^{\lceil k\overline{w}^t_i \rceil} \ell\big(h^t,(x_{i,j}^t,y_{i,j}^t) \big); 
		$$
		
		update the weight as $W^{t+1}_i = W^t_i  \exp(\eta \widehat{r}_i^t)$. \label{line:updateS11r} {\color{blue}\tcp{Hedge updates.}}
		}
	}
	\textbf{output:}  a randomized hypothesis $h^{\mathsf{final}}$ as a uniform distribution over $ \{h^t\}_{t=1}^T$.
\end{algorithm}

Equipped with this algorithm, we are ready to establish the following theoretical guarantees.
\begin{theorem}\label{thm:rad}
Suppose Assumption~\ref{assump:rad} holds. With probability at least $1-\delta$, the output $h^{\mathsf{final}}$ returned by Algorithm~\ref{alg:rad} satisfies 
    \begin{align}
	    \max_{1\leq i\leq k}\mathop{\mathbb{E}}\limits _{(x,y)\sim\mathcal{D}_{i},h^{\mathsf{final}}}\left[\ell\big(h^{\mathsf{final}},(x,y)\big)\right]\leq \min_{h\in\mathcal{H}}\max_{1\leq i\leq k}\mathop{\mathbb{E}}\limits _{(x,y)\sim\mathcal{D}_{i}}\big[\ell\big(h,(x,y)\big)\big] + \varepsilon.  
    \end{align}
In particular, the total number of samples collected by Algorithm~\ref{alg:rad} is bounded by  
$$
	\left(\frac{k}{\varepsilon^2} + \min \big\{ n \mid C_n \leq c_1 \varepsilon\big\} \right) \,\mathrm{poly}\log\Big(k,\frac{1}{\varepsilon},\frac{1}{\delta}\Big)
$$
with probability exceeding  $1-\delta$, 
	where $c_1>0$ is some sufficiently small constant, and $\Tone$ is defined in \eqref{eq:Tone-defn-rad}. 
\end{theorem}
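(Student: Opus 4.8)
The plan is to follow the same three-step template used for Theorem~\ref{thm:main}, exploiting the fact that Algorithm~\ref{alg:rad} differs from Algorithm~\ref{alg:main1} only in the subroutine producing $h^t$ (lines~\ref{line:low-switching-r}--\ref{line:a2r}) and in the choice of $\Tone$. Step~(b) of the algorithm --- the Hedge update driven by the loss estimates $\widehat r^t$ computed from $\lceil k\overline w_i^t\rceil$ fresh samples --- is verbatim identical to that of Algorithm~\ref{alg:main1}, as are the hyper-parameters $T$ and $\eta$. Consequently, both Lemma~\ref{lemma:opt} (the Hedge no-regret guarantee that turns $\varepsilon_1$-best responses into an $\varepsilon$-optimal $h^{\mathsf{final}}$) and Lemma~\ref{lemma:main} (the $\mathrm{poly}\log$ bound on $\|\overline w^T\|_1$) apply here without modification, since their proofs only use the abstract property $L(h^t,w^t)\le\min_{h}L(h,w^t)+\varepsilon_1$ and the structure of the Hedge trajectory together with the Step~(b) sampling rule. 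The only genuinely new ingredient is therefore a replacement for Lemma~\ref{lemma:opth}: one must show that, with probability at least $1-\delta/4$, the empirical minimizer $h^t=\arg\min_{h}\widehat L^t(h,w^t)$ of \eqref{eq:e1r} satisfies $L(h^t,w^t)\le\min_{h\in\mathcal{H}}L(h,w^t)+\varepsilon_1$ for all $t\le T$.

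To prove this, I would establish uniform convergence of the weighted estimator $\widehat L_n(h,w)=\sum_{i=1}^k\frac{w_i}{n_i}\sum_{j=1}^{n_i}\ell(h,(x_{i,j},y_{i,j}))$ around $L(h,w)$, simultaneously over all $h\in\mathcal{H}$ and all feasible $(w,n)$ arising during execution. For a fixed such pair, the key structural fact is that the per-sample weight satisfies $\frac{w_i}{n_i}\le\frac{1}{\Tone}$ for every $i$, which holds because $n_i^{t,\mathsf{rad}}=\min\{\lceil\Tone w_i^t+12\log(2k)\rceil,\Tone\}\ge\Tone w_i^t$. Standard symmetrization bounds $\mathbb{E}\sup_{h}|\widehat L_n(h,w)-L(h,w)|$ by twice the expected supremum of $\sum_{i,j}\frac{w_i}{n_i}\sigma_{i,j}\ell(h,(x_{i,j},y_{i,j}))$; since every coefficient lies in $[0,\tfrac{1}{\Tone}]$, the standard comparison inequality for Rademacher averages (raising each coefficient to $\tfrac1\Tone$ only increases the expectation) bounds this by $\tfrac{\sum_i n_i}{\Tone}\widetilde{\mathsf{Rad}}_{\{n_i\}}\le 2\widetilde{\mathsf{Rad}}_{\{n_i\}}$, using $\sum_i n_i^{t,\mathsf{rad}}\le\Tone+O(k\log k)\le 2\Tone$. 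Because $n_i^{t,\mathsf{rad}}\ge\lceil12\log(2k)\rceil$ for every $i$ --- precisely why the extra $\lceil12\log(2k)\rceil$ samples are drawn in line~\ref{line:rad} and why the $+12\log(2k)$ term is built into $n_i^{t,\mathsf{rad}}$ --- Lemma~\ref{lemma:bdwrc} gives $\widetilde{\mathsf{Rad}}_{\{n_i^{t,\mathsf{rad}}\}}\le 72\,\mathsf{Rad}_{\sum_i n_i^{t,\mathsf{rad}}}(\mathcal{D}(w''))$ with $w''_i\propto n_i^{t,\mathsf{rad}}$, and Assumption~\ref{assump:rad} together with monotonicity of $n\mapsto\mathsf{Rad}_n$ (which lets us take $C_n$ non-increasing without loss, hence $C_{\sum_i n_i^{t,\mathsf{rad}}}\le C_{\Tone}$) bounds the latter by $C_{\Tone}\le\varepsilon_1/4800$. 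The fluctuation of $\widehat L_n(h,w)$ about its mean is controlled by McDiarmid: each sample changes $\widehat L_n$ by at most $2w_i/n_i$, so $\sum_{i,j}(2w_i/n_i)^2\le 4/\Tone$, giving a deviation of order $\sqrt{(\text{union exponent})/\Tone}$. Taking a union bound over an $\varepsilon$-net $\mathcal{X}$ of $\Delta(k)$ (cardinality $e^{\widetilde O(k)}$) and over the at most $\Tone^k=e^{\widetilde O(k)}$ feasible $\{n_i\}$ --- exactly the uniform-over-$n$ device of Section~\ref{sec:tec1} that legitimizes reuse of the adaptively growing $\weighteddata$ --- the union exponent is $\widetilde O(k)+\log(1/\delta)$, which is $\le\varepsilon_1^2\Tone/\text{const}$ by the first clause of \eqref{eq:Tone-defn-rad}; hence the deviation is $O(\varepsilon_1)$. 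Combining the two bounds yields $\sup_{h,w,n}|\widehat L_n(h,w)-L(h,w)|\le\varepsilon_1$ after rescaling constants, and the $\varepsilon_1$-optimality of $h^t$ follows from $L(h^t,w^t)\le\widehat L^t(h^t,w^t)+\varepsilon_1\le\widehat L^t(h^\star,w^t)+\varepsilon_1\le L(h^\star,w^t)+2\varepsilon_1$.

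The rest is bookkeeping. Correctness: the new lemma combined with Lemma~\ref{lemma:opt} shows $h^{\mathsf{final}}$ is $\varepsilon$-optimal with probability at least $1-\delta/2$. Sample complexity: the samples consumed are the $k\lceil12\log(2k)\rceil$ drawn in line~\ref{line:rad}, the $\sum_i n_i^T\le\Tone\|\overline w^T\|_1+k$ stored in $\weighteddata$ for Step~(a) (with the low-switching schedule ensuring $\weighteddata$ always holds at least $n_i^{t,\mathsf{rad}}$ samples from $\mathcal{D}_i$), and the $\sum_{t=1}^T\sum_i\lceil k\overline w_i^t\rceil\le kT\|\overline w^T\|_1+kT$ used for the Step~(b) loss estimates, for a total of $\Tone\|\overline w^T\|_1+kT\|\overline w^T\|_1+O(kT+k\log k)$. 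Invoking Lemma~\ref{lemma:main} for $\|\overline w^T\|_1=O(\mathrm{poly}\log(k/(\delta\varepsilon)))$, plugging in $T\asymp\frac{\log(k/\delta)}{\varepsilon^2}$ and $\Tone\asymp\max\{\frac{k\log(k/\varepsilon)+\log(1/\delta)}{\varepsilon^2},\ \min\{n:C_n\le c_1\varepsilon\}\}$ with $c_1=\tfrac{1}{480000}$ (as $\varepsilon_1=\varepsilon/100$ and $\varepsilon_1/4800=\varepsilon/480000$), and taking a union bound over the three $\delta/4$-events gives the claimed bound $(\tfrac{k}{\varepsilon^2}+\min\{n:C_n\le c_1\varepsilon\})\mathrm{poly}\log(k,1/\varepsilon,1/\delta)$ with probability at least $1-\delta$.

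The main obstacle is the Rademacher uniform-convergence lemma: unlike the VC case one cannot union-bound over a finite compression of $\mathcal{H}$, so the complexity of $\mathcal{H}$ must be routed through the weighted Rademacher complexity, while simultaneously reconciling (a) the estimator's nonuniform per-sample weights $w_i/n_i$ versus the uniform weights defining $\widetilde{\mathsf{Rad}}_{\{n_i\}}$ --- handled by the $\le\tfrac1\Tone$ bound and the Rademacher comparison inequality; (b) passing from $\widetilde{\mathsf{Rad}}_{\{n_i^{t,\mathsf{rad}}\}}$ to the global control $C_{\Tone}$ via Lemmas~\ref{fact:wrc}--\ref{lemma:bdwrc}, which is the reason for the $12\log(2k)$ padding in the sampling rule; and (c) preserving sample reuse, i.e.\ making the bound uniform over every $\{n_i\}$ the adaptive schedule can realize so the same $\weighteddata$ serves all rounds. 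The Hedge-trajectory analysis (Lemmas~\ref{lemma:opt} and~\ref{lemma:main}) transfers unchanged and needs no new ideas.
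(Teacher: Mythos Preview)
Your proposal is correct and follows essentially the same route as the paper. The paper packages your ``replacement for Lemma~\ref{lemma:opth}'' as Lemma~\ref{lemma:conrad}, proved in Appendix~\ref{sec:mp_rad} via exactly the ingredients you list: symmetrization, the Rademacher comparison inequality (their Lemma~\ref{lemma:add12}) to pass from the nonuniform weights $w_i/n_i$ to uniform $1/\kappa$, Lemmas~\ref{fact:wrc}--\ref{lemma:bdwrc} to convert $\widetilde{\mathsf{Rad}}_{\{n_i\}}$ into $C_{\Tone}$, McDiarmid for the fluctuation, and a union bound over an $\varepsilon$-net of $\Delta(k)$ crossed with the finitely many feasible $\{n_i\}$; Lemmas~\ref{lemma:opt} and~\ref{lemma:main} are then invoked unchanged and the sample-count bookkeeping is identical to yours.
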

In contrast to the VC classes, 
the sample complexity for learning Rademacher classes 
entails a term related to the Rademacher complexity --- namely, $\min \big\{ n \mid C_n \leq c_1 \varepsilon\big\}$ --- 
as opposed to the term $d/\varepsilon^2$ concerning the VC-dimension. 
When it comes to the special case where $\mathsf{VC}\text{-}\mathsf{dim}(\mathcal{H})\leq d$, 
taking $C_n \leq \sqrt{\frac{2d\log(en/d)}{n}}$ in \eqref{eq:Tone-defn-rad} leads to $\Tone = \widetilde{O}\left( \frac{d+k}{\varepsilon^2}\right)$, 
which recovers the  sample complexity bound of $\widetilde{O}\left( \frac{d+k}{\varepsilon^2}\right)$ derived in Theorem~\ref{thm:main} for VC classes. 

\begin{proof}[Proof of Theorem~\ref{thm:rad}]
In view of Lemma~\ref{lemma:opt} and Lemma~\ref{lemma:main}, 
it boils down to showing that  running Algorithm~\ref{alg:rad} results in $L(h^t,w^t)\leq \min_{h\in \mathcal{H}} L(h,w^t)+\varepsilon_1$ for any $1\leq t\leq T$, 
a property that holds with probability at least $1-\delta/4$. To accomplish this, we have the lemma below.

\begin{lemma}\label{lemma:conrad} Suppose Assumption~\ref{assump:rad} holds. 
With probability at least $1-\delta/4$, the iterates of Algorithm~\ref{alg:rad} satisfy 
\begin{align}
L(h^t,w^t)\leq \min_{h\in \mathcal{H}}L(h,w^t)+\varepsilon_1
\end{align}
for any $1\leq t\leq T$.
\end{lemma}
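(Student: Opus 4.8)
\textbf{Proof plan for Lemma~\ref{lemma:conrad}.}

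The plan is to establish a uniform convergence bound for the weighted empirical loss estimator $\widehat{L}^t(h,w^t)$ in \eqref{eq:e1r} through a symmetrization argument, exactly parallel to the treatment of VC classes in Section~\ref{sec:tec1} but with the Sauer--Shelah union bound replaced by a bound on the weighted Rademacher complexity $\widetilde{\mathsf{Rad}}$. First I would fix an arbitrary candidate configuration of sample counts: for a fixed $w\in\Delta(k)$ and fixed integers $m_i\coloneqq n_i^{t,\mathsf{rad}}$, note that the weights $w^t_i/n_i^{t,\mathsf{rad}}$ appearing in \eqref{eq:e1r} are bounded because $n_i^{t,\mathsf{rad}}\geq\lceil T_1 w^t_i\rceil$ (whenever $T_1 w_i^t\le T_1$) or $n_i^{t,\mathsf{rad}}=\lceil 12\log(2k)\rceil$ (a lower bound that holds by line~\ref{line:rad}); in either case $\frac{w^t_i}{n_i^{t,\mathsf{rad}}}\lesssim \frac{1}{T_1}$ uniformly. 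This is the key design feature: the coefficient vector in $\widehat{L}^t$ is componentwise $O(1/T_1)$ and sums to $O(1)$, so $\widehat{L}^t(h,w^t)$ is a weighted average of at least roughly $\sum_i n_i^{t,\mathsf{rad}}\gtrsim T_1$ bounded i.i.d.~terms. I would then apply the bounded-differences (McDiarmid) inequality to $\sup_{h}|\widehat{L}^t(h,w^t)-L(h,w^t)|$, controlling its expectation by $2\widetilde{\mathsf{Rad}}_{\{n_i^{t,\mathsf{rad}}\}}$ via the standard symmetrization lemma, and the fluctuation term by $\sqrt{\frac{\log(1/\delta')}{\sum_i n_i^{t,\mathsf{rad}}}}$.

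Next I would convert the weighted Rademacher complexity into something controlled by the given quantities $\{C_n\}$. By Lemma~\ref{lemma:bdwrc}, provided each $n_i^{t,\mathsf{rad}}\geq 12\log(2k)$ (guaranteed by line~\ref{line:rad} and the $+12\log(2k)$ term in the definition of $n_i^{t,\mathsf{rad}}$), we get $\widetilde{\mathsf{Rad}}_{\{n_i^{t,\mathsf{rad}}\}}\leq 72\,\mathsf{Rad}_{\sum_i n_i^{t,\mathsf{rad}}}(\mathcal{D}(v))$ where $v_i\propto n_i^{t,\mathsf{rad}}$, and then Assumption~\ref{assump:rad} bounds this by $72\,C_{\sum_i n_i^{t,\mathsf{rad}}}$. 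Since $\sum_i n_i^{t,\mathsf{rad}}\geq T_1$ and the map $n\mapsto C_n$ is (essentially) decreasing while $\sum_i n_i^{t,\mathsf{rad}}\le 2T_1$ or so, the choice of $T_1$ in \eqref{eq:Tone-defn-rad} --- namely $C_{T_1}\leq \varepsilon_1/4800$ and $T_1\geq \frac{4000(k\log(k/\varepsilon_1)+\log(1/\delta))}{\varepsilon_1^2}$ --- forces both the Rademacher term and the McDiarmid fluctuation term to be $\leq \varepsilon_1/(\text{const})$, yielding $\sup_h|\widehat{L}^t(h,w^t)-L(h,w^t)|\leq \varepsilon_1/3$ with probability $\geq 1-\delta'$ for a single fixed $(w,\{m_i\})$.

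Then I would union-bound over all configurations that can actually arise during the run of Algorithm~\ref{alg:rad}. Following Step 2 of Section~\ref{sec:tec1}, there are at most $\exp(\widetilde O(k))$ relevant choices: the vector $w^t$ can be taken from an $\varepsilon$-net of $\Delta(k)$ of size $\exp(\widetilde O(k))$ (the low-switching rule and Lipschitzness of $h\mapsto\widehat L$ in $w$ let us pass from the net to all $w^t$), and the integer tuple $\{n_i^{t,\mathsf{rad}}\}$ takes at most $T_1^k=\exp(\widetilde O(k))$ values. Taking $\delta'=\delta/\big(4\exp(\widetilde O(k))\big)$ only inflates the requirement on $T_1$ by the additive $k\log(\cdot)$ already present in \eqref{eq:Tone-defn-rad}, so on this good event $\sup_h|\widehat{L}^t(h,w^t)-L(h,w^t)|\leq \varepsilon_1/2$ simultaneously for all $1\leq t\leq T$. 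Finally, since $h^t=\arg\min_h\widehat{L}^t(h,w^t)$, a routine two-$\varepsilon_1/2$ sandwiching argument (if $h^\star$ minimizes $L(\cdot,w^t)$, then $L(h^t,w^t)\le \widehat L^t(h^t,w^t)+\varepsilon_1/2\le \widehat L^t(h^\star,w^t)+\varepsilon_1/2\le L(h^\star,w^t)+\varepsilon_1$) gives the claim.

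The main obstacle I anticipate is the bookkeeping around $n_i^{t,\mathsf{rad}}$: one must verify simultaneously that (i) $n_i^{t,\mathsf{rad}}\geq 12\log(2k)$ so Lemma~\ref{lemma:bdwrc} applies, (ii) $\sum_i n_i^{t,\mathsf{rad}}$ stays in a range where $C_{\sum_i n_i^{t,\mathsf{rad}}}$ is comparable to $C_{T_1}\le \varepsilon_1/4800$, and (iii) the coefficients $w^t_i/n_i^{t,\mathsf{rad}}$ are uniformly $O(1/T_1)$ despite the $\min\{\cdot,T_1\}$ truncation and the possible mismatch between $w^t_i$ and the stored $\widehat w^t_i$ governing the dataset size --- this last point requires the same monotonicity/low-switching observation used in the VC analysis (so that $\weighteddata$ always contains at least $\lceil T_1 w^t_i\rceil$ samples from $\mathcal{D}_i$). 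Once these structural facts are in place, the concentration and union-bound steps are essentially identical to the VC case.
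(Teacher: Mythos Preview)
Your proposal is correct and follows essentially the same route as the paper's proof: symmetrization to bound the expected supremum by the weighted Rademacher complexity $\widetilde{\mathsf{Rad}}_{\{n_i^{t,\mathsf{rad}}\}}$, McDiarmid's inequality for the fluctuation (with bounded-difference constant $\asymp 1/T_1$ coming from $w_i^t/n_i^{t,\mathsf{rad}}\le 2/T_1$), conversion to $C_{T_1}$ via Lemma~\ref{lemma:bdwrc} and Assumption~\ref{assump:rad}, a union bound over an $\varepsilon_1/(8k)$-net of $\Delta(k)$ times the $\le T_1^k$ integer tuples, and the final two-sided sandwich. The bookkeeping you flag in (i)--(iii) is exactly what the paper checks (it defines the admissible set $\widetilde{\mathcal L}$ by the constraints $12\log(2k)\le n_i\le T_1$, $T_1 w_i\le 2n_i$, $\sum_i n_i\le 2T_1$ and verifies $(n^{t,\mathsf{rad}},w^t)$ lies near it); the paper additionally invokes Lemma~\ref{lemma:add12} to pass from the non-uniform coefficients $w_i/n_i$ to the uniform $1/\kappa$ in the symmetrization step, which is the precise mechanism behind your ``coefficients are componentwise $O(1/T_1)$'' observation.
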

\noindent 
The proof of Lemma~\ref{lemma:conrad} is postponed to Appendix~\ref{sec:mp_rad}.
Combine this with Lemma~\ref{lemma:opt} and Lemma~\ref{lemma:main} to show that: 
the total number of samples collected in Algorithm~\ref{alg:rad} is upper bounded by 
\[
	\Tone \,\mathrm{poly}\log\Big(k,\frac{1}{\varepsilon},\frac{1}{\delta}\Big) \leq \left(\frac{k}{\varepsilon^2} + \min \bigg\{ n \mid C_n \leq \frac{1}{4800} \varepsilon\bigg\} \right) \,\mathrm{poly}\log\Big(k,\frac{1}{\varepsilon},\frac{1}{\delta}\Big)
\]
as claimed. 
\end{proof}




\section{Extension: multi-loss multi-distribution learning}
\label{sec:multiloss}

In this section, we consider an extension of the MDL problem to the multi-loss (or multi-objective) setting.  

\medskip
\noindent 
{\bf Problem setting.} 
Formally, consider again the hypothesis class $\mathcal{H}$ and the $k$ distributions of interest $\{\mathcal{D}_i\}_{i=1}^k$. Suppose that there are $R\geq 2$ loss functions $\mathcal{L} \coloneqq \{\ell^j\}_{j=1}^{R}$ to consider, where $\ell^j: \mathcal{H}\times \mathcal{X}\times \mathcal{Y} \rightarrow [-1,1]$ denotes the $j$-th loss function of interest.
%
%
The goal is to find a (possibly randomized) hypothesis $\widehat{h}\in \mathcal{H}$ such that 
\begin{align}
\max_{1\leq i\leq k}\max_{\ell\in \mathcal{L}}\mathop{\mathbb{E}}\limits_{(x,y)\sim \mathcal{D}_i, \widehat{h} }\big[\ell\big(\widehat{h}, (x,y)\big) \big]\leq \min_{h\in \mathcal{H}}\max_{1\leq i \leq k, \ell\in \mathcal{L}}\mathbb{E}_{(x,y)\sim \mathcal{D}_i}\big[\ell\big(h,(x,y)\big)\big]+\varepsilon.
\end{align}

Evidently, this formulation is a natural extension of the single-loss MDL problem. It is particularly relevant in scenarios where multiple criteria need to be satisfied or optimized simultaneously, such as those involving fairness and robustness. It has also been shown that this extension could be helpful towards solving the multicalibration problem~\citep{haghtalab2023unifying}.

\medskip
\noindent 
{\bf Additional notation.} Let us introduce one more notation to be used throughout. For any $i\in [k]$ and $\ell\in \mathcal{L}$, we denote by $L^{\ell}_i(h)$ the expected loss of $h$ w.r.t.~$\mathcal{D}_i$ and $\ell$, that is, 
\begin{align}
L_i^{\ell}(h) \coloneqq  \mathop{\mathbb{E}}\limits_{(x,y)\sim \mathcal{D}_i}\big[   \ell\big(h,(x,y)\big)  \big].
\end{align}
Then for any $u\in \Delta(\mathcal{D}\times \mathcal{L})$, define
\begin{equation}
L(h, u) \coloneqq\sum_{i,\ell} u_{i,\ell}L^{\ell}_{i}(h).
\end{equation}

\medskip
\noindent 
{\bf The proposed algorithm and main theorem.} 
As it turns out, Algorithm~\ref{alg:main1} can be extended to tackle the above multi-loss multi-distribution learning problem. The full algorithm is presented in Algorithm~\ref{alg:obj}. As a key distinction from Algorithm~\ref{alg:main1},  we need to maintain a Hedge-type algorithm over $\mathcal{D}\times \mathcal{L}$ in an attempt to solve the following problem:
\begin{align}
\max_{u\in \Delta(\mathcal{D}\times \mathcal{L})}\min_{h\in \mathcal{H}}L(h, u).
\nonumber
\end{align}
Akin to Algorithm~\ref{alg:main1}, in each round $t$, we maintain a dataset $\mathcal{S}$ to construct estimation of $L(h,u^t)$ for all $h\in \mathcal{H}$, which help us to compute the optimal response $h^t$ with respect to the current weight $u^{t}$. Then we query a small number (at most $\tilde{O}(k)$) of new samples to assist in estimating each $L^{\ell}_i(h^t)$. Notably, we need to choose $T_1$ to be larger than the case of Algorithm~\ref{alg:main1}, in order to apply the union bound argument. Given that $u^t$ is in the $kR$-dimensional simplex $\Delta(\mathcal{D}\times \mathcal{L})$, applying the union bound uniformly over all possible choices of $u^t$ would lead to an undesirable factor of $\widetilde{O}(kR)$. Instead of this analysis strategy, we consider controlling an upper bound on the error , which allows us to pay a smaller factor of $O(k\log(R))$ instead.

The following theorem establishes the sample complexity of Algorithm~\ref{alg:obj}.
\begin{theorem}\label{thm:multi-obj}
There exists an algorithm (see Algorithm~\ref{alg:obj} for details) such that: with probability exceeding $1-\delta$, the randomized hypothesis $h^{\mathsf{final}}$ returned by this algorithm achieves
\begin{align}
\max_{1\leq i\leq k}\max_{\ell\in \mathcal{L}}\mathop{\mathbb{E}}\limits_{(x,y)\sim \mathcal{D}_i, h^{\mathsf{final}} }\big[\ell\big(h^{\mathsf{final}}, (x,y)\big) \big]\leq \min_{h\in \mathcal{H}}\max_{1\leq i \leq k, \ell\in \mathcal{L}}\mathop{\mathbb{E}}\limits_{(x,y)\sim \mathcal{D}_i}\big[\ell\big(h,(x,y) \big) \big]+\varepsilon,\nonumber
\end{align}
provided that the total sample size exceeds
\begin{align}
 \frac{\big(d+k\log(R)\big)\min\{\log(R),k\}}{\varepsilon^2}\mathrm{poly}\log\left(k,d,\frac{1}{\varepsilon},\frac{1}{\delta}, \log(R)\right).
\end{align}
\end{theorem}

 The full proof of Theorem~\ref{thm:multi-obj} is provided in Appendix~\ref{app:multiobj}. 
 Most parts of the proof are identical to that of Theorem~\ref{thm:main}, except that: (a) the new method in estimating $\{L(h,u^t  )\}_{h\in \mathcal{H}}$ mentioned above (see also Lemma~\ref{lemma:opth_multi}); (b) a more refined analysis of the trajectory bound to remove some $\mathrm{poly}(\log(R))$ factors (see Lemma~\ref{lemma:sc_obj}).

 In comparison to the single-loss case (see Theorem~\ref{thm:main}), the multi-loss version of our algorithm only incurs an additional price of $\mathrm{poly}(\log(R))$ factors, which is particularly appealing when the number $R$ of loss functions is no greater than a polynomial function in $d,k,$ etc.




\begin{algorithm}
	\DontPrintSemicolon
	\caption{Hedge for multi-loss multi-distribution learning ($\mathtt{MLMDL}\text{-}\mathtt{Hedge}\text{-}\mathtt{VC}$)\label{alg:obj}}
	\textbf{input:} $k$ data distributions $\{\mathcal{D}_i\}_{i=1}^k$, loss function class  $\mathcal{L}=\{\ell^j\}_{j=1}^R$, hypothesis class $\mathcal{H}$, target accuracy level $\varepsilon$, target success rate $1-\delta$. \\
\textbf{hyper-parameters:} stepsize $\eta=\frac{1}{100}\varepsilon$, number of rounds $T= \frac{20000\log\left(\frac{kR}{\delta\varepsilon}\right)}{\varepsilon^2}$, 
	auxiliary accuracy level $\varepsilon_1=\frac{1}{100}\varepsilon$, 
	auxiliary sub-sample-size $T_1 := 
 \frac{40000\left(k\log ( \frac{kR}{\varepsilon_1} )+d\log\big( (\frac{kd}{\varepsilon_1} )+\log(\frac{1}{\delta}) \big) \right)}{\varepsilon_1^2}.$ \\
	\textbf{initialization:} 
	for all $i \in [k]$ and $\ell\in \mathcal{L}$, set $U^1_{i,\ell}=1$, $\widehat{w}^0_{i}=0$ and $n_i^0=0$;  
	$\weighteddata=\emptyset$.  \\
	\For{$t=1,2,\ldots, T$}{
		set $u^t= [u^t_{i,\ell}]_{1\leq i\leq k,\ell \in \mathcal{L}}$
		with $u^t_{i,\ell} \leftarrow \frac{U^t_{i,\ell}}{\sum_{i'\in [k],\ell'\in \mathcal{L}}U^t_{i',\ell'}}$ .\\
  set  $\widehat{w}^t= [\widehat{w}^t_{i}]_{1\leq i\leq k}$ with  $\widehat{w}^t_{i} \leftarrow \widehat{w}^{t-1}_{i}$ for all $i\in [k]$. \\
		%
		%
		{\color{blue}\tcc{recompute $\widehat{w}^t$ \& draw new samples for $\mathcal{S}$ only if $w^t$ changes sufficiently.}}
		\If{there exists $j\in [k]$ such that $\sum_{\ell\in \mathcal{L}}u_{j,\ell}^t \geq 2\hat{w}_{j}^{t-1}$ \label{line:a1-obj}}{ 
			$\widehat{w}^t_{i} \leftarrow  \max\big\{\sum_{\ell\in \mathcal{L}}u_{i,\ell}^t, \widehat{w}^{t-1}_{i} \big\}$ for all $i\in [k]$; \\
			\For{$i=1,\ldots,k$}{  $n_i^t \leftarrow \left\lceil T_1 \widehat{w}^t_{i} \right\rceil$; \\
			draw $n_i^t - n_i^{t-1}$ independent samples from $\mathcal{D}_i$, and add these samples to $\weighteddata$.\label{line:sample1-obj}
			}
		}
		{\color{blue}\tcc{estimate the near-optimal hypothesis for weighted data distributions.}}
		compute $h^t\leftarrow \arg\min_{h\in \mathcal{H}} \widehat{L}^t(h,u^t)$, where 
		\begin{align}
			\widehat{L}^t(h,u^t):=\sum_{i=1}^k\sum_{\ell\in \mathcal{L}} \frac{u^t_{i,\ell}}{n_i^t}\cdot \sum_{j=1}^{n_i^t}\ell\big(h,(x_{i,j},y_{i,j})\big)\label{eq:e1obj}
		\end{align}
		with $(x_{i,j},y_{i,j})$ being the $j$-th datapoint from $\mathcal{D}_i$ in $\weighteddata$. \label{line:a2obj}
  \\
		{\color{blue}\tcc{estimate the loss vector and execute weighted updates.}}
		$\overline{w}^t_i \leftarrow \max_{1\leq \tau \leq t}\sum_{\ell\in \mathcal{L}}u^{\tau}_{i,\ell}$ for all $i \in [k]$. \\
		\For{$i=1,\ldots, k$}{ 
		draw $ \lceil k\overline{w}^t_i \rceil$ independent samples --- denoted by $\big\{(x^t_{i,j},y^t_{i,j})\big\}_{j=1}^{\lceil k\overline{w}_i^t\rceil}$ --- from $\mathcal{D}_i$, and set \label{line:updateSobj} 
		$$
			\widehat{r}^t_{i,\ell} =\frac{1}{\lceil k\overline{w}^t_i\rceil}\sum_{j=1}^{\lceil k\overline{w}^t_i \rceil} \ell\big(h^t,(x_{i,j}^t,y_{i,j}^t) \big) \qquad \text{for each }\ell\in \mathcal{L}; 
		$$
		
		update the weight as $U^{t+1}_{i,\ell} = U^t_{i,\ell}  \exp(\eta \widehat{r}_{i,\ell}^t)$. \label{line:updateS12-obj} {\color{blue}\tcp{Hedge updates.}}
		}
	}
	\textbf{output:}  a randomized hypothesis $h^{\mathsf{final}}$ as a uniform distribution over $ \{h^t\}_{t=1}^T$.
\end{algorithm}

\section{Discussion}

In this paper, we have settled the problem of achieving optimal sample complexity in multi-distribution learning, assuming availability of  adaptive (or on-demand) sampling. 
We have put forward a novel oracle-efficient algorithm that provably attains a sample complexity of $\widetilde{O}\left(\frac{d+k}{\varepsilon^2}\right)$ for VC classes, 
which matches the best-known lower bound up to some logarithmic factor.  
From the technical perspective, the key novelty of our analysis lies in carefully bounding the trajectory of the Hedge algorithm on a convex-concave optimization problem. 
We have further unveiled the necessity of improper learning,  revealing that a considerably larger sample size is necessary if the learning algorithm is constrained to return deterministic hypotheses from $\mathcal{H}$. 
Notably, our work manages to solve three open problems presented in COLT 2023 (namely, \citet[Problems 1, 3 and 4]{awasthi2023sample}).

Our work not only addresses existing challenges but also opens up several directions for future exploration. 
To begin with, while our sample complexity results are optimal up to logarithmic factors,  further studies are needed in order to sharpen the logarithmic dependency.  
Additionally, the current paper assumes a flexible sampling protocol that allows the learner to take samples arbitrarily from any of the $k$ distributions; 
how will the sample complexity be impacted under additional constraints imposed on the sampling process? 
Furthermore,  can we extend our current analysis (which bounds the dynamics of the Hedge algorithm) to control the trajectory of more general first-order/second-order algorithms, 
in the context of robust online learning?  
Another venue  for exploration  is the extension of our multi-distribution learning framework to tackle other related tasks like multi-calibration \citep{hebert2018multicalibration,haghtalab2023unifying}.  
We believe that our algorithmic and analysis framework can shed light on making progress in all of these directions.

\section*{Acknowledgements}
We thank Eric Zhao for answering numerous questions about the open problems.  
We would also like to thank Chicheng Zhang for pointing out an error in a preliminary version of this paper. 
YC is supported in part by the Alfred P.~Sloan Research Fellowship, 
the NSF grants CCF-1907661, DMS-2014279, IIS-2218713 and IIS-2218773. JDL acknowledges support of the ARO under MURI Award W911NF-11-1-0304,  the Sloan Research Fellowship, NSF CCF 2002272, NSF IIS 2107304,  NSF CIF 2212262, ONR Young Investigator Award, and NSF CAREER Award 2144994.

\appendix

\section{Auxiliary lemmas}
\label{sec:auxiliary-lemmas}

In this section, we introduce several technical lemmas that are used multiple times in our analysis. 

We begin by introducing three handy concentrations inequalities. The first result is the well-renowned Freedman inequality 
\citep{freedman1975tail}, which assists in deriving variance-aware concentration inequalities for martingales.  
%
\begin{lemma}[Freedman's inequality~\citep{freedman1975tail}]\label{freedman}
    	Let $(M_{n})_{n\geq 0}$ be a  martingale obeying $M_{0}=0$. Define $V_{n} \coloneqq \sum_{k=1}^{n}\mathbb{E}[(M_{k}-M_{k-1})^{2} \,|\, \mathcal{F}_{k-1}]$ for each $n\geq 0$,
	where $\mathcal{F}_{k}$ denotes the $\sigma$-algebra generated by $(M_{1},M_{2},\dots,M_{k})$. 
	Suppose that $M_k - M_{k-1}\leq 1$ for all $k\geq 1$. 
	Then for any $x>0$ and $y>0$, one has
	\begin{equation}\label{Bernstein2}
		\mathbb{P}\big(M_{n}\geq nx,V_{n}\leq ny \big)
		\leq \exp\left(-\frac{nx^{2}}{2(y+\frac{1}{3}x)} \right).
	\end{equation}
\end{lemma}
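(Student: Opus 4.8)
The plan is to run the classical exponential-supermartingale (Chernoff) argument. Fix $\lambda>0$ and set
\[
Z_n \coloneqq \exp\!\bigl(\lambda M_n - (e^{\lambda}-1-\lambda)V_n\bigr), \qquad n\ge 0,
\]
so that $Z_0 = 1$ (using $M_0=0$, $V_0=0$). Writing $\xi_k \coloneqq M_k - M_{k-1}$, the crux is the pointwise bound $e^{\lambda u}\le 1 + \lambda u + (e^{\lambda}-1-\lambda)u^2$ for every $u\le 1$, which follows because $u\mapsto (e^{\lambda u}-1-\lambda u)/u^2$ (set to $\lambda^2/2$ at $u=0$) is nondecreasing, so its value at any $u\le 1$ is at most its value at $u=1$. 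Applying this at $u=\xi_k\le 1$, taking $\mathbb{E}[\cdot\mid\mathcal{F}_{k-1}]$, and using $\mathbb{E}[\xi_k\mid\mathcal{F}_{k-1}]=0$ together with $1+t\le e^t$, we obtain $\mathbb{E}\bigl[e^{\lambda\xi_k}\mid\mathcal{F}_{k-1}\bigr]\le \exp\!\bigl((e^{\lambda}-1-\lambda)\,\mathbb{E}[\xi_k^2\mid\mathcal{F}_{k-1}]\bigr)$. Since $V_k - V_{k-1} = \mathbb{E}[\xi_k^2\mid\mathcal{F}_{k-1}]$ is $\mathcal{F}_{k-1}$-measurable, this yields $\mathbb{E}[Z_k\mid\mathcal{F}_{k-1}]\le Z_{k-1}$; taking expectations and iterating over $k$ then gives $\mathbb{E}[Z_n]\le 1$.

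Next I would convert this into the stated tail bound. On the event $\{M_n\ge nx,\ V_n\le ny\}$, and using $e^{\lambda}-1-\lambda\ge 0$, one has $\lambda M_n - (e^{\lambda}-1-\lambda)V_n \ge n\bigl(\lambda x - (e^{\lambda}-1-\lambda)y\bigr)$, so Markov's inequality applied to $Z_n$ gives
\[
\mathbb{P}\bigl(M_n\ge nx,\ V_n\le ny\bigr)\le \exp\!\Bigl(-n\bigl(\lambda x - (e^{\lambda}-1-\lambda)y\bigr)\Bigr).
\]
It then remains to pick a convenient near-optimal $\lambda$. Taking $\lambda = \tfrac{x}{y+x/3}\in(0,3)$ and invoking the elementary inequality $e^{\lambda}-1-\lambda\le \tfrac{\lambda^2/2}{1-\lambda/3}$ (valid for $0\le\lambda<3$), a short computation using the identity $1-\lambda/3 = \tfrac{y}{y+x/3}$ for this choice of $\lambda$ shows $\lambda x - (e^{\lambda}-1-\lambda)y\ge \tfrac{x^2}{y+x/3}-\tfrac{\lambda^2(y+x/3)}{2} = \tfrac{x^2}{2(y+x/3)}$, which substituted into the display above yields exactly \eqref{Bernstein2}.

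I expect the only delicate point to be the moment-generating-function step: establishing monotonicity of $u\mapsto(e^{\lambda u}-1-\lambda u)/u^2$, and noticing that the argument is genuinely one-sided — it uses $\xi_k\le 1$ rather than $|\xi_k|\le 1$, which is precisely the hypothesis — and that it is the $\mathcal{F}_{k-1}$-measurability of $V_k-V_{k-1}$ that allows the $V_n$ correction in $Z_n$ to absorb the conditional second moment. The auxiliary scalar inequality $e^{\lambda}-1-\lambda\le\tfrac{\lambda^2/2}{1-\lambda/3}$ and the final algebra are routine. I would also remark that $\mathbb{E}[Z_n]\le 1$ is obtained by induction at the fixed deterministic index $n$, so no optional-stopping theorem is required.
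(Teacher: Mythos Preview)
Your proof is correct and follows the standard exponential-supermartingale approach due to Freedman. Note that the paper does not actually prove this lemma: it is stated in the auxiliary-lemmas appendix with a citation to \citet{freedman1975tail} and used as a black box, so there is no ``paper's own proof'' to compare against. Your argument would serve perfectly well as a self-contained proof, and the care you take with the one-sided hypothesis $\xi_k\le 1$ and the $\mathcal{F}_{k-1}$-measurability of $V_k-V_{k-1}$ is exactly right.
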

The second concentration result bounds the difference between the sum of a sequence of random variables and the sum of their respective conditional means (w.r.t.~the associated $\sigma$-algebra).

%

\begin{lemma}[Lemma 10 in \cite{zhang2022horizon}]\label{lemma:con1}
Let $X_1,X_2,\ldots$ be a sequence of random variables taking value in the interval $[0,l]$. 
For any $k\geq 1$, let $\mathcal{F}_k$ be the $\sigma$-algebra generated by $(X_1,X_2,\ldots,X_k)$, and define 
	$Y_k := \mathbb{E}[X_k \mid \mathcal{F}_{k-1}]$. Then for any $\delta>0$, we have 
\begin{align}
	& \mathbb{P}\left\{ \exists n\in \mathbb{N}, \sum_{k=1}^n X_k \geq  3\sum_{k=1}^n Y_k+ l\log\frac{1}{\delta}\right\}
	\leq \delta,\nonumber
	\\  & \mathbb{P}\left\{  \exists n\in \mathbb{N},  \sum_{k=1}^n Y_k \geq 3\sum_{k=1}^n X_k + l\log\frac{1}{\delta}  \right\}    \leq \delta .\nonumber 
\end{align}
\end{lemma}

The third concentration result is the Mcdiarmid inequality, a celebrated inequality widely used to control the flucutaion of multivariate functions when the input variables are independently generated. 
\begin{lemma}[Mcdiarmid’s inequality]\label{lemma:mcinequality}
Let $X_1,X_2,\ldots, X_n$ be a sequence of independent random variables, with $X_i$ supported on $\mathcal{X}_i$. Let $f:\mathcal{X}_1\times \mathcal{X}_2\times\dots\times \mathcal{X}_n\to \mathbb{R}$ be a function such that: for any $i\in [n]$ and any $\{x_1,\ldots,x_n\}\in \mathcal{X}_1\times  \dots\times \mathcal{X}_n$, 
\begin{equation*}
	\sup_{x_i' \in \mathcal{X}_i}\big|f(x_1,\cdots,x_i,\cdots,x_n)-f(x_1,\cdots,x'_{i},\cdots,x_n) \big|\leq c
\end{equation*}
holds for some quantity $c>0$. It then holds that 
\begin{align}
	\mathbb{P}\Big\{ \big|f(X_1,X_2,\cdots, X_n) - \mathbb{E}\big[f(X_1,X_2,\cdots,X_n)\big] \big|\geq \varepsilon \Big\} 
	\leq 2\exp\left( -\frac{2\varepsilon^2}{nc^2}\right).\nonumber
\end{align}
\end{lemma}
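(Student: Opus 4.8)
The plan is to establish the one-sided bound $\mathbb{P}\{f(X_1,\dots,X_n)-\mathbb{E}[f]\geq\varepsilon\}\leq\exp\!\big(-2\varepsilon^2/(nc^2)\big)$ via the classical Doob-martingale-plus-Chernoff argument, and then to apply the identical reasoning with $f$ replaced by $-f$ and take a union bound, which yields the stated two-sided inequality with the extra factor of $2$.

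First I would set up the Doob martingale associated with $f$. Let $\mathcal{F}_i:=\sigma(X_1,\dots,X_i)$, with $\mathcal{F}_0$ the trivial $\sigma$-algebra, and define, for each $i\in[n]$,
\[
D_i \coloneqq \mathbb{E}\big[f(X_1,\dots,X_n)\,\big|\,\mathcal{F}_i\big] - \mathbb{E}\big[f(X_1,\dots,X_n)\,\big|\,\mathcal{F}_{i-1}\big],
\]
so that $\sum_{i=1}^n D_i = f(X_1,\dots,X_n)-\mathbb{E}[f]$ and $\mathbb{E}[D_i\mid\mathcal{F}_{i-1}]=0$. The key step — and the one I expect to demand the most care — is to show that, conditionally on $\mathcal{F}_{i-1}$, the increment $D_i$ takes values in an interval of width at most $c$. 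This is precisely where independence enters: since $X_{i+1},\dots,X_n$ are independent of $(X_1,\dots,X_i)$, one has $\mathbb{E}[f\mid\mathcal{F}_i]=g_i(X_i)$ where $g_i(x):=\int f(X_1,\dots,X_{i-1},x,x_{i+1},\dots,x_n)\,\mathrm{d}\mu_{i+1}(x_{i+1})\cdots\mathrm{d}\mu_n(x_n)$ (with $\mu_j$ the law of $X_j$) is a function that is $\mathcal{F}_{i-1}$-measurable in its parameters, and $\mathbb{E}[f\mid\mathcal{F}_{i-1}]=\int g_i(x)\,\mathrm{d}\mu_i(x)$. The bounded-differences hypothesis gives $|g_i(x)-g_i(x')|\leq c$ pointwise, so $D_i=g_i(X_i)-\int g_i\,\mathrm{d}\mu_i$ lies in $[L_i,U_i]$ for $\mathcal{F}_{i-1}$-measurable $L_i\leq 0\leq U_i$ with $U_i-L_i=\sup_x g_i(x)-\inf_x g_i(x)\leq c$.

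Next I would invoke Hoeffding's lemma in conditional form: for any $\lambda>0$ and any $Z$ with $\mathbb{E}[Z\mid\mathcal{F}_{i-1}]=0$ and $L_i\leq Z\leq U_i$, convexity of $z\mapsto e^{\lambda z}$ on $[L_i,U_i]$ followed by a one-variable optimization gives $\mathbb{E}[e^{\lambda Z}\mid\mathcal{F}_{i-1}]\leq\exp\!\big(\lambda^2(U_i-L_i)^2/8\big)\leq\exp\!\big(\lambda^2 c^2/8\big)$; I would spell out this short derivation. Applying it with $Z=D_i$ and telescoping the conditional moment generating functions from $i=n$ down to $i=1$ — using $\mathbb{E}[\exp(\lambda\sum_{i=1}^{m}D_i)]=\mathbb{E}\big[\exp(\lambda\sum_{i=1}^{m-1}D_i)\,\mathbb{E}[e^{\lambda D_m}\mid\mathcal{F}_{m-1}]\big]$ — yields
\[
\mathbb{E}\Big[\exp\!\big(\lambda(f(X_1,\dots,X_n)-\mathbb{E}[f])\big)\Big]\leq\exp\!\big(n\lambda^2 c^2/8\big).
\]

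Finally I would close with a Chernoff bound: for any $\lambda>0$,
\[
\mathbb{P}\{f(X_1,\dots,X_n)-\mathbb{E}[f]\geq\varepsilon\}\leq e^{-\lambda\varepsilon}\,\mathbb{E}\Big[\exp\!\big(\lambda(f-\mathbb{E}[f])\big)\Big]\leq\exp\!\Big(\tfrac{n\lambda^2 c^2}{8}-\lambda\varepsilon\Big),
\]
and the choice $\lambda=4\varepsilon/(nc^2)$ produces the bound $\exp(-2\varepsilon^2/(nc^2))$. Since the hypothesis is symmetric under replacing $f$ by $-f$, the same argument controls the lower tail $\mathbb{P}\{f-\mathbb{E}[f]\leq-\varepsilon\}$ by the identical quantity, and a union bound over the two one-sided events gives the claimed two-sided inequality.
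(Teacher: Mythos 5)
Your proposal is correct and complete: the Doob-martingale decomposition, the conditional Hoeffding lemma giving $\mathbb{E}[e^{\lambda D_i}\mid\mathcal{F}_{i-1}]\leq e^{\lambda^2c^2/8}$, the telescoped MGF bound, and the Chernoff optimization at $\lambda=4\varepsilon/(nc^2)$ together constitute the standard proof of McDiarmid's inequality, and the factor of $2$ from the union bound over the two tails is handled properly. The paper states this lemma as a classical auxiliary result without proof, so there is no alternative argument to compare against; yours is the canonical derivation.
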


Additionally, the following lemma presents a sort of the data processing inequality w.r.t.~the Kullback-Leibler (KL) divergence, which is a classical result from information theory. 
\begin{lemma}\label{lemma:klbound} Let $\mathcal{X}$ and $\mathcal{Y}$ be two sets, and consider any function $f:\mathcal{X}\to \mathcal{Y}$. 
	For any two random variables $X_1$ and $X_2$ supported on $\mathcal{X}$, it holds that
\begin{align}
	\mathsf{KL}\big( \mu(X_1) \,\|\, \mu(X_2) \big)\geq \mathsf{KL}\big( \mu\big(f(X_1)\big) \,\|\, \mu\big(f(X_2)\big) \big),
\end{align}
	where we use $\mu(Z)$ to denote the distribution of a random variable $Z$. 
\end{lemma}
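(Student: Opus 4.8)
\textbf{Proof proposal for Lemma~\ref{lemma:klbound} (data processing inequality for KL divergence).}

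The plan is to reduce the claim to the standard data processing inequality for KL divergence, which states that applying a (possibly random) channel cannot increase the KL divergence between two distributions. Here the ``channel'' is the deterministic map $f:\mathcal{X}\to\mathcal{Y}$. First I would set up notation: let $P_1=\mu(X_1)$ and $P_2=\mu(X_2)$ be the two distributions on $\mathcal{X}$, and let $Q_1=\mu(f(X_1))$ and $Q_2=\mu(f(X_2))$ be the corresponding pushforward distributions on $\mathcal{Y}$, i.e., $Q_j(B)=P_j(f^{-1}(B))$ for measurable $B\subseteq\mathcal{Y}$. The goal is $\mathsf{KL}(P_1\,\|\,P_2)\ge \mathsf{KL}(Q_1\,\|\,Q_2)$.

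The cleanest route is via the variational (Donsker--Varadhan) characterization, or equivalently via the log-sum inequality / convexity of $t\mapsto t\log t$. Concretely, in the discrete case (which suffices for all uses in this paper, since the weight vectors live on a finite simplex and $f$ will be a coordinate-sum or thresholding map), for each $y\in\mathcal{Y}$ write $Q_j(y)=\sum_{x\in f^{-1}(y)}P_j(x)$. Then
\begin{align*}
\mathsf{KL}(Q_1\,\|\,Q_2)
&=\sum_{y}Q_1(y)\log\frac{Q_1(y)}{Q_2(y)}
=\sum_{y}\Bigl(\sum_{x\in f^{-1}(y)}P_1(x)\Bigr)\log\frac{\sum_{x\in f^{-1}(y)}P_1(x)}{\sum_{x\in f^{-1}(y)}P_2(x)}\\
&\le \sum_{y}\sum_{x\in f^{-1}(y)}P_1(x)\log\frac{P_1(x)}{P_2(x)}
=\mathsf{KL}(P_1\,\|\,P_2),
\end{align*}
where the inequality is the log-sum inequality applied within each fiber $f^{-1}(y)$. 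Summing over the (disjoint) fibers reassembles the full sum over $\mathcal{X}$, giving the result. For the general (non-discrete) case one can instead invoke the Donsker--Varadhan variational formula: $\mathsf{KL}(Q_1\,\|\,Q_2)=\sup_{g}\{\mathbb{E}_{Q_1}[g]-\log\mathbb{E}_{Q_2}[e^{g}]\}$ over bounded measurable $g:\mathcal{Y}\to\mathbb{R}$, and note that every such $g$ pulls back to $g\circ f:\mathcal{X}\to\mathbb{R}$ with $\mathbb{E}_{Q_j}[g]=\mathbb{E}_{P_j}[g\circ f]$, so the supremum defining $\mathsf{KL}(Q_1\,\|\,Q_2)$ is over a subset of the test functions available for $\mathsf{KL}(P_1\,\|\,P_2)$, hence is no larger.

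I do not anticipate a serious obstacle here, as this is a textbook fact; the only thing to be careful about is the measure-theoretic bookkeeping (absolute continuity: if $P_1\ll P_2$ then $Q_1\ll Q_2$, and the inequality is vacuous otherwise since the right side is $+\infty$) and stating which version (discrete log-sum vs.\ variational) is being used. Given the applications in the paper all involve finitely supported distributions on $\Delta(k)$ with $f$ a linear coordinate-aggregation map, the discrete argument above is entirely sufficient, and I would present that one for concreteness while remarking that the general case follows identically from Donsker--Varadhan.
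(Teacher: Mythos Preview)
Your proposal is correct and complete. The paper itself does not give a proof of this lemma; it simply introduces it as ``a classical result from information theory'' and states it without argument, so your log-sum / Donsker--Varadhan derivation is exactly the kind of standard justification the paper is implicitly deferring to.
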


Lastly, let us make note of an elementary bound regarding the KL divergence between two Bernoulli distributions.
\begin{lemma}\label{lemma:klcmp}
Consider any $q>0$ and $x\in [0,\log(2)]$. Also, consider any $y,y'\in (0,1)$ obeying $y\geq q$ and $y'\geq \exp(x)y$. It then holds that
\begin{align}
\mathsf{KL}\left( \mathsf{Ber}(y)\,\|\, \mathsf{Ber}(y')  \right) \geq \frac{qx^2}{4},\nonumber
\end{align}
where $\mathsf{Ber}(z)$ denotes the Bernoulli distribution with mean $z$.
\end{lemma}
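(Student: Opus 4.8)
## Proof plan for Lemma~\ref{lemma:klcmp}

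\textbf{Setup and reduction.} The goal is to lower bound $\mathsf{KL}(\mathsf{Ber}(y)\,\|\,\mathsf{Ber}(y'))$ when $y\geq q$, $y'\geq e^x y$, and $x\in[0,\log 2]$. First I would observe that the KL divergence between two Bernoullis $\mathsf{KL}(\mathsf{Ber}(a)\,\|\,\mathsf{Ber}(b)) = a\log\frac{a}{b} + (1-a)\log\frac{1-a}{1-b}$ is monotone in $b$ in the following sense: for fixed $a$, it is decreasing in $b$ on $(0,a)$ and increasing in $b$ on $(a,1)$. Since $y' \geq e^x y > y$ (as $x\geq 0$), and the quantity we want is increasing as $y'$ moves away from $y$ to the right, it suffices to prove the bound at the extreme $y' = e^x y$ (assuming $e^x y < 1$; the case $e^x y \geq 1$ is even easier since then $y'$ can be taken close to $1$, making the divergence large — I'd handle this by noting $y' \le 1$ always and checking the bound directly, or by a limiting argument). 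So the plan reduces to showing $\mathsf{KL}(\mathsf{Ber}(y)\,\|\,\mathsf{Ber}(e^x y)) \geq \frac{q x^2}{4}$ for $y \geq q$.

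\textbf{Main estimate.} Write $f(y) := \mathsf{KL}(\mathsf{Ber}(y)\,\|\,\mathsf{Ber}(e^x y)) = y\log\frac{y}{e^x y} + (1-y)\log\frac{1-y}{1-e^x y} = -xy + (1-y)\log\frac{1-y}{1-e^x y}$. Since $e^x y > y$ we have $1 - e^x y < 1 - y$, so $\log\frac{1-y}{1-e^x y} > 0$. Using the elementary inequality $\log\frac{1-y}{1-e^x y} = -\log\bigl(1 - \tfrac{(e^x-1)y}{1-y}\bigr) \geq \frac{(e^x-1)y}{1-y}$, one gets
\begin{align*}
f(y) \geq -xy + (1-y)\cdot\frac{(e^x-1)y}{1-y} = y\bigl(e^x - 1 - x\bigr).
\end{align*}
Now it remains to bound $e^x - 1 - x \geq \frac{x^2}{4}$ for $x\in[0,\log 2]$. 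This follows from the Taylor expansion $e^x - 1 - x = \sum_{n\geq 2} x^n/n! \geq x^2/2$ actually — so in fact $e^x-1-x\geq x^2/2 \geq x^2/4$ with room to spare — giving $f(y) \geq \frac{x^2 y}{2} \geq \frac{q x^2}{2} \geq \frac{q x^2}{4}$ since $y \geq q$.

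\textbf{Remark on the obstacle.} The only genuine subtlety is the monotonicity-in-$y'$ reduction and the boundary case $e^x y \geq 1$: one must be careful that replacing $y'$ by $e^x y$ is legitimate. The cleanest route is to verify directly that $b\mapsto \mathsf{KL}(\mathsf{Ber}(y)\,\|\,\mathsf{Ber}(b))$ has derivative $\frac{b-y}{b(1-b)}$, which is $\geq 0$ for $b \in [y,1)$, hence the divergence is nondecreasing in $b$ on $[y,1)$; combined with $y' \in [e^x y, 1]$ this reduces everything to $b = e^x y$ (or, if $e^x y \geq 1$, to $b$ arbitrarily close to $1$, where the divergence tends to $+\infty \geq qx^2/4$). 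Everything else is the routine chain of elementary inequalities above, and no concentration or information-theoretic machinery beyond the definition of Bernoulli KL is needed.
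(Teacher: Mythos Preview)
Your proof is correct. Both you and the paper start from the same derivative computation $\partial_b\,\mathsf{KL}(\mathsf{Ber}(y)\,\|\,\mathsf{Ber}(b))=\tfrac{b-y}{b(1-b)}$, but you then diverge in execution. The paper does \emph{not} reduce to the extremal $y'=e^xy$; instead it writes $\mathsf{KL}(\mathsf{Ber}(y)\,\|\,\mathsf{Ber}(y'))=\int_y^{y'}\tfrac{z-y}{z(1-z)}\,dz$, lower-bounds the integrand by $\tfrac{z-y}{y'}$ to get $\tfrac{(y'-y)^2}{2y'}$, and then uses $y'\ge e^xy$ twice (once as $1-y/y'\ge 1-e^{-x}$, once as $y'-y\ge y(e^x-1)$) together with $e^{-x}\ge 1/2$ on $[0,\log 2]$ to land at $\tfrac{y(e^x-1)^2}{4}\ge \tfrac{qx^2}{4}$. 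Your route---reduce by monotonicity to $y'=e^xy$, expand, and apply $-\log(1-u)\ge u$ followed by $e^x-1-x\ge x^2/2$---is shorter, avoids the integral bookkeeping, and in fact yields the sharper constant $qx^2/2$. One small simplification: under the stated hypotheses $y'\in(0,1)$ and $y'\ge e^xy$, the case $e^xy\ge 1$ is vacuous, so your ``boundary'' discussion can be dropped entirely.
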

\begin{proof}
To begin with, the function defined below satisfies
\begin{align*}
f(a,b) \coloneqq  \mathsf{KL}\big( \mathsf{Ber}(a)\,\|\, \mathsf{Ber}(b)  \big)  & = a\log\left(\frac{a}{b}\right)+(1-a)\log\left(\frac{1-a}{1-b}\right).
\end{align*}
For any $ 0<a\leq b \leq 1$, it is readily seen that 
$$
	\frac{\partial f(a,b)}{\partial b} = -\frac{a}{b}+\frac{1-a}{1-b} =\frac{b-a}{b(1-b)} \geq 0.
$$
%
%
%
It follows from our assumptions $y\geq q$ and $y'\geq \exp(x)y$ that
\begin{align}
	\mathsf{KL}\left(\mathsf{Ber}(y)\,\|\,\mathsf{Ber}(y')\right) & =f(y,y')=f(y,y)+\int_{y}^{y'}\frac{\partial f(y,z)}{\partial z}\mathrm{d}z=\int_{y}^{y'}\frac{z-y}{z(1-z)} \mathrm{d}z\nonumber\\
 & \geq\frac{1}{y'}\int_{y}^{y'}(z-y)\mathrm{d}z\geq\frac{(y'-y)^{2}}{2y'}\nonumber\\
 & \geq\frac{(y'-y)(1-\exp(-x))}{2}\nonumber\\
 & \geq\frac{y(\exp(x)-1)^{2}}{4}\geq\frac{qx^{2}}{4}, \nonumber
\end{align}
	where the penultimate inequality uses $x\in [0, \log(2)]$, and the last inequality holds since $y\geq q$.  
\end{proof}

Finally, let us present a basic property related to Rademacher random variables, which will play a useful role in understanding the Rademacher complexity. 
\begin{lemma}\label{lemma:add12} Let $\mathcal{L}$ be a set of vectors in $\mathbb{R}^n$. Let $w^1,w^2\in \mathbb{R}^n$ be two vectors obeying $ |w^1_i|\leq |w_i^2|$ for all $i\in [n]$. Then it holds that
\begin{align}
	\mathop{\mathbb{E}}_{ \{\sigma_i\}_{i=1}^n  }\left[ \max_{f\in \mathcal{L}} \sum_{i=1}^n \sigma_i w^1_i f_i\right]
	\leq \mathop{\mathbb{E}}_{ \{\sigma_i\}_{i=1}^n  }\left[ \max_{f\in \mathcal{L}} \sum_{i=1}^n \sigma_i w^2_i f_i\right],\label{eq:wpo}
\end{align}
where $\{\sigma_i\}$ is a collection of independent Rademacher random variables obeying $\mathbb{P}(\sigma_i=1)=\mathbb{P}(\sigma_i=-1)=1/2$. 
\end{lemma}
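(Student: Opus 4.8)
The plan is to reduce \eqref{eq:wpo} to a coordinate-wise comparison and exploit the symmetry of the Rademacher distribution. First I would observe that it suffices to treat the case where $w^1$ and $w^2$ differ in a single coordinate, say coordinate $1$, with $|w^1_1|\le |w^2_1|$ and $w^1_i=w^2_i$ for all $i\ge 2$; the general statement then follows by changing one coordinate at a time (a telescoping argument over at most $n$ steps). Moreover, by flipping the sign of $\sigma_1$ if necessary — which does not change its distribution — we may assume $w^1_1$ and $w^2_1$ have the same sign, and after possibly negating both we may assume $0\le w^1_1\le w^2_1$. Write $w^1_1 = \lambda w^2_1$ with $\lambda\in[0,1]$.

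The key step is then a convexity-in-$\lambda$ observation. Condition on $\sigma_2,\dots,\sigma_n$, and let $g(a) \coloneqq \mathbb{E}_{\sigma_1}\big[\max_{f\in\mathcal{L}} (\sigma_1 a f_1 + \sum_{i\ge 2}\sigma_i w^2_i f_i)\big] = \tfrac12\max_{f\in\mathcal{L}}\big(a f_1 + R(f)\big) + \tfrac12\max_{f\in\mathcal{L}}\big(-a f_1 + R(f)\big)$, where $R(f)\coloneqq \sum_{i\ge 2}\sigma_i w^2_i f_i$. As a function of $a$, each of the two maxima is convex (a pointwise max of affine functions of $a$), hence $g$ is convex in $a$; it is also even in $a$ since swapping the two max terms shows $g(a)=g(-a)$. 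A convex even function on $\mathbb{R}$ is nondecreasing in $|a|$, so $g(\lambda w^2_1)\le g(w^2_1)$ whenever $0\le \lambda\le 1$. Taking expectation over $\sigma_2,\dots,\sigma_n$ and invoking independence gives $\mathbb{E}_{\sigma}\big[\max_{f\in\mathcal{L}}\sum_i \sigma_i w^1_i f_i\big]\le \mathbb{E}_{\sigma}\big[\max_{f\in\mathcal{L}}\sum_i \sigma_i w^2_i f_i\big]$ for the single-coordinate case, and then the telescoping over coordinates finishes the proof.

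I do not anticipate a genuine obstacle here — the argument is elementary once the right reduction is in place. The one point that requires a little care is the reduction to a single coordinate: one must make sure that interpolating coordinate-by-coordinate stays within the hypotheses, i.e. at each intermediate step the vector $(\,w^2_1,\dots,w^2_{j-1},w^1_j,\dots,w^1_n)$ still has absolute values dominated coordinatewise by both endpoints in the appropriate direction, which is immediate from $|w^1_i|\le |w^2_i|$. A secondary bookkeeping point is justifying the sign normalizations via the symmetry $\sigma_1 \overset{d}{=} -\sigma_1$ and the symmetry of $\mathcal{L}$-free expression under global negation of a coordinate; these are routine. If one prefers to avoid the sign-flipping entirely, an alternative is to note directly that $\mathbb{E}_{\sigma_1}[\max_{f}(\sigma_1 a f_1 + R(f))]$ is even and convex in $a$ without any normalization, which is in fact the cleanest route and is what I would write up.
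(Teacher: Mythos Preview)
Your proposal is correct and follows essentially the same route as the paper: reduce to a single coordinate, condition on the remaining Rademacher variables, and show that the resulting one-variable function $g(a)=\tfrac12\max_f(af_1+R(f))+\tfrac12\max_f(-af_1+R(f))$ is even and nondecreasing in $|a|$. The only cosmetic difference is that the paper establishes this monotonicity by an explicit pairing trick---combining the two maxima into $\max_{f,\tilde f}$, bounding $w_n(f_n-\tilde f_n)\le |\tilde w_n(f_n-\tilde f_n)|$, and using the $f\leftrightarrow\tilde f$ symmetry to drop the absolute value---whereas you invoke the cleaner abstract fact that a convex even function is nondecreasing on $[0,\infty)$.
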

\begin{proof}
Clearly, it suffices to prove \eqref{eq:wpo} for the special case where $w_i^1 =w^2_i$ for $1\leq i\leq n-1$, and $|w^1_n| \leq |w^2_n|$. 
Fixing $\sigma_{i}$ for $1\leq i \leq n-1$, we can deduce that 
\begin{align}
\mathop{\mathbb{E}}_{\sigma_{n}}\left[\max_{f\in\mathcal{L}}\sum_{i=1}^{n}\sigma_{i}w_{i}f_{i}\right] & =\frac{1}{2}\max_{f\in\mathcal{L}}\left(\sum_{i=1}^{n-1}\sigma_{i}w_{i}f_{i}+w_{n}f_{n}\right)+\frac{1}{2}\max_{f\in\mathcal{L}}\left(\sum_{i=1}^{n-1}\sigma_{i}w_{i}f_{i}-w_{n}f_{n}\right)\nonumber\\
 & =\frac{1}{2}\max_{f,\widetilde{f}\in\mathcal{L}}\left(\sum_{i=1}^{n-1}\sigma_{i}w_{i}(f_{i}+\widetilde{f}_{i})+w_{n}(f_{n}-\widetilde{f}_{n})\right)\nonumber\\
 & =\frac{1}{2}\max_{f,\widetilde{f}\in\mathcal{L}}\left(\sum_{i=1}^{n-1}\sigma_{i}\widetilde{w}_{i}(f_{i}+\widetilde{f}_{i})+w_{n}(f_{n}-\widetilde{f}_{n})\right)\nonumber\\
 & \leq\frac{1}{2}\max_{f,\widetilde{f}\in\mathcal{L}}\left(\sum_{i=1}^{n-1}\sigma_{i}\widetilde{w}_{i}(f_{i}+\widetilde{f}_{i})+\big|\widetilde{w}_{n}(f_{n}-\widetilde{f}_{n})\big|\right)\nonumber\\
 & =\frac{1}{2}\max_{f,\widetilde{f}\in\mathcal{L}}\left(\sum_{i=1}^{n-1}\sigma_{i}\widetilde{w}_{i}(f_{i}+\widetilde{f}_{i})+\widetilde{w}_{n}(f_{n}-\widetilde{f}_{n})\right)\nonumber\\
 & =\frac{1}{2}\max_{f\in\mathcal{L}}\left(\sum_{i=1}^{n-1}\sigma_{i}\widetilde{w}_{i}f_{i}+\widetilde{w}_{n}f_{n}\right)+\frac{1}{2}\max_{f\in\mathcal{L}}\left(\sum_{i=1}^{n-1}\sigma_{i}\widetilde{w}_{i}f_{i}-\widetilde{w}_{n}f_{n}\right)\nonumber\\
 & =\mathop{\mathbb{E}}_{\sigma_{n}}\left[\max_{f\in\mathcal{L}}\sum_{i=1}^{n}\sigma_{i}\widetilde{w}_{i}f_{i}\right].\nonumber
\end{align}
The proof is thus completed by taking expectation over $\{\sigma_i\}_{i=1}^{n-1}$.
\end{proof}

\section{Proofs of auxiliary lemmas for VC classes}
\label{sec:proof-lemmas-VC}

\subsection{Proof of Lemma~\ref{lemma:opth}}
	\label{sec:proof-lemma:opth}

For ease of presentation, suppose there exists a dataset $\widetilde{\mathcal{S}}$ containing $\Tone$ independent samples drawn from each distribution $\mathcal{D}_i$ ($1\leq i\leq k$), 
	so that in total it contains $k\Tone$ samples.   
We find it helpful to introduce the following notation. 
\begin{itemize}

	\item For each $i\in [k]$ and $ j\in [n_i]$, denote by $(x_{i,j},y_{i,j})$  the $j$-th sample in $\widetilde{\mathcal{S}}$ that is drawn from $\mathcal{D}_i$.

	\item For each set of integers $n=\{n_i\}_{i=1}^k\in \mathbb{N}^k$, 
we define
 $\widetilde{\mathcal{S}}(n)$ to be  
		the dataset containing $\big\{ (x_{i,j},y_{i,j}) \big\}_{1\leq j\leq n_i}$ for all $i \in [k]$; namely, 
		it comprises, for each $i \in [k]$, the first $n_i$ samples in $\widetilde{\mathcal{S}}$ that are drawn from $\mathcal{D}_i$.

	\item We shall also let $\widetilde{\mathcal{S}}^+(n) =\big\{ \big\{\big(x_{i,j}^+,y_{i,j}^+\big) \big\}_{j=1}^{n_i} \big\}_{i=1}^k$ 
		be an {\em independent copy} of $\widetilde{\mathcal{S}}(n)$, 
		where for each $i \in [k]$,  $\big\{ \big(x_{i,j}^+,y_{i,j}^+\big) \big\}$ are independent samples drawn from $\mathcal{D}_i$.

\end{itemize}

\noindent 
Equipped with the above notation, we are ready to present our proof.

\paragraph{Step 1: concentration bounds for any fixed $n = \{n_i\}_{i=1}^k$ and $w\in \Delta(k)$.} 

Consider first any fixed $n = \{n_i\}_{i=1}^k$ obeying $0 \leq n_i \leq \Tone$ for all $i \in [k]$, 
and any fixed $w\in \Delta(k)$. For any quantity $\lambda \in \big[0, \min_{i\in [k]} \frac{n_i}{w_i}\big]$, 
	if we take
\begin{align}
	E(\lambda,n,w)\coloneqq\mathop{\mathbb{E}}\limits _{\widetilde{\mathcal{S}}(n)}\left[\max_{h\in\mathcal{H}}\exp\left(\lambda\left\{\sum_{i=1}^{k}w_{i}\frac{1}{n_{i}}\sum_{i=1}^{n_{i}}\ell\big(h,(x_{i,j},y_{i,j})\big)-L(h, w)\right\}\right)\right]	
\end{align}
with the expectation taken over the randomness of $\widetilde{\mathcal{S}}(n)$, 
then we can apply a standard ``symmetrization'' trick to bound $E(\lambda,n,w)$ as follows:  
\begin{align}
E(\lambda,n,w) & \coloneqq\mathop{\mathbb{E}}\limits _{\widetilde{\mathcal{S}}(n)}\left[\max_{h\in\mathcal{H}}\exp\left(\lambda\left\{ \sum_{i=1}^{k}\frac{w_{i}}{n_{i}}\sum_{i=1}^{n_{i}}\ell\big(h,(x_{i,j},y_{i,j})\big)- L(h, w)\right\} \right)\right]\nonumber\\
 & =\mathop{\mathbb{E}}\limits _{\widetilde{\mathcal{S}}(n)}\left[\max_{h\in\mathcal{H}}\exp\left(\lambda\left\{ \sum_{i=1}^{k}\frac{w_{i}}{n_{i}}\sum_{i=1}^{n_{i}}\ell\big(h,(x_{i,j},y_{i,j})\big)-\mathop{\mathbb{E}}\limits _{\widetilde{\mathcal{S}}^{+}(n)}\left[\sum_{i=1}^{k}\frac{w_{i}}{n_{i}}\sum_{i=1}^{n_{i}}\ell\big(h,(x_{i,j}^{+},y_{i,j}^{+})\big)\right]\right\} \right)\right]\nonumber\\
 & \leq\mathop{\mathbb{E}}\limits _{\widetilde{\mathcal{S}}(n)}\left[\max_{h\in\mathcal{H}}\mathop{\mathbb{E}}\limits _{\widetilde{\mathcal{S}}^{+}(n)}\left[\exp\left(\lambda\left\{ \sum_{i=1}^{k}\frac{w_{i}}{n_{i}}\sum_{j=1}^{n_{i}}\Big(\ell\big(h,(x_{i,j},y_{i,j})\big)-\ell\big(h,(x_{i,j}^{+},y_{i,j}^{+})\big)\Big)\right\} \right)\right]\right]
	\notag\\
 & \leq\mathop{\mathbb{E}}\limits _{\widetilde{\mathcal{S}}(n),\widetilde{\mathcal{S}}^{+}(n)}\left[\max_{h\in\mathcal{H}}\exp\left(\lambda\left\{ \sum_{i=1}^{k}\frac{w_{i}}{n_{i}}\sum_{j=1}^{n_{i}}\Big(\ell\big(h,(x_{i,j},y_{i,j})\big)-\ell\big(h,(x_{i,j}^{+},y_{i,j}^{+})\big)\Big)\right\} \right)\right], \label{eq:xxxx}
\end{align}
where the last two inequalities follow from Jensen's inequality. 
%
%

Next, let $\sigma(n)\coloneqq \big\{\{\sigma_{i,j}\}_{j=1}^{n_i}\big\}_{i=1}^k$ be a collection of i.i.d.~Rademacher random variables obeying $\mathbb{P}(\sigma_{i,j}=1)=\mathbb{P}(\sigma_{i,j}=-1)= 1/2$.  
Denoting $\mathcal{C}=\big\{ (x_{i,j},y_{i,j})\big\} \bigcup \big\{ (x_{i,j}^+,y_{i,j}^+) \big\} $, 
we obtain
\begin{align}
 & \mathop{\mathbb{E}}\limits _{\widetilde{\mathcal{S}}(n),\widetilde{\mathcal{S}}^{+}(n)}\left[\max_{h\in\mathcal{H}}\exp\left(\lambda\left\{ \sum_{i=1}^{k}\frac{w_{i}}{n_{i}}\sum_{j=1}^{n_{i}}\Big(\ell\big(h,(x_{i,j},y_{i,j})\big)-\ell\big(h,(x_{i,j}^{+},y_{i,j}^{+})\big)\Big)\right\} \right)\right]\nonumber\\
 & =\mathop{\mathbb{E}}\limits _{\widetilde{\mathcal{S}}(n),\widetilde{\mathcal{S}}^{+}(n)}\left[\mathop{\mathbb{E}}\limits _{\sigma(n)}\left[\max_{h\in\mathcal{H}}\exp\left(\lambda\left\{ \sum_{i=1}^{k}\frac{w_{i}}{n_{i}}\sum_{j=1}^{n_{i}}\sigma_{i,j}\Big(\ell\big(h,(x_{i,j},y_{i,j})\big)-\ell\big(h,(x_{i,j}^{+},y_{i,j}^{+})\big)\Big)\right\} \right)\,\Big|\,\mathcal{C}\right]\right].\label{eq:base1}
\end{align}
Note that for any dataset  $\mathcal{C}$ with cardinality $|\mathcal{C}|$, 
 the Sauer–Shelah lemma \cite[Proposition~4.18]{wainwright2019high} 
 together with our assumption that $\mathsf{VC}\text{-}\mathsf{dim}(\mathcal{H})\leq d$ 
 tells us that the cardinality of the following set obeys
\begin{align}
	\big|\mathcal{H}(\mathcal{C})\big|\leq (|\mathcal{C}|+1)^{d} 
	\leq \big( |\widetilde{\mathcal{S}}|+|\widetilde{\mathcal{S}}^+|+1 \big)^{d}
	\leq(2k\Tone+1)^{d}, 
\end{align}
where $\mathcal{H}(\mathcal{C})$ denotes the set obtained
by applying all $h\in\mathcal{H}$ to the data points in $\mathcal{C}$,
namely, 
\begin{equation}
	\mathcal{H}(\mathcal{C}) \coloneqq \Big\{\big(h(x_{1,1}),h(x_{1,1}^{+}),h(x_{1,2}),h(x_{1,2}^{+}),\cdots\big)\mid h\in\mathcal{H}\Big\}.
\end{equation}
We shall also define $\mathcal{H}_{\min,\mathcal{C}}\subseteq \mathcal{H}$ to be the {\em minimum-cardinality subset} of  $\mathcal{H}$ that results in the same outcome as  $\mathcal{H}$ when applied to $\mathcal{C}$, 
namely, 
\[
	\mathcal{H}_{\min,\mathcal{C}}(\mathcal{C})  = \mathcal{H}(\mathcal{C}) 
	\qquad \text{and} \qquad \big| \mathcal{H}_{\min,\mathcal{C}} \big| = \big|\mathcal{H}(\mathcal{C}) \big|.
\]
With these in place, we can demonstrate that
\begin{align}
 & \mathop{\mathbb{E}}\limits _{\sigma(n)}\left[\max_{h\in\mathcal{H}}\exp\left(\lambda\left\{ \sum_{i=1}^{k}\frac{w_{i}}{n_{i}}\sum_{j=1}^{n_{i}}\sigma_{i,j}\Big(\ell\big(h,(x_{i,j},y_{i,j})\big)-\ell\big(h,(x_{i,j}^{+},y_{i,j}^{+})\big)\Big)\right\} \right)\,\Big|\,\mathcal{C}\right]\nonumber\\
 & =\mathop{\mathbb{E}}\limits _{\sigma(n)}\left[\max_{h\in\mathcal{H}_{\min,\mathcal{C}}}\exp\left(\lambda\left\{ \sum_{i=1}^{k}\frac{w_{i}}{n_{i}}\sum_{j=1}^{n_{i}}\sigma_{i,j}\Big(\ell\big(h,(x_{i,j},y_{i,j})\big)-\ell\big(h,(x_{i,j}^{+},y_{i,j}^{+})\big)\Big)\right\} \right)\,\Big|\,\mathcal{C}\right]\nonumber\\
 & \leq\mathop{\mathbb{E}}\limits _{\sigma(n)}\left[\sum_{h\in\mathcal{H}_{\min,\mathcal{C}}}\exp\left(\lambda\left\{ \sum_{i=1}^{k}\frac{w_{i}}{n_{i}}\sum_{j=1}^{n_{i}}\sigma_{i,j}\Big(\ell\big(h,(x_{i,j},y_{i,j})\big)-\ell\big(h,(x_{i,j}^{+},y_{i,j}^{+})\big)\Big)\right\} \right)\,\Big|\,\mathcal{C}\right]\nonumber\\
 & \leq\big|\mathcal{H}_{\min,\mathcal{C}}\big|\max_{h\in\mathcal{H}_{\min,\mathcal{C}}}\mathop{\mathbb{E}}\limits _{\sigma(n)}\left[\exp\left(\lambda\left\{ \sum_{i=1}^{k}\frac{w_{i}}{n_{i}}\sum_{j=1}^{n_{i}}\sigma_{i,j}\Big(\ell\big(h,(x_{i,j},y_{i,j})\big)-\ell\big(h,(x_{i,j}^{+},y_{i,j}^{+})\big)\Big)\right\} \right)\,\Big|\,\mathcal{C}\right]\nonumber\\
 & \leq\big(2k\Tone+1\big)^{d}\max_{h\in\mathcal{H}}\prod_{i=1}^{k}\prod_{j=1}^{n_{i}}\mathop{\mathbb{E}}\limits _{\sigma_{i,j}}\left[\exp\left(\lambda\left\{ \frac{w_{i}}{n_{i}}\sigma_{i,j}\Big(\ell\big(h,(x_{i,j},y_{i,j})\big)-\ell\big(h,(x_{i,j}^{+},y_{i,j}^{+})\big)\Big)\right\} \right)\,\Big|\,\mathcal{C}\right]\nonumber\\
 & \leq\big(2k\Tone+1\big)^{d}\exp\left(2\lambda^{2}\sum_{i=1}^{k}\frac{(w_{i})^{2}}{n_{i}}\right).\label{eq:xxxx2}
\end{align}
Here,  the last inequality makes use of fact $\big|\ell\big(h,(x_{i,j},y_{i,j})\big)-\ell\big(h,(x_{i,j}^{+},y_{i,j}^{+})\big)|\leq 2$ as well as the following elementary inequality 
\[
\mathop{\mathbb{E}}\limits _{\sigma_{i,j}}\big[\exp(\sigma_{i,j}x)\big]=\frac{1}{2}\big(\exp(x)+\exp(-x)\big)\leq\exp\big(x^{2}\big). 
\]
Taking \eqref{eq:xxxx}, \eqref{eq:base1} and \eqref{eq:xxxx2} together reveals that
%
\begin{align}
E(\lambda)\leq (2k\Tone+1)^{d}\exp\left(2\lambda^2 \sum_{i=1}^k \frac{(w_i)^2}{n_i}\right) .\label{eq:xxxx3}
\end{align}

Repeating the same arguments also yields an upper bound on the following quantity: 
\begin{align}
\overline{E}(\lambda) & \coloneqq\mathop{\mathbb{E}}\limits _{\widetilde{\mathcal{S}}(n)}\left[\max_{h\in\mathcal{H}}\exp\left(\lambda\left\{ L(h,w)-\sum_{i=1}^{k}\frac{w_{i}}{n_{i}}\sum_{i=1}^{n_{i}}\ell\big(h,(x_{i,j},y_{i,j})\big)\right\} \right)\right]\nonumber\\
 & \leq(2k\Tone+1)^{d}\exp\left(2\lambda^{2}\sum_{i=1}^{k}\frac{(w_{i})^{2}}{n_{i}}\right)\nonumber
\end{align}
for any $\lambda \in \big[0, \min_{i\in [k]} \frac{n_i}{w_i}\big]$. 
Taking the above two inequalities and applying the Markov inequality reveal that, 
for any $0<\varepsilon'\leq 1$,  
\begin{align}
 & \mathbb{P}\left(\max_{h\in\mathcal{H}}\left|\sum_{i=1}^{k}w_{i}\frac{1}{n_{i}}\sum_{i=1}^{n_{i}}\ell\big(h,(x_{i,j},y_{i,j})\big)-
	L(h,w)\right|\geq\varepsilon'\right)
	\notag\\
 &\qquad \leq\min_{0\leq\lambda\leq\min_{i}\frac{n_{i}}{w_{i}}}\frac{E(\lambda)+\overline{E}(\lambda)}{\exp(\lambda\varepsilon')}\nonumber\\
 & \qquad\leq\min_{0\leq\lambda\leq\min_{i}\frac{n_{i}}{w_{i}}}2\cdot(2k\Tone+1)^{d}\exp\left(2\lambda^{2}\sum_{i=1}^{k}\frac{(w_{i})^{2}}{n_{i}}-\lambda\varepsilon'\right).\label{eq:lxs}
\end{align}

\paragraph{Step 2: uniform concentration bounds over epsilon-nets w.r.t.~$n$ and $w$.}  
Next, we move on to extend the above result to uniform concentration bounds over all possible $n$ and $w$. 
Towards this, 
let us first introduce a couple of notation.
\begin{itemize}
	\item Let us use $\Delta_{\varepsilon_2}(k) \subseteq \Delta(k)$ to denote an $\varepsilon_2$-net of $\Delta(k)$ ---  
		namely, for any $x\in \Delta(k)$, there exists a vector $x_0\in \Delta_{\varepsilon_2}(k)$ obeying $\|x-x_0\|_{\infty} \leq \varepsilon_2$. 
		We shall choose $\Delta_{\varepsilon_2}(k)$ properly so that 
$$
	|\Delta_{\varepsilon_2}(k)|\leq (1/\varepsilon_2)^k.
$$

	\item Define the following set
$$
		\mathcal{B}=  \bigg\{n=\{n_i\}_{i=1}^k, w =\{w_i\}_{i=1}^k \,\Big|\,   \frac{n_i}{w_i} \geq \frac{\Tone}{2}  , 0\leq n_i\leq \Tone, \forall i \in [k], w\in \Delta_{\varepsilon_1/(8k)}(k) \bigg\},
$$
which clearly satisfies
\[
	|\mathcal{B}| \leq  \Tone^k \cdot \left(\frac{8k}{\varepsilon_1} \right)^k .
\]
\end{itemize}
Applying the union bound yields that, for any $0<\varepsilon'\leq 1$, 
\begin{align}
& \mathbb{P}\left( \exists (n,w)\in \mathcal{B}, \max_{h\in \mathcal{H}}\left|  \sum_{i=1}^k w_i \frac{1}{n_i}\sum_{i=1}^{n_i}\ell\big(h,(x_{i,j},y_{i,j})\big)-  L(h,w) \right| \geq \varepsilon'\right) \nonumber
	\\ & \leq \sum_{(n,w)\in \mathcal{B}}\min_{0\leq \lambda \leq \min_i \frac{n_i}{w_i}} 2\cdot (2k\Tone+1)^{d}\exp\left(2\lambda^2 \sum_{i=1}^k \frac{(w_i)^2}{n_i}-\lambda \varepsilon'\right) \nonumber
\\ & \leq \sum_{(n,w)\in \mathcal{B}}\min_{0\leq \lambda \leq \frac{\Tone}{2}} 2\cdot (2k\Tone+1)^{d}\exp\left(2\lambda^2 \cdot\frac{2}{\Tone}-\lambda \varepsilon'\right) \nonumber
\\ &\leq  \sum_{(n,w)\in \mathcal{B}}2\cdot (2k\Tone+1)^{d} \exp\left( -\frac{\Tone(\varepsilon')^2}{16}\right) \nonumber
\\ & \leq  |\mathcal{B}|\cdot 2 \cdot (2k\Tone+1)^{d}\exp\left(-\frac{\Tone(\varepsilon')^2}{16}\right)\nonumber
\\ & \leq 2\cdot  (8k\Tone/\varepsilon_1)^k (2k\Tone+1)^{d}\cdot \exp\left(-\frac{\Tone(\varepsilon')^2}{16}\right) ,\nonumber
\end{align}
where the second inequality holds since $\sum_{i=1}^{k}\frac{w_{i}^{2}}{n_{i}}\leq\frac{2}{\Tone}\sum_{i=1}^{k}w_{i}=\frac{2}{\Tone}$ 
(according to the definition of $\mathcal{B}$).


\paragraph{Step 3: concentration bounds w.r.t.~$n^t$ and $w^t$.} 

Let $\mathcal{S}^t$ denote the value of $\mathcal{S}$  after line~\ref{line:sample1} of Algorithm~\ref{alg:main1} 
 in the $t$-th round. 
Recall that $n^t=[n^t_i]_{1\leq i\leq k}$ denotes the number of samples for all $k$ distributions in $\mathcal{S}^t$, 
and let $w^t=[w^t_i]_{1\leq i\leq k}$ represent the weight iterates in the $t$-th round. 
It is easily seen from lines~\ref{line:a1} and \ref{line:nit} of Algorithm~\ref{alg:main1} that 
$n_i^t\leq \Tone$ and $n_{i}^{t}/w_{i}^{t}\geq n_{i}^{t}/(2\widehat{w}_{i}^{t})\geq \Tone/2$. 
For analysis purposes, it suffices to take $\mathcal{S}^t = \widetilde{\mathcal{S}}(n^t)$.

In view of the update rule in Algorithm~\ref{alg:main1}, one can always find $(n^t,\widetilde{w}^t)\in \mathcal{B}$ satisfying $\|\widetilde{w}^t-w^t\|_1\leq k\|\widetilde{w}^t-w^t\|_{\infty}\leq \varepsilon_1/8$. As a result, for any $0<\varepsilon'\leq 1$, we can deduce that
\begin{align}
& \mathbb{P}\left(\exists t\in [T], \max_{h\in \mathcal{H}}\left|  \sum_{i=1}^k w^t_i \frac{1}{n^t_i}\sum_{i=1}^{n^t_i}
	\ell\big(h,(x_{i,j},y_{i,j})\big)-  L(h, w^t) \right| \geq \varepsilon' +\frac{\varepsilon_1}{4}\right) \nonumber
\\ 
	& \qquad \leq \mathbb{P}\left(\exists t\in [T], \max_{h\in \mathcal{H}}\left|  \sum_{i=1}^k \widetilde{w}^t_i \frac{1}{n^t_i}\sum_{i=1}^{n^t_i}
	\ell\big(h,(x_{i,j},y_{i,j})\big)-  L(h, \widetilde{w}^t) \right| \geq \varepsilon' \right) \nonumber
\\
&\qquad \leq 2\cdot  (8k\Tone/\varepsilon_1)^k (2k\Tone+1)^{d}\cdot \exp\left(-\frac{\Tone(\varepsilon')^2}{16}\right),\label{eq:ulx2}
\end{align}
where the second inequality arises from the fact that 
$\frac{1}{n_{i}}\sum_{i=1}^{n_{i}}\ell\big(h,(x_{i,j},y_{i,j})\big)\in [-1,1]$ and $L(h,\widetilde{w}^t)\in [-1,1]$. 
Taking $\varepsilon' = \varepsilon_1/4$ and substituting $\Tone = \frac{4000\left(k\log(k/\varepsilon_1) +d\log(kd/\varepsilon_1)+\log(1/\delta)\right)}{\varepsilon_1^2}$ into \eqref{eq:ulx2}, we can obtain 
\begin{align}
& \mathbb{P}\left(\exists t\in [T], \max_{h\in \mathcal{H}}\left|  \sum_{i=1}^k w^t_i \cdot  \frac{1}{n^t_i}\sum_{i=1}^{n^t_i}\ell\big(h,(x_{i,j},y_{i,j})\big)-  L(h,w^t)\right| \geq \frac{\varepsilon_1}{2}\right) \nonumber
\\ & \leq 2\cdot  (8k\Tone/\varepsilon_1)^k (2k\Tone+1)^{d}\cdot \exp\left(-\frac{\Tone\varepsilon_1^2}{16}\right)\nonumber
\\ & \leq 2\cdot  (8k\Tone/\varepsilon_1)^k (2k\Tone+1)^{d}\cdot  \exp\Big( -10 \big(k\log(k/\varepsilon_1)+d\log(kd/\varepsilon_1)+\log(1/\delta)\big) \Big) \nonumber
\\ & \leq 2\cdot  (8k\Tone/\varepsilon_1)^k (2k\Tone+1)^{d}\cdot (k/\varepsilon_1)^{-10k} \cdot (kd/\varepsilon_1)^{-10d}\cdot \delta\nonumber
\\ & \leq \delta/4.\label{eq:uxv2}
\end{align}

\paragraph{Step 4: putting all this together.} 
Recalling that 
\begin{align}
\widehat{L}^t(h,w^t) = \sum_{i=1}^k w_i^t \cdot \frac{1}{n_i^t}\sum_{i=1}^{n_i^t}\ell\big(h,(x_{i,j},y_{i,j})\big),\nonumber 
\end{align}
one can see from \eqref{eq:uxv2} that,  
with probability exceeding $1-\delta/4$,  
\begin{align}
\left|\widehat{L}^t(h,w^t)-L(h,w^t) \right|\leq \frac{\varepsilon_1}{2} 
\end{align}
holds simultaneously for all $t\in [T]$ and all $h\in\mathcal{H}$. 
Additionally, observing that 
\begin{align}
h^t = \arg\min_{h\in \mathcal{H}} \widehat{L}^t(h,w^t),
\end{align}
we can immediately deduce that
\begin{align}
L(h^{t},w^{t})\leq\widehat{L}(h^{t},w^{t})+\frac{\varepsilon_{1}}{2}=\min_{h\in\mathcal{H}}\widehat{L}(h,w^{t})+\frac{\varepsilon_{1}}{2}\leq\min_{h\in\mathcal{H}}L(h,w^{t})+\varepsilon_{1}.	
\end{align}
This concludes the proof of Lemma~\ref{lemma:opth}.


\subsection{Proof of Lemma~\ref{lemma:opt}}
\label{sec:opt}

Before proceeding, let us introduce some additional notation. 
Let $\delta' \coloneqq \frac{\delta}{4(T+k+1)}$, and recall
$$
    \mathsf{OPT}=\min_{\pi\in \Delta(\mathcal{H})}\max_{1\leq i\leq k} L(h_{\pi},\basis_i) = \max_{w\in \Delta(k)}\min_{h\in \mathcal{H}}L(w,h)
$$
to be the optimal objective value. 
Additionally, set 
\begin{equation}
	v^t \coloneqq L(h^t,w^t)-\mathsf{OPT}. \label{eq:defn-vt-gap}
\end{equation}
It follows from the assumption of this lemma (i.e., $L(h^t,w^t)\leq \min_{h\in \mathcal{H}}L(h,w^t)+\varepsilon_1$) that 
\begin{align}
    v^t \leq \min_{h\in \mathcal{H}}L(h,w^t)- \mathsf{OPT} + \varepsilon_1 
	= \min_{h\in \mathcal{H}}L(h,w^t)- \max_{w\in \Delta(k)}\min_{h\in \mathcal{H}}L(h,w)+ \varepsilon_1 
    \leq 
    \varepsilon_1, 
    \qquad \forall 1\leq t\leq T. 
	\label{eq:vt-UB-135}
\end{align}

We now begin to present the proof. 
In view of the Azuma-Hoeffding inequality and the union bound, 
we see that with probability at least $1-(k+1)\delta'$, 
%
%
\begin{subequations}
\label{eq:concentration-sum-w-r}
\begin{align}
	\left| \sum_{t=1}^T \big\langle w^t, \widehat{r}^t \big\rangle -\sum_{t=1}^T  L(h^t,w^t)\right| &\leq 2\sqrt{T\log(1/\delta')},
	\\  \left| \sum_{t=1}^T \widehat{r}_i^t - \sum_{t=1}^T L(h^t, \basis_i)\right| &\leq 2\sqrt{T\log(1/\delta')}.
\end{align}
\end{subequations}
These motivate us to look at $\sum_{t=1}^T \big\langle w^t, \widehat{r}^t \big\rangle$ 
(resp.~$ \sum_{t=1}^T \widehat{r}_i^t$) as a surrogate for $\sum_{t=1}^T  L(h^t,w^t)$ (resp.~$\sum_{t=1}^T L(h^t, \basis_i)$).

We then resort to standard analysis for the Hedge algorithm. 
Specifically, direct computation gives 
\begin{align}
	\log\left(\frac{\sum_{i=1}^k W_{i}^{t+1}}{\sum_{i=1}^k W_i^t}\right)  
	& \overset{\text{(i)}}{=} \log\left(\sum_{i=1}^k w_i^t \exp(\eta \widehat{r}^t_i) \right)
	\overset{\text{(ii)}}{\leq} \log\left( \sum_{i=1}^k w_i^t \big(1+ \eta\widehat{r}_i^t + \eta^2 (\widehat{r}_i^t)^2 \big)  \right)  \notag\\
& \leq 
\log\left(1+\eta\sum_{i=1}^{k}w_{i}^{t}\widehat{r}_{i}^{t}+\eta^{2}\sum_{i=1}^{k}w_{i}^{t}(\widehat{r}_{i}^{t})^{2}\big)\right)
	\leq \eta \sum_{i=1}^k w_i^t \widehat{r}_i^t + \eta^2. \label{eq:bb1}
\end{align}
%
%
Here, (i) is valid since $w_i^t=\frac{W_i^t}{\sum_jW_j^t}$ and $W_i^{t+1}=W_i^t\exp(\eta \widehat{r}_i^t)$ 
(cf.~lines~\ref{line:wt-update-Wt} and \ref{line:updateS11} of Algorithm~\ref{alg:main1}); 
(ii) arises from the elementary inequality $e^x\leq 1+x+x^2$ for $x\in [0,1]$ as well as the facts that  $\eta \leq 1$ and $|\widehat{r}_i^t|\leq 1$. 
Summing the inequality \eqref{eq:bb1} over all $t$ and rearranging terms, we are left with 
\begin{align}
\eta\sum_{t=1}^{T}\big\langle w^{t},\widehat{r}^{t}\big\rangle & \geq\sum_{t=1}^{T}\left\{ \log\left(\frac{\sum_{i=1}^{k}W_{i}^{t+1}}{\sum_{i=1}^{k}W_{i}^{t}}\right)-\eta^{2}\right\} \notag\\
 & =\log\left(\sum_{i=1}^{k}W_{i}^{T+1}\right)-\log\left(\sum_{i=1}^{k}W_{i}^{1}\right)-T\eta^{2} \notag\\
 & \geq\max_{1\leq i\leq k}\log(W_{i}^{T+1})-\log(k)-T\eta^{2}\nonumber\\
 & \geq\eta\max_{1\leq i\leq k}\sum_{t=1}^{T}\widehat{r}_{i}^{t}-\log(k)-T\eta^{2},
\end{align}
where the penultimate lines makes use of $W_i^1=1$ for all $i\in [k]$, 
and the last line holds since $\log\big(W_{i}^{T+1}\big) = \log\big(W_{i}^{T}\exp(\eta\widehat{r}_{i}^{t})\big)\geq\eta\widehat{r}_{i}^{t}$.   
Dividing both sides by $\eta$ yields
\begin{align}
\sum_{t=1}^T \big\langle w^{t},\widehat{r}^{t}\big\rangle \geq \max_{i} \sum_{t=1}^T \widehat{r}^t_i - \left(   \frac{\log(k)}{\eta} +\eta T \right)
.
	\label{eq:wt-rt-inner-LB}
\end{align}

Combine the above inequality with \eqref{eq:concentration-sum-w-r} to show that, with probability at least $1-(k+1)\delta'$, 
\begin{align}
	\sum_{t=1}^T L(h^t,w^t) &\geq \max_{1\leq i\leq k} \sum_{t=1}^T L(h^t,\basis_i) - \left( \frac{\log(k)}{\eta}+\eta T + 4\sqrt{T\log(1/\delta')} \right)
	.\label{eq:s1}
\end{align}
Recalling that $\varepsilon_1 = \eta = \frac{1}{100}\varepsilon$ and $T = \frac{20000\log(\frac{k}{\delta' \varepsilon})}{\varepsilon^2}$, we can derive
\begin{align}
	\max_{1\leq i\leq k} \sum_{t=1}^T L(h^t,\basis_i) & \leq T\mathsf{OPT}+ \sum_{t=1}^T v^t+\left( \frac{\log(k)}{\eta}+\eta T + 4\sqrt{T\log(1/\delta')} \right) \nonumber
\\ & \leq T\mathsf{OPT}+ T\varepsilon_1  +\left( \frac{\log(k)}{\eta}+\eta T + 4\sqrt{T\log(1/\delta')} \right)  \nonumber
\\ & \leq T\mathsf{OPT}+ T\varepsilon,
\end{align}
where the penultimate line results from \eqref{eq:vt-UB-135}. 
Given that $h^{\mathsf{final}}$ is taken to be uniformly distributed over $\{h^t\}_{1\leq t\leq T}$, 
we arrive at
\begin{align}
\max_{1\leq i\leq k}L(h^{\mathsf{final}},\basis_{i})=\max_{1\leq i\leq k}\frac{1}{T}\sum_{t=1}^{T}L(h^{t},\basis_{i})\leq\mathsf{OPT}+\varepsilon 
\end{align}
with probability at least $1-(k+1)\delta'$. 
This concludes the proof by recalling that $\delta'=\frac{\delta}{4(T+k+1)}$.

\begin{remark}
	Note that the proof of this lemma works as long as $\widehat{r}_i^t\in [0,1]$ is an unbiased estimate of $L(h^t,\basis_i)$ for each $i\in [k]$,  
	regardless of how many samples are used to construct $\widehat{r}_i^t$. 
\end{remark}

%
%
%
%
%

\subsection{Proof of Lemma~\ref{lemma:part2}}
\label{sec:proof-lemma:part2}
For any integer $1\leq x \leq \log_2(T)+1$,  define 
$$
	\mathcal{W}_j(x) \coloneqq \big\{i \in [k] \mid 2^{x-1} \leq e_i-s_i \leq 2^x \big\},
$$ 
so that the length of each segment associated with $\mathcal{W}_j(x)$ lies within  $[2^{x-1}, 2^x]$. 
Let $x^{\star}$ indicate the one that maximizes the cardinality of $\mathcal{W}_j(x)$: 
$$
	x^{\star} =\arg\max_{1\leq x\leq \log_2(T)+1} | \mathcal{W}_j(x)|.
$$
Given that there are at most $\log_2(T)+1$ choices of $x$,  the pigeonhole principle gives 
\begin{align}
	|\mathcal{W}_j(x^{\star})|\geq \frac{|\mathcal{W}_j|}{\log_2(T)+1}.
	\label{eq:Wj-xstar-size-LB}
\end{align}
In the sequel, we intend to choose the subsets $\{\mathcal{W}_j^m\}_{m=1}^M$ from $\mathcal{W}_j(x^{\star})$.

To proceed, let us set 
\begin{subequations}
\begin{align}
	\kappa_1 \coloneqq \min_{i \in \mathcal{W}_j(x^{\star})}e_i, \qquad 
	\mathcal{U}_j^1 \coloneqq \big\{ i\in \mathcal{W}_j(x^{\star}) \mid s_i \leq \kappa_1 \big\}, 
\end{align}
and then for each $o\geq 1$,  take
\begin{align}
	 \kappa_{o+1} &\coloneqq \min_{ i \in \mathcal{W}_j(x^{\star})/\cup_{o'=1}^{o}\mathcal{U}_j^{o'}}e_i,
	\\  \mathcal{U}_j^{o+1} &\coloneqq \big\{i\in \mathcal{W}_j(x^{\star})/\cup_{o'=1}^{o}\mathcal{U}_j^{o'} \mid s_i \leq \kappa_{o+1} \big\}.
\end{align}
\end{subequations}
We terminate such constructions until $\cup_{o\geq 1}\mathcal{U}_j^o=\mathcal{W}_j(x^{\star})$. 
By construction, for each $o$, we have 
\begin{align}
	s_{i_2}\leq \kappa_o\leq e_{i_1}, \quad \forall i_1,i_2\in \mathcal{U}_j^o \qquad 
	\Longleftrightarrow \qquad \max_{i \in \mathcal{U}_j^o}s_i \leq \min_{i\in \mathcal{U}_j^o}e_i.
\end{align}
%
%
Let us look at the three groups of subsets of $\mathcal{W}_j(x^{\star})$: $\{\mathcal{U}_j^{3o-2}\}_{o\geq 1}$, $\{\mathcal{U}_j^{3o-1}\}_{o\geq 1}$ and $\{\mathcal{U}_j^{3o}\}_{o\geq 1}$. 
Clearly, there exists $l \in \{0,1,2\}$ such that $\sum_{o\geq 1}|\mathcal{U}_j^{3o-l}|\geq \frac{1}{3}\sum_{o\geq 1}|\mathcal{U}_j^o|$; without loss of generality,  assume that 
\begin{equation}
	\sum_{o\geq 1}|\mathcal{U}_j^{3o-2}|\geq \frac{1}{3}\sum_{o\geq 1} |\mathcal{U}_j^o| = \frac{1}{3} |\mathcal{W}_j(x^{\star})|.
	\label{eq:Uj-3o-i-LB}
\end{equation}

With the above construction in place, we would like to verify that $\{\mathcal{U}_j^{3o-2}\}_{o\geq 1}$ forms the desired group of subsets. 
	First of all, Condition (i) holds directly from the definition of $\{\mathcal{U}_j^o\}_{o\geq 1}$. 
	When it comes to Condition (ii), it follows from \eqref{eq:Uj-3o-i-LB} and \eqref{eq:Wj-xstar-size-LB} that
\begin{align}
\sum_{o\geq 1}|\mathcal{U}_j^{3o-2}|\geq 
	\frac{1}{3} |\mathcal{W}_j(x^{\star})|\geq \frac{|\mathcal{W}_j|}{3(\log_2(T)+1)}.\nonumber
\end{align}
Regarding Condition (iii), it suffices to verify that 
\begin{align}
	\max_{i\in \mathcal{U}_j^{3o-2}}e_i \leq \min_{i\in \mathcal{U}_j^{3o+1}}s_i
\end{align}
for any $o$. 
To do so, note that 
for each $o\geq 1$, there exists $i\in \mathcal{W}_j(x^{\star})$ such that $s_i\geq \kappa_o$ and $\kappa_{o+1}=e_i$. 
We can then deduce that 
\begin{equation}
	\kappa_{o+1}=e_i \geq s_i + 2^{x^{\star}-1}\geq \kappa_o + 2^{x^{\star}-1}.
\end{equation}
It then follows that, for any $i\in \mathcal{U}_j^{3o+1}$, one has 
$$
	s_i \geq \kappa_{3o}\geq \kappa_{3o-1}+2^{x^{\star}-1} \geq \kappa_{3o-2}+2^{x^{\star}}.
$$
In addition, for any $l \in\mathcal{U}_j^{3o-2}$, it is seen that 
$$
	e_l \leq s_l + 2^{x^{\star}}\leq \kappa_{3o-2}+2^{x^{\star}}.
$$
Putting all this together yields
\begin{align}
\max_{i\in \mathcal{U}_j^{3o-2}}e_i \leq \kappa_{3o-2}+2^{x^{\star}}\leq  \min_{i\in \mathcal{U}_j^{3o+1}}s_i. \nonumber
\end{align}
The proof is thus complete.
%

\subsection{Proof of Lemma~\ref{lemma:part3}}
\label{sec:proof-lemma-part3}
We shall begin by presenting our construction of the subsets, followed by justification of the advertised properties. 
In what follows, we set $x = \log(2)$. 

\paragraph{Our construction.} 
Let $\{\mathcal{W}_j^p\}_{p=1}^P$ and $\{(\widetilde{s}_p,\widetilde{e}_p)\}_{p=1}^P$ be the construction in Lemma~\ref{lemma:part2}.  
%

\medskip 
\noindent 
	{\em Step a): constructing $\widehat{\mathcal{W}}_j^p$.}  
Consider any $1\leq p \leq P$. Set
$$
	t_{\mathrm{mid}}^p \coloneqq \min_{i\in \mathcal{W}_j^p}e_i,
$$
which, in view of Lemma~\ref{lemma:part2}, is a common inner point of the segments in $\mathcal{W}_j^p$. 
We can derive, for each $i\in \mathcal{W}_j^p$,  
\begin{align}
\max\left\{ \log\left( \frac{w^{e_i}_i}{w^{t_{\mathrm{mid}}^p}_i} \right),\log\left( \frac{w^{t_{\mathrm{mid}}^p}_i}{w^{s_i}_i} \right)\right\}\geq \frac{1}{2}\log\left( \frac{w^{e_i}_i}{w^{t_{\mathrm{mid}}^p}_i} \right)+\frac{1}{2}\log\left( \frac{w^{t_{\mathrm{mid}}^p}_i}{w^{s_i}_i} \right)
	= \frac{1}{2}\log\left( \frac{w^{e_i}_i}{w^{s_i}_i} \right) 
	\geq \frac{x}{2},
	\nonumber
\end{align}
where the last inequality holds since $(s_i,e_i)$ is constructed to be a $\big(\frac{1}{2^{j+1}}, \frac{1}{2^{j+2}}, x \big)$-segment 
(see Lemma~\ref{lemma:aux1}). 	
It then follows that
\begin{align}
	\sum_{i\in \mathcal{W}_j^p} \left(\mathds{1}\left\{\log\left( \frac{w^{e_i}_i}{w^{t_{\mathrm{mid}}^p}_i} \right)\geq \frac{x}{2}\right\} + \mathds{1}\left\{\log\left( \frac{w^{e_i}_i}{w^{t_{\mathrm{mid}}^p}_i} \right)\geq \frac{x}{2} \right\} \right) \geq |\mathcal{W}_j^p|. \nonumber
\end{align}
Without loss of generality, we assume that 
\begin{align}
	\sum_{i\in \mathcal{W}_j^p} \mathds{1}\left\{\log\left( \frac{w^{e_i}_i}{w^{t_{\mathrm{mid}}^p}_i} \right)\geq \frac{x}{2}
	\right\} \geq \frac{|\mathcal{W}_j^p|}{2} .
\end{align}
This means that the set define below
\begin{align}
	\widehat{\mathcal{W}}_j^p \coloneqq \left\{i\in \mathcal{W}_j^p \mid \log\big(w^{e_i}_i/w^{t_{\mathrm{mid}}^p}_i \big) 
	\geq \frac{x}{2}  \right\}
	\label{eq:defn-widehat-W-jp}
\end{align}
satisfies
\begin{equation}
	|\widehat{\mathcal{W}}_j^p|\geq \frac{|\mathcal{W}_j^p|}{2}.
	\label{eq:size-Wjp-179}
\end{equation}
In what follows, we take\footnote{We assume $\widetilde{Q}\geq 2$ without loss of generality. In the case $\widetilde{Q}=1$, we simply choose an arbitrary element in $\widehat{\mathcal{W}}_j^p$ as a single subset. In this way, we can collect at least $\frac{1}{4}|\widehat{\mathcal{W}_j^p}|$ segments.}
$$
	Q(p) \coloneqq |\widehat{\mathcal{W}}_j^p| , \qquad 
	\widetilde{l}(p) \coloneqq \max \big\{l\geq 0 \mid 2^{l}\leq Q(p) \big\}
	\qquad \text{and} \qquad
	\widetilde{Q}(p) \coloneqq 2^{\widetilde{l}(p)}.
$$
%
Without loss of generality, we assume 
\begin{equation}
	\widehat{\mathcal{W}}_j^p = \big\{1,2,\ldots, Q(p) \big\} \qquad  \text{and} \qquad e_{1}\leq e_{2}\leq \dots \leq e_{Q(p)}.
	\label{eq:hat-W-ordering}
\end{equation}
In the sequel, we shall often abbreviate $Q(p)$, $\widetilde{l}(p)$ and $\widetilde{Q}(p)$ as $Q$, $\widetilde{l}$ and $\widetilde{Q}$, respectively, 
as long as it is clear from the context. 

%
%

\medskip 
\noindent 
{\em Step b): constructing $\mathcal{K}_{l}$ and $\widetilde{\mathcal{W}}_j^p(l)$.}  
Let us take $e_0 = t_{\mathrm{mid}}^p$, and employ $[e_0, e_k]\oplus a$ as a shorthand notation for $[e_{a},e_{k+a}]$. 
We can then define a group of disjoint intervals of $[T]$ as follows:  
\begin{subequations}
	\label{eq:construction-K1-Kl}
\begin{align}
	\mathcal{K}_1 &= \left\{  [ e_0, e_{2^{\widetilde{l}-1}} ]   \right\} ;
	\\  \mathcal{K}_2 &= \left\{   [e_0, e_{2^{\widetilde{l}-2}}] ,  [e_0, e_{2^{\widetilde{l}-2}}]  \oplus 2^{\widetilde{l}-1}  \right\};
	\\  \mathcal{K}_3 &= \left\{ [e_0 , e_{2^{\widetilde{l}-3}}], [e_0 , e_{2^{\widetilde{l}-3}}]\oplus 2^{\widetilde{l}-2},[e_0 , e_{2^{\widetilde{l}-3}}]\oplus 2\cdot 2^{\widetilde{l}-2},[e_0 , e_{2^{\widetilde{l}-3}}]\oplus 3\cdot 2^{\widetilde{l}-2}  \right\};
	\\  &\dots\nonumber
	\\  \mathcal{K}_{l} &= \left\{ [e_0, e_{2^{\widetilde{l}-l}}], [e_0,e_{2^{\widetilde{l}-l}}] \oplus 2^{\widetilde{l}-l+1}, [e_0, e_{2^{\widetilde{l}-l}}]\oplus 2\cdot 2^{\widetilde{l}-l+1}, \dots, [e_0,e_{2^{\widetilde{l}-l}}]\oplus (2^{l-1}-1) 2^{\widetilde{l}-l+1} \right\};
 \\ & \dots     \nonumber
	\\  \mathcal{K}_{\widetilde{l}} &=  \left\{ [e_{2i},e_{2i+1}]  \mid i = 0,1,2,\dots, 2^{\widetilde{l}-1}-1\right\};
	\\  \mathcal{K}_{\widetilde{l}+1} &= \left\{ [e_{2i+1},e_{2i+2}] \mid i = 0,1,2,\dots, 2^{\widetilde{l}-1}-1    \right\}. 
\end{align}
\end{subequations}
For each $i \in [\widetilde{Q}-1]$ with binary form $\{i_{l}\}_{l=1}^{\widetilde{l}}$ and $0 \leq l \leq \widetilde{l}$, we define $\mathsf{trunc}(i,l)$ to be the number with binary form $\{ i_{1},i_{2},\ldots, i_{l}, 0,0, \ldots, 0 \}$. For example, $\mathsf{trunc}(i,0)=0$ and $\mathsf{trunc}(i,\widetilde{l})=i$. 

From the definition \eqref{eq:defn-widehat-W-jp} of $\widehat{\mathcal{W}}_j^p$, 
we know that for each $i \in [\widetilde{Q}-1]$,  
 \begin{align}
	 \frac{x}{2}\leq  \log\left( \frac{w_i^{e_i}}{w_i^{e_0}}\right)=\sum_{l=1}^{\widetilde{l}}\log\left(  \frac{w_{i}^{e_{\mathsf{trunc}(i,l)}}}{w_i^{e_{\mathsf{trunc}(i,l-1)}}}\right) = \sum_{l=1}^{\widetilde{l}}\log\left(  \frac{w_{i}^{e_{\mathsf{trunc}(i,l)}}}{w_i^{e_{\mathsf{trunc}(i,l-1)}}}\right) \mathds{1}\left\{ e_{\mathsf{trunc}(i,l)}\neq e_{\mathsf{trunc}(i,l-1)} \right\}, 
 \end{align}
which in turn implies that
\begin{align}
\max_{1\leq l\leq \widetilde{l}}\log\left(  \frac{w_{i}^{e_{\mathsf{trunc}(i,l)}}}{w_i^{e_{\mathsf{trunc}(i,l-1)}}}\right) \geq \frac{x}{2\widetilde{l}}.
\end{align}
By defining 
$$\widetilde{\mathcal{W}}_j^p(l) \coloneqq \left\{i\in \widehat{\mathcal{W}}_j^p: \arg\max_{1\leq l'\leq \widetilde{l}} \log\left(  \frac{w_{i}^{e_{\mathsf{trunc}(i,l')}}}{w_i^{e_{\mathsf{trunc}(i,l'-1)}}}\right) = l \right\}$$
for each\footnote{Without loss of generality, we assume the $\arg\max$ function is a single-valued function.} $1 \leq l \leq \widetilde{l}$, we can demonstrate that
\begin{align}
\sum_{l=1}^{\widetilde{l}}\Big|\widetilde{\mathcal{W}}_j^p(l)\Big| \geq \widetilde{Q}-1, \label{eq:22}
\end{align}
thus implying the existence of some $1\leq l^{\star}\leq \widetilde{l}$ obeying 
\begin{align}
 	\Big|\widetilde{W}_j^p(l^{\star}) \Big|\geq \frac{\widetilde{Q}-1}{\widetilde{l}}\geq \frac{\widetilde{Q}}{2\widetilde{l}}.
	\label{eq:defn-widetilde-W-jp-lstar}
\end{align}

\medskip 
\noindent 
{\em Step c): constructing $\widetilde{\mathcal{W}}_j^p(l,o)$, $\widehat{s}(p,o)$ and $\widehat{e}(p,o)$.}  
 By definition, for any $i$, if $\mathsf{trunc}(i,l^{\star})\neq \mathsf{trunc}(i,l^{\star}-1)$, then one has 
 $$[e_{\mathsf{trunc}(i,l^{\star}-1)}, e_{\mathsf{trunc}(i,l^{\star})}]\in \mathcal{K}_{l^{\star}},$$
 where the set $\mathcal{K}_l$ has been defined in \eqref{eq:construction-K1-Kl}. 
 In addition, from the construction of $\widetilde{\mathcal{W}}_j^p(l^{\star})$ (see \eqref{eq:defn-widetilde-W-jp-lstar}), we know that $\mathsf{trunc}(i,l^{\star})\neq \mathsf{trunc}(i,l^{\star}-1)$ for any $i \in \widetilde{\mathcal{W}}_j^p(l^{\star})$. 
 For each $1 \leq o \leq 2^{l^{\star}-1}$, define 
 \begin{align}
\widetilde{\mathcal{W}}_j^p(l^{\star},o) \coloneqq \left\{i\in \widetilde{\mathcal{W}}_j^p(l^{\star}) \mid [e_{\mathsf{trunc}(i,l^{\star}-1)}, e_{\mathsf{trunc}(i,l^{\star})}] = [e_0 ,e_{2^{\widetilde{l}-l^{\star}}}] \oplus (o-1)2^{\widetilde{l}-l^{\star}+1} \right\},\label{eq:23}
 \end{align}
 %
%
where we employ the notation $l^{\star}$ and $\widetilde{l}$  to abbreviate $l^{\star}(p)$ and $\widetilde{l}(p)$, respectively.

In addition, for any $1\leq p\leq P$ and $1\leq o\leq 2^{l^{\star}(p)-1}$, we set
\begin{subequations}
	\label{eq:construction-spo-epo}
\begin{align}
	\widehat{s}(p,o) &=  e_{(o-1)2^{\widetilde{l}(p)-l^{\star}(p)+1} } , \\
	\widehat{e}(p,o) &=e_{2^{\widetilde{l}(p)-l^{\star}(p) } + (o-1)2^{\widetilde{l}(p)-l^{\star}(p)+1 } }.
\end{align}
\end{subequations}
In words, $[\widehat{s}(p,o),\widehat{e}(p,o)]$ can be understood as the $o$-th interval in the set $\mathcal{K}_{l^{\star}(p)}$.

\medskip 
\noindent 
{\em Step d): construction output.} 
With the above construction in mind, 
we would like to select 
$$
	\left\{ \big\{\widetilde{\mathcal{W}}_j^p \big(l^{\star}(p),o\big)\big\}_{o=1}^{2^{l^{\star}(p)-1}}\right\}_{p=1}^P
	\qquad \text{with intervals} \qquad
	\left\{ \big\{\widehat{s}(p,o), \widehat{e}(p,o) \big\}_{o=1}^{2^{l^{\star}(p)-1}}\right\}_{p=1}^P
$$
as the group of subsets we construct. With slight abuse of notation, we use $(p,o)$ as the index of the segments instead of $n$.
In what follows, we validate this construction.

\paragraph{Verification of the advertised properties.} 
We now proceed to justify the claimed properties. 

\paragraph{Property (\romannumeral1).} By construction, it is clearly seen that 
$$
\widetilde{\mathcal{W}}_{j}^{p}\big(l^{\star}(p),o\big)\subseteq\widetilde{\mathcal{W}}_{j}^{p}\big(l^{\star}(p)\big)\subseteq\widehat{\mathcal{W}}_{j}^{p}\subseteq\mathcal{W}_{j}^{p}\subseteq\mathcal{W}_{j}.	
$$
In addition, if 
$$
	\widetilde{W}_j^{p_1}\big(l^{\star}(p_1),o_1 \big)\cap \widetilde{W}_j^{p_2}\big(l^{\star}(p_2),o_2 \big)\neq \emptyset, 
$$ 
then one has $\mathcal{W}_j^{p_2} \cap \mathcal{W}_j^{p_2}\neq \emptyset$, and as a result, $p_1 = p_2$ (otherwise it violates the condition that $\mathcal{W}_j^{p_2} \cap \mathcal{W}_j^{p_2}= \emptyset$ for $p_1\neq p_2$). 
 It also follows from the definition in \eqref{eq:23} that $o_1 = o_2$. 
 Therefore, for any $(p_1,o_1)$ that does not equal $(p_2,o_2)$, we have 
 $\widetilde{W}_j^{p_1}\big(l^{\star}(p_1),o_1 \big)\cap \widetilde{W}_j^{p_2}\big(l^{\star}(p_2),o_2 \big)= \emptyset$.

\paragraph{Property (\romannumeral2).} 
By construction, we have
\begin{align}
\sum_{o=1}^{2^{l^{\star}(p)-1}}\Big|\widetilde{\mathcal{W}}_{j}^{p}\big(l^{\star}(p),o\big)\Big|=\Big|\widetilde{\mathcal{W}}_{j}^{p}\big(l^{\star}(p)\big)\Big|\geq\frac{|\widehat{\mathcal{W}}_{j}^{p}|}{4\log_{2}\big(|\widehat{\mathcal{W}}_{j}^{p}|\big)}\geq\frac{|\mathcal{W}_{j}^{p}|}{8\log_{2}\big(|\widehat{\mathcal{W}}_{j}^{p}|\big)},
 \end{align}
where we have made use of \eqref{eq:defn-widetilde-W-jp-lstar} and \eqref{eq:size-Wjp-179}. 
Summing over $p$ and applying Lemma~\ref{lemma:part2} yield 
 \begin{align}
\sum_{p=1}^{P}\sum_{o=1}^{2^{l^{\star}(p)-1}}\Big|\widetilde{\mathcal{W}}_{j}^{p}\big(l^{\star}(p),o\big)\Big|
	 \geq\sum_{p=1}^{P}\frac{|\mathcal{W}_{j}^{p}|}{8\log_{2}(k)}\geq\frac{|\mathcal{W}_{j}|}{24\log_{2}(k)\big(\log_{2}(T)+1\big)}.
 \end{align}

\paragraph{Property (\romannumeral3)(a).} 
Let us set the parameters $\left\{\{g(p,o)\}_{o=1}^{2^{l^{\star}(p)}}\right\}_{p=1}^P$ as follows: 
$$
	g(p,o ) = \frac{\sum_{i\in \widetilde{\mathcal{W}}_j(l^{\star}(p),o)}w_i^{\widehat{s}(p,o)} }{2^{-(j+2)}\cdot \big|\widetilde{\mathcal{W}}_j^p\big(l^{\star}(p),o\big)\big| }\geq 1, 
$$
where the last inequality holds since, by construction, $ w_i^{ \widehat{s}(p,o)} \geq 2^{-(j+2)}$ (see Lemma~\ref{lemma:aux1}).
Then Property (\romannumeral3)(a) is satisfied since
\begin{align}
\sum_{i \in \widetilde{\mathcal{W}}_j(l^{\star}(p),o)} w_i^{\widehat{s}(p,o)} = \frac{g(p,o) \big|\widetilde{\mathcal{W}}_j(l^{\star}(p),o) \big|}{2^{j+2}}.\nonumber
\end{align}

\paragraph{Property (\romannumeral3)(b).} 
For any $i \in \widehat{\mathcal{W}}_j^p \subseteq \mathcal{W}_j^p $, we have 
$$
	s_i\leq e_{\mathsf{trunc}(i,l-1)}\leq e_i \qquad \text{for any }1\leq l\leq \widetilde{l}(p),
$$
which is valid since $\max_{i\in \mathcal{W}_j^p}s_i \leq \min_{i\in \mathcal{W}_j^p} e_i$ 
(see Lemma~\ref{lemma:part2}) and \eqref{eq:hat-W-ordering}. 
It then holds that $$s_i \leq \widehat{s}(p,o)\leq e_i \qquad \text{for any }i \in \widehat{\mathcal{W}}^p_j.$$ 
Also, the construction of $(s_i,e_i)$ (see Lemma~\ref{lemma:aux1}) tells us that $ w_i^{ \widehat{s}(p,o)} \geq 2^{-(j+2)}$. 

Moreover, by construction, we know that for any $i \in \widetilde{\mathcal{W}}_j^{p}\big(l^{\star}(p),o\big)$, 
\begin{align}
	\log\left(  \frac{w_i^{\widehat{e}(p,o)} }{w_i^{\widehat{s}(p,o)}}\right) &\geq \frac{x}{2\widetilde{l}(p)} 
	\qquad \text{and} \qquad
	 w_i^{ \widehat{s}(p,o)} \geq 2^{-(j+2)}.\nonumber
\end{align}
Recalling that $x = \log(2)$, one can further derive
%
\begin{align}
  \sum_{i\in \widetilde{\mathcal{W}}_j^p(l^{\star}(p),o) } w^{\widehat{s}(p,o)}_i 
	&\geq 2^{-(j+2)}\cdot \big|\widetilde{\mathcal{W}}_j^p \big(l^{\star}(p),o \big) \big| \nonumber
	\\  \log \left(   \frac{\sum_{i\in \widetilde{\mathcal{W}}_j^p(l^{\star}(p),o) } w^{\widehat{e}(p,o)}_i}{\sum_{i\in \widetilde{\mathcal{W}}_j^p(l^{\star}(p),o) } w^{\widehat{s}(p,o)}_i }\right) &\geq \log\left( \frac{\sum_{i\in \widetilde{\mathcal{W}}_j^p(l^{\star}(p),o) } w^{\widehat{s}(p,o)}_i\cdot \exp\left(\frac{x}{2\widetilde{l}(p)}\right) }{\sum_{i\in \widetilde{\mathcal{W}}_j^p(l^{\star}(p),o) } w^{\widehat{s}(p,o)}_i }\right)= \frac{x}{2\widetilde{l}(p)}\geq \frac{\log(2)}{2\log_2(k)}.\nonumber
\end{align}

\paragraph{Property (\romannumeral3)(c).} 
Note that for any $t$ obeying $\widehat{s}(p,o)\leq t\leq \widehat{e}(p,o)$, and any $i\in \widetilde{\mathcal{W}}_j^p\big(l^{\star}(p),o\big)$, it holds that $s_i\leq \widehat{s}(p,o) \leq t \leq \widehat{e}(p,o) \leq e_i$. 
Recall that $ w_i^{ t} \geq 2^{-(j+2)}$ for any $t\in [s_i,e_i]$ (see Lemma~\ref{lemma:aux1}). 
As a result,  
\begin{align}
\sum_{i\in \widetilde{\mathcal{W}}_j^p(l^{\star}(p),o) } w^{t}_i\geq \big| \widetilde{\mathcal{W}}_j^p\big(l^{\star}(p),o\big)\big|
	\cdot 2^{-(j+2)}.\nonumber
\end{align}

\paragraph{Proper ordering.} 
To finish up, it remains to verify that 
the intersection of $[\widehat{s}(p_1,o_1),\widehat{e}(p_1,o_1)]$ and $[\widehat{s}(p_2,o_2), \widehat{e}(p_2,o_2)]$ 
is either empty or contains only the boundary points, unless $(p_1,o_1)=(p_2,o_2)$.  
To show this, note that in the case where $p_1\neq p_2$ (assuming $p_1<p_2$), we have
$$
	\widetilde{s}_{p_1}\leq \widehat{s}(p_1,o_1)<\widehat{e}(p_1,o_1)\leq \widetilde{e}_{p_1}\leq \widetilde{s}_{p_2}\leq \widehat{s}(p_2, o_2)< \widehat{e}(p_2,o_2),
$$
which arises from Lemma~\ref{lemma:part2}. 
Also, in the case where $p_1 = p_2=p$ and $o_1 \neq o_2$ (assuming $o_1<o_2$), we have 
$$
	\widehat{s}(p,o_1)<\widehat{e}(p,o_1) < \widehat{s}(p,o_2) < \widehat{e}(p,o_2) ,
$$
which comes from the construction \eqref{eq:construction-spo-epo}.

\bigskip \noindent 
We have thus completed the proof of this lemma.
%

\subsection{Proof of Lemma~\ref{lemma:cut}}\label{sec:pfcut}

Throughout this proof, we find it convenient to denote $Z^t = \sum_{i=1}^k W_i^t$.

\paragraph{Part 1.}
We start by proving the first claim \eqref{eq:segment-length-t2-t1}. 
Recall that $[t_1,t_2]$ is assumed to be a $(p,q,x)$-segment. 
From the definition of the segment (see Definition~\ref{def:seg}), there exists $i\in [k]$ such that
\begin{align}
\log\left(  \frac{w^{t_2}_i}{w^{t_1}_i}\right) \geq x. \nonumber
\end{align}
Given that $W^{t_2}_i=W^{t_1}_i\exp\big(\eta\sum_{\tau = t_1}^{t_2 - 1} \widehat{r}_i^{\tau } \big)$ and $w_t=W_t/Z_t$ (see lines~\ref{line:updateS11} and \ref{line:wt-update-Wt} of Algorithm~\ref{alg:main1}), the above inequality can be equivalently expressed as 
\begin{align}
 \eta \sum_{\tau = t_1}^{t_2 - 1} \widehat{r}_i^{\tau } -\log(Z^{t_2}/Z^{t_1})\geq x. 
	\label{eq:w-t2-t1-gap-equiv}
\end{align}
Moreover, recognizing that
\begin{align*}
\log(Z^{t_{2}}/Z^{t_{1}}) & =\log\left(\frac{\sum_{i\in[k]}W_{i}^{t_{1}}\exp\big(\eta\sum_{\tau=t_{1}}^{t_{2}-1}\widehat{r}_{i}^{\tau}\big)}{\sum_{i\in[k]}W_{i}^{t_{1}}}\right)\geq-\eta(t_{2}-t_{1})
\end{align*}
and $\widehat{r}_i^{\tau}\leq 1$ for any $1\leq \tau \leq T$, we can invoke \eqref{eq:w-t2-t1-gap-equiv} to show that
\begin{align}
x\leq 2(t_2-t_1)\eta, 
\end{align}
from which the claimed inequality \eqref{eq:segment-length-t2-t1} follows.

\paragraph{Part 2.}
%
%
We now turn to the remaining claims of Lemma~\ref{lemma:cut}. 
For each hypothesis $h\in \mathcal{H}$, let us introduce the following vector $v_h\in \mathbb{R}^k$:
\begin{align}
	v_h=[v_{h,i}]_{i\in [k]} \qquad \text{with}~~ v_{h,i} = L(h, \basis_i) - \mathsf{OPT}.
\end{align}
Given the $\varepsilon_1$-optimality of $h^t$ (see Lemma~\ref{lemma:main}), 
we have the following property that holds for any $1\leq \tau,t \leq T$:  
\begin{align}
	\langle v_{h^{\tau}}, w^{t} \rangle \geq 
	\min_{h\in \mathcal{H}}\langle v_{h}, w^{t} \rangle \geq
	\langle v_{h^t},  w^{t} \rangle -\varepsilon_1= v^t - \varepsilon_1,\label{eq:s2}
\end{align}
where we recall the definition of $v^t$ in \eqref{eq:defn-vt-gap-restated}. 
%
%
In the sequel, we divide the proof into a couple of steps.

\paragraph{Step 1: decomposing the KL divergence between $w^{t}$ and $w^{t_2}$.} 
Let us write
$$
	W^t_i = \exp\left(\eta \sum_{\tau = 1}^t  \widehat{r}_i^{\tau} \right)
	= \exp\left(\eta \sum_{\tau = 1}^t \big( v_{h^{\tau},i} + \mathsf{OPT}+ \xi^{\tau}_{i}\big) \right)
	\qquad \text{with }\xi_i^{\tau}= \widehat{r}_i^{\tau}- v_{h^{\tau},i}-\mathsf{OPT}, 
$$
where $\xi_i^{\tau} = \widehat{r}_i^{\tau} - L(h^{\tau},\basis_i)$ is clearly a zero-mean random variable. 
Define
$$
	\Delta_{t_1,t_2}=\sum_{\tau = t_1}^{t_2-1}\xi^{\tau}.
$$
Taking $W^t=[W^t_i]_{i\in [k]}$ and denoting by $\log(x/y)$ the vector $\{\log(x_i/y_i)\}_{i\in [k]}$ for two $k$-dimensional vectors $(x,y)$, 
one can then deduce that
\begin{align}
\left\langle \frac{1}{\eta}\log\left(\frac{W^{t_{2}}}{W^{t_{1}}}\right)-\Delta_{t_{1},t_{2}}, \,w^{t}\right\rangle -(t_{2}-t_{1})\mathsf{OPT}=\sum_{\tau=t_{1}}^{t_{2}-1}\langle v_{h^{\tau}},w^{t}\rangle
	\geq(t_{2}-t_{1})(v^{t}-\varepsilon_{1}),
	\label{eq:log-Wt2-Wt1-LB-123}
\end{align}
where the last inequality results from \eqref{eq:s2}.

Recall that $Z^t = \sum_{i=1}^k W^t_i$ and $w^t_i = \frac{W^t_i}{Z^t}$. 
By taking $t_1=t$, we can derive from \eqref{eq:log-Wt2-Wt1-LB-123} that
\begin{align}
\left\langle \log\left(\frac{w^{t_{2}}}{w^{t}}\right)-\eta\Delta_{t,t_{2}},w^{t}\right\rangle +\log\left(\frac{Z^{t_{2}}}{Z^{t}}\right)-\eta(t_{2}-t)\mathsf{OPT}\geq\eta(t_{2}-t)(v^{t}-\varepsilon_{1}).
\end{align}
As it turns out, this inequality allows us to bound the KL divergence between $w^t$ and $w^{t_2}$ as follows: 
\begin{align}
\mathsf{KL}(w^{t}\,\|\,w^{t_{2}}) & \coloneqq\left\langle w^{t},\log\left(\frac{w^{t}}{w^{t_{2}}}\right)\right\rangle \nonumber\\
 & \leq\log(Z^{t_{2}}/Z^{t})-\eta(t_{2}-t)\mathsf{OPT}-\eta\langle w^{t},\Delta_{t,t_{2}}\rangle+\eta(t_{2}-t)(\varepsilon_{1}-v^{t}).
	\label{eq:s3}
\end{align}
In what follows, we shall cope with the right-hand side of \eqref{eq:s3}.

\paragraph{Step 2: bounding the term $\log(Z^{t_2}/Z^{t})$.} 
%
With probability exceeding $1-2T^2k\delta'$, it holds that
\begin{align}
\log(Z^{t_{2}}/Z^{t}) & =\sum_{\tau=t}^{t_{2}-1}\log(Z^{\tau+1}/Z^{\tau})=\sum_{\tau=t}^{t_{2}-1}\log\left(\sum_{i\in[k]}\frac{W_{i}^{\tau}\exp(\eta\widehat{r}_{i}^{\tau})}{\sum_{j\in[k]}W_{i}^{\tau}}\right)\nonumber\\
 & \overset{\mathrm{(i)}}{=}\sum_{\tau=t}^{t_{2}-1}\log\left(\sum_{i=1}^{k}w_{i}^{\tau}\exp(\eta\widehat{r}_{i}^{\tau})\right)\overset{\mathrm{(ii)}}{\leq}\sum_{\tau=t}^{t_{2}-1}\log\left(\sum_{i=1}^{k}w_{i}^{\tau}+\sum_{i=1}^{k}w_{i}^{\tau}(\eta\widehat{r}_{i}^{\tau})+2\sum_{i=1}^{k}w_{i}^{\tau}\eta^{2}(\widehat{r}_{i}^{\tau})^{2}\right)\nonumber\\
	& \overset{\mathrm{(iii)}}{\leq}\sum_{\tau=t}^{t_{2}-1}\log\left(1+\eta\sum_{i=1}^{k}w_{i}^{\tau}\widehat{r}_{i}^{\tau}+2\eta^{2}\right)\leq\sum_{\tau=t}^{t_{2}-1}\left(\eta\sum_{i=1}^{k}w_{i}^{\tau}\widehat{r}_{i}^{\tau}+2\eta^{2}\right)\nonumber\\
 & \overset{\mathrm{(iv)}}{=}\eta\sum_{\tau=t}^{t_{2}-1}v^{\tau}+\eta(t_{2}-t)\mathsf{OPT}+\eta\sum_{\tau=t}^{t_{2}-1}\sum_{i=1}^{k}\big\langle w_{i}^{\tau},\widehat{r}_{i}^{\tau}-v_{h^{\tau},i}-\mathsf{OPT}\big\rangle+2(t_{2}-t)\eta^{2}\nonumber\\
 & \overset{\mathrm{(v)}}{\leq}\eta(t_{2}-t)\varepsilon_{1}+\eta(t_{2}-t)\mathsf{OPT}+\eta\sum_{\tau=t}^{t_{2}-1}\sum_{i=1}^{k}\big\langle w_{i}^{\tau},\widehat{r}_{i}^{\tau}-v_{h^{\tau},i}-\mathsf{OPT}\big\rangle+2(t_{2}-t)\eta^{2}.
	\label{eq:s4}
\end{align}
Here, (i) comes from line~\ref{line:wt-update-Wt} of Algorithm~\ref{alg:main1}, 
(ii) follows from the elementary inequality $\exp(x)\leq1+x+2x^{2}$ for any $x\leq1$, 
(iii) is valid since $\sum_i w_i^{\tau}=1$ and $|\widehat{r}_i^{\tau}|\leq 1$, 
(iv) holds due to the fact that $v^t = \langle w^t, v_{h^t}\rangle $, and (v) arises from the fact that 
$v^{\tau}\leq \varepsilon_1$ (see \eqref{eq:vt-UB-135}).

\paragraph{Step 3: bounding the weighted sum of $\{\xi_{i}^{\tau}\}$.}
Next, we intend to control the two random terms below: 
\begin{subequations}
	\label{eq:xx21}
\begin{align}
\eta\sum_{\tau=t}^{t_{2}-1}\sum_{i=1}^{k}\big\langle w_{i}^{\tau},\widehat{r}_{i}^{\tau}-v_{h^{\tau},i}-\mathsf{OPT}\big\rangle & =\eta\sum_{\tau=t}^{t_{2}-1}\sum_{i=1}^{k}w_{i}^{\tau}\xi_{i}^{\tau},\\
\eta\big\langle w^{t},\Delta_{t,t_{2}}\big\rangle & =\eta\sum_{\tau=t}^{t_{2}-1}\sum_{i=1}^{k}w_{i}^{t}\xi_{i}^{\tau}.
\end{align}
\end{subequations}
Let $\mathcal{F}^{\tau}$  denote what happens before the $\tau$-th round in Algorithm~\ref{alg:main1}. 
Two properties are worth noting. 
\begin{itemize}
	\item The variance of $\xi_i^{\tau}$ is at most $O\big( \frac{1}{k\overline{w}^{\tau}_i} \big)$, 
		according to the update rule (see line~\ref{line:updateS} in Algorithm~\ref{alg:main1}); 
	\item $\{\xi_i^{\tau}\}_{i\in [k]}$ are independent conditioned on $\mathcal{F}^{\tau}$.
\end{itemize}
Let us develop bounds on the two quantities in \eqref{eq:xx21} below. 
\begin{itemize}
	\item 
Letting  $q^{\tau} = \sum_{i=1}^k w_i^{t}\xi_i^{\tau}$, one sees that 
	\begin{align}
		|q^{\tau}|\leq 1, \qquad \mathbb{E}[q^{\tau}| \mathcal{F}^{\tau}]=0 \qquad \text{and} \qquad 
	\mathsf{Var}[q^{\tau}|\mathcal{F}^{\tau}] \leq \sum_{i=1}^k\frac{(w_i^{t})^2}{k\overline{w}_i^{\tau}} \leq \sum_{i=1}^k \frac{w_i^t}{k} = \frac{1}{k}.
	\end{align}
By virtue of Freedman's inequality (cf.~Lemma~\ref{freedman}), with probability at least $1-\delta'$ one has
\begin{align}
\left|\sum_{\tau=t}^{t_2-1}\sum_{i=1}^k w_i^t \xi_i^{\tau} \right| 
 \leq 2\sqrt{\frac{t_2-t}{k}\log(2/\delta') }+2\log(2/\delta');\label{eq:xx2}
\end{align}
 	\item
Regarding the other term, by letting $\widehat{q}^{\tau} = \sum_{i=1}^{k}w_i^{\tau}\xi_i^{\tau}$, we have 
$$
		|\widehat{q}^{\tau}|\leq 1, \qquad
		\mathbb{E}[\widehat{q}^{\tau}|\mathcal{F}^{\tau}]=0
		\qquad\text{and} \qquad
		\mathsf{Var}[\widehat{q}^{\tau}|\mathcal{F}^{\tau}]\leq \sum_{i=1}^k \frac{(w_i^{\tau})^2}{k\overline{w}_i^{\tau}} \leq \sum_{i=1}^k \frac{w_i^{\tau}}{k} = \frac{1}{k}.
$$
Invoke Freedman's inequality (cf.~Lemma~\ref{freedman}) once again to show that, with probability exceeding $1-\delta'$, 
\begin{align} \left|\sum_{\tau=t}^{t_2-1}\sum_{i=1}^k w_i^{\tau}\xi_i^{\tau} \right| \leq  2\sqrt{\frac{t_2-t}{k}\log(2/\delta') }+2\log(2/\delta').\label{eq:xx3}
\end{align}

\end{itemize}

\paragraph{Step 4: bounding the KL divergence between $w^{t}$ and $w^{t_2}$.} 
Combining \eqref{eq:s3}, \eqref{eq:s4}, \eqref{eq:xx2} and \eqref{eq:xx3}, and applying the union bound over $(t,t_2)$, 
we can demonstrate that with probability at least $1-6T^4k\delta'$, 
\begin{align}
\mathsf{KL}(w^t\,\|\, w^{t_2}) & \leq 2(t_2-t)\eta\varepsilon_1 - (t_2-t)\eta v^t  \nonumber
\\ & \quad \quad \quad + 4\eta\sqrt{\frac{(t_2-t)\log(2/\delta')}{k}}+2(t_2-t)\eta^2+4\eta\log(2/\delta')
	\label{eq:s6}
\end{align}
holds for any $1\leq t<t_2\leq T$.
The analysis below then operates under the condition that $\eqref{eq:s6}$ holds for any $1\leq t< t_2\leq T$.

\paragraph{Step 5: connecting the KL divergence with the advertised properties.}  
Set
\begin{subequations}
	\label{eq:defn-tauj-Lemma17}
\begin{align}
	\tau_{\widehat{j}} &\coloneqq \min\{\tau \mid t_1\leq \tau \leq t_2-1, \,  -v^{\tau}\leq 2^{-(\widehat{j}-1)}  \},
	\qquad 1\leq \widehat{j} \leq j_{\mathrm{max}}
	; \\
	\tau_{j_{\mathrm{max}+1}}& \coloneqq t_2. 
\end{align}
\end{subequations}
By definition, we know that $\tau_1 = t_1$ and $\tau_{j_2}\geq \tau_{j_1}$ for $j_2\geq j_1$. 
Let $\mathcal{I}$ be the index set associated with this segment $[t_1,t_2]$, and 
set $y_{j} \coloneqq \sum_{i\in \mathcal{I}}w_i^{\tau_j}$. 
We then claim that there exists $1\leq \widetilde{j} \leq j_{\mathrm{max}}$ such that
\begin{align}
  \log\left( \frac{y_{\widetilde{j}+1} }{y_{\widetilde{j}}} \right) &\geq \frac{x}{\log_2(1/\eta)+1}. \label{eq:ss1} 
\end{align}
%
%
\begin{proof}[Proof of \eqref{eq:ss1}]
	Suppose that none of $1\leq \widetilde{j}\leq j_{\mathrm{max}}$ satisfies \eqref{eq:ss1}.  Then for any $1\leq \widehat{j}\leq j_{\mathrm{max}}$, it holds that $\log\left(  \frac{y_{\widehat{j}+1}}{y_{\widehat{j}}}\right) < \frac{x}{\log_2(1/\eta)+1}$, which implies that $y_{\widehat{j}}> y_{\widehat{j}+1}\exp\left(-\frac{x}{\log_2(1/\eta)+1}\right)$. 
As a result, we have 
\begin{align*}
	y_{1} &> y_{j_{\mathrm{max}}+1}\cdot \exp\left( -j_{\mathrm{max}}\cdot \frac{x}{\log_2(1/\eta)+1}\right) 
	= \bigg( \sum_{i\in \mathcal{I}}w_i^{t_2} \bigg) \cdot \exp\left( -j_{\mathrm{max}}\cdot \frac{x}{\log_2(1/\eta)+1}\right) \\
	&\geq p\exp(x) \cdot \exp(-x) 
	= p,
\end{align*}
thus leading to contradiction (as according to the definition of the $(p,q,x)$-segment, one has $y_1\leq p$).
\end{proof}

Now, assume that $\widetilde{j}$ satisfies \eqref{eq:ss1}. 
From the definition of the $(p,q,x)$-segment, we have $y_{\widetilde{j}}\geq q$.
%
%
%
It follows from \eqref{eq:s6} that
\begin{align}
	\mathsf{KL}(w^{\tau_{\widetilde{j}}}\,\|\, w^{\tau_{\widetilde{j}+1}}) & \leq 2 (\tau_{\widetilde{j}+1}-\tau_{\widetilde{j}})\eta \varepsilon_1 + (\tau_{{\widetilde{j}}+1}-\tau_{\widetilde{j}} )\eta 2^{-(\widetilde{j}-1)} \nonumber
\\ & \quad + 4\eta \sqrt{\frac{(\tau_{\widetilde{j}+1}-\tau_{\widetilde{j}})\log(2/\delta')}{k}}+2(\tau_{\widetilde{j}+1}-\tau_{\widetilde{j}})\eta^2 +4\eta\log(2/\delta').
\end{align}
Since $\log\left( \frac{y_{\widetilde{j}+1}}{y_{\widetilde{j}}}\right)\geq \frac{x}{\log_2(1/\eta)+1}$ and $y_{\widetilde{j}}\geq q$, 
we can invoke Lemma~\ref{lemma:klbound} and Lemma~\ref{lemma:klcmp} to show that 
$$\mathsf{KL}(w^{\tau_{\widetilde{j}}} \,\|\,w^{\tau_{\widetilde{j}+1}})\geq 
\mathsf{KL}\left(\mathsf{Ber}\big(y_{\widetilde{j}}\big)\,\|\,\mathsf{Ber}\big(y_{\widetilde{j}+1}\big)\right) 
\geq \frac{qx^2}{4\big(\log_2(1/\eta)+1\big)^2},$$ 
where $\mathsf{Ber}(x)$ denote the Bernoulli distribution with  mean $x\in [0,1]$. 
As a result, we can obtain 
\begin{align}
	\frac{qx^2 }{4\big(\log_2(1/\eta)+1 \big)^2} & \leq 2 (\tau_{\widetilde{j}+1}-\tau_{\widetilde{j}})\eta \varepsilon_1 + (\tau_{{\widetilde{j}}+1}-\tau_{\widetilde{j}} )\eta 2^{-({\widetilde{j}}-1)} \nonumber
\\ & \quad + 4\eta \sqrt{\frac{(\tau_{\widetilde{j}+1}-\tau_{\widetilde{j}})\log(2/\delta')}{k}}+2(\tau_{\widetilde{j}+1}-\tau_{\widetilde{j}})\eta^2 +4\eta\log(2/\delta'), 
\end{align}
which in turn results in
\begin{align}
\tau_{\widetilde{j}+1}-\tau_{\widetilde{j}} & \geq\min\left\{ \frac{qx^{2}}{100\big(\log_{2}(1/\eta)+1\big)^{2}}\min\left\{ \frac{1}{\eta\varepsilon_{1}},\frac{2^{\widetilde{j}-1}}{\eta},\frac{1}{\eta^{2}}\right\} ,\frac{kq^{2}x^{4}}{10000\eta^{2}\log(1/\delta')\big(\log_{2}(1/\eta)+1\big)^{4}}\right\} \nonumber\\
%
 & =\frac{qx^{2}}{100\big(\log_{2}(1/\eta)+1\big)^{2}}\min\left\{ \frac{2^{\widetilde{j}-1}}{\eta},\frac{1}{\eta^{2}}\right\}  
	\label{eq:xx411} \\
&= \frac{qx^{2}}{100\big(\log_{2}(1/\eta)+1\big)^{2}}\frac{2^{\widetilde{j}-1}}{\eta}. 
	\label{eq:xx4111}
\end{align}
Here, to see why \eqref{eq:xx411} and \eqref{eq:xx4111} hold, 
it suffices to note that 
\[
	\frac{qx^{2}}{100}\cdot\frac{2^{\widetilde{j}-1}}{\eta}
	\leq
	\frac{qx^{2}}{100}\cdot\frac{1}{\eta^2}
	\leq 
	\frac{kq^{2}x^{4}}{10000\eta^{2}\log(1/\delta')\big(\log_{2}(1/\eta)+1\big)^{2}},
\]
a property that arises from 
the fact that $2^{\widetilde{j}-1}\leq 1/\eta $ 
and the assumption that $\frac{qx^2}{50(\log_2(1/\eta)+1)^2}\geq  \frac{1}{k}$.

With \eqref{eq:xx4111} in mind, we are ready to finish the proof by dividing into two cases.
\begin{itemize}

	\item {\em Case 1: $\widetilde{j}= j_{\mathrm{max}}$}. It follows from \eqref{eq:xx4111} that
\begin{align*}
t_{2}-t_{1}\geq\tau_{\widetilde{j}+1}-\tau_{\widetilde{j}} & \geq\frac{qx^{2}}{100\big(\log_{2}(1/\eta)+1\big)^{2}}\frac{2^{j_{\mathrm{max}}-1}}{\eta}
	\geq\frac{qx^{2}}{200\big(\log_{2}(1/\eta)+1\big)^{2}\eta^2}
	.
\end{align*}

	\item {\em Case 2: $1\leq \widetilde{j}\leq j_{\mathrm{max}}-1$}. It comes from the definition \eqref{eq:defn-tauj-Lemma17} that 
$$
	\sum_{\tau=t_1}^{t_2-1}\mathds{1}\big\{-v^{\tau}\geq 2^{-\widehat{j}}\big\}
	\geq \sum_{\tau=t_1}^{t_2-1}\mathds{1}\big\{-v^{\tau}> 2^{-\widehat{j}}\big\}
	\geq \tau_{\widehat{j}+1}-t_1\geq \tau_{\widehat{j}+1}-\tau_{\widehat{j}}
	\qquad \text{for any }1\leq \widehat{j} \leq j_{\mathrm{max}}-1.
$$
When $1\leq \widetilde{j}\leq j_{\mathrm{max}}-1$, the above display taken collectively with \eqref{eq:xx4111} gives 
\begin{align}
	\sum_{\tau=t_1}^{t_2-1}\mathds{1}\big\{ -v^{\tau}\geq 2^{-\widetilde{j}} \big\}
	\geq  \tau_{\widetilde{j}+1}-\tau_{\widetilde{j}}\geq \frac{qx^2\cdot 2^{\widetilde{j}-1} }{100\big(\log_2(1/\eta)+1\big)^2\eta}
		,
\end{align}
and as a result, 
\begin{align}
\sum_{\tau=t_{1}}^{t_{2}-1}\sum_{\widehat{j}=1}^{j_{\mathrm{max}}-1}\mathds{1}\big\{\max\{-v^{\tau},0\}\geq2^{-\widehat{j}}\big\} & 2^{-(\widehat{j}-1)}\geq\sum_{\tau=t_{1}}^{t_{2}-1}\mathds{1}\big\{-v^{\tau}\geq2^{-\widetilde{j}}\big\}2^{-(\widetilde{j}-1)}\geq\frac{qx^{2}}{100\big(\log_{2}(1/\eta)+1\big)^{2}\eta}.
\label{eq:lls-123}
\end{align}
By observing that 
$$
	\sum_{\widehat{j}=1}^{\infty}\mathds{1}\big\{ x\geq 2^{-\widehat{j}}\big\} \cdot 2^{-(\widehat{j}-1)}\leq 4x 
$$
holds for any $x\geq 0$, we can combine this fact with \eqref{eq:lls-123} to derive
\begin{align}
	4\sum_{\tau=t_{1}}^{t_{2}-1}\max\{-v^{\tau},0\} & \geq\frac{qx^{2}}{100\big(\log_{2}(1/\eta))+1\big)^{2}\eta}.
	\label{eq:sum-max-vtau-proof}
\end{align}
Furthermore, recalling that $v^{\tau}\leq \varepsilon_1$ (cf.~\eqref{eq:vt-UB-135}), one can deduce that
\[
	\sum_{\tau=t_{1}}^{t_{2}-1}\max\{-v^{\tau},0\}=\sum_{\tau=t_{1}}^{t_{2}-1}(-v^{\tau})-\sum_{\tau=t_{1}}^{t_{2}-1}\min\{-v^{\tau},0\}\leq\sum_{\tau=t_{1}}^{t_{2}-1}(-v^{\tau})+(t_{2}-t_{1})\varepsilon_{1},
\]
which combined with \eqref{eq:sum-max-vtau-proof} yields
\[
	4(t_{2}-t_{1})\varepsilon_{1}+4\sum_{\tau=t_{1}}^{t_{2}-1}(-v^{\tau})\geq\frac{qx^{2}}{100\big(\log_{2}(1/\eta)+1\big)^{2}\eta}.
\]

\end{itemize}
The proof is thus complete.

\subsection{Additional illustrative figures for segment construction}
\label{sec:additional-figs}

In this section, we provide several illustrative figures  to help the readers  understand 
the concept of ``segments'' and certain key properties,  
which play an important role in the segment construction described in Section~\ref{sec:tec22} and the proof of Lemma~\ref{lemma:part3}. 


\begin{itemize}
\item

 Figure~\ref{fig1} illustrates an example in which any two segments either coincide or are disjoint. 
 In this case, our heuristic arguments in Section~\ref{sec:tec22} about ``shared intervals'' and ``disjoint intervals'' become applicable.

%

\item Figure~\ref{fig2} gives 
 an example where two non-identical segments might overlap. Due to the presence of non-disjoint segments, our heuristic arguments in Section~\ref{sec:tec22} about ``shared intervals'' and ``disjoint intervals'' are not readily applicable.


\item 
 In Figure~\ref{fig5}, we provide an example of the partition of blocks (as in the proof of Lemma~\ref{lemma:part2}), 
 whereas in Figure~\ref{fig6}, we illustrate how to align one side of the segments using a common inner point (as in the proof of Lemma~\ref{lemma:part3}).

\item
In Figure~\ref{fig7} and Figure~\ref{fig8},
we illustrate how to construct disjoint segments from a group of segments with common starting points in the case with $k=8$. 
In this particular example, we have in total $5$ groups of disjoint segments, marked with different colors. 

\end{itemize}

\begin{figure}[!h]
    \centering
{
	    \includegraphics[width=0.85\textwidth]{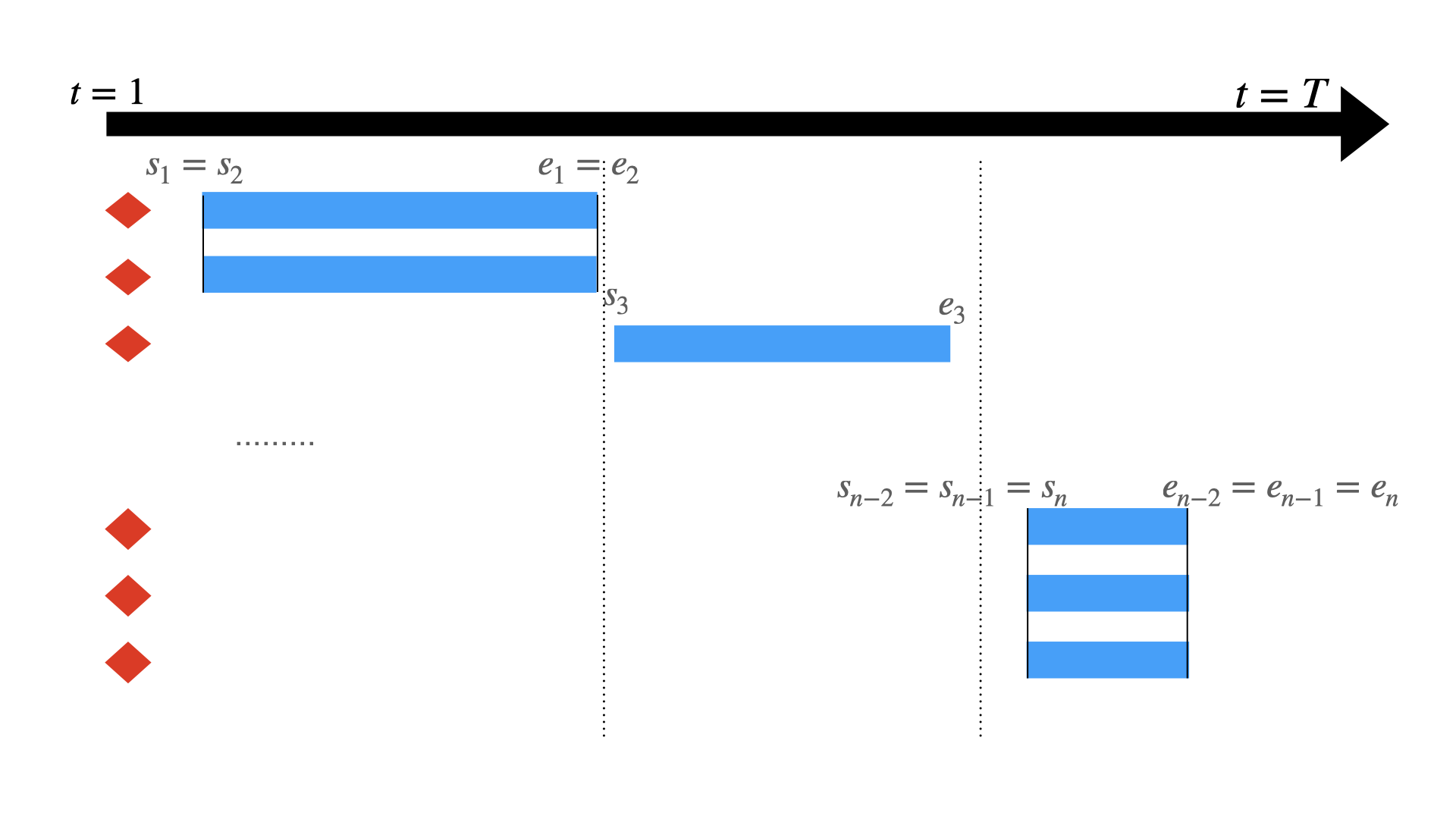}
     
    }
    \caption{An example where any two segments either coincide or are disjoint.}
   \label{fig1}
\end{figure}

\begin{figure}[!h]
    \centering

  {
	    \includegraphics[width=0.85\textwidth]{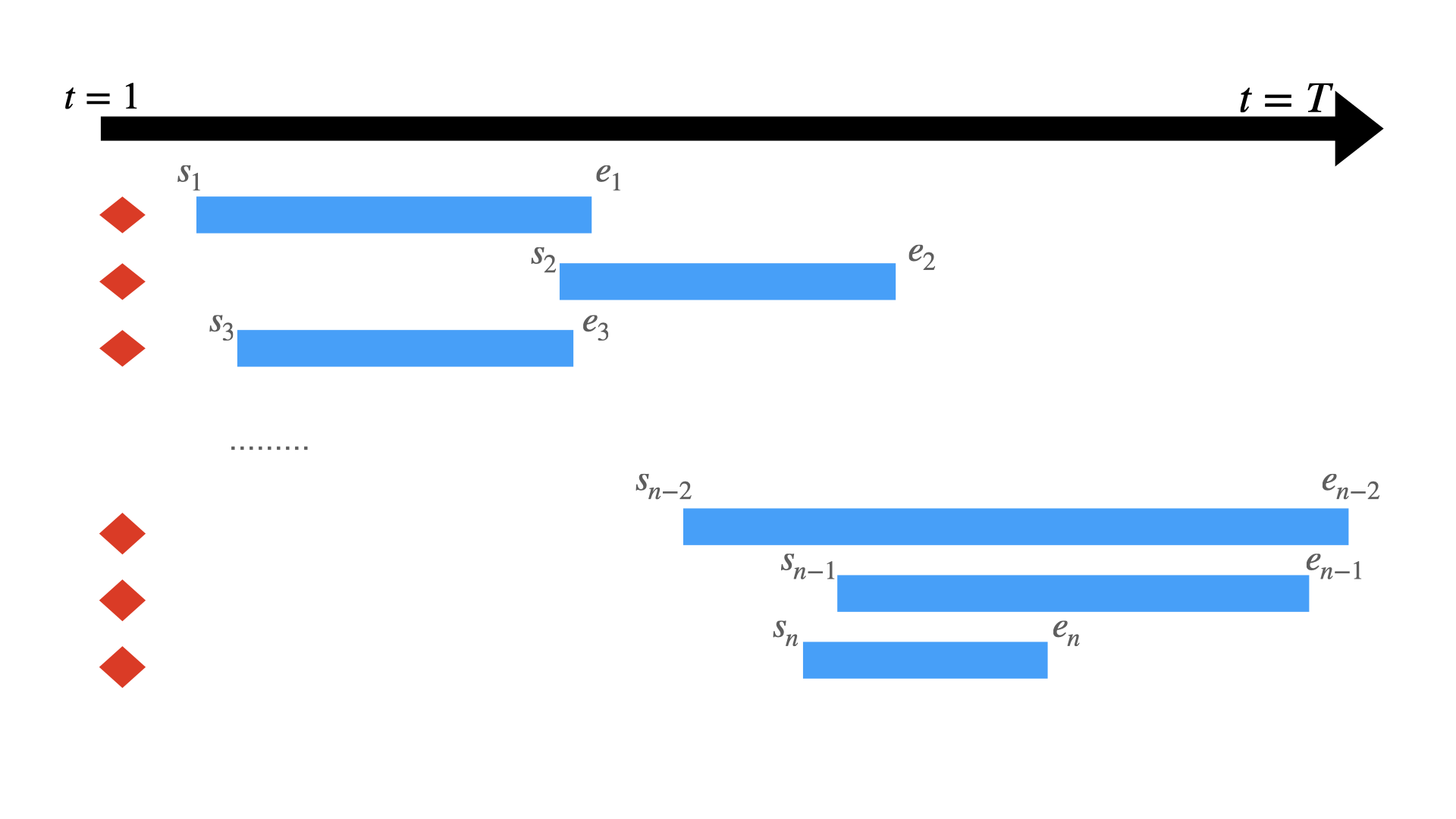}

    }
    \caption{An example where two non-identical segments might overlap.}
           \label{fig2}
\end{figure}

\begin{figure}[!h]
    \centering
  {
	    \includegraphics[width=0.85\textwidth]{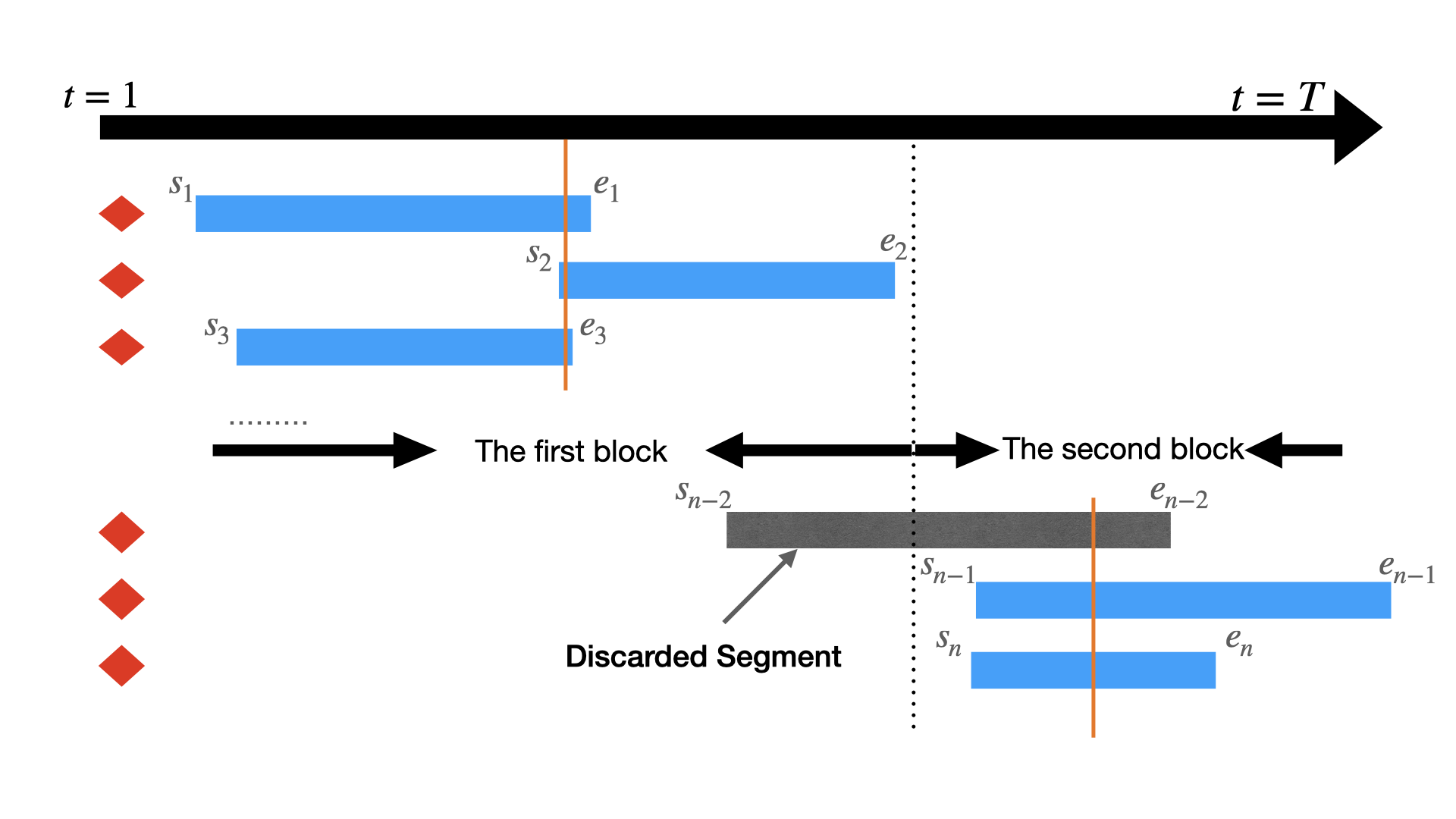}
      
    }
    \caption{Partition of blocks as in the proof of Lemma~\ref{lemma:part3}.}
     \label{fig5}
\end{figure}

\begin{figure}[!h]
    \centering
{
	    \includegraphics[width=0.85\textwidth]{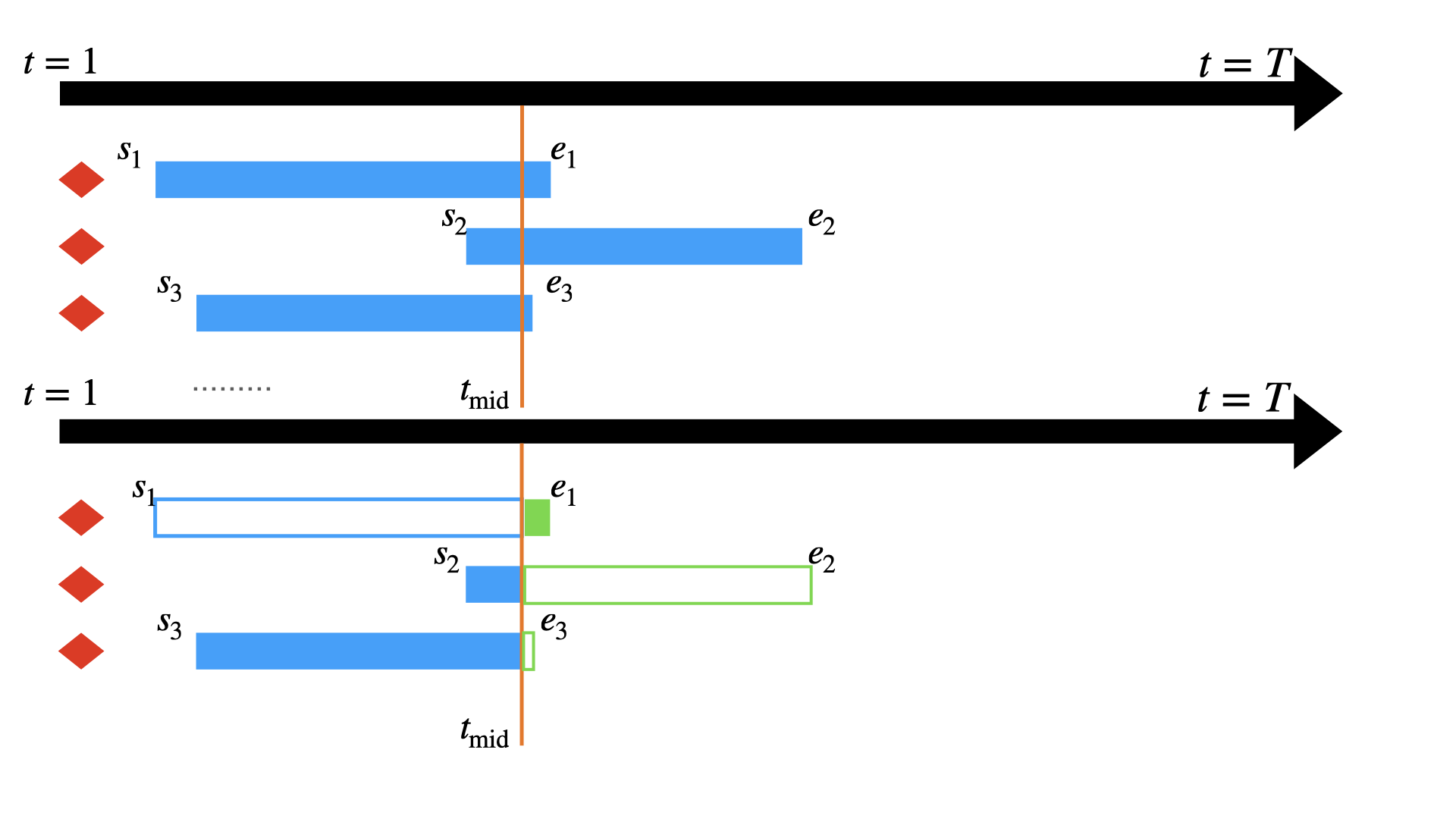}
      
    }
    \caption{
    An illustration of how we use a common inner point (i.e., $t_{\mathsf{mid}}$ to 
    align one side of the segments (as in the proof of Lemma~\ref{lemma:part3}). The unfilled part of the segments means that the weight changes from $w_i^{s_i}$ to $w_i^{t_{\mathrm{mid}}}$ (i.e., $\log\left( w_i^{t_{\mathrm{mid}}}/w_i^{s_i}\right)$) are not significant enough.}
     \label{fig6}
\end{figure}

\begin{figure}[!h]
    \centering
{
	    \includegraphics[width=0.85\textwidth]{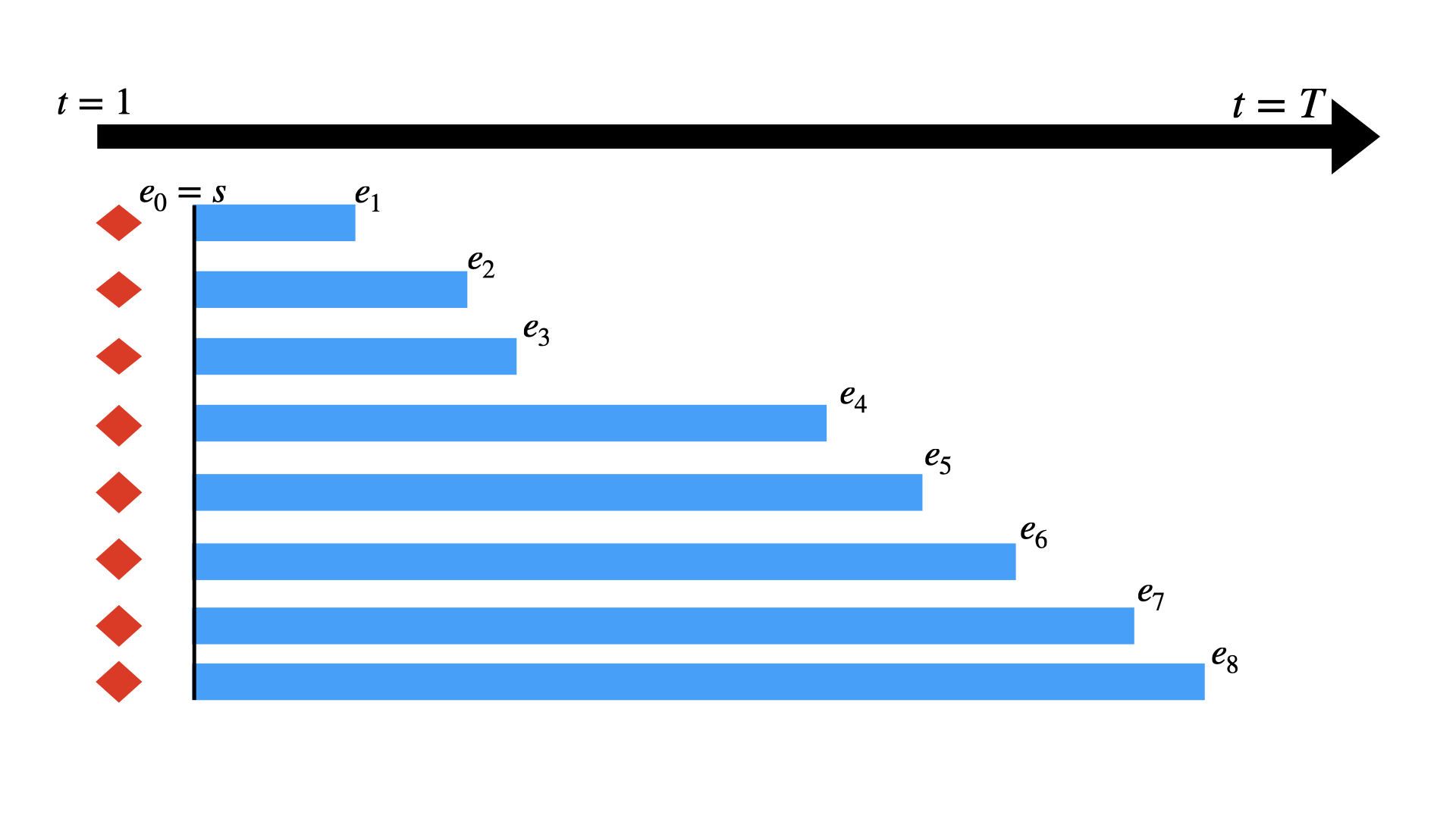}
    
    }
    \caption{A group of segments with common starting points.}
     \label{fig7}
\end{figure}

\begin{figure}[!h]
    \centering
{
	    \includegraphics[width=0.85\textwidth]{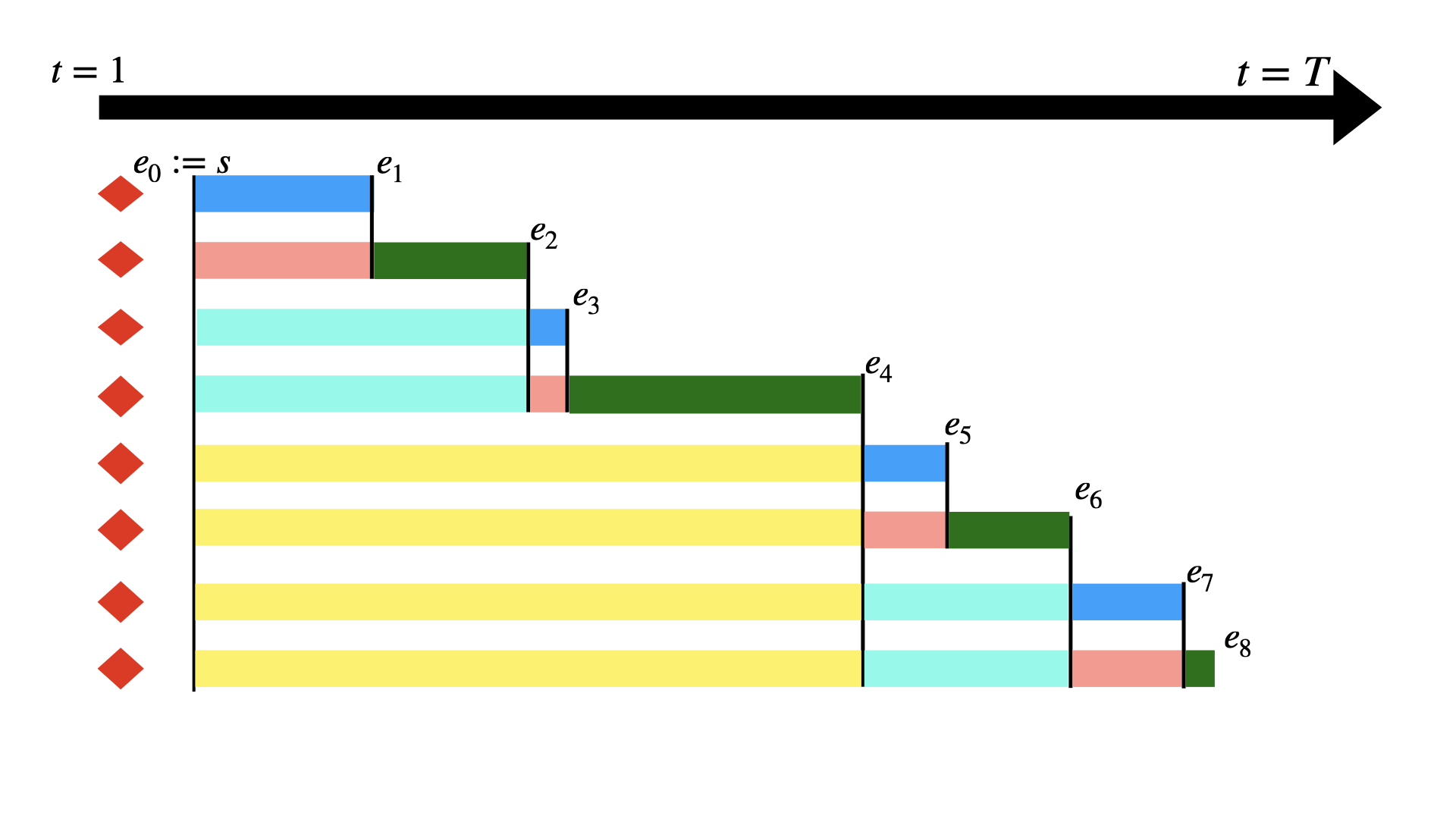}
 
    }
    \caption{Construction of 
    groups of disjoint segments, where each of the original segments is divided into at most $\log_2(k)+1$ sub-segments marked with different colors.}
          \label{fig8}
\end{figure}

\section{Proofs of the lower bound in Theorem~\ref{thm:lb}}
\label{sec:proof-thm:lb}

\subsection{Proof of Theorem~\ref{thm:lb}}
Note that $N_0 =\frac{2^d-1}{k}$. Set $N=kN_0+1=2^d $. 
  Set $\mathcal{X} = \{-1,0,1\}^{kN}$.  We set $\mathcal{Y} = \{1\}$ to be a set with only a single element. Without loss of generality, we write $\ell(h,(x,y))=\ell(h,x)$.

\paragraph{Our construction.}
We now introduce our construction, with the key components described below. 
\begin{itemize}
\item {\em Hypothesis class and loss function.} There are $N$ hypotheses in $\mathcal{H}$, where each hypothesis is assigned $k$ unique dimensions  (recall that $\mathcal{X}$ consists of $(KN)$-dimensional vectors). For each $h\in \mathcal{H}$, we let $\mathcal{I}_h = \{j_{h,i}\}_{i=1}^k$ denote the $k$ dimensions assigned to $h$, so that $\mathcal{I}_{h}\cap\mathcal{I}_{h'}=\emptyset$ for $h\neq h'$. 
We then define $h(x)$ and $\ell(h,x)$ as follows:
$$
	h(x) = \ell(h,x) = \begin{cases} 
 0, & \text{if }x_i = 0 \text{ for all }i\in \mathcal{I}_h; \\
 x_{i'}  \text{ with }i' = \arg\min_{i\in \mathcal{I}_h, x_i\neq 0} x_i, & \text{else}.
 \end{cases}
$$
%
Next, divide $\mathcal{H}$ arbitrarily into $(k+1)$ disjoint subsets as $$\mathcal{H}=(\cup_{i=1}^k \mathcal{H}_i) \cup \{h^{\star}\},$$ where each $\mathcal{H}_i$ contains $N_0$ hypotheses. 
In our construction, we intend to make $h^{\star}$ the unique optimal policy, and will design  properly so that the hypothesis $h\in \mathcal{H}_i$ performs poorly on the $i$-th distribution $\mathcal{D}_i$ ($i\in [k]$).

\item {\em Data distributions.}
We design the $k$ data distributions $\{\mathcal{D}_i\}_{i=1}^k$. For any given $i\in [k]$ and any $x\in \mathcal{X}$, we let 
$$
	\mathbb{P}_{\mathcal{D}_i}\{x\}= \prod_{l=1}^{kN}\mathbb{P}_{\mathcal{D}_{i,l}}\{x_{l}\}
$$
be a product distribution, where 
\begin{align*}
	\mathbb{P}_{\mathcal{D}_{i,l}}\{x_{l}\} & =\mathds1\{x_{l}=0\},\qquad &&l\notin\{j_{h,i}\mid h\in\mathcal{H}\},\\
	\mathbb{P}_{\mathcal{D}_{i,l}}\{x_{l}\} & =\frac{1}{2}\mathds1\{x_{l}=1\}+\frac{1}{2}\mathds1\{x_{l}=-1\},\qquad &&l\in\{j_{h,i}\mid h\notin\mathcal{H}_{i}\},\\
	\mathbb{P}_{\mathcal{D}_{i,l}}\{x_{l}\} & =\left(\frac{1}{2}+4\varepsilon\right)\mathds1\{x_{l}=1\}+\left(\frac{1}{2}-4\varepsilon\right)\mathds1\{x_{l}=-1\},\qquad &&l\in\{j_{h,i}\mid h\in\mathcal{H}_{i}\}.
\end{align*}
%
%

\end{itemize}
The above construction enjoys the following properties: 
\begin{enumerate}[label=(\roman*)]
	\item  $\ell(h,x)\in [-1,1]$ for any $h\in \mathcal{H}$ and $x\in \mathcal{X}$; \item $\mathbb{E}_{x\sim \mathcal{D}_i}[\ell(h,x)]=\mathbb{E}_{x\sim \mathcal{D}_{i,j_{h,i}}}[x_{j_{h,i}}] = 8\varepsilon \cdot \mathds{1}\{h\in \mathcal{H}_i\}$ for any $i\in [k]$ and $h\in \mathcal{H}$; 

	\item  the only $\varepsilon$-optimal hypothesis is $h^{\star}$, because for any $h\neq h^{\star}$, there exists some $i$ such that $h\in \mathcal{H}_i$; 

	\item $h(x)\in \{-1,1\}$ for $x\in \cup_{i=1}^k \mathrm{supp}(\mathcal{D}_i)$,\footnote{We denote by $\mathrm{supp}(\mathcal{D})$ the support of the distribution $\mathcal{D}$.} and  $|\mathcal{H}| =N =  kN_0 +1=2^d$, which imply that $$\mathsf{VC}\text{-}\mathsf{dim}(\mathcal{H})\leq \log_2(N)\leq d$$ over $\cup_{i=1}^k \mathrm{supp}(\mathcal{D}_i)$;

	\item $\ell(h,x)$ could be regarded as a function of $h(x)$ because  $\ell(h,x)=h(x)$.
\end{enumerate}

\paragraph{Sample complexity lower bound.} 
Before proceeding, 
let us introduce the notation $\mathsf{Query}(\mathcal{D}_i)$ such that: 
for each call  to $\mathsf{Query}(\mathcal{D}_i)$, we can obtain independent observations  $\{x_{j_{h,i}}\}_{h\in \mathcal{H}}$ where $x_{j_{h,i}}\sim \mathcal{D}_{i,j_{h,i}}$ for each $h\in \mathcal{H}$.
Now for $i\in [k]$, denote by $M_i$ the number of calls to $\mathsf{Query}(\mathcal{D}_i)$. Our aim is to show that: in order to distinguish $h^{\star}$ from $\mathcal{H}_i$, 
the quantity $M_i$ has to be at least $\Omega(d/\varepsilon^2)$.

Suppose now that there is an algorithm $\mathcal{G}$ with numbers of samples $\{M_i\}_{i=1}^k$ such that the output is $h^{\star}$ with probability at least $3/4$.  
Let $\mathbb{P}_{\mathcal{G}}\{\cdot\}$ and $\mathbb{E}_{\mathcal{G}}[\cdot]$ denote respectively the probability and expectation when Algorithm $\mathcal{G}$ is executed, 
and let $h_{\mathrm{out}}$ be the output hypothesis. It then holds that
\begin{align}
	\mathbb{P}_{\mathcal{G}}\{ h_{\mathrm{out}}=h^{\star}\} \geq \frac{3}{4}.\nonumber
\end{align}
Let $\Pi_{\mathcal{H}}$ 
be the set of permutations over $\mathcal{H}$, 
and let $\mathsf{Unif}(\Pi_{\mathcal{H}})$ be the uniform distribution over $\Pi(\mathcal{H})$.
With slight abuse of notation, for $x\in \{-1,0,1\}^{kN}$ and $\sigma\in \Pi_{\mathcal{H}}$, we define $\sigma(x)$ to be the vector $y$ such that $y_{j_{h,i}} = x_{j_{\sigma(h),i}}$ for all  $h\in \mathcal{H}$ and $i\in [k]$. 
Let $\mathcal{G}'$ be the algorithm with $\mathcal{H}$ replaced by $\sigma(\mathcal{H})$ in the input where $\sigma\sim \mathsf{Unif}(\Pi_{\mathcal{H}})$.
Recognizing that $\mathcal{G}$ returns the optimal hypothesis with probability at least $3/4$ for all problem instances, we can see that
\begin{align}
	\mathbb{P}_{\mathcal{G}'}\{ h_{\mathrm{out}}=h^{\star} \} \geq \frac{3}{4}.\nonumber
\end{align}

The lemma below then assists in bounding the probability of returning a sub-optimal hypothesis.

\begin{lemma}\label{lemma:lb1} Consider $\widetilde{i}\in [k]$.
	If $\mathbb{P}_{\mathcal{G}'}\{ h_{\mathrm{out}}=h^{\star}, M_i\leq m\} \geq 1/2$ for some $m\geq 0$, then for any $h\in \mathcal{H}_{\widetilde{i}}$, one has
\begin{align}
	\mathbb{P}_{\mathcal{G}'} \big\{ h_{\mathrm{out}}=h,M_{\widetilde{i}}\leq m \big\} 
	\geq  \frac{1}{2}\mathbb{P}_{\mathcal{G}'} \big\{ h_{\mathrm{out}}=h^{\star}, M_{\widetilde{i}} \leq m \big\}
	\exp\big( -80\sqrt{m}\varepsilon-40m\varepsilon^2 \big),\nonumber
\end{align}
and moreover, $m$ necessarily exceeds $m\geq \frac{\log(N_0/4)}{30000\varepsilon^2}$.
\end{lemma}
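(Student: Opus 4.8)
\textbf{Proof proposal for Lemma~\ref{lemma:lb1}.}

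The plan is to use a change-of-measure / likelihood-ratio argument combined with a symmetry (permutation-averaging) step, mirroring the classical lower-bound technique for best-arm identification but adapted to the multi-distribution setting. First I would set up the likelihood ratio carefully. Fix $\widetilde{i}\in[k]$ and $h\in\mathcal{H}_{\widetilde{i}}$. Under the instance used by $\mathcal{G}'$ (with the hidden permutation $\sigma$), the only coordinates that matter for distribution $\mathcal{D}_{\widetilde{i}}$ are $\{j_{g,\widetilde{i}}\}_{g\in\mathcal{H}}$, and among these, the coordinate of the truly optimal hypothesis $h^\star$ is a fair $\pm1$ coin while the coordinates of $g\in\mathcal{H}_{\widetilde{i}}$ are biased with mean $8\varepsilon$. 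Consider the alternative instance obtained by swapping the roles of $h^\star$ and $h$, i.e.\ where $h$ is now optimal: this only changes the marginal laws of two coordinates of $\mathcal{D}_{\widetilde{i}}$ (the ones associated with $h$ and with $h^\star$), from $(\tfrac12\pm4\varepsilon)$-biased and fair, to fair and $(\tfrac12\pm4\varepsilon)$-biased respectively. Since $\mathcal{G}'$'s output and the value of $M_{\widetilde{i}}$ are functions of the observed samples, I can write $\mathbb{P}_{\text{swap}}\{h_{\mathrm{out}}=h,\,M_{\widetilde{i}}\le m\}$ as an expectation under the original measure of the indicator times the likelihood ratio $\Lambda$ of the swapped-to-original laws restricted to the $M_{\widetilde{i}}$ calls to $\mathsf{Query}(\mathcal{D}_{\widetilde{i}})$. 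By the permutation symmetry built into $\mathcal{G}'$, the swapped instance is distributionally identical to the original one with $h$ and $h^\star$ relabelled, so $\mathbb{P}_{\text{swap}}\{h_{\mathrm{out}}=h,\cdots\}=\mathbb{P}_{\mathcal{G}'}\{h_{\mathrm{out}}=h^\star,\cdots\}$ — this is the key use of the uniform random permutation.

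Next I would lower-bound the likelihood ratio on the event $\{M_{\widetilde{i}}\le m\}$. On that event, $\Lambda$ is a product of at most $m$ factors, each being the ratio of the density of a single $\mathsf{Query}(\mathcal{D}_{\widetilde{i}})$ observation under the two instances; only the two swapped coordinates contribute nontrivially. Each per-sample log-likelihood-ratio term has magnitude bounded by $O(\varepsilon)$ and, crucially, under the original (conditioning) measure it has a negative mean of order $-\Theta(\varepsilon^2)$ (essentially minus the KL divergence between $\mathsf{Ber}(\tfrac12\pm4\varepsilon)$ and $\mathsf{Ber}(\tfrac12)$). Summing over $\le m$ samples, the log-likelihood ratio is at least (sum of mean terms) $-\Omega(m\varepsilon^2)$ plus a zero-mean martingale fluctuation which, being a sum of $\le m$ bounded increments of size $O(\varepsilon)$, has typical size $O(\sqrt{m}\,\varepsilon)$. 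This is where the bound $\exp(-80\sqrt{m}\varepsilon-40m\varepsilon^2)$ comes from. To make this rigorous without introducing uncontrolled probability losses, the cleanest route is a reverse-Jensen / second-moment step: use $\mathbb{E}[\mathbf{1}_A \Lambda]\ge \mathbb{P}(A)\exp(\mathbb{E}[\log\Lambda\mid A])$ via Jensen applied to the conditional measure, then bound $\mathbb{E}[\log\Lambda\mid A]\ge -40m\varepsilon^2 - 80\sqrt m\varepsilon$ using that $\mathbb{E}[(\log\Lambda)^2]=O(m\varepsilon^2)$ and Cauchy–Schwarz to absorb the conditioning on $A$ (whose probability is $\ge 1/2$ by hypothesis, costing only a constant factor). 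This yields the displayed inequality with the explicit factor $\tfrac12$ out front.

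For the second assertion — that $m\ge \log(N_0/4)/(30000\varepsilon^2)$ — I would sum the just-proved inequality over all $h\in\mathcal{H}_{\widetilde{i}}$. The events $\{h_{\mathrm{out}}=h,\,M_{\widetilde{i}}\le m\}$ for distinct $h\in\mathcal{H}_{\widetilde{i}}$ are disjoint, so their probabilities sum to at most $1$. On the other hand, the right-hand side sums to $|\mathcal{H}_{\widetilde{i}}|\cdot\tfrac12\,\mathbb{P}_{\mathcal{G}'}\{h_{\mathrm{out}}=h^\star,M_{\widetilde{i}}\le m\}\exp(-80\sqrt m\varepsilon-40m\varepsilon^2) = \tfrac{N_0}{2}\cdot(\text{something}\ge 1/2)\cdot\exp(-80\sqrt m\varepsilon-40m\varepsilon^2)$ under the hypothesis $\mathbb{P}_{\mathcal{G}'}\{h_{\mathrm{out}}=h^\star,M_{\widetilde{i}}\le m\}\ge 1/2$. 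Hence $\tfrac{N_0}{4}\exp(-80\sqrt m\varepsilon-40m\varepsilon^2)\le 1$, i.e.\ $80\sqrt m\varepsilon+40m\varepsilon^2\ge\log(N_0/4)$, which after bounding $\sqrt m\varepsilon\le \tfrac12(1+m\varepsilon^2)$ (or noting that for $\varepsilon<1/100$ the $\sqrt m\varepsilon$ term is dominated) gives $m\varepsilon^2\ge \log(N_0/4)/30000$, as claimed. The main obstacle I anticipate is the bookkeeping in the change-of-measure step: being careful that only two coordinates of $\mathcal{D}_{\widetilde{i}}$ change under the swap, that the permutation-averaging genuinely identifies $\mathbb{P}_{\text{swap}}\{h_{\mathrm{out}}=h\}$ with $\mathbb{P}_{\mathcal{G}'}\{h_{\mathrm{out}}=h^\star\}$ (one must check $\mathcal{G}'$ treats all hypotheses exchangeably, which it does because $\sigma$ is uniform and independent), and that the conditioning on $\{M_{\widetilde{i}}\le m\}$ — a data-dependent stopping-type event — is handled by working with the likelihood ratio stopped at $M_{\widetilde{i}}$ so that optional-stopping / martingale arguments apply cleanly. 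Constants will need mild slack but the stated numerology ($80$, $40$, $30000$) is comfortably loose.
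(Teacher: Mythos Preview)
Your high-level strategy --- swap $h$ and $h^\star$, use the permutation-averaging in $\mathcal{G}'$ to identify probabilities under the swapped instance with relabelled probabilities under the original one, control the likelihood ratio on the two affected coordinates of $\mathcal{D}_{\widetilde i}$, then sum over $h\in\mathcal{H}_{\widetilde i}$ --- is exactly the paper's, and your derivation of the bound on $m$ from the first inequality matches the paper's verbatim. There is, however, a direction error in your change-of-measure step. You set $B=\{h_{\mathrm{out}}=h,\,M_{\widetilde i}\le m\}$, write $\mathbb{P}_{\mathrm{swap}}(B)=\mathbb{E}_{\mathrm{orig}}[\mathbf{1}_B\Lambda]$ with $\Lambda=d\mathbb{P}_{\mathrm{swap}}/d\mathbb{P}_{\mathrm{orig}}$, identify the left side with $\mathbb{P}_{\mathcal{G}'}\{h_{\mathrm{out}}=h^\star,\ldots\}$, and then apply Jensen on an event ``of probability $\ge 1/2$.'' But $\mathbb{P}_{\mathrm{orig}}(B)$ is precisely what you are trying to lower-bound, and Jensen on $B$ yields $\mathbb{P}_{\mathcal{G}'}\{h_{\mathrm{out}}=h^\star\}\ge\mathbb{P}_{\mathcal{G}'}\{h_{\mathrm{out}}=h\}\exp(\cdots)$, the wrong inequality. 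The repair is to invoke the \emph{other} symmetry, namely $\mathbb{P}_{\mathrm{orig}}\{h_{\mathrm{out}}=h,\ldots\}=\mathbb{P}_{\mathrm{swap}}\{h_{\mathrm{out}}=h^\star,\ldots\}=\mathbb{E}_{\mathrm{orig}}[\mathbf{1}_A\Lambda]$ with $A=\{h_{\mathrm{out}}=h^\star,\,M_{\widetilde i}\le m\}$; now $\mathbb{P}_{\mathrm{orig}}(A)\ge 1/2$ by hypothesis and Jensen points the right way. With this correction your Jensen/Cauchy--Schwarz route works, provided you center $\log\Lambda$ before Cauchy--Schwarz so that only $\mathsf{Var}(\log\Lambda)=O(m\varepsilon^2)$ enters rather than the full second moment.

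The paper avoids conditional-Jensen altogether by a more elementary device: it pre-generates $m$ samples of the two relevant coordinates, restricts to the good set $\mathcal{V}=\{n^+(v^1)-n^+(v^2)\le 4\sqrt m+2m\varepsilon\}$ (which by Hoeffding has probability $\ge 3/4$, so intersecting with $\{h_{\mathrm{out}}=h^\star,\,M_{\widetilde i}\le m\}$ still has probability $\ge 1/2$), and observes that on $\mathcal{V}$ the likelihood ratio $\big((1-8\varepsilon)/(1+8\varepsilon)\big)^{n^+(v^1)-n^+(v^2)}$ is \emph{deterministically} at least $\exp(-80\sqrt m\varepsilon-40m\varepsilon^2)$. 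The symmetry is then applied pointwise, conditional on the observed data values, via a separate lemma (the paper's Lemma~\ref{lemma:sym}). This sidesteps the stopped-martingale and conditional-moment bookkeeping you anticipate and produces the stated constants directly; your route reaches the same conclusion once the direction is fixed, but the paper's good-set argument is the cleaner execution.
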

\begin{proof} See Appendix~\ref{sec:proof-lemma:lb1}.  \end{proof}

In view of Lemma~\ref{lemma:lb1}, we have 
 \begin{align}
	 \mathbb{P}_{\mathcal{G}'}\left\{ h_{\mathrm{out}}=h^{\star},M_{\widetilde{i}} < \frac{\log(N_0/4)}{30000\varepsilon^2}\right\} < \frac{1}{2}.
 \end{align}
Observing that $\mathbb{P}_{\mathcal{G}'}\{h_{\mathrm{out}}=h^{\star}\}\geq 3/4$, we can derive 
\begin{align*}
	\mathbb{P}_{\mathcal{G}'}\left\{ h_{\mathrm{out}}=h^{\star},M_{\widetilde{i}}\geq\frac{\log(N_{0}/4)}{30000\varepsilon^{2}}\right\}  & =\mathbb{P}_{\mathcal{G}'}\left\{ h_{\mathrm{out}}=h^{\star}\right\} -\mathbb{P}_{\mathcal{G}'}\left\{ h_{\mathrm{out}}=h^{\star},M_{\widetilde{i}}<\frac{\log(N_{0}/4)}{30000\varepsilon^{2}}\right\} \\
 & >\frac{3}{4}-\frac{1}{2}=\frac{1}{4},
\end{align*}
which implies that 
\begin{align*}
\mathbb{E}_{\mathcal{G}'}[M_{\widetilde{i}}] & \geq\frac{\log(N_{0}/4)}{30000\varepsilon^{2}}\cdot\mathbb{P}_{\mathcal{G}'}\left\{ M_{\widetilde{i}}\geq\frac{\log(N_{0}/4)}{30000\varepsilon^{2}}\right\} \geq\frac{\log(N_{0}/4)}{30000\varepsilon^{2}}\cdot\mathbb{P}_{\mathcal{G}'}\left\{ h_{\mathrm{out}}=h^{\star},M_{\widetilde{i}}\geq\frac{\log(N_{0}/4)}{30000\varepsilon^{2}}\right\} \\
 & \geq\frac{\log(N_{0}/4)}{120000\varepsilon^{2}}\geq\frac{d-\log_{2}(8k)}{120000\varepsilon^{2}}\geq\frac{d}{240000\varepsilon^{2}}.
\end{align*}
Summing over $i\in [k]$ gives
\begin{align}
\mathbb{E}_{\mathcal{G}'}\left[\sum_{i=1}^k M_{\widetilde{i}}\right]\geq \frac{dk}{240000\varepsilon^2},\label{eq:rer}
\end{align}
thereby concluding the proof.
%
%

\subsection{Proof of Lemma~\ref{lemma:lb1}}
\label{sec:proof-lemma:lb1}



	Consider any $h\in \mathcal{H}_{\widetilde{i}}$ (and hence $h\neq h^{\star}$). 
For $v = [v_p]_{1\leq p\leq m}\in \{-1,1\}^{m}$, we let $$ n^+(v)= \sum_{p=1}^m \mathds{1}\{v_p = 1\} $$ denote the number of $1$'s in the coordinates of $v$. 
Let $\mathcal{V}$ be a subset of $\{-1,1\}^{2m}$ defined as 
$$
	\mathcal{V} \coloneqq \big\{ v^1,v^2\in \{-1,1\}^m \mid n^+(v^1)-n^+(v^2)\leq 4\sqrt{m}+2m\varepsilon \big\}.
$$
	Let $\mathbb{P}_{\mathcal{C}}\{\cdot\}$ denote the probability distribution of $\left\{\{x_{j_{h,\widetilde{i}}}^{l}(\widetilde{i})\}_{l=1}^{m}, \{x_{j_{h^{\star},\widetilde{i}}}^{l}(\widetilde{i})\}_{l=1}^{m} \right\}$, and $\mathbb{P}_{\mathcal{C}'}\{\cdot\}$  the probability distribution of $\left\{\{x_{j_{h^{\star},\widetilde{i}}}^{l}(\widetilde{i})\}_{l=1}^{m}, \{x_{j_{h,\widetilde{i}}}^{l}(\widetilde{i})\}_{l=1}^{m} \right\}$.
%
Hoeffding's inequality then tells us that
$$
	\mathbb{P}_{\mathcal{C}}\{\mathcal{V}\}\geq \frac{3}{4}.
$$
Also, observe that 
%
\begin{align}
 & \mathbb{P}_{\mathcal{G}'}\left\{ h_{\mathrm{out}}=h^{\star},M_{\widetilde{i}}\leq m,\left\{ \{x_{j_{h,\widetilde{i}}}^{l}(\widetilde{i})\}_{l=1}^{m},\{x_{j_{h^{\star},\widetilde{i}}}^{l}(\widetilde{i})\}_{l=1}^{m}\right\} \in\mathcal{V}\right\} \notag\\
 & \qquad\geq\mathbb{P}_{\mathcal{G}'}\left\{ h_{\mathrm{out}}=h^{\star},M_{\widetilde{i}}\leq m\right\} -\left(1-\mathbb{P}_{\mathcal{C}}\left\{ \mathcal{V}\right\} \right)\geq\frac{3}{4}-\left(1-\frac{3}{4}\right)=\frac{1}{2},
	\label{eq:hardness_1}
\end{align}
namely,
\begin{align}
	\sum_{v=\{v^1,v^2\}\in \mathcal{V}}\mathbb{P}_{\mathcal{G}'}\left\{ h_{\mathrm{out}}=h^{\star}, M_{\widetilde{i}} \leq m, \{x_{j_{h,\widetilde{i}}}^{l}(\widetilde{i})\}_{l=1}^{m}= v^1, \{x_{j_{h^{\star},\widetilde{i}}}^{l}(\widetilde{i})\}_{l=1}^{m}= v^2 \right\}
	\geq \frac{1}{2}.\nonumber
\end{align}


In addition, for any $v=\{v^1,v^2\}\in \mathcal{V}$, it is readily seen that
\begin{align}
	\mathbb{P}_{\mathcal{C}'}\{v\} &=\mathbb{P}_{\mathcal{C}}\{v\} \cdot \left(1-8\varepsilon\right)^{n^+(v^1)-n^+(v^2)} (1+8\varepsilon)^{n^+(v^2)-n^+(v^1)} \nonumber
\\ & = \mathbb{P}_{\mathcal{C}}\{v\} \left( \frac{1-8\varepsilon}{1+8\varepsilon}\right)^{n^+(v^1)-n^+(v^2)} \nonumber
\\ & \geq \mathbb{P}_{\mathcal{C}}\{v\} \exp\Big(-20\big(n^+(v^1)-n^+(v^2)\big)\varepsilon \Big)\nonumber
\\ & \geq \mathbb{P}_{\mathcal{C}}\{v\} \exp(-80\sqrt{m}\varepsilon-40m\varepsilon^2),\nonumber
\end{align}
where the last line follows from the definition of $\mathcal{V}$. 
As a result, we can demonstrate that, for any $v=\{v^1,v^2\}\in \mathcal{V}$, 
\begin{align}
	& \mathbb{P}_{\mathcal{G}'}\left\{ h_{\mathrm{out}}=h, M_{\widetilde{i}} \leq m , \{x_{j_{h^{\star},\widetilde{i}}}^{l}(\widetilde{i})\}_{l=1}^{m}= v^1, \{x_{j_{h,\widetilde{i}}}^{l}(\widetilde{i})\}_{l=1}^{m}= v^2\right\} \nonumber
	\\ & = \mathbb{P}_{\mathcal{G}'}\left\{h_{\mathrm{out}}=h, M_{\widetilde{i}}\leq m \mid \{x_{j_{h^{\star},\widetilde{i}}}^{l}(\widetilde{i})\}_{l=1}^{m}= v^1, \{x_{j_{h,\widetilde{i}}}^{l}(\widetilde{i})\}_{l=1}^{m}= v^2\right\}\mathbb{P}_{\mathcal{C'}}\{v\} \nonumber
	\\ & \geq \mathbb{P}_{\mathcal{G}'}\left\{h_{\mathrm{out}}=h, M_{\widetilde{i}}\leq m \mid  \{x_{j_{h^{\star},\widetilde{i}}}^{l}(\widetilde{i})\}_{l=1}^{m}= v^1, \{x_{j_{h,\widetilde{i}}}^{l}(\widetilde{i})\}_{l=1}^{m}= v^2\right\} \mathbb{P}_{\mathcal{C}}\{v\} \exp(-80\sqrt{m}\varepsilon-40m\varepsilon^2)\nonumber
	\\& = \mathbb{P}_{\mathcal{G}'}\left\{ h_{\mathrm{out}}=h^{\star}, M_{\widetilde{i}}\leq m \mid  \{x_{j_{h,\widetilde{i}}}^{\ell}(\widetilde{i})\}_{\ell=1}^{m}= v^1, \{x_{j_{h^{\star},\widetilde{i}}}^{\ell}(\widetilde{i})\}_{\ell=1}^{m}= v^2\right\} \mathbb{P}_{\mathcal{C}}\{v\} \exp(-80\sqrt{m}\varepsilon-40m\varepsilon^2)\label{eq:lnk}
	\\ & = \mathbb{P}_{\mathcal{G}'}\left\{ h_{\mathrm{out}}=h^{\star}, M_{\widetilde{i}} \leq m, \{x_{j_{h,\widetilde{i}}}^{\ell}(\widetilde{i})\}_{\ell=1}^{m}= v^1, \{x_{j_{h^{\star},\widetilde{i}}}^{\ell}(\widetilde{i})\}_{\ell=1}^{m}= v^2\right\} \exp(-80\sqrt{m}\varepsilon-40m\varepsilon^2) .\nonumber
\end{align}
To see why \eqref{eq:lnk} holds, observe that (here,  for any $1\leq l\leq m$, let $v^{1}_{l}$ (resp.~$v^{2}_{l}$) be the $l$-th coordinate of $v^1$ (resp.~$v^2$)): 
%
\begin{align}
	& \mathbb{P}_{\mathcal{G}'}\left\{ h_{\mathrm{out}}=h, M_{\widetilde{i}}\leq m \mid  \{x_{j_{h^{\star},\widetilde{i}}}^{l}(\widetilde{i})\}_{l=1}^{m}= v^1, \{x_{j_{h,\widetilde{i}}}^{l}(\widetilde{i})\}_{l=1}^{m}= v^2\right\} \nonumber
	\\ & = \sum_{m'=1}^{m}\mathbb{P}_{\mathcal{G}'}\left\{ h_{\mathrm{out}}=h, M_{\widetilde{i}}=m' \mid  \{x_{j_{h^{\star},\widetilde{i}}}^{l}(\widetilde{i})\}_{l=1}^{m}= v^1, \{x_{j_{h,\widetilde{i}}}^{l}(\widetilde{i})\}_{l=1}^{m}= v^2\right\}  \nonumber
	\\ & = \sum_{m'=1}^{m}\mathbb{P}_{\mathcal{G}'}\left\{ h_{\mathrm{out}}=h, M_{\widetilde{i}}=m' \mid  \{x_{j_{h^{\star},\widetilde{i}}}^{l}(\widetilde{i})\}_{l=1}^{m'}= \{v^{1}_{l}\}_{l=1}^{m'}, \{x_{j_{h,\widetilde{i}}}^{l}(\widetilde{i})\}_{l=1}^{m'}= \{v^{2}_{l}\}_{l=1}^{m'}\right\}  \label{eq:new1}
	\\ & = \sum_{m'=1}^{m}\mathbb{P}_{\mathcal{G}'}\left\{ h_{\mathrm{out}}=h^{\star}, M_{\widetilde{i}}=m' \mid  \{x_{j_{h,\widetilde{i}}}^{l}(\widetilde{i})\}_{l=1}^{m'}= \{v^{1}_{l}\}_{l=1}^{m'}, \{x_{j_{h^{\star},\widetilde{i}}}^{l}(\widetilde{i})\}_{l=1}^{m'}= \{v^{2}_{l}\}_{l=1}^{m'}\right\}  \label{eq:new3}
	\\ & = \sum_{m'=1}^{m}\mathbb{P}_{\mathcal{G}'}\left\{ h_{\mathrm{out}}=h^{\star}, M_{\widetilde{i}}=m' \mid  \{x_{j_{h,\widetilde{i}}}^{l}(\widetilde{i})\}_{l=1}^{m}= v^1, \{x_{j_{h^{\star},\widetilde{i}}}^{l}(\widetilde{i})\}_{l=1}^{m}= v^2\right\} \label{eq:new2}
	\\ & = \mathbb{P}_{\mathcal{G}'}\left\{ h_{\mathrm{out}}=h^{\star}, M_{\widetilde{i}}\leq m \mid  \{x_{j_{h,\widetilde{i}}}^{l}(\widetilde{i})\}_{l=1}^{m}= v^1, \{x_{j_{h^{\star},\widetilde{i}}}^{l}(\widetilde{i})\}_{l=1}^{m}= v^2\right\} .\nonumber
\end{align}
where \eqref{eq:new3} results from Lemma~\ref{lemma:sym} in Appendix~\ref{sec:proof-lemma:sym}, and \eqref{eq:new1} and \eqref{eq:new2} hold since for $h'=h,h^{\star}$, the event $\{h_{\mathrm{out}}=h', M_{\widetilde{i}} = m'\}$ is independent of $\{x^{l}(\widetilde{i})\}_{l\geq m'+1}$.
Taking the sum over $v=\{v^1,v^2\}\in \mathcal{V}$, we obtain
\begin{align}
	& \mathbb{P}_{\mathcal{G}'}\{h_{\mathrm{out}}=h, M_{\widetilde{i}} \leq m \}\nonumber
	\\ &\geq \sum_{v\in \mathcal{V}}\mathbb{P}_{\mathcal{G}'}\left\{ h_{\mathrm{out}}=h, M_{\widetilde{i}} \leq m ,\{x_{j_{h^{\star},\widetilde{i}}}^{\ell}(\widetilde{i})\}_{\ell=1}^{m}= v^1, \{x_{j_{h,\widetilde{i}}}^{\ell}(\widetilde{i})\}_{\ell=1}^{m}= v^2 \right\} \nonumber
	\\ & \geq \sum_{v\in \mathcal{V}}\mathbb{P}_{\mathcal{G}'}\left\{ h_{\mathrm{out}}=h^{\star}, M_{\widetilde{i}} \leq m,  \{x_{j_{h,\widetilde{i}}}^{\ell}(\widetilde{i})\}_{\ell=1}^{m}= v^1, \{x_{j_{h^{\star},\widetilde{i}}}^{\ell}(\widetilde{i})\}_{\ell=1}^{m}= v^2  \right\} \exp(-80\sqrt{m}\varepsilon-40m\varepsilon^2) \nonumber
	\\ & =\mathbb{P}_{\mathcal{G}'}\left\{ h_{\mathrm{out}}=h^{\star}, M_{\widetilde{i}} \leq m, \left\{ \{x_{j_{h,\widetilde{i}}}^{\ell}(\widetilde{i})\}_{\ell=1}^{m}, \{x_{j_{h^{\star},\widetilde{i}}}^{\ell}(\widetilde{i})\}_{\ell=1}^{m}\right\} \in \mathcal{V} \right\} \exp(-80\sqrt{m}\varepsilon-40m\varepsilon^2) \nonumber
	\\ & \geq \frac{1}{2}\mathbb{P}_{\mathcal{G}'} \big\{ h_{\mathrm{out}}=h^{\star}, M_{\widetilde{i}} \leq m \big\} \exp(-80\sqrt{m}\varepsilon-40m\varepsilon^2) , \nonumber
\end{align}
where the last line arises from \eqref{eq:hardness_1}. 

Summing over all $h\in \mathcal{H}_{\widetilde{i}}$, we reach 
\begin{align}
	1\geq \mathbb{P}_{\mathcal{G}'}\big\{ h_{\mathrm{out}}\in \mathcal{H}_{\widetilde{i}}, M_{\widetilde{i}} \leq m \big\}
	\geq \frac{N_0}{2} \mathbb{P}_{\mathcal{G}'}\big\{ h_{\mathrm{out}}=h^{\star},M_{\widetilde{i}} \leq m \big\} 
	\exp(-40m\varepsilon^2 - 80\sqrt{m}\varepsilon),
\end{align}
given that each $ \mathcal{H}_{\widetilde{i}}$ contains $N_0$ hypotheses. 
This in turn reveals that
\begin{align}
	\frac{1}{2}\leq \mathbb{P}_{\mathcal{G}'}\left\{ h_{\mathrm{out}}=h^{\star},M_{\widetilde{i}} \leq m\right\} 
	\leq \frac{2}{N_0}\exp(40m\varepsilon^2  +80\sqrt{m}\varepsilon).
\end{align}
Consequently, we arrive at
$$
	40m\varepsilon^2 +80\sqrt{m}\varepsilon\geq \log(N_0/4),
$$
which implies that 
$$
	m \geq \min\bigg\{ \frac{\log(N_0/4)}{80\varepsilon^2} ,\frac{\log^2(N_0/4)}{30000\varepsilon^2} \bigg\}\geq \frac{\log(N_0/4)}{30000\varepsilon^2}. 
$$
%

\subsection{Statement and proof of Lemma~\ref{lemma:sym}}
\label{sec:proof-lemma:sym}

\begin{lemma}\label{lemma:sym} 
For any $i\in [k]$ and $l\geq 1$, let $x^{l}(i)$ denote the $l$-th sample from $\mathcal{D}_i$. 
For any $\widetilde{i}\in [k]$,  $h\in \mathcal{H}_{\widetilde{i}}$, $m>0$, and $v^1,v^2\in \{-1,1\}^{2m}$, one has
\begin{align}
&\mathbb{P}_{\mathcal{G}'}\left\{ h_{\mathrm{out}}=h^{\star}, M_i = m \mid \{x_{j_{h,\widetilde{i}}}^{l}(\widetilde{i})\}_{l=1}^{m}=v^1,   \{x_{j_{h^{\star},\widetilde{i}}}^{l}(\widetilde{i})\}_{l=1}^{m}=v^2\right\} \nonumber
\\ & = \mathbb{P}_{\mathcal{G}'}\left\{ h_{\mathrm{out}}=h, M_i = m \mid \{x_{j_{h,\widetilde{i}}}^{l}(\widetilde{i})\}_{l=1}^{m}=v^2,   \{x_{j_{h^{\star},\widetilde{i}}}^{l}(\widetilde{i})\}_{l=1}^{m}=v^1 \right\} . \nonumber
\end{align}
\end{lemma}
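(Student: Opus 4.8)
The plan is to exploit a symmetry that swaps the roles of $h$ and $h^\star$ in the input to $\mathcal{G}'$. Recall that $\mathcal{G}'$ receives the hypothesis class in a uniformly random permuted form $\sigma(\mathcal{H})$ with $\sigma\sim\mathsf{Unif}(\Pi_\mathcal{H})$. Fix $\widetilde{i}\in[k]$ and $h\in\mathcal{H}_{\widetilde{i}}$. The key observation is that $h$ and $h^\star$ play structurally identical roles from the point of view of the algorithm's execution: both are ``just hypotheses'' with their own designated coordinates $\mathcal{I}_h$ and $\mathcal{I}_{h^\star}$, and the only difference lies in the distribution of the coordinate values the algorithm actually sees (namely, $\mathcal{D}_{\widetilde{i}}$ biases the coordinates in $\mathcal{I}_h$ since $h\in\mathcal{H}_{\widetilde{i}}$, but not those in $\mathcal{I}_{h^\star}$, as $h^\star\notin\mathcal{H}_{\widetilde{i}}$). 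Since $\mathcal{G}'$ randomizes over permutations, precomposing $\sigma$ with the transposition $\pi_0=(h\ h^\star)$ leaves the law of $\sigma$ unchanged; this will allow us to convert the event $\{h_{\mathrm{out}}=h^\star\}$ under one conditioning into the event $\{h_{\mathrm{out}}=h\}$ under the swapped conditioning.

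The steps I would carry out are as follows. First, make precise the notion of ``execution'': the algorithm $\mathcal{G}'$ is a (randomized) map from the internal permutation $\sigma$ together with the stream of query responses $\{x^l(i)\}_{l\ge 1,\,i\in[k]}$ to an output hypothesis and a stopping profile $\{M_i\}$. Second, define the transposition $\pi_0=(h\ h^\star)\in\Pi_\mathcal{H}$ and note that $\sigma\mapsto\sigma\circ\pi_0$ is a measure-preserving bijection on $(\Pi_\mathcal{H},\mathsf{Unif})$. Third --- this is the crux --- show a pathwise identity: running $\mathcal{G}'$ with permutation $\sigma\circ\pi_0$ on the response stream in which the $\mathcal{D}_{\widetilde{i}}$-samples restricted to coordinate $j_{h,\widetilde{i}}$ equal $v^1$ and those restricted to $j_{h^\star,\widetilde{i}}$ equal $v^2$ produces exactly the same internal trajectory (same queries, same stopping times, same internal indices) as running $\mathcal{G}'$ with permutation $\sigma$ on the stream in which $j_{h,\widetilde{i}}$-values equal $v^2$ and $j_{h^\star,\widetilde{i}}$-values equal $v^1$, with the sole caveat that the output hypothesis gets relabeled by $\pi_0$. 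In particular, the event $\{h_{\mathrm{out}}=h^\star,\ M_{\widetilde{i}}=m\}$ in one experiment corresponds bijectively to $\{h_{\mathrm{out}}=h,\ M_{\widetilde{i}}=m\}$ in the other. Fourth, observe that conditioned on the prescribed values of the $j_{h,\widetilde{i}}$- and $j_{h^\star,\widetilde{i}}$-coordinates for the first $m$ samples of $\mathcal{D}_{\widetilde{i}}$, all other randomness (the permutation $\sigma$, all other coordinates of all samples from all distributions, the samples indexed beyond $m$) has a law that is invariant under the swap $(h,h^\star)\leftrightarrow$ relabeling --- here one uses that every coordinate other than those in $\bigcup_{i}\mathcal{I}_h\cup\mathcal{I}_{h^\star}$ has the same marginal regardless of whether we are looking at ``the $h$-coordinates'' or ``the $h^\star$-coordinates,'' and that $\sigma$ is permutation-invariant. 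Integrating the pathwise identity against this common conditional law yields the claimed equality of probabilities.

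The main obstacle I anticipate is making the ``pathwise identity'' in step three genuinely rigorous rather than merely plausible. The subtlety is that the algorithm is adaptive --- its $l$-th query and its stopping decision depend on the history --- so one must verify by induction on the step count that, under the two paired experiments, the algorithm is at each step ``looking at'' corresponding coordinates and seeing identical values, so that its decisions (including when to stop and which hypothesis to output, up to the relabeling $\pi_0$) coincide. This requires carefully tracking how the relabeling $\pi_0$ interacts with the algorithm's internal references to hypotheses: a query to $\mathsf{Query}(\mathcal{D}_i)$ returns $\{x_{j_{g,i}}\}_{g\in\sigma(\mathcal{H})}$, and under $\sigma\circ\pi_0$ the positions of $h$ and $h^\star$ in the returned tuple are interchanged, which is precisely compensated by interchanging the values $v^1\leftrightarrow v^2$ in the coupled stream. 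Once this bookkeeping is set up cleanly (the paper's definition of $\sigma(x)$ via $y_{j_{h,i}}=x_{j_{\sigma(h),i}}$ is exactly the right device), the induction is routine, and the remaining measure-theoretic integration is standard.
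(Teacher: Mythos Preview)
Your approach is correct and is essentially the paper's proof: introduce the transposition $\overline{\sigma}=(h\ h^\star)$, use that composition with $\overline{\sigma}$ preserves the uniform law on $\Pi_\mathcal{H}$, and establish a pathwise correspondence between the two experiments (the paper packages this correspondence as a separate auxiliary statement, Lemma~\ref{fact:1}, rather than arguing by induction on steps). One small correction to your step four: the $j_{h,\widetilde{i}}$- and $j_{h^\star,\widetilde{i}}$-coordinates of samples from $\mathcal{D}_{\widetilde{i}}$ indexed beyond $m$ are \emph{not} swap-invariant in law (one is $(4\varepsilon)$-biased, the other uniform), but this is harmless because the event $\{h_{\mathrm{out}}=h',\,M_{\widetilde{i}}=m\}$ is measurable with respect to the first $m$ samples from $\mathcal{D}_{\widetilde{i}}$---the paper makes exactly this point when passing from the full conditioning to the first $m$ samples.
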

\begin{proof}
Let $\overline{\sigma}$ be the permutation over $\mathcal{H}$ with $\overline{\sigma}(h^{\star})=h, \overline{\sigma}(h)=h^{\star}$ and $\overline{\sigma}(h')=h'$ for all $h'\notin \{h,h^{\star}\}$. It then holds that $\overline{\sigma}^{-1}=\overline{\sigma}$.

Consider a given sequence $\{m_i\}_{i=1}^k$. Let $X(i) =\{X^{l}(i)\}_{l=1}^{m_i} \in \{-1,0,1\}^{kNm_i}$ for $i\in [k]$, and  
let $x(i) = \{x^{l}(i)\}_{l=1}^{m_i}$ be the datapoints of the first $m_i$ calls to $\mathsf{Query}(D_i)$. With slight abuse of notation, we take $\sigma(x(i)) = \{ \sigma(x^{l}(i))\}_{l=1}^{m_i}$ for each $i\in [k]$. 
It then follows from Lemma~\ref{fact:1} in Appendix~\ref{sec:proof-fact:1} that 
\begin{align}
	& \mathbb{P}_{\mathcal{G},\mathcal{H}}\left\{ h_{\mathrm{out}}= h, \{M_i\}_{i=1}^k = \{m_i\}_{i=1}^k  \mid x(i)= X(i) ,\forall i\in [k] \right\} \nonumber
	\\ & = \mathbb{P}_{\mathcal{G},\sigma(\mathcal{H})}\left\{ 
	h_{\mathrm{out}}=\sigma^{-1}(h), \{M_i\}_{i=1}^k = \{m_i\}_{i=1}^k \mid   \sigma^{-1}(x(i))= X(i) ,\forall i\in [k] \right\},\nonumber
\end{align}
%
which implies that
\begin{align}
 & \mathbb{P}_{\mathcal{G},\sigma(\mathcal{H})}\left\{ h_{\mathrm{out}}=h^{\star},\{M_{i}\}_{i=1}^{k}=\{m_{i}\}_{i=1}^{k},\sigma(x(i))=X(i),\forall i\in[k]\right\} \nonumber\\
 & =\mathbb{P}_{\mathcal{G},\overline{\sigma}\sigma(\mathcal{H})}\left\{ h_{\mathrm{out}}=h,\{M_{i}\}_{i=1}^{k}=\{m_{i}\}_{i=1}^{k},\overline{\sigma}\sigma(x(i))=X(i),\forall i\in[k]\right\} \cdot\frac{\mathbb{P}\left\{ \sigma(x(i))=X(i),\forall i\in[k]\right\} }{\mathbb{P}\left\{ \overline{\sigma}\sigma(x(i))=X(i),\forall i\in[k]\right\} }\nonumber\\
 & =\mathbb{P}_{\mathcal{G},\overline{\sigma}\sigma(\mathcal{H})}\left\{ h_{\mathrm{out}}=h,\{M_{i}\}_{i=1}^{k}=\{m_{i}\}_{i=1}^{k},\overline{\sigma}\sigma(x(i))=X(i),\forall i\in[k]\right\} \cdot\frac{\mathbb{P}\left\{ \sigma(x(\widetilde{i}))=X(\widetilde{i})\right\} }{\mathbb{P}\left\{ \overline{\sigma}\sigma(x(\widetilde{i}))=X(\widetilde{i})\right\} }\nonumber\\
 & =\mathbb{P}_{\mathcal{G},\overline{\sigma}\sigma(\mathcal{H})}\left\{ h_{\mathrm{out}}=h,\{M_{i}\}_{i=1}^{k}=\{m_{i}\}_{i=1}^{k},\overline{\sigma}\sigma(x(i)))=X(i),\forall i\in[k]\right\} \nonumber\\
 & \quad\qquad\qquad\qquad\qquad\cdot\frac{\mathbb{P}\left\{ \{x_{j_{h,i}}^{l}(\widetilde{i})\}_{l=1}^{m_{\widetilde{i}}}=\{X_{j_{\sigma(h),\widetilde{i}}}^{l}(\widetilde{i})\}_{l=1}^{m_{\widetilde{i}}},\{x_{j_{h^{\star},\widetilde{i}}}^{l}({\widetilde{i}})\}_{l=1}^{m_{\widetilde{i}}}=\{X_{j_{\sigma(h^{\star}),{\widetilde{i}}}}^{l}(\widetilde{i})\}_{l=1}^{m_{\widetilde{i}}}\right\} }{\mathbb{P}\left\{ \{x_{j_{h^{\star},\widetilde{i}}}^{l}(\widetilde{i})\}_{l=1}^{m_{\widetilde{i}}}=\{X_{j_{\sigma(h),\widetilde{i}}}^{l}(\widetilde{i})\}_{l=1}^{m_{\widetilde{i}}},\{x_{j_{h,\widetilde{i}}}^{l}(\widetilde{i})\}_{l=1}^{m_{\widetilde{i}}}=\{X_{j_{\sigma(h^{\star}),\widetilde{i}}}^{l}(\widetilde{i})\}_{l=1}^{m_{\widetilde{i}}}\right\} }.
\end{align}
Rearrange the equation to arrive at
\begin{align}
	&\mathbb{P}_{\mathcal{G},\sigma(\mathcal{H})}\Bigg\{ h_{\mathrm{out}}=h^{\star}, \{M_i\}_{i=1}^k = \{m_i\}_{i=1}^k, \sigma(x(i)) = X(i),\forall i\in [k] 
 \nonumber
	\\ & \qquad \qquad \qquad \qquad \Big|\,  \{x_{j_{h,\widetilde{i}}}^{l}(\widetilde{i})\}_{l=1}^{m_{\widetilde{i}}} = \{X^{l}_{j_{\sigma(h),\widetilde{i}}}(\widetilde{i})\}_{l=1}^{m_{\widetilde{i}}} ,\{x_{j_{h^{\star},\widetilde{i}}}^{l}(\widetilde{i})\}_{l=1}^{m_{\widetilde{i}}} = \{X^{l}_{j_{\sigma(h^{\star}),\widetilde{i}}}(\widetilde{i})\}_{l=1}^{m_{\widetilde{i}}}\Bigg\} \nonumber
	\\ & = \mathbb{P}_{\mathcal{G},\overline{\sigma}\sigma(\mathcal{H})}\Bigg\{ h_{\mathrm{out}}=h, \{M_i\}_{i=1}^k = \{m_i\}_{i=1}^k,  \overline{\sigma}\sigma(x(i)) = X(i),\forall i\in [k] 
 \nonumber
	\\ & \qquad \qquad \qquad \quad \Big|\,  \{x_{j_{h^{\star},\widetilde{i}}}^{l}(\widetilde{i})\}_{l=1}^{m_{\widetilde{i}}} = \{X^{l}_{j_{\sigma(h),\widetilde{i}}}(\widetilde{i})\}_{l=1}^{m_{\widetilde{i}}} ,\{x_{j_{h,\widetilde{i}}}^{l}(\widetilde{i})\}_{l=1}^{m_{\widetilde{i}}} = \{X^{l}_{j_{\sigma(h^{\star}),\widetilde{i}}}(\widetilde{i})\}_{l=1}^{m_{\widetilde{i}}}\Bigg \}. \nonumber
\end{align}
Taking the sum over all possible choices of $\{X(i)\}_{i\neq \widetilde{i}}$, $\left\{\{X^{l}_{j_{h',i'}}(\widetilde{i})\}_{l=1}^{m_{\widetilde{i}}}\right\}_{i'\in [k],h'\notin \{h,h^{\star}\}}$,\\
$\left\{  \{X^{l}_{j_{h',i'}}(\widetilde{i})\}_{l=1}^{m_{\widetilde{i}}}   \right\}_{h'\in \{h,h^{\star}\} , i'\neq \widetilde{i}} $  and $\{m_{i}\}_{i\neq \widetilde{i}}$, we reach 
\begin{align}
	&\mathbb{P}_{\mathcal{G},\sigma(\mathcal{H})}\Bigg\{ h_{\mathrm{out}}=h^{\star}, M_{\widetilde{i}}=m_{\widetilde{i}} 
 \nonumber
	\\ & \qquad \qquad \qquad \qquad \Big|\,  \{x_{j_{h,\widetilde{i}}}^{l}(\widetilde{i})\}_{l=1}^{m_{\widetilde{i}}} = \{X^{l}_{j_{\sigma(h),\widetilde{i}}}(\widetilde{i})\}_{l=1}^{m_{\widetilde{i}}} ,\{x_{j_{h^{\star},\widetilde{i}}}^{l}(\widetilde{i})\}_{l=1}^{m_{\widetilde{i}}} = \{X^{l}_{j_{\sigma(h^{\star}),\widetilde{i}}}(\widetilde{i})\}_{l=1}^{m_{\widetilde{i}}}\Bigg \} \nonumber
	\\ & = \mathbb{P}_{\mathcal{G},\overline{\sigma}\sigma(\mathcal{H})}\Bigg\{ h_{\mathrm{out}}=h, M_{\widetilde{i}}=m_{\widetilde{i}}
 \nonumber
	\\ & \qquad \qquad \qquad \qquad \Big|\,  \{x_{j_{h^{\star},\widetilde{i}}}^{l}(\widetilde{i})\}_{l=1}^{m_{\widetilde{i}}} = \{X^{l}_{j_{\sigma(h),\widetilde{i}}}(\widetilde{i})\}_{l=1}^{m_{\widetilde{i}}} ,\{x_{j_{h,\widetilde{i}}}^{l}(\widetilde{i})\}_{l=1}^{m_{\widetilde{i}}} = \{X^{l}_{j_{\sigma(h^{\star}),\widetilde{i}}}(\widetilde{i})\}_{l=1}^{m_{\widetilde{i}}}\Bigg\} \nonumber
\end{align}
for any $X(\widetilde{i})\in \{-1,0,1\}^{kNm_{\widetilde{i}}} $. 


Fix $m_{\widetilde{i}} = m$, and choose $ \{X^{l}_{j_{\sigma(h),\widetilde{i}}}(\widetilde{i})\}_{l=1}^{m_{\widetilde{i}}}=v_1$, $\{X^{l}_{j_{\sigma(h^{\star}),\widetilde{i}}}(\widetilde{i})\}_{l=1}^{m_{\widetilde{i}}} =v_2$. 
We then have 
\begin{align}
	& \mathbb{P}_{\mathcal{G}',\mathcal{H}}\left\{ h_{\mathrm{out}}=h^{\star}, M_i = m \mid \{x_{j_{h,\widetilde{i}}}^{l}(\widetilde{i})\}_{l=1}^{m}=v^1,   \{x_{j_{h^{\star},\widetilde{i}}}^{l}(\widetilde{i})\}_{l=1}^{m}=v^2\right\}  \nonumber
	\\ & = \frac{1}{|\Pi_{\mathcal{H}}|}\sum_{\sigma \in \Pi_{\mathcal{H}}}\mathbb{P}_{\mathcal{G},\sigma(\mathcal{H})}\left\{ h_{\mathrm{out}}=h^{\star}, M_i = m \mid  \{x_{j_{h,\widetilde{i}}}^{l}(\widetilde{i})\}_{l=1}^{m}=v^1,   \{x_{j_{h^{\star},\widetilde{i}}}^{l}(\widetilde{i})\}_{l=1}^{m}=v^2\right\} \nonumber
\\ & = \frac{1}{|\Pi_{\mathcal{H}}|}\sum_{\sigma \in \Pi_{\mathcal{H}}} 
	\mathbb{P}_{\mathcal{G},\overline{\sigma}\sigma(\mathcal{H})}\left\{ h_{\mathrm{out}}=\overline{\sigma}^{-1}(h^{\star}), M_i = m \mid  \{x_{j_{h,\widetilde{i}}}^{l}(\widetilde{i})\}_{l=1}^{m}=v^2,   \{x_{j_{h^{\star},\widetilde{i}}}^{l}(\widetilde{i})\}_{l=1}^{m}=v^1\right\} \nonumber
\\ & =  \frac{1}{|\Pi_{\mathcal{H}}|}\sum_{\sigma \in \Pi_{\mathcal{H}}} 
	\mathbb{P}_{\mathcal{G},\sigma(\mathcal{H})}\left\{ h_{\mathrm{out}}=h, M_i =m \mid \{x_{j_{h,\widetilde{i}}}^{l}(\widetilde{i})\}_{l=1}^{m}=v^2,   \{x_{j_{h^{\star},\widetilde{i}}}^{l}(\widetilde{i})\}_{l=1}^{m}=v^1\right\} \nonumber
	\\ & = \mathbb{P}_{\mathcal{G}',\mathcal{H}}\left\{ h_{\mathrm{out}}=h, M_i = m \mid  \{x_{j_{h,\widetilde{i}}}^{l}(\widetilde{i})\}_{l=1}^{m}=v^2,   \{x_{j_{h^{\star},\widetilde{i}}}^{l}(\widetilde{i})\}_{l=1}^{m}=v^1\right\}.\nonumber
\end{align}
This completes the proof.
\end{proof}

\subsection{Statement and proof of Lemma~\ref{fact:1}}
\label{sec:proof-fact:1}

\begin{lemma}\label{fact:1}  
Consider any $\{m_i\}_{i\in [k]}$, $\sigma \in \Pi_{\mathcal{H}}$  and $X\in \{-1,0,1\}^{kN\sum_{i=1}^k m_i}$.  Let $\{X(i)\}_{i\in [k]}$ and $\{x(i)\}$ be defined as in Lemma~\ref{lemma:sym}. 
It then holds that
\begin{align}
& \mathbb{P}_{\mathcal{G},\mathcal{H}}\left[h_{\mathrm{out}}= h, \{M_i\}_{i=1}^k = \{m_i\}_{i=1}^k  \mid x(i)=X(i),\forall i\in [k]\right] \nonumber
	\\ & = \mathbb{P}_{\mathcal{G},\sigma(\mathcal{H})}\left\{ 
	h_{\mathrm{out}}=\sigma^{-1}(h), \{M_i\}_{i=1}^k = \{m_i\}_{i=1}^k \mid \sigma^{-1}(x(i)) =X(i),\forall i\in [k]\right\} .\label{eq:sym}
\end{align}
\end{lemma}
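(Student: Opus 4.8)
\textbf{Proof plan for Lemma~\ref{fact:1}.}

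The statement is a symmetry/relabeling identity: running algorithm $\mathcal{G}$ on the hypothesis class $\mathcal{H}$ is ``the same'' as running $\mathcal{G}$ on the relabeled class $\sigma(\mathcal{H})$, provided we simultaneously relabel the observed data by $\sigma$ and relabel the output by $\sigma$. The plan is to unfold both sides into the underlying randomness of the interaction --- namely, the internal coin flips of $\mathcal{G}$ together with the stream of queried samples --- and to exhibit an explicit measure-preserving bijection on these sample paths that maps one experiment onto the other. The key structural fact I would use is that the data distributions $\mathcal{D}_i$ were constructed so that their coordinate structure is ``permutation-covariant'': the coordinates of $\mathcal{X}$ are partitioned into blocks $\{\mathcal{I}_h\}_{h\in\mathcal{H}}$ indexed by hypotheses, and $\mathcal{D}_i$ puts the same marginal on block $\mathcal{I}_h$ regardless of the name $h$, depending only on whether $h\in\mathcal{H}_i$ or not (and since $\sigma$ is a permutation of $\mathcal{H}$, it preserves which subclass $\mathcal{H}_i$ a hypothesis lies in --- wait, actually it does not in general, but $\mathsf{Unif}(\Pi_{\mathcal{H}})$ averages over all of them; for the pointwise statement here the Radon-Nikodym ratio $\mathbb{P}\{\sigma(x(i))=X(i)\}/\mathbb{P}\{x(i)=X(i)\}$ is exactly what accounts for the mismatch, and this is why Lemma~\ref{lemma:sym} later carries that ratio explicitly).

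Concretely, I would argue as follows. First, fix the internal randomness $\omega$ of the algorithm $\mathcal{G}$; conditioned on $\omega$ and on the hypothesis class it is given, $\mathcal{G}$ is a deterministic decision tree whose nodes are ``query $\mathcal{D}_i$'' actions and whose transitions depend on the returned sample. Because $\mathcal{G}$ accesses $\mathcal{H}$ only through queries of the form ``$\mathsf{Query}(\mathcal{D}_i)$ returns $\{x_{j_{h,i}}\}_{h\in\mathcal{H}}$'' and through the (abstract) index set $\mathcal{H}$, running $\mathcal{G}$ on input $\sigma(\mathcal{H})$ with the realized samples $x(i)$ produces, at every step, the exact same behavior as running $\mathcal{G}$ on input $\mathcal{H}$ with the samples $\sigma^{-1}(x(i))$ --- the relabeling of coordinates by $\sigma$ on the class is undone by the relabeling $\sigma^{-1}$ on the data. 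Hence the event $\{h_{\mathrm{out}}=\sigma^{-1}(h),\ \{M_i\}=\{m_i\},\ \sigma^{-1}(x(i))=X(i)\ \forall i\}$ in the $\sigma(\mathcal{H})$-experiment corresponds bijectively, at the level of $(\omega,\text{sample path})$, to the event $\{h_{\mathrm{out}}=h,\ \{M_i\}=\{m_i\},\ x(i)=X(i)\ \forall i\}$ in the $\mathcal{H}$-experiment. Since we condition on the samples being exactly $X(i)$ on both sides, the conditional probabilities over $\omega$ coincide, which is exactly \eqref{eq:sym}.

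The main obstacle --- and the step deserving the most care --- is the bookkeeping of \emph{which} coordinates appear where. The sample $x^l(i)$ is a full vector in $\{-1,0,1\}^{kN}$, but $\mathcal{G}$'s transitions can in principle depend on all of it; I must verify that the map $x(i)\mapsto\sigma^{-1}(x(i))$ defined via $(\sigma^{-1}(x))_{j_{h,i}}=x_{j_{\sigma(h),i}}$ is consistent with how $\mathcal{G}$ interprets its input when told the class is $\sigma(\mathcal{H})$ versus $\mathcal{H}$, and that it is a genuine bijection on $\{-1,0,1\}^{kN}$ preserving the "irrelevant" coordinates $l\notin\{j_{h,i}\mid h\}$ (which are deterministically $0$ under every $\mathcal{D}_i$). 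I would also need to confirm that $M_i$ --- the number of calls to $\mathsf{Query}(\mathcal{D}_i)$ --- is a function of the sample path that is invariant under this relabeling (it is, since the stopping/branching decisions transform covariantly). Once the bijection is pinned down and checked to be identity-on-measure after conditioning, the identity \eqref{eq:sym} follows by summing the equal conditional probabilities over $\omega$; the argument is essentially a change of variables, and the only real work is making the coordinate relabeling airtight.
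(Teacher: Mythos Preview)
Your proposal is correct and matches the paper's approach: both argue that $\mathcal{G}$, accessing the hypothesis class only as an abstract indexed list and observing samples whose coordinates are indexed by hypotheses, cannot distinguish $(\mathcal{H}, X)$ from $(\sigma(\mathcal{H}), \sigma(X))$, so the conditional output distributions agree after the relabeling. The paper's proof is a one-line version of your plan (``the algorithm $\mathcal{G}$ cannot distinguish $\mathcal{H}$ from $\mathcal{H}'$ using its own randomness''), and your more explicit decision-tree/coupling unpacking of that sentence, including your self-correction that the $\mathcal{H}_i$-membership issue is irrelevant here because we condition on exact sample values, is exactly the right way to make it rigorous.
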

\begin{proof}
Let $\mathcal{H}'= \sigma(\mathcal{H})$. Let $h_p(\cdot)$ denote the $p$-th hypothesis in the hypothesis set. Then one has
\begin{align}
	& \mathbb{P}_{\mathcal{G},\mathcal{H}}\left\{ h_{\mathrm{out}}= h_{p}(\mathcal{H}), \{M_i\}_{i=1}^k = \{m_i\}_{i=1}^k  \mid x(i)=X(i),\forall i\in [k]\right\} \nonumber
	\\ & =  \mathbb{P}_{\mathcal{G},\mathcal{H}}\left\{ h_{\mathrm{out}}= h_{p}(\mathcal{H}), \{M_i\}_{i=1}^k = \{m_i\}_{i=1}^k  \mid  \{\{x^{l}_{j_{h_{p'}(\mathcal{H}),i}}(i')\}_{p'=1,i=1}^{|\mathcal{H}|,k} \}_{l=1}^{m_{i'}} \}_{i'=1}^k= X\right\} \nonumber
	\\ & =  \mathbb{P}_{\mathcal{G},\mathcal{H}'}\left\{ h_{\mathrm{out}}= h_{p}(\mathcal{H}'), \{M_i\}_{i=1}^k = \{m_i\}_{i=1}^k  \mid  \{\{x^{l}_{j_{h_{p'}(\mathcal{H}'),i}}(i')\}_{p'=1,i=1}^{|\mathcal{H}'|,k} \}_{l=1}^{m_{i'}} \}_{i'=1}^k= X\right\} \label{eq:lcx1}
	\\ & =\mathbb{P}_{\mathcal{G},\sigma(\mathcal{H})}\left\{ h_{\mathrm{out}}= h_{p}(\sigma(\mathcal{H})), \{M_i\}_{i=1}^k = \{m_i\}_{i=1}^k  \mid \sigma^{-1}(x(i)) =X(i),\forall i\in [k]\right\}  \nonumber
	\\ & = \mathbb{P}_{\mathcal{G},\sigma(\mathcal{H})}\left\{ h_{\mathrm{out}}= \sigma^{-1}(h_{p}(\mathcal{H})), \{M_i\}_{i=1}^k = \{m_i\}_{i=1}^k  \mid \sigma^{-1}(x(i)) =X(i),\forall i\in [k]\right\}  .\label{eq:lcx2}
\end{align}
Here, $\eqref{eq:lcx1}$ holds since the algorithm $\mathcal{G}$ cannot distinguish $\mathcal{H}$ from $\mathcal{H}'$  using its own randomness.
\end{proof}

\section{Proofs of auxiliary lemmas for Rademacher classes}\label{sec:mp_rad}

\subsection{Proof of Lemma~\ref{lemma:conrad}}
%

	In this subsection, we shall follow the notation adopted in the proof of Lemma~\ref{lemma:opth} (e.g., the dataset $\widetilde{\mathcal{S}}$, the data subset $\widetilde{\mathcal{S}}(n)$ and its independent copy  $\widetilde{\mathcal{S}}^+(n)$, the Rademacher random variables $\{\sigma_i^j\}$).

Consider any given $n = \{n_i\}_{i=1}^k$ obeying $n_i \geq 12\log(2k)$ for all $i\in [k]$, and any given $w\in \Delta(k)$. Let $\kappa = \min_i \frac{n_i}{w_i}$.   Recall that $(x_{i,j},y_{i,j})$ is the $j$-th sample from $\mathcal{D}_i$, and $\{(x_{i,j}^+,y_{i,j}^+\}$ are independent copies. 
 Define
\begin{align*}
F(n,w) & \coloneqq\mathop{\mathbb{E}}_{\widetilde{\mathcal{S}}(n)}\left[\max_{h\in\mathcal{H}}\left(\sum_{i=1}^{k}\frac{w_{i}}{n_{i}}\sum_{j=1}^{n_{i}}\ell\big(h,(x_{i,j},y_{i,j})\big)-\sum_{i=1}^{k}w_{i}L(h,\basis_{i})\right)\right],
\end{align*}
which can be upper bounded by
\begin{align}
F(n,w) & =\mathop{\mathbb{E}}_{\widetilde{\mathcal{S}}(n)}\left[\max_{h\in\mathcal{H}}\left(\sum_{i=1}^{k}\frac{w_{i}}{n_{i}}\sum_{j=1}^{n_{i}}\left(\ell\big(h,(x_{i,j},y_{i,j})\big)-\mathop{\mathbb{E}}_{\widetilde{\mathcal{S}}^{+}(n)}\left[\ell\big(h,(x_{i,j}^{+},y_{i,j}^{+})\big)\right]\right)\right)\right]\notag\\
 & \leq\mathop{\mathbb{E}}_{\widetilde{\mathcal{S}}(n),\widetilde{\mathcal{S}}^{+}(n)}\left[\max_{h\in\mathcal{H}}\left(\sum_{i=1}^{k}\frac{w_{i}}{n_{i}}\sum_{j=1}^{n_{i}}\left(\ell\big(h,(x_{i,j},y_{i,j})\big)-\ell\big(h,(x_{i,j}^{+},y_{i,j}^{+})\big)\right)\right)\right]\nonumber\\
 & =\mathop{\mathbb{E}}_{\widetilde{\mathcal{S}}(n),\widetilde{\mathcal{S}}^{+}(n),\{\{\sigma_{i}^{j}\}_{j=1}^{n_{i}}\}_{i=1}^{k}}\left[\max_{h\in\mathcal{H}}\left(\sum_{i=1}^{k}\frac{w_{i}}{n_{i}}\sum_{j=1}^{n_{i}}\sigma_{i}^{j}\left\{ \ell\big(h,(x_{i,j},y_{i,j})\big)-\ell\big(h,(x_{i,j}^{+},y_{i,j}^{+})\big)\right\} \right)\right]\nonumber\\
 & \leq2\mathop{\mathbb{E}}_{\widetilde{\mathcal{S}}(n),\{\{\sigma_{i}^{j}\}_{j=1}^{n_{i}}\}_{i=1}^{k}}\left[\max_{h\in\mathcal{H}}\left(\sum_{i=1}^{k}\frac{1}{\kappa}\sum_{j=1}^{n_{i}}\sigma_{i}^{j}\ell\big(h,(x_{i,j},y_{i,j})\big)\right)\right] \notag\\
 & = 2\frac{\sum_{i=1}^{k}n_{i}}{\kappa}\widetilde{\mathrm{\mathsf{Rad}}}_{\{n_{i}\}_{i=1}^{k}}. 
	\label{eq:rere}
\end{align}
Here, the first inequality arises from Jensen's inequality, whereas the penultimate line applies Lemma~\ref{lemma:add12}.

Invoking the Mcdiarmid inequality (see Lemma~\ref{lemma:mcinequality}) with the choice $c = 1/\kappa$, we obtain
\begin{align}
 & \mathbb{P}\left\{ \left|\max_{h\in\mathcal{H}}\left(\sum_{i=1}^{k}\frac{w_{i}}{n_{i}}\sum_{j=1}^{n_{i}}\ell\big(h,(x_{i,j},y_{i,j})\big)-\sum_{i=1}^{k}w_{i}L(h,\basis_{i})\right)-F(n,w)\right|\geq\varepsilon\right\} \leq2\exp\left(-\frac{2\kappa^{2}\varepsilon^{2}}{\sum_{i=1}^{k}n_{i}}\right).
	\label{eq:pppp2}
\end{align}
This taken together with \eqref{eq:rere} reveals that: for any $\delta'\in (0,1]$, with probability at least $1-\delta'$ we have
\begin{align}
&\max_{h\in \mathcal{H}}\left( \sum_{i=1}^k \frac{w_i}{n_i}\ell(h,(x_{i,j},y_{i,j})) - \sum_{i=1}^k w_i L(h,\basis_i) \right) 
 \leq 2 \frac{\sum_{i=1}^k n_i}{\kappa}\widetilde{\mathsf{Rad}}_{ \{n_i\}_{i=1}^k }+ \frac{\sum_{i=1}^k n_i}{\kappa}\sqrt{\frac{\log(2/\delta')}{2\sum_{i=1}^k n_i}}.
\label{eq:pppp3}
\end{align}
Evidently, this inequality continues to hold if we replace $(\ell,L)$ with $(-\ell,-L)$. As a consequence, with probability at least $1-2\delta'$ one has
\begin{align}
&\max_{h\in \mathcal{H}}\left| \sum_{i=1}^k \frac{w_i}{n_i}\ell\big(h,(x_{i,j},y_{i,j})\big) - \sum_{i=1}^k w_i L(h,\basis_i) \right|
 \leq
2 \frac{\sum_{i=1}^k n_i}{\kappa}\widetilde{\mathsf{Rad}}_{ \{n_i\}_{i=1}^k }+ \frac{\sum_{i=1}^k n_i}{\kappa}\sqrt{\frac{\log(2/\delta')}{2\sum_{i=1}^k n_i}}.
\label{eq:pppp4}
\end{align}

Now,  fix $\kappa\geq 0$, and define 
$$
	\widetilde{\mathcal{L}} =\left\{  n =\{n_i\}_{i=1}^k, w = \{w_i\}_{i=1}^k\in \Delta_{\varepsilon_1/(8k)}(k) \mid  \Tone w_i \leq 2n_i, 12\log(2k)\leq n_i \leq \Tone, \forall i\in [k], \sum_{i=1}^k n_i \leq 2\Tone \right\},
$$ 
where $\Delta_{\varepsilon_1/(8k)}(k)$ (i.e., an $\varepsilon_1/(8k)$-net of $\Delta(k)$) has been defined in Appendix~\ref{sec:proof-lemma:opth}. 
Inequality \eqref{eq:pppp4} combined with the union bound tells us that: for any $\delta'>0$, with probability at least $1-\delta'$ 
\begin{align}
	&\max_{h\in \mathcal{H}}\left| \sum_{i=1}^k \sum_{j=1}^{n_i} \frac{w_i}{n_i}\ell\big(h,(x_{i,j},y_{i,j})\big) - \sum_{i=1}^k w_i L(h,\basis_i) \right| \notag\\
	&\qquad \leq  \frac{\sum_{i=1}^k n_i}{\Tone/2}\widetilde{\mathsf{Rad}}_{ \{n_i\}_{i=1}^k } 
+ \frac{\sum_{i=1}^k n_i}{\Tone/2}\sqrt{\frac{\log(|\widetilde{\mathcal{L}}|)+\log(2/\delta')}{2\sum_{i=1}^k n_i}} \nonumber
 \\ &\qquad  \leq 4\widetilde{\mathsf{Rad}}_{ \{n_i\}_{i=1}^k }
+4 \sqrt{\frac{\log(2|\widetilde{\mathcal{L}}|)+\log(2/\delta')}{\Tone}} \nonumber
 \\ &\qquad  \leq 4\widetilde{\mathsf{Rad}}_{ \{n_i\}_{i=1}^k }
+4 \sqrt{\frac{2k\log(16k\Tone/\varepsilon_1)+\log(2/\delta')}{\Tone}} \nonumber
\\ &\qquad  \leq 600 C_{\Tone}
+4 \sqrt{\frac{2k\log(16k\Tone/\varepsilon_1)+\log(2/\delta')}{\Tone}} \nonumber
\end{align}
holds simultaneously for all $\{n,w\} \in \widetilde{\mathcal{L}}$; see the use of the union bound in Appendix~\ref{sec:proof-lemma:opth} too. 
Here, we have made use of Assumption~\ref{assump:rad}, Lemma~\ref{fact:wrc}, Lemma~\ref{lemma:bdwrc} and the fact that $\sum_{i=1}^k n_i \geq \Tone/2$.

Note that in Algorithm~\ref{alg:rad}, we take
\begin{align}
\widehat{L}^t(h,w^t) =  \sum_{i=1}^k \frac{w_i^t}{n_i^{t,\mathsf{rad}} }\cdot \sum_{j=1}^{n_i^{t,\mathsf{rad}}} \ell(h,(x_{i,j},y_{i,j})).\nonumber
\end{align}
Given our choice that $n_i^{t,\mathsf{rad}} = \min\{ \left\lceil \Tone w_i^t + 12\log(2k) \right\rceil ,\Tone\}$ for $i\in [k]$, 
we see that 
$$
	\Tone w_i^t \leq n_i^{t,\mathsf{rad}}-1 \qquad  \text{and} \qquad 12\log(2k)\leq n_i^{t,\mathsf{rad}} \leq \Tone
$$
for all $i\in [k]$. In addition, it is seen from our choice of $n_i^{t,\mathsf{rad}}$ and $\Tone$ that
$$
	\sum_{i=1}^k n_i^{t,\mathsf{rad}} \leq \sum_{i=1}^k \left\lceil \Tone w_i^t + 12\log(2k) \right\rceil \leq \Tone + k + 12k\log(2k)\leq 2\Tone -2.
$$ 
Therefore, there exists some $\widetilde{w}^t\in \Delta(k)$ satisfying 
$$
	\big\{ \{n_i^{t,\mathsf{rad}}\}_{i=1}^k, \widetilde{w}^t \big\}\in \widetilde{\mathcal{L}}
	\qquad \text{and}\qquad 
	\|w^t-\widetilde{w}^t\|_1 \leq \frac{\varepsilon_1}{8k}
$$
for each $1\leq t \leq T$. Taking $\delta' = \delta/4$, we obtain that with probability at least $1-\delta/4$, 
\begin{align}
 \max_{h\in\mathcal{H}}\big|\widehat{L}^{t}(h,w^{t})-L(h,w^{t})\big| & \leq\max_{h\in\mathcal{H}}\big|\widehat{L}^{t}(h,\widetilde{w}^{t})-L(h,\widetilde{w}^{t})\big|+\max_{h\in\mathcal{H}}\big|\widehat{L}^{t}(h,\widetilde{w}^{t})-\widehat{L}^{t}(h,w^{t})\big|\notag\\
	&\qquad \qquad +\max_{h\in\mathcal{H}}\big|L(h,\widetilde{w}^{t})-L(h,w^{t})\big| \notag\\
 & \leq600C_{\Tone}+4\sqrt{\frac{2k\log(16k\Tone/\varepsilon_{1})+\log(2/\delta')}{\Tone}}+\frac{\varepsilon_{1}}{8k}+\frac{\varepsilon_{1}}{8k} \notag\\
 & \leq\frac{\varepsilon_{1}}{2}\nonumber
\end{align}
for any $1\leq t\leq T$, where the last inequality results from the definition of $\Tone$.

Finally, the fact that $h^t =\arg\min_{h\in \mathcal{H}}\widehat{L}^t(h,w^t)$ allows one to derive
\begin{align}
 L(h^t,w^t)\leq \widehat{L}^t(h^t,w^t)+\frac{\varepsilon_1}{2}= \min_{h\in \mathcal{H}} \widehat{L}^t(h,w^t) + \frac{\varepsilon_1}{2}\leq \min_{h\in \mathcal{H}}L(h,w^t)+\varepsilon_1,\nonumber
\end{align}
which concludes the proof.

\subsection{Proof of Lemma~\ref{fact:wrc}}
\label{sec:proof-fact:wrc}
%
%
In what follows, assume that each $z_i^j$ obeys $z_i^j\sim \mathcal{D}_i$, and each $\sigma_i^j$ is a zero-mean Rademacher random variable.  
Direct computation then gives 
\begin{align}
	& \left(\sum_{i=1}^{k}n_{i}\right)\widetilde{\mathsf{Rad}}_{\{n_{i}\}_{i=1}^{k}}
  =\mathop{\mathbb{E}}\limits _{\{z_{i}^{j}\}_{j=1}^{n_{i}},\forall i\in[k]}\left[\mathop{\mathbb{E}}\limits _{\{\sigma_{i}^{j}\}_{j=1}^{n_{i}},\forall i\in[k]}\left[\max_{h\in\mathcal{H}}\sum_{i=1}^{k}\sum_{j=1}^{n_{i}}\sigma_{i}^{j}\ell(h,z_{i}^{j})\right]\right]\nonumber\\
 &\qquad \overset{\mathrm{(i)}}{=}\mathop{\mathbb{E}}\limits _{\{z_{i}^{j}\}_{j=1}^{n_{i}},\forall i\in[k]}\left[\mathop{\mathbb{E}}\limits _{\{\sigma_{i}^{j}\}_{j=1}^{n_{i}},\forall i\in[k]}\left[\max_{h\in\mathcal{H}}\mathop{\mathbb{E}}\limits _{\{z_{i}^{j}\}_{j=n_{i}+1}^{n_{i}+m_{i}},\{\sigma_{i}^{j}\}_{j=n_{i}+1}^{n_{i}+m_{i}},\forall i\in[k]}\left[\sum_{i=1}^{k}\sum_{j=1}^{n_{i}+m_{i}}\sigma_{i}^{j}\ell(h,z_{i}^{j})\right]\right]\right]\nonumber\\
 &\qquad \overset{\mathrm{(ii)}}{\leq}\mathop{\mathbb{E}}\limits _{\{z_{i}^{j}\}_{j=1}^{n_{i}+m_{i}},\forall i\in[k]}\left[\mathop{\mathbb{E}}\limits _{\{\sigma_{i}^{j}\}_{j=1}^{n_{i}+m_{i}},\forall i\in[k]}\left[\max_{h\in\mathcal{H}}\sum_{i=1}^{k}\sum_{j=1}^{n_{i}+m_{i}}\sigma_{i}^{j}\ell(h,z_{i}^{j})\right]\right]\nonumber\\
 &\qquad =\left(\sum_{i=1}^{k}(n_{i}+m_{i})\right)\widetilde{\mathsf{Rad}}_{\{n_{i}+m_{i}\}_{i=1}^{k}}\nonumber\\
	&\qquad \overset{\mathrm{(iii)}}{\leq}\mathop{\mathbb{E}}\limits _{\{z_{i}^{j}\}_{j=1}^{n_{i}},\forall i\in[k]}\left[\mathop{\mathbb{E}}\limits _{\{\sigma_{i}^{j}\}_{j=1}^{n_{i}},\forall i\in[k]}\left[\max_{h\in\mathcal{H}}\sum_{i=1}^{k}\sum_{j=1}^{n_{i}}\sigma_{i}^{j}\ell(h,z_{i}^{j}) \right]\right]\nonumber\\
	&\qquad \qquad\qquad\qquad\qquad+\mathop{\mathbb{E}}\limits _{\{z_{i}^{j}\}_{j=n_{i}+1}^{n_{i}+m_{i}},\forall i\in[k]}\left[\mathop{\mathbb{E}}\limits _{\{\sigma_{i}^{j}\}_{j=n_{i}+1}^{n_{i}+m_{i}},\forall i\in[k]}\left[\max_{h\in\mathcal{H}}\sum_{i=1}^{k}\sum_{j=n_{i}+1}^{n_{i}+m_{i}}\sigma_{i}^{j}\ell(h,z_{i}^{j})\right]\right] \notag\\
 &\qquad =\left(\sum_{i=1}^{k}n_{i}\right)\widetilde{\mathsf{Rad}}_{\{n_{i}\}_{i=1}^{k}}+\left(\sum_{i=1}^{k}m_{i}\right)\widetilde{\mathsf{Rad}}_{\{m_{i}\}_{i=1}^{k}}.\nonumber
\end{align}
Here, (i) is valid due to the zero-mean property of $\{\sigma_i^j\}$, 
	(ii) comes from Jensen's inequality,  and (iii) follows since $\max_x \big(f_1(x)+f_2(x)\big)\leq \max_x f_1(x) + \max_x f_2(x) $.

\subsection{Proof of Lemma~\ref{lemma:bdwrc}}
\label{sec:proof-fact:bdwrc}


Set $n = \sum_{i=1}^k n_i$.
	Let $\{X_j\}_{j=1}^n$ be $n$ i.i.d.~multinomial random variables with parameter $\{w_i\}_{i=1}^k$, and take $$\widehat{n}_i = \sum_{j=1}^n \mathds{1}\{X_j = i\}$$ for each $i\in [k]$.  
From \eqref{eq:defn-Dw} and Definition~\ref{def:rad}, it is easily seen that 
\begin{align}
	\mathsf{Rad}_{n}\big(D(w)\big) & =\mathop{\mathbb{E}}\limits _{\{X_{i}\}_{i=1}^{n}}\left[\mathop{\mathbb{E}}\limits _{\{z_{i}^{j}\}_{j=1}^{\widehat{n}_{i}},\forall i\in[k]}\left[\mathop{\mathbb{E}}\limits _{\{\sigma_{i}^{j}\}_{j=1}^{\widehat{n}_{i}},\forall i\in[k]}\left[\frac{1}{n}\max_{h\in\mathcal{H}}\sum_{i=1}^{k}\sum_{j=1}^{\widehat{n}_{i}}\sigma_{i}^{j}\ell\big(h,z_{i}^{j}\big)\right]\right]\right],\nonumber	
\end{align}
where each $z_i^j$ is independently drawn from $\mathcal{D}_i$, and each $\sigma_i^j$ is an independent  Rademacher random variable.

In addition, Lemma~\ref{lemma:con1} tells us that: for any $i\in [k]$, one has
\begin{align}
\widehat{n}_{i}\geq\frac{1}{3}n_{i}-2\log(2k)\geq\frac{1}{6}n_{i}\qquad\Longrightarrow\qquad\widehat{n}_{i}\geq\left\lceil \frac{1}{6}n_{i}\right\rceil \eqqcolon\widetilde{n}_{i} 
	\label{eq:s}
\end{align}
	with probability exceeding $1-1/(2k)$. 
Defining $\mathcal{E}$ to be the event that $\widehat{n}_i \geq n_i/6$ holds for all $i\in [k]$, 
we can invoke the union bound to see that 
$$
	\mathbb{P}(\mathcal{E})\geq 1/2.
$$
Consequently, we can derive
\begin{align}
\mathsf{Rad}_{n}\big(\mathcal{D}(w)\big) & \geq\mathbb{P}(\mathcal{E})\cdot\mathop{\mathbb{E}}\limits _{\{X_{i}\}_{i=1}^{n}}\left[\mathop{\mathbb{E}}\limits _{\{z_{i}^{j}\}_{j=1}^{\widehat{n}_{i}},\forall i\in[k]}\left[\mathop{\mathbb{E}}\limits _{\{\sigma_{i}^{j}\}_{j=1}^{\widehat{n}_{i}},\forall i\in[k]}\left[\frac{1}{n}\max_{h\in\mathcal{H}}\sum_{i=1}^{k}\sum_{j=1}^{\widehat{n}_{i}}\sigma_{i}^{j}\ell\big(h,z_{i}^{j}\big)\mid\mathcal{E}\right]\right]\right]\nonumber\\
 & \geq\frac{1}{2}\cdot\mathop{\mathbb{E}}\limits _{\{z_{i}^{j}\}_{j=1}^{\widehat{n}_{i}},\forall i\in[k]}\left[\mathop{\mathbb{E}}\limits _{\{\sigma_{i}^{j}\}_{j=1}^{\widehat{n}_{i}},\forall i\in[k]}\left[\frac{1}{n}\max_{h\in\mathcal{H}}\sum_{i=1}^{k}\sum_{j=1}^{\widetilde{n}_{i}}\sigma_{i}^{j}\ell\big(h,z_{i}^{j}\big)\mid\mathcal{E}\right]\right]\nonumber\\
 & =\frac{1}{2}\cdot\frac{\sum_{i=1}^{k}\widetilde{n}_{i}}{n}\widetilde{\mathsf{Rad}}_{\{\widetilde{n}_{i}\}_{i=1}^{k}}\nonumber\\
 & \geq\frac{1}{12}\cdot\frac{1}{6}\widetilde{\mathsf{Rad}}_{\{n_{i}\}_{i=1}^{k}}, 
\end{align}
%
thus concluding the proof.
%

\subsection{Necessity of Assumption~\ref{assump:rad}}\label{app:radlb}

In this subsection, 
we study whether Assumption~\ref{assump:rad} can be replaced by the following weaker assumption, the latter of which only assumes that the Rademacher complexity on each $\mathcal{D}_i$ is well-bounded.
\begin{assum}\label{assump:com}
For each $n\geq 1$, there exists a quantity $\widetilde{C}_n> 0$ (known to the learner) such that
\begin{align}
	\widetilde{C}_n \geq \max_{1\leq i\leq k} \mathsf{Rad}_n(\mathcal{D}_i).
\end{align}
\end{assum}

Formally, we have the following results.
\begin{lemma}\label{lemma:rad11} Let $w^0 = [1/k,1/k,\ldots, 1/k]^{\top}$. 
There exist a group of distributions $\{\mathcal{D}_i\}_{i=1}^k$ and a hypothesis set $\mathcal{H}$ such that
\begin{align}
\mathsf{Rad}_n \big(\mathcal{D}(w^0) \big)\geq \Omega \left(\frac{1}{k}\sum_{i=1}^k \mathsf{Rad}_{n/k}(\mathcal{D}_i) \right)
\end{align}
for $n\geq 12k\log(k)$.
\end{lemma}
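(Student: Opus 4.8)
The plan is to exhibit a single instance built out of $k$ identical, mutually ``disjoint'' copies of a trivial base problem, for which the Rademacher complexity of the uniform mixture decomposes \emph{exactly} into a sum of the per-distribution complexities evaluated at the (random) sub-sample sizes. First I would set up the construction: take a base instance with a one-point instance space and a two-element hypothesis class $\mathcal{H}_0=\{h^+,h^-\}$ and loss $\ell_0(h^{\pm},\cdot)=\pm 1$, so that the base Rademacher complexity at $m$ samples equals $\frac{1}{m}\mathbb{E}\bigl|\sum_{j\le m}\sigma_j\bigr|=\Theta(1/\sqrt m)>0$ for every $m\ge 1$. Let $\mathcal{X}=\bigsqcup_{i=1}^k\mathcal{X}_i$ be the disjoint union of $k$ copies of the base instance space, let $\mathcal{D}_i$ be the base distribution supported on $\mathcal{X}_i$, and let $\mathcal{H}$ be the product class: a hypothesis is a tuple $h=(h_1,\dots,h_k)$ with $h_i\in\mathcal{H}_0$, and $\ell(h,z):=\ell_0(h_i,z)$ whenever $z\in\mathcal{X}_i$ (the other coordinates of $h$ are irrelevant on $\mathcal{X}_i$). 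With $\mathcal{Y}$ trivial this is a legitimate MDL instance, and by construction $\mathsf{Rad}_m(\mathcal{D}_i)$ equals the base complexity for every $i$.

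The key identity I would establish is an exact decomposition. Since the $\mathcal{X}_i$ are disjoint and $\mathcal{H}$ is a product, for any dataset the maximum over $\mathcal{H}$ of a $\sigma$-weighted sum of losses splits as a sum over $i$ of the maximum over $\mathcal{H}_0$ of the sum restricted to points lying in $\mathcal{X}_i$ (separable maximization). Applying this conditionally on the multinomial counts: drawing $n$ i.i.d.\ samples from $\mathcal{D}(w^0)$ and letting $\widehat{n}_i$ be the number landing in $\mathcal{X}_i$, one has $(\widehat{n}_1,\dots,\widehat{n}_k)\sim\mathrm{Multinomial}(n;\tfrac1k,\dots,\tfrac1k)$, the within-$\mathcal{X}_i$ points are i.i.d.\ $\mathcal{D}_i$ given the counts, and hence
\begin{align}
	\mathsf{Rad}_n\bigl(\mathcal{D}(w^0)\bigr)=\frac{1}{n}\,\mathbb{E}_{\{\widehat n_i\}}\Bigl[\,\sum_{i=1}^k \widehat n_i\,\mathsf{Rad}_{\widehat n_i}(\mathcal{D}_i)\Bigr],
\end{align}
with the convention $0\cdot\mathsf{Rad}_0:=0$. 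I would then invoke two monotonicity facts: (i) $m\mapsto m\,\mathsf{Rad}_m(\mathcal{D}_i)$ is non-decreasing, which is the $k=1$ case of Lemma~\ref{fact:wrc}; and (ii) $m\mapsto\mathsf{Rad}_m(\mathcal{D}_i)$ is non-increasing, which is standard and takes one line via $\sum_{j\le m+1}a_j=\frac1m\sum_{l\le m+1}\sum_{j\ne l}a_j$ together with ``$\max$ of an average $\le$ average of $\max$es.''

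Finally, I would control the counts. Applying Lemma~\ref{lemma:con1} to the indicator sequence $\mathds{1}\{z_l\in\mathcal{X}_i\}$ (whose conditional means are $1/k$), for $n\ge 12k\log k$ one gets $\widehat n_i\ge\widetilde n:=\lceil n/(6k)\rceil$ with probability at least $1-1/(2k)$; a union bound makes the event $\mathcal{E}=\{\widehat n_i\ge\widetilde n\ \text{for all }i\}$ hold with probability at least $1/2$. On $\mathcal{E}$, fact~(i) gives $\widehat n_i\,\mathsf{Rad}_{\widehat n_i}(\mathcal{D}_i)\ge\widetilde n\,\mathsf{Rad}_{\widetilde n}(\mathcal{D}_i)$, so the display yields $\mathsf{Rad}_n(\mathcal{D}(w^0))\ge\frac{\mathbb{P}(\mathcal{E})\,\widetilde n}{n}\sum_i\mathsf{Rad}_{\widetilde n}(\mathcal{D}_i)\ge\frac{1}{12k}\sum_i\mathsf{Rad}_{\widetilde n}(\mathcal{D}_i)$; since $\widetilde n\le n/k$, fact~(ii) gives $\mathsf{Rad}_{\widetilde n}(\mathcal{D}_i)\ge\mathsf{Rad}_{n/k}(\mathcal{D}_i)$, which is the claimed bound. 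I expect the only genuinely delicate point to be the conditional decomposition \eqref{eq:defn-Dw}-style step leading to the displayed identity — namely verifying that, conditioned on the multinomial counts, the sub-samples are exactly i.i.d.\ from the respective $\mathcal{D}_i$ and that the product structure of $\mathcal{H}$ lets the maximum factor through coordinate-by-coordinate — whereas the monotonicity and concentration ingredients are essentially off-the-shelf (indeed, the concentration step already appears in the proof of Lemma~\ref{lemma:bdwrc}).
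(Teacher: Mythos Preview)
Your proposal is correct and follows essentially the same approach as the paper: a product hypothesis class over disjoint supports so that the maximum factorizes coordinate-wise, combined with multinomial concentration of the per-distribution counts under the uniform mixture. The only difference is packaging: the paper keeps the base classes $\mathcal{H}_i,\mathcal{D}_i$ generic and, rather than conditioning on the random counts directly, routes the argument through the deterministic weighted complexity $\widetilde{\mathsf{Rad}}_{\{n_i\}}$ and invokes Lemma~\ref{lemma:bdwrc} (whose proof is exactly your concentration step) to obtain $\frac{1}{k}\sum_i\mathsf{Rad}_{n/k}(\mathcal{D}_i)=\widetilde{\mathsf{Rad}}_{\{n/k\}}\le 72\,\mathsf{Rad}_n(\mathcal{D}(w^0))$ in one line---so your extra monotonicity fact~(ii) and the specific $\{\pm1\}$ base instance are not needed.
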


\begin{proof}
Without loss of generality, consider the case where $\mathcal{Y}=\{0\}$ and $\ell(h,(x,y)) =h(x)-y=h(x)$. We can then view $\mathcal{D}_i$ as a distribution over $\mathcal{X}_i$, as there is only one element in $\mathcal{Y}$.

Pick $k$  subsets of $\mathcal{X}$ as $\{\mathcal{X}_i \}_{i=1}^k$. For each $i\in [k]$, we choose the distribution $\mathcal{D}_i$ to be an arbitrary distribution supported on $\mathcal{X}_i$. In addition, we define $\mathcal{H}_i$ to be a set of hypothesis obeying $h(x)=0$ for all $x\notin \mathcal{X}_i$ for each $i\in [k]$. For a collection of hypothesis $\{h_i\}_{i=1}^k$ such that $h_i\in \mathcal{H}_i$, we define $\mathsf{joint}( \{h_i\}_{i=1}^k)$ to be the hypothesis $h$ such that 
\begin{align}
	h(x) =\begin{cases} h_i(x) &\text{if }x\in \mathcal{X}_i,~i\in[k]; \\ 0 &\text{if }x\notin \cup_i \mathcal{X}_i.\end{cases}
\end{align}
%
%
The hypothesis set $\mathcal{H}$ is then constructed as
\begin{align}
\mathcal{H} = \big\{ \mathsf{joint}(\{h_i\}_{i=1}^k) \mid  h_i\in \mathcal{H}_i, \forall i\in [k] \big\}.
\end{align}

Recalling the definition of $\widetilde{\mathsf{Rad}}_{\{n_i\}_{i=1}^k  }$, we see that 
\begin{align}
\widetilde{\mathsf{Rad}}_{ \{n_i\}_{i=1}^k} &  =\mathop{\mathbb{E}}\limits _{\{x_{i}^{j}\}_{j=1}^{n_{i}},\forall i\in[k]}\left[\mathop{\mathbb{E}}\limits _{\{\sigma_{i}^{j}\}_{j=1}^{n_{i}},\forall i\in[k]}\left[\frac{1}{\sum_{i=1}^{k}n_{i}}\max_{h\in\mathcal{H}}\sum_{i=1}^{k}\sum_{j=1}^{n_{i}}\sigma_{i}^{j}h(x_i^j)\right]\right]  \nonumber
\\ &  = \mathop{\mathbb{E}}\limits _{\{x_{i}^{j}\}_{j=1}^{n_{i}},\forall i\in[k]}\left[\mathop{\mathbb{E}}\limits _{\{\sigma_{i}^{j}\}_{j=1}^{n_{i}},\forall i\in[k]}\left[\frac{1}{\sum_{i=1}^{k}n_{i}}\sum_{i=1}^{k}\max_{h_i \in \mathcal{H}_i}\sum_{j=1}^{n_{i}}\sigma_{i}^{j}h_i(x_i^j)\right]\right]  \label{eq:radlb1}
\\ & = \frac{1}{\sum_{i=1}^k n_i} \sum_{i=1}^{k} n_i \mathop{\mathbb{E}}\limits _{\{z_{i}^{j}\}_{j=1}^{n_{i}},\forall i\in[k]}\left[\mathop{\mathbb{E}}\limits _{\{\sigma_{i}^{j}\}_{j=1}^{n_{i}}}\left[\frac{1}{n_i}\max_{h_i \in \mathcal{H}_i}\sum_{j=1}^{n_{i}}\sigma_{i}^{j}h_i(x_i^j)\right]\right] \nonumber
\\ & =\frac{1}{\sum_{i=1}^k n_i}\sum_{i=1}^k n_i \mathsf{Rad}_{n_i}(\mathcal{D}_i),\nonumber
\end{align}
where \eqref{eq:radlb1} results from the definition of $\mathcal{H}$.
By taking $n_i = \frac{n}{k}$ for all $i\in [k]$ and applying Lemma~\ref{lemma:bdwrc}, we reach 
\begin{align}
 \frac{1}{k}\sum_{i=1}^k \mathsf{Rad}_{n/k}(\mathcal{D}_i) = \widetilde{\mathsf{Rad}}_{\{n_i\}_{i=1}^k }\leq 72 \mathsf{Rad}_{n}\big(\mathcal{D}(w^0)\big)
\end{align}
as claimed. 
\end{proof}

By virtue of Lemma~\ref{lemma:rad11}, if we set $\widetilde{C}_n =\widetilde{\Theta}( \sqrt{d/n})$ in Assumption~\ref{assump:com}, 
then the best possible upper bound on $\mathsf{Rad}_n(\mathcal{D}(w^0))$ is $\mathsf{Rad}_n(\mathcal{D}(w^0)) =\widetilde{\Theta}( \sqrt{dk/n} )$, which implies that more samples are needed to learn the mixed distribution $\mathcal{D}(w^0)$ than learning each individual distribution.
Moreover, under the construction in Lemma~\ref{lemma:rad11}, 
if we further assume that $\min_{h_i\in \mathcal{H}_i} \mathop{\mathbb{E}}_{x\sim \mathcal{D}_i}[h_i(x)]=1/2$ for all $i\in [k]$, then to find $h$ such that
\begin{align}
\max_{1\leq i\leq k}\mathop{\mathbb{E}}_{x\sim \mathcal{D}_i}[h(x)]\leq \frac{1}{2}+\varepsilon,
\end{align}
we need to find, for each $i\in[k]$, a hypothesis $h_i\in \mathcal{H}_i$ such that
\begin{align}
	\mathop{\mathbb{E}}_{x\sim \mathcal{D}_i}[h_i(x)] \leq \frac{1}{2}+\varepsilon. 
\end{align}
Following this intuition, we can construct a  counter example under Assumption~\ref{assump:com}, 
with a formal theorem stated as follows.
\begin{theorem}\label{thm:radlb}
	There exist a group of distributions $\{\mathcal{D}_i\}_{i=1}^k$ and a hypothesis set $\mathcal{H}$ such that Assumption~\ref{assump:com} holds with $\widetilde{C}_n = \widetilde{O}\left(\sqrt{\frac{d}{n}}\right)$, and it takes at least $\widetilde{\Omega}\left( \frac{dk}{\varepsilon^2} \right)$ samples to find some $h\in \mathcal{H}$ obeying
\begin{align}
\max_{i\in [k]}L(h,\basis_i)\leq \min_{h'\in \mathcal{H}}\max_{i\in [k]}L(h',\basis_i)+\varepsilon.\nonumber
\end{align}
\end{theorem}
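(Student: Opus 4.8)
\textbf{Proof plan for Theorem~\ref{thm:radlb}.}
The plan is to leverage the product construction from Lemma~\ref{lemma:rad11}, augmented so that each individual subproblem is itself a hard distinguishing instance in the spirit of Theorem~\ref{thm:lb}. First I would pick $k$ disjoint instance spaces $\mathcal{X}_1,\dots,\mathcal{X}_k$, and on each $\mathcal{X}_i$ place a copy of a hard instance of VC dimension $\widetilde{\Theta}(d)$ requiring $\widetilde{\Omega}(d/\varepsilon^2)$ samples to solve on a single distribution (e.g.\ the ``needle in a haystack'' construction used in the proof of Theorem~\ref{thm:lb}, restricted to a single distribution, which forces $\widetilde{\Omega}(d/\varepsilon^2)$ samples to identify an $\varepsilon$-optimal hypothesis when the optimal value is $1/2$). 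Each class $\mathcal{H}_i$ consists of hypotheses vanishing outside $\mathcal{X}_i$, and $\mathcal{H}=\{\mathsf{joint}(\{h_i\}_{i=1}^k): h_i\in\mathcal{H}_i\}$ as in Lemma~\ref{lemma:rad11}. Because each $\mathcal{D}_i$ is supported on $\mathcal{X}_i$ and each $h=\mathsf{joint}(\{h_i\})$ satisfies $h(x)=h_i(x)$ for $x\in\mathcal{X}_i$, we have $L(h,\basis_i)=\mathbb{E}_{x\sim\mathcal{D}_i}[h_i(x)]$, so the MDL objective over $\mathcal{H}$ decouples completely into $k$ independent single-distribution learning problems, one per coordinate.

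Second, I would verify the two claimed properties. For the Rademacher bound: since $\mathcal{H}_i$ is (a restriction of) a class of VC dimension $\widetilde{O}(d)$ on $\mathcal{X}_i$, and $\mathcal{D}_i$ is supported on $\mathcal{X}_i$, the standard VC bound gives $\mathsf{Rad}_n(\mathcal{D}_i)\leq\widetilde{O}(\sqrt{d/n})$; this verifies Assumption~\ref{assump:com} with $\widetilde{C}_n=\widetilde{O}(\sqrt{d/n})$. (Note that, by Lemma~\ref{lemma:rad11} and Lemma~\ref{lemma:bdwrc}, $\mathsf{Rad}_n(\mathcal{D}(w^0))\asymp\frac1k\sum_i\mathsf{Rad}_{n/k}(\mathcal{D}_i)=\widetilde{\Theta}(\sqrt{dk/n})$, so Assumption~\ref{assump:rad} would instead force the much larger quantity $\widetilde{\Theta}(\sqrt{dk/n})$ — this is precisely the gap we are exhibiting.) For the lower bound: to find $h\in\mathcal{H}$ with $\max_i L(h,\basis_i)\leq\mathsf{OPT}+\varepsilon=\tfrac12+\varepsilon$, by the decoupling above one must, for \emph{every} $i\in[k]$, produce $h_i\in\mathcal{H}_i$ with $\mathbb{E}_{x\sim\mathcal{D}_i}[h_i(x)]\leq\tfrac12+\varepsilon$. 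Here I would design each single-coordinate instance so that $\mathcal{D}_i$'s identity is drawn from a family of hard instances (e.g.\ a uniformly random permutation of labels within the $i$-th block), making $\varepsilon$-optimality on coordinate $i$ require $\widetilde{\Omega}(d/\varepsilon^2)$ samples from $\mathcal{D}_i$ with constant probability, by the argument mirroring Lemma~\ref{lemma:lb1}.

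Third, I would combine the $k$ per-coordinate lower bounds. Because the blocks are on disjoint instance spaces and the randomness of the $k$ sub-instances is independent, a sample drawn from $\mathcal{D}_i$ gives information only about coordinate $i$. Letting $M_i$ denote the number of samples the algorithm takes from $\mathcal{D}_i$, success on all coordinates with probability $\geq 3/4$ forces $\mathbb{E}[M_i]\geq\widetilde{\Omega}(d/\varepsilon^2)$ for each $i$ (via exactly the reduction/change-of-measure argument in Appendix~\ref{sec:proof-lemma:lb1}, applied coordinatewise, using a union bound to argue that a constant-probability global success implies constant-probability success on each coordinate separately). Summing over $i\in[k]$ gives $\mathbb{E}[\sum_i M_i]\geq\widetilde{\Omega}(dk/\varepsilon^2)$, as claimed.

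The main obstacle I anticipate is making the per-coordinate information-theoretic argument fully rigorous in the \emph{adaptive} sampling model: the learner may interleave queries across coordinates and choose which $\mathcal{D}_i$ to sample based on past observations. The cleanest route is to note that, conditioned on the (independent) randomizations of the $k$ sub-instances, the transcript restricted to coordinate $i$ is a function only of the samples from $\mathcal{D}_i$, so one can condition on the full randomness of all coordinates $j\neq i$ and reduce to a single-coordinate problem where $M_i$ is a (possibly random, stopping-time-like) sample count; the change-of-measure bound in Lemma~\ref{lemma:lb1} is already stated for events of the form $\{h_{\mathrm{out}}=h^\star,\,M_i\leq m\}$, precisely to accommodate this. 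Carefully threading the ``$\geq 3/4$ global success $\Rightarrow$ $\geq 1/2$ per-coordinate success on the relevant event'' step, and checking that the decoupling of the objective genuinely holds over the joint class $\mathcal{H}$ (not merely over product hypotheses), are the places where care is needed, but both follow the template already established for Theorem~\ref{thm:lb}.
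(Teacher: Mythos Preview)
Your proposal is correct and follows essentially the same route as the paper: use the product construction of Lemma~\ref{lemma:rad11}, place on each coordinate a needle-in-a-haystack instance with $|\mathcal{H}_i|=2^d$ (hence $\mathsf{Rad}_n(\mathcal{D}_i)=\widetilde{O}(\sqrt{d/n})$, verifying Assumption~\ref{assump:com}), observe that the MDL objective decouples so that $\varepsilon$-optimality forces $\varepsilon$-optimality on every coordinate, and then sum the $k$ per-coordinate $\widetilde{\Omega}(d/\varepsilon^2)$ lower bounds. If anything, you are more careful than the paper's own proof, which simply asserts that the three per-coordinate conditions suffice and defers the single-distribution lower bound to ``the arguments in Theorem~\ref{thm:lb}''; your discussion of handling adaptivity by conditioning on the independent randomness of the other blocks is exactly the right way to make that step rigorous.
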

\begin{proof}
With the construction in Lemma~\ref{lemma:rad11}, it suffices to find some $\mathcal{H}'$ and $\mathcal{D}'$ such that the following three conditions hold: 
\begin{enumerate}
\item The following inequality holds: 
\begin{align}
	\mathsf{Rad}_n(\mathcal{D}',\mathcal{H}') \coloneqq \frac{1}{n} \mathop{\mathbb{E}}_{ \{x^j\}_{j=1}^n \overset{\mathrm{i.i.d.}}{\sim } \mathcal{D}', \,\{\sigma^j\}_{j=1}^n \overset{\mathrm{i.i.d.}}{\sim } \{\pm 1\} }\left[\max_{h'\in \mathcal{H}'}\sum_{j=1}^n\sigma^j h'(x^j) \right] \leq \widetilde{C}_n\label{eq:radcond1} ;
\end{align}
\item $\min_{h'\in \mathcal{H}}\mathbb{E}_{x\sim \mathcal{D}'}[h(x)]=\frac{1}{2}$;
\item It takes at least $\widetilde{\Omega}\left(\frac{d}{\varepsilon^2}\right)$ samples to find some $h$ such that $\mathbb{E}_{x\sim \mathcal{D}'}[h(x)]\leq \frac{1}{2}+\varepsilon$.
\end{enumerate}

This construction is also straightforward. Set $N=2^d$ and $\mathcal{X}' = \{0,1\}^{N}$. Let $\mathcal{D}'$ be the distribution 
$$
	\mathbb{P}_{ \mathcal{D}'}\{x\} = \prod_{n=1}^N \mathbb{P}_{\mathcal{D}'_n}\{x_n\}
$$  
where
\begin{align}
	\mathbb{P}_{\mathcal{D}'_{n^*}}\{x_{n^*}\} &=  \frac{1}{2}\mathds{1}\{x_{n^*}=1\}+\frac{1}{2}\mathds{1}\{x_{n^*}=0\}  &&\text{for some }n^*; \\
		\mathbb{P}_{\mathcal{D}'_{n}}\{x_{n}\} &= 
		\left(\frac{1}{2}+2\varepsilon \right)\mathds{1}\{x_{n^*}=1\}+ \left(\frac{1}{2}-2\varepsilon \right)\mathds{1}\{x_{n^*}=0\}  &&\text{for all }n\neq n^*.
\end{align}
We then choose $\mathcal{H}'=\{h^n\}_{n=1}^N$ with $h^n(x)=x_n$ for each $n\in [N]$. It is then easy to verify that the first two conditions hold. Regarding the third condition,
following the arguments in Theorem~\ref{thm:lb}, we need at least $\widetilde{\Omega}\left(d/{\varepsilon^2} \right)$ i.i.d. samples from $\mathcal{D}'$ to identify $n^*$. The proof is thus completed.
\end{proof}

\section{Proof for multi-loss multi-distribution learning in Theorem~\ref{thm:multi-obj}}\label{app:multiobj}


The proof of Theorem~\ref{thm:multi-obj} is similar as that of Theorem~\ref{thm:main}, except that we need to establish uniform convergence for the new loss estimators in Algorithm~\ref{alg:obj}. 

In Algorithm~\ref{alg:obj}, we re-define the total stepsize and auxiliary step size as 
\begin{align}
T\coloneqq\frac{20000\log\left(\frac{kR}{\delta\varepsilon}\right)}{\varepsilon^2}, \qquad \qquad \qquad T_1 \coloneqq
 \frac{40000\left(k\log ( \frac{kR}{\varepsilon_1} )+d\log\big( (\frac{kd}{\varepsilon_1} )+\log(\frac{1}{\delta}) \big) \right)}{\varepsilon_1^2}.\nonumber
 \end{align}

\subsection{Main steps of the proof}

Let us begin by presenting the key lemmas needed to establish Theorem~\ref{thm:multi-obj}. 
Recall that $u^{t}=\{u_{i,\ell}^t\}_{\ell\in \mathcal{L}, i\in [k]}$. For any $i\in [k]$, $u\in \Delta([k]\times \mathcal{L})$, $h\in \mathcal{H}$ and $\ell\in \mathcal{L}$, recall that 
\begin{align}
L_i^{\ell}(h) = \mathop{\mathbb{E}}\limits_{(x,y)\sim \mathcal{D}_i} \big[\ell\big(h, (x,y)\big) \big] \qquad \text{and}\qquad L(h,u) =  \sum_{i,\ell}u_{i,\ell} L_i^{\ell}(h).
\end{align}
Given $\pi\in \Delta(\mathcal{H})$, we define $L_i^{\ell}(h_{\pi})  = \mathbb{E}_{h\sim \pi}[L_i^{\ell}(h)]$.
The first step is to establish the goodness of the empirical minimizer $h^t$, akin to Lemma~\ref{lemma:opth}. 
\begin{lemma}\label{lemma:opth_multi}
With probability at least $1-\delta/4$, it holds that
\begin{align}
L(h^t,u^t)\leq \min_{h\in \mathcal{H}}L(h,u^t)+\varepsilon_1
\end{align}
for any $1\leq t\leq T$, 
where $h^t$ (resp.~$u^t$) is the hypothesis (resp.~weight vector) computed in round $t$ of Algorithm~\ref{alg:obj}.
\end{lemma}

Similar to Lemma~\ref{lemma:opt}, the next step is to prove that the output hypothesis $h^{\mathsf{final}}$ is $\varepsilon$-optimal. 
\begin{lemma}\label{lemma:opt_multi}
With probability at least $1-\delta/2$, the output policy $h^{\mathsf{final}}$ is $\varepsilon$-optimal in the sense that
\begin{align}
\max_{i\in [k],\ell\in \mathcal{L}}\frac{1}{T}\sum_{t=1}^T L^{\ell}_i(h^t)\leq \min_{h\in \mathcal{H}}\max_{i\in [k],\ell\in \mathcal{L}}L^{\ell}_i(h)+\varepsilon.\nonumber
\end{align}
\end{lemma}


Furthermore, the following lemma  upper bounds the sample complexity of the proposed algorithm. 
\begin{lemma}\label{lemma:sc_obj}
With probability at least $1-\delta/2$, the sample complexity of Algorithm~\ref{alg:obj} is bounded by 
\begin{align}
O\left( \frac{\left(d\log\left(\frac{d}{\varepsilon}\right)+k\log\left(\frac{kR}{\delta\varepsilon}\right)\right) \min\{\log(R),k\}}{\varepsilon^2} \cdot \log^5(k)\log^3\left( \frac{k\log(R)}{\delta\varepsilon}\right)  \right).
\label{eq:sample-complexity-lemma-obj}
\end{align}
\end{lemma}

Armed with the above lemmas, we can readily establish Theorem~\ref{thm:multi-obj}. 
From Lemma~\ref{lemma:opth_multi} and Lemma~\ref{lemma:opt_multi}, we know that with probability exceeding $1-\delta$, the output hypothesis $h^{\mathsf{final}}$ is $\varepsilon$-optimal. Then by virtue of Lemma~\ref{lemma:sc_obj}, 
the sample complexity is no greater than \eqref{eq:sample-complexity-lemma-obj}, as advertised in the theorem.
The rest of this section is thus dedicated to proving the above lemmas.



\subsection{Proof of Lemma~\ref{lemma:opth_multi}}

By definition of $h^t$, it suffices to show that
\begin{align}
\left| \widehat{L}^t(h,u^t)-L(h,u^t) \right|\leq \frac{1}{2}\varepsilon_1
\end{align}
holds simultaneously for every $h\in \mathcal{H}$ and $1\leq t\leq T$. 
Take 
\begin{equation}
\widehat{L}^{\ell}_i(h) =  \frac{1}{n_i^t} \sum_{j=1}^{n_i^t} \ell(h, (x_{i,j}, y_{i,j}) ) .
\end{equation} 
By definition,  we have 
\begin{align}
 \widehat{L}^t(h,u^t)-L(h,u^t)   &  =\sum_{i,\ell}u_{i,\ell}^t \widehat{L}^{\ell}_i(h)
 -\sum_{i,\ell}u^t_{i,\ell}L^{\ell}_i(h) 
 = \sum_{i,\ell}u_{i,\ell}^t \big( \widehat{L}^{\ell}_i(h) - L^{\ell}_i(h)\big) \nonumber
 \\ & \leq \sum_{i=1}^k \left(\sum_{\ell\in \mathcal{L}}u_{i,\ell}^t \right) \cdot \max_{\ell\in \mathcal{L}}\big(\widehat{L}^{\ell}_i(h)-L^{\ell}_i(h)\big),
 \label{eq:rxu-obj}
\end{align}
and similarly, we can also lower bound
\begin{align}
& \widehat{L}^t(h,u^t) - L(h,u^t)\geq \sum_{i=1}^k \left(\sum_{\ell\in \mathcal{L}}u_{i,\ell}^t\right) \cdot \min_{\ell \in \mathcal{L}}\big( \widehat{L}^{\ell}_i(h)-L^{\ell}_i(h) \big).\label{eq:lxu-obj}
\end{align}
Consequently, it boils down to developing a uniform upper (resp.~lower) bound on the right-hand side of \eqref{eq:rxu-obj} (resp.~\eqref{eq:lxu-obj}).

To begin with, let us fix any $n=\{n_i\}_{i=1}^k\in \mathbb{N}^k$, $\{w_i\}_{i=1}^k\in \Delta(k) $ and $\{m_i\}_{i=1}^k \in [R]^k$. 
Recall the definitions of $(x_{i,j},y_{i,j})$,  $(x_{i,j}^+,y^+_{i,j})$, $\widetilde{\mathcal{S}}(n)$, $\widetilde{\mathcal{S}}^+(n)$, $\mathcal{C}$, and $\sigma(n)=\big\{\{\sigma_{i,j}\}_{j=1}^{n_i}\big\}_{i=1}^k$ in the proof of Lemma~\ref{lemma:opth} (see Appendix~\ref{sec:proof-lemmas-VC}).  
For any $\lambda \in [0,\min_{i}\frac{n_i}{w_i}]$, define 
\begin{align}
 E\left(\lambda, \{n_i\}_{i=1}^k, \{w_i\}_{i=1}^k, \{m_i\}_{i=1}^k \right) \coloneqq \mathop{\mathbb{E}}\limits_{\widetilde{\mathcal{S}}(n)}\left[\exp\left(\lambda \cdot \max_{h\in \mathcal{H}}\left(\sum_{i=1}^k w_i \cdot \left(   \frac{1}{n_i}\sum_{j=1}^{n_i} \ell^{m_i}\big(h,(x_{i,j},y_{i,j}) \big)   - L^{\ell_{m_i}}_i (h)   \right)\right)\right)\right].\nonumber
\end{align}
%
Repeating the same symmetrization arguments in the proof of Lemma~\ref{lemma:opth}, we can derive
\begin{align}
 & E\left(\lambda,\{n_{i}\}_{i=1}^{k},\{w_{i}\}_{i=1}^{k},\{m_{i}\}_{i=1}^{k}\right)\nonumber\\
 & =\mathop{\mathbb{E}}\limits_{\widetilde{\mathcal{S}}(n)}\left[\max_{h\in\mathcal{H}}\exp\left(\lambda\sum_{i=1}^{k}\frac{w_{i}}{n_{i}}\sum_{j=1}^{n_{i}}\left(\ell^{m_{i}}\big(h,(x_{i,j},y_{i,j})\big)-\mathop{\mathbb{E}}\limits_{(x_{i,j}^{+},y_{i,j}^{+})}\big[\ell^{m_{i}}\big(h,(x_{i,j}^{+},y_{i,j}^{+})\big)\big]\right)\right)\right]\nonumber\\
 & \leq\mathop{\mathbb{E}}\limits_{\widetilde{\mathcal{S}}(n)}\left[\max_{h\in\mathcal{H}}\mathop{\mathbb{E}}\limits_{\widetilde{\mathcal{S}}^{+}(n)}\left[\exp\left(\lambda\sum_{i=1}^{k}\frac{w_{i}}{n_{i}}\sum_{j=1}^{n_{i}}\left\{ \ell^{m_{i}}\big(h,(x_{i,j},y_{i,j})\big)-\ell^{m_{i}}\big(h,(x_{i,j}^{+},y_{i,j}^{+})\big)\right\} \right)\right]\right]\nonumber\\
 & \leq\mathop{\mathbb{E}}\limits_{\widetilde{\mathcal{S}}(n),\widetilde{\mathcal{S}}^{+}(n)}\left[\max_{h\in\mathcal{H}}\exp\left(\lambda\sum_{i=1}^{k}\frac{w_{i}}{n_{i}}\sum_{j=1}^{n_{i}}\left\{ \ell^{m_{i}}\big(h,(x_{i,j},y_{i,j})\big)-\ell^{m_{i}}\big(h,(x_{i,j}^{+},y_{i,j}^{+})\big)\right\} \right)\right]\nonumber\\
 & =\mathop{\mathbb{E}}\limits_{\widetilde{\mathcal{S}}(n),\widetilde{\mathcal{S}}^{+}(n)}\left[\mathop{\mathbb{E}}\limits_{\sigma(n)}\left[\max_{h\in\mathcal{H}_{\min,\mathcal{C}}}\exp\left(\lambda\sum_{i=1}^{k}\frac{w_{i}}{n_{i}}\sum_{j=1}^{n_{i}}\sigma_{i,j}\left\{ \ell^{m_{i}}\big(h,(x_{i,j},y_{i,j})\big)-\ell^{m_{i}}\big(h,(x_{i,j}^{+},y_{i,j}^{+})\big)\right\} \right)\bigg|\,\mathcal{C}\right]\right]\nonumber\\
 & \leq\mathop{\mathbb{E}}\limits_{\widetilde{\mathcal{S}}(n),\widetilde{\mathcal{S}}^{+}(n)}\left[\mathop{\mathbb{E}}\limits_{\sigma(n)}\left[|\mathcal{H}_{\min,\mathcal{C}}|\max_{h\in\mathcal{H}_{\min,\mathcal{C}}}\exp\left(\lambda\sum_{i=1}^{k}\frac{w_{i}}{n_{i}}\sum_{j=1}^{n_{i}}\sigma_{i,j}\left\{ \ell^{m_{i}}\big(h,(x_{i,j},y_{i,j})\big)-\ell^{m_{i}}\big(h,(x_{i,j}^{+},y_{i,j}^{+})\big)\right\} \right)\bigg|\,\mathcal{C}\right]\right]\nonumber\\
 & \leq\left(2kT_{1}+1\right)^{d}\mathop{\mathbb{E}}\limits_{\widetilde{\mathcal{S}}(n),\widetilde{\mathcal{S}}^{+}(n)}\left[\mathop{\mathbb{E}}\limits_{\sigma(n)}\left[\max_{h\in\mathcal{H}}\exp\left(\lambda\sum_{i=1}^{k}\frac{w_{i}}{n_{i}}\sum_{j=1}^{n_{i}}\sigma_{i,j}\left\{ \ell^{m_{i}}\big(h,(x_{i,j},y_{i,j})\big)-\ell^{m_{i}}\big(h,(x_{i,j}^{+},y_{i,j}^{+})\big)\right\} \right)\bigg|\,\mathcal{C}\right]\right].\label{eq:wxx1-obj}
\end{align}
For any given $\mathcal{C}$ and $h\in \mathcal{H}$, by observing that $\lambda \frac{w_i}{n_i}\leq 1$ we have (similar to the arguments for \eqref{eq:xxxx2})
\begin{align}
 & \mathop{\mathbb{E}}\limits_{\sigma(n)}\left[\exp\left(\lambda\sum_{i=1}^{k}\frac{w_{i}}{n_{i}}\sum_{j=1}^{n_{i}}\sigma_{i,j}\left\{ \ell^{m_{i}}\big(h,(x_{i,j},y_{i,j})\big)-\ell^{m_{i}}\big(h,(x_{i,j}^{+},y_{i,j}^{+})\big)\right\} \right)\bigg|\,\mathcal{C}\right]\nonumber\\
 & \qquad =\prod_{i=1}^{k}\prod_{j=1}^{n_{i}}\mathop{\mathbb{E}}\limits_{\sigma_{i,j}}\left[\lambda\sigma_{i,j}\cdot\frac{w_{i}}{n_{i}}\left\{ \ell^{m_{i}}\big(h,(x_{i,j},y_{i,j})\big)-\ell^{m_{i}}\big(h,(x_{i,j}^{+},y_{i,j}^{+})\big)\right\} \bigg|\,\mathcal{C}\right] \notag\\
 & \qquad \leq\exp\left(2\lambda^{2}\cdot\sum_{i=1}^{k}\frac{w_{i}^{2}}{n_{i}}\right).\label{eq:wxx2-obj}
\end{align}
Substituting \eqref{eq:wxx1-obj} into \eqref{eq:wxx2-obj} gives
\begin{align}
E\left(\lambda, \{n_i\}_{i=1}^k, \{w_i\}_{i=1}^k, \{m_i\}_{i=1}^k \right)\leq 
\left(2kT_{1}+1\right)^{d} \exp\left(2\lambda^2 \cdot \sum_{i=1}^k \frac{(w_i)^2}{n_i}\right).\label{eq:wxx3-obj}
\end{align}
It then follows that, for any $0<\varepsilon'<1$, 
\begin{align}
  & \mathbb{P}\left(\max_{h\in\mathcal{H}}\sum_{i=1}^{k}w_{i}\left\{ \frac{1}{n_{i}}\sum_{j=1}^{n_{i}}\ell^{m_{i}}\big(h,(x_{i,j},y_{i,j})\big)-L_{i}^{\ell^{m_{i}}}(h)\right\} \geq\varepsilon'\right)\nonumber\\
 & \qquad\leq \left(2kT_{1}+1\right)^{d}\min_{\lambda\in[0,\min_{i}\frac{n_{i}}{m_{i}}]}\exp\left(2\lambda^{2}\sum_{i=1}^{k}\frac{(w_{i})^{2}}{n_{i}}-\lambda\varepsilon'\right),\label{eq:wxx3.5-obj}
\end{align}
where we have repeated the arguments for \eqref{eq:lxs}.

 Next, for any $\kappa>0$, we define $\mathcal{B}(\kappa)$ to be the set of tuples as follows:
\begin{align}
    \mathcal{B}(\kappa) \coloneqq \left\{ n=\{n_i\}_{i=1}^k, w=\{w_i\}_{i=1}^k , m=\{m_i\}_{i=1}^k
 \mid n_i\geq \kappa w_i, n_i\in [T_1], m_i \in [R],\forall i\in[k], w\in \Delta_{\varepsilon_1/(8k)}(k)  \right\}.\nonumber
\end{align}
Then it can be easily verified that
$$
|\mathcal{B}(\kappa)|\leq (T_1 R)^k \cdot |\Delta_{\varepsilon_1/(8k)}(k)|\leq \left(\frac{8kT_1 R}{\varepsilon_1}\right)^k
$$ 
for any $\kappa >0$. 
Taking this together with \eqref{eq:wxx3.5-obj}, we can reach
\begin{align}
 & \sum_{n,w,m\in\mathcal{B}(\kappa)}\mathbb{P}\left(\max_{h\in\mathcal{H}} \sum_{i=1}^{k}w_{i}\left(\frac{1}{n_{i}}\sum_{j=1}^{n_{i}}\ell^{m_{i}}\big(h,(x_{i,j},y_{i,j})\big)-L_{i}^{\ell^{m_{i}}}(h,w)\right)\geq\varepsilon'\right)\nonumber\\
 & \qquad \leq \left(2kT_{1}+1\right)^{d}\cdot\left(\frac{8kT_{1}R}{\varepsilon_{1}}\right)^{k}\cdot\min_{\lambda\in[0,\kappa]}\exp(2\lambda^{2}/\kappa-\lambda\varepsilon')\nonumber\\
 & \qquad \leq \left(2kT_{1}+1\right)^{d}\cdot\left(\frac{8kT_{1}R}{\varepsilon_{1}}\right)^{k}\cdot\exp(-\kappa(\varepsilon')^{2}/8).\label{eq:wxx5-obj}
\end{align}
By setting $\varepsilon' = {\varepsilon_1}/{8}$ and $\kappa_0 = \frac{d\log(2kT_1)+k\log(8kT_1R/\varepsilon_1)+\log(\delta/8)}{8(\varepsilon')^2}\leq T_1/2$, we can show that: with probability exceeding $1-\delta/8$,  
\begin{align}
\max_{h\in\mathcal{H}}\sum_{i=1}^{k}w_{i}\left(\frac{1}{n_{i}}\sum_{j=1}^{n_{i}}\ell^{m_{i}}\big(h,(x_{i,j},y_{i,j})\big)-L_{i}^{\ell^{m_{i}}}(h,w)\right)<\varepsilon'
\end{align}
holds simultaneously for all $ (n,w,m )\in \mathcal{B}(\kappa_0) $. 
Moreover, observing that $n_i^t\geq 2\kappa_0\sum_{\ell\in \mathcal{L}}u_{i,\ell}^t$, we see that: with probability at least $1-\delta/8$, for any $1\leq t\leq T$ one has
\begin{align}
\max_{h\in\mathcal{H}}\sum_{i=1}^{k}\left(\sum_{\ell\in\mathcal{L}}u_{i,\ell}^{t}\right)\max_{\ell\in\mathcal{L}}\left(\frac{1}{n_{i}^{t}}\sum_{j=1}^{n_{i}^{t}}\ell\big(h,(x_{i,j},y_{i,j})\big)-L_{i}^{\ell}(h)\right)<\varepsilon'+\frac{\varepsilon_{1}}{4}\leq\frac{\varepsilon_{1}}{2}.\label{eq:ee1-obj}
\end{align}

Applying the same arguments, we can also show that: with probability exceeding $1-\delta/8$, for any $1\leq t \leq T$ one has
\begin{align}
\max_{h\in\mathcal{H}}\sum_{i=1}^{k}\left(\sum_{\ell\in\mathcal{L}}u_{i,\ell}^{t}\right)\max_{\ell\in\mathcal{L}}\left(L_{i}^{\ell}(h)-\frac{1}{n_{i}^{t}}\sum_{j=1}^{n_{i}^{t}}\ell\big(h,(x_{i,j},y_{i,j})\big)\right)\leq\frac{\varepsilon_{1}}{2}.\label{eq:ee2-obj}
\end{align}
Combining \eqref{eq:lxu-obj}, \eqref{eq:rxu-obj}, \eqref{eq:ee1-obj} and \eqref{eq:ee2-obj} allows one to conclude that: with probability at least $1-\delta/4$, 
\begin{align}
\left|\widehat{L}(h,u^t)-L(h,u^t) \right|\leq \frac{\varepsilon_1}{2}\nonumber
\end{align}
holds simultaneously for every $1\leq t\leq T$ and every $h\in \mathcal{H}$. 
The proof is thus completed.

\subsection{Proof of Lemma~\ref{lemma:opt_multi}}

In the rest of Appendix~\ref{app:multiobj}, we take $$\mathsf{OPT} \coloneqq \max_{u\in \Delta([k]\times \mathcal{L})}\min_{h\in \mathcal{H}}L(h,u) = \min_{\pi\in \Delta(\mathcal{H})}\max_{i\in [k],\ell\in \mathcal{L}}L_i^{\ell}(h_{\pi})\leq  \min_{h\in \mathcal{H}}\max_{i\in [k],\ell\in \mathcal{L}} L^{\ell}_i(h).$$ 
Let 
$$
    v^t = L(h^t,u^t)-\mathsf{OPT} 
    \qquad 
    \text{and}
    \qquad 
    f^t = \min_{h\in \mathcal{H}}L(h,u^t)- \mathsf{OPT}\leq 0.
$$
By virtue of Lemma~\ref{lemma:opth_multi}, we know that with probability at least $1-\delta/4$, 
$$L(h^t,u^t)\leq \min_{h\in \mathcal{H}}L(h,u^t)+\varepsilon_1,$$ 
and we have $v^t \leq f^t + \varepsilon_1 \leq  \varepsilon_1$ for any $1\leq t\leq T$.

Let $\delta'= \frac{\delta}{4(T+kR+1)}$.  Note that $\eta \leq 1$ and $|\widehat{r}_{i,\ell}^t|\leq 1$ for any proper $(i,\ell,t)$. 
Direct computation gives
\begin{align}
\log\left(\frac{\sum_{i=1}^k\sum_{\ell\in \mathcal{L}} U_{i,\ell}^{t+1}}{\sum_{i=1}^k\sum_{\ell\in \mathcal{L}} U_{i,\ell}^t}\right)  & = \log\left(\sum_{i=1}^k \sum_{\ell\in \mathcal{L}}u_{i,\ell}^t \exp(\eta \widehat{r}^t_{i,\ell}) \right)\leq \log\left( \sum_{i=1}^k\sum_{\ell \in \mathcal{L}} u_{i,\ell}^t \big(1+ \eta\widehat{r}_{i,\ell}^t + \eta^2 (\widehat{r}_{i,\ell}^t)^2 \big)  \right)  
\notag\\
& \leq \eta \sum_{i=1}^k \sum_{\ell\in \mathcal{L}}u_{i,\ell}^t \widehat{r}_{i,\ell}^t + \eta^2.\label{eq:bb1}
\end{align}
Let  $\hat{r}^t = [\hat{r}_{i,\ell}^t]_{i\in [k],\ell\in \mathcal{L}}$. As a result, we can obtain
\begin{align}
\eta\sum_{t=1}^T  \langle u^t , \widehat{r}^t \rangle &  \geq \log \left(\sum_{i=1}^k \sum_{\ell\in \mathcal{L}}U_{i,\ell}^{T+1}\right)- T\eta^2 -\log(kR) \nonumber
\\ & \geq \max_{i,\ell}\log(U_{i,\ell}^{T}) - T\eta^2 - \log(k) \nonumber
\\ & \geq \eta \max_{i,\ell} \sum_{t=1}^T \widehat{r}_{i,\ell}^t - T\eta^2 -\log(kR).
\end{align}
Dividing both side with $\eta$ yields
\begin{align}
\sum_{t=1}^T \langle u^t , \widehat{r}^t \rangle \geq \max_{i,\ell} \sum_{t=1}^T \widehat{r}^t_{i,\ell} - \left(   \frac{\log(kR)}{\eta} +\eta T \right)
.\end{align}

In addition,  Azuma's inequality tells us that: with probability at least $1-(kR+1)\delta'$, for any $i\in [k],\ell\in \mathcal{L}$ one has
%
\begin{align}
 \left| \sum_{t=1}^T \langle u^t,  \widehat{r}^t \rangle -\sum_{t=1}^T  L(h^t,u^t)\right|&\leq 2\sqrt{T\log(1/\delta')};\nonumber
 \\  \left| \sum_{t=1}^T \widehat{r}_{i,\ell}^t - \sum_{t=1}^T L^{\ell}_i(h^t)\right|&\leq 2\sqrt{T\log(1/\delta')}.\nonumber
\end{align}
Thus, with probability exceeding $1-(k+1)\delta'$, it holds that
\begin{align}
\sum_{t=1}^T L(h^t,u^t) &\geq \max_{i,\ell} \sum_{t=1}^T L^{\ell}_i(h^t) - \left( \frac{\log(kR)}{\eta}+\eta T + 2\sqrt{T\log(1/\delta')} \right)\nonumber
\\ & \geq - \left( \frac{\log(kR)}{\eta}+\eta T + 2\sqrt{T\log(1/\delta')} \right).
\end{align}

Let us overload the notation by letting $\text{e}^{\mathsf{basis}}_i$ be the $i$-th standard basis in $\mathbb{R}^{kR}$. 
In view of the fact that $\varepsilon_1 = \eta = \frac{1}{100}\varepsilon$ and $T = \frac{20000\log(\frac{kR}{\delta' \varepsilon})}{\varepsilon^2}$, we can demonstrate that 
\begin{align}
\max_i \sum_{t=1}^T L(h^t,\text{e}^{\mathsf{basis}}_i) & \leq T\mathsf{OPT}+ \sum_{t=1}^T v^t+\left( \frac{\log(kR)}{\eta}+\eta T + 2\sqrt{T\log(1/\delta')} \right) \nonumber
\\ & \leq T\mathsf{OPT}+ T\varepsilon_1  +\left( \frac{\log(kR)}{\eta}+\eta T + 2\sqrt{T\log(1/\delta')} \right)  \nonumber
\\ & \leq T\mathsf{OPT}+ T\varepsilon.
\end{align}
As a result, we arrive at
\begin{align}
\max_{i\in [k],\ell\in \mathcal{L}}L^{\ell}_i(h^{\mathsf{final}}) = \max_{i\in [k],\ell\in \mathcal{L}}\frac{1}{T}\sum_{t=1}^T L^{\ell}_i(h^t)
=\max_i \frac{1}{T} \sum_{t=1}^T L(h^t,\text{e}^{\mathsf{basis}}_i) \leq \mathsf{OPT}+\varepsilon.
\end{align}
We can then conclude the proof by observing that $\delta'=\frac{\delta}{4(T+kR+1)}$.

\subsection{Proof of Lemma~\ref{lemma:sc_obj}}

It is easily seen that the sample complexity of Algorithm~\ref{alg:obj} is bounded by
\begin{align}
O\left( T_1 \sum_{i=1}^k \overline{w}^{T}_i + Tk \left(\sum_{i=1}^k \overline{w}^T_i + 1\right) \right) = O\left(\frac{d\log\left( \frac{d}{\varepsilon}\right)+k\log\left(\frac{Rk}{\delta\varepsilon} \right)}{\varepsilon^2} \cdot \sum_{i=1}^k \overline{w}^{T}_i \right).\label{eq:objsc1}
\end{align}
As a result, it comes down to bounding $\sum_{i=1}^k \overline{w}^T_{i}$, which we accomplish below.

With slight abuse of notation,  define $$\mathcal{W}_j = \big\{  i\in [k] \mid \overline{w}^{T}_i \in (2^{-j},2^{-(j-1)}]   \big\}$$ for any $1\leq j \leq \left\lfloor \log(1/k) \right\rfloor+1 $, and we would like to bound the size of each $\mathcal{W}_j$ separately. This is the focus of the following lemma. Recall that $\tilde{j} = \left\lfloor \log_2\left( \frac{k\log^2(2)}{50(\log_2(1/\eta)+1)^2\log_2^2(k)}\right) \right \rfloor-2$.
\begin{lemma}\label{lemma:bdwj}
Assume the conditions in Lemma~\ref{lemma:opth_multi} and Lemma~\ref{lemma:opt_multi} hold. Let $\delta' = \frac{\delta}{32T^4k^2}$.
For any $1\leq j \leq \tilde{j}$, with probability at least $1-8T^4k\delta'$, it holds that
\begin{align}
|\mathcal{W}_j|\leq  8\cdot 10^7 \cdot \left( (\log_2(k)+1)^5 (\log(kR)+\log(1/\delta'))(\log_2(T)+1) \right)\cdot 2^j.\nonumber
\end{align}
\end{lemma}

Armed with Lemma~\ref{lemma:bdwj}, we can demonstrate that: with probability exceeding $1-8T^4k^2\delta' = 1-\delta/4$, one has
\begin{align}
\sum_{i=1}^k \overline{w}^T_i  &\leq k \cdot 2^{-\tilde{j}-1} + \sum_{j=1}^{\tilde{j}} |\mathcal{W}_j|\cdot 2^{-(j-1)} \nonumber
\\ & \leq 2\cdot 10^8\cdot \left(   (\log_2(k)+1)^6 (\log(kR)+\log(1/\delta)) (\log_2(T)+1  \right).\nonumber
\end{align}

The proof of Lemma~\ref{lemma:sc_obj} is thus complete by noting that $T = O\left( \frac{d\log(d/\varepsilon)+k\log(Rk/(\varepsilon\delta))}{\varepsilon^2}\right)$.

\begin{remark}
The proof of Lemma~\ref{lemma:bdwj} is similar to that of Lemma~\ref{lemma:bdcount} by regarding $\sum_{\ell\in \mathcal{L}}u_{i,\ell}^t$ as $w_i^t$. 
\end{remark}

\subsection{Proof of Lemma~\ref{lemma:bdwj}}

 With slight abuse of notation, we re-define $$w^t_i = \sum_{\ell\in \mathcal{L}}u_{i,\ell}^t$$ by running Algorithm~\ref{alg:obj} for all $i\in [k]$ and $1\leq t\leq T$. Recall the definition of segments in Definition~\ref{def:seg}. Similar to Lemma~\ref{lemma:aux1}, for each $i\in \mathcal{W}_j$, there exists a $\left( \frac{1}{2^{j+1}},\frac{1}{2^{j+2}},\log(2) \right)$-segment with the index set $\{i\}$. We can then present the following counterpart of Lemma~\ref{lemma:aux1}.
\begin{lemma}\label{lemma:aux2}
 For each $i\in \mathcal{W}_j$, there exist $1\leq s_1<e_i \leq T$ satisfying $\frac{1}{2^{j+2}}<w_i^{s_i}\leq \frac{1}{2^{j+1}}$, $\frac{1}{2^j}<w_i^{e_i}$, and $w_i^t >2^{-(j+2)}$ for any $s_i \leq t\leq e_i$.
 \end{lemma}
 \begin{proof}
Repeating the arguments in proof of Lemma~\ref{lemma:aux1}, we can readily complete the proof by noting that
\begin{align}
\log_2\left(\frac{w_i^{t+1}}{w_i^t}\right) = \log_2\left(\frac{\sum_{\ell}u_{i,\ell}^{t+1}}{\sum_{\ell}u^t_{i,\ell}} \right) \leq \frac{\eta}{\log(2)}< \frac{1}{4}.\nonumber
\end{align}
 \end{proof}

Armed with  Lemma~\ref{lemma:aux2}, we continue to present the counterpart of Lemma~\ref{lemma:part3} in the following lemma. Note that the proof of Lemma~\ref{lemma:part3obj} is exactly the same as that of Lemma~\ref{lemma:part3} with the new definitions of $\mathcal{W}_j$ and $\{w_i^t\}_{i\in [k], t\in [T]}$.
\begin{lemma}\label{lemma:part3obj}
Given $\mathcal{W}_j$ and $(s_i,e_i)$ for $i\in \mathcal{W}_j$ defined above,  there exists  a group of subsets $\{\mathcal{V}_j^n\}_{n=1}^N$ such that the conditions below hold
\begin{enumerate}[label=(\roman*).]
\item $\mathcal{V}_j^n\subset \mathcal{W}_j$, $\mathcal{V}_j^n\cap \mathcal{V}_j^{n'}=\emptyset$, $\forall n\neq n'$; 
\item $\sum_{n=1}^N |\mathcal{V}_j^n|\geq \frac{|\mathcal{W}_j|}{24\log_2(k)(\log_2(T)+1)}$; 
\item There exists $1\leq \widehat{s}_1<\widehat{e}_1\leq \widehat{s}_2 <\widehat{e}_2\leq \dots \leq \widehat{s}_N <\widehat{e}_N\leq T$, and $\{g_n\}_{n=1}^N\in \left[1,\infty\right)^N$ such that for each $1\leq n \leq N$,  $(\widehat{s}_n,\widehat{e}_n)$ is a $\left(2^{-(j+1)}g_n|\mathcal{V}_j^n|,  2^{-(j+2)}|\mathcal{V}_j^n| , \frac{\log(2)}{2\log_2(k)}  \right)$-segment with index set as $\mathcal{V}_j^n$. That is, the following hold for each $1\leq n \leq N$:\begin{itemize}
    \item $\frac{g_n |\mathcal{V}_j^n|}{2^{j+2}}<\sum_{i\in \mathcal{V}_j^n} w_i^{\widehat{s}_n}\leq \frac{g_n |\mathcal{V}_j^n|}{2^{j+1}}$ ;
    \item $\frac{g_n|\mathcal{V}_j^n|}{2^{j}}\cdot \exp\left( \frac{\log(2)}{2\log_2(k)}\right)<\sum_{i\in \mathcal{V}_j^n}w_i^{\widehat{e}_n}$;
    \item $\sum_{i\in \mathcal{V}_j^n}w_i^{t}\geq \frac{|\mathcal{V}_j^n|}{2^{j+2}}$ for any $\widehat{s}_n \leq t\leq \widehat{e}_n$.
\end{itemize}

\end{enumerate}
\end{lemma}

The next step is to introduce the counterpart of Lemma~\ref{lemma:cut} as follows.  Note that most analysis arguments for establishing Lemma~\ref{lemma:cut_obj} are the same as the ones used in the proof of Lemma~\ref{lemma:cut}, except that the noise term is a slightly different. The proof is provided in Appendix~\ref{sec:proof-lemma:cut-obj}.

\begin{lemma}\label{lemma:cut_obj} Assume the conditions in  Lemma~\ref{lemma:opth_multi} and Lemma~\ref{lemma:opt_multi} hold. Recall the definition of $h^t$ and $u^t$ in Algorithm~\ref{alg:obj}, and the definition that $\mathsf{OPT}=\min_{h\in \mathcal{H}}\max_{i\in [k],\ell\in \mathcal{L}}L^{\ell}_i(h)$. 
Also recall that $v^{t} =L(h^t,u^t)-\mathsf{OPT}$.
Suppose $(t_1,t_2)$ is a $\left(p,q,x\right)$-segment such that $p\geq 2q$. Then one has
\begin{align}t_2 - t_1 \geq \frac{x}{2\eta}.\label{eq:cl11}
\end{align}

Moreover, if we assume that $\frac{qx^2}{50(\log_2(k)+1)^2}\geq  \frac{1}{k}$, it then holds that:
 with probability at least $1-6T^4k\delta'$, at least one of the two claims holds: 
 \begin{itemize}
 \item[(a)] the length of the segment satisfies
 \begin{align}
t_2-t_1 \geq \frac{qx^2}{200(\log_2(1/\eta)+1)^2\eta^2};
 \end{align}
 \item[(b)] the quantities $\{v^t\}$ obey
 \begin{align}
4\sum_{\tau=t_1}^{t_2-1}(-v^{\tau}+\varepsilon_1)\geq \frac{qx^2}{100 (\log_2(1/\eta)+1)^2\eta}.
 \end{align}
\end{itemize}

\end{lemma}

With the preceding lemmas in place, we are now ready to prove Lemma~\ref{lemma:bdwj}.
In what follows, we denote by $\{\mathcal{V}_j^n\}_{n=1}^N$ and $\{(\widehat{s}_n,\widehat{e}_n)\}_{n=1}^N$  the construction in Lemma~\ref{lemma:part3obj}. 

First of all, we make the observation that: for any $1\leq j \leq \tilde{j}$, one has
\begin{align}
2^{-(j+2)}|\mathcal{V}_j^n|\cdot \frac{\log^2(2)}{50 (\log_2(1/\eta)+1)^2\log_2^2(k)}\geq 2^{-(\tilde{j}+2)}\cdot \frac{\log^2(2)}{50(\log_2(1/\eta)+1)^2\log_2^2(k)}\geq \frac{1}{k}.\nonumber
\end{align}
Recall that $v^{\tau}\leq \varepsilon_1$. Combining this fact with Lemma~\ref{lemma:cut_obj} (by setting $q= 2^{-(j+2)}|\mathcal{V}_j^n|$ and $x = \frac{\log(2)}{\log_2(k)}$) allows us to show that, for each $1\leq n \leq N$, 
\begin{align}
T\eta + \left\{ 4T \varepsilon_1 + 4\sum_{t=1}^T(-v^t)\right\} & \geq \sum_{n=1}^N \left(\widehat{e}_n - \widehat{s}_n \right)\eta + 4\sum_{n=1}^N \sum_{\tau = \widehat{s}_n}^{\widehat{e}_n-1}(-v^{\tau}+\varepsilon_1) \nonumber
\\ & \geq \frac{2^{-(j+2)}\sum_{n=1}^N |\mathcal{V}_j^n|\log^2(2)}{800\log^2_2(k)(\log_2(1/\eta)+1)^2\eta} ,\label{eq:ff1}
\end{align}
where the first inequality makes use of the disjoint nature of the segments $\{(\widehat{s}_n, \widehat{e}_n)\}_{1\leq n \leq N}$, and the second inequality follows from  Lemma~\ref{lemma:cut_obj}.
In addition,  Lemma~\ref{lemma:opt_multi} tells us that
\begin{align}
\sum_{t= 1}^T (-v^{t}) \leq  100 \left( \frac{\log(kR)}{\eta}+\eta T + 2\sqrt{T\log(1/\delta')}\right).
\end{align}
Taking this collectively with \eqref{eq:ff1} gives
 \begin{align}
\sum_{n=1}^{N} |\mathcal{V}_j^n| & \leq  \frac{3200(\log_2(k)+1)^4\eta \cdot 2^{j+2}}{\log^2(2)}\cdot \left( 100 \left( \frac{\log(kR)}{\eta}+\eta T +2\sqrt{T\log(1/\delta')}\right)\right)+
\\ & \qquad \qquad \qquad \qquad \qquad \qquad \qquad \qquad +\frac{4000T(\log_2(k)+1)^4\cdot   2^{j+2}\eta^2}{\log^2(2)}  .\nonumber
 \end{align}
  Finally,  
it follows from Property (ii) of Lemma~\ref{lemma:part3obj} that
\begin{align}
|\mathcal{W}_j| & \leq 24\log_2(k)(log_2(T)+1)\left(\sum_{n=1}^N |\mathcal{V}_j^n| \right) \nonumber
\\ & \leq 8\cdot 10^7\cdot \left((\log_2(k)+1)^5\left(\log(kR)+\log(1/\delta')\right)(\log_2(T)+1)\right) \cdot 2^j,\nonumber
\end{align}
thus concluding the proof.

\subsection{Proof of Lemma~\ref{lemma:cut_obj}}
\label{sec:proof-lemma:cut-obj}

Throughout this proof, let us take 
$$
    Z^t = \sum_{i\in [k],\ell \in \mathcal{L}}U^t_{i,\ell}.
$$
We begin by establishing the first claim (\ref{eq:cl11}). By definition, there exists $i\in [k]$ such that
\begin{align}
\log\left(  \frac{w^{t_2}_i}{w^{t_1}_i}\right) \geq x,\nonumber
\end{align}
which corresponds to 
\begin{align}
 \eta \sum_{\tau = t_1}^{t_2 - 1}\max_{\ell\in \mathcal{L}} \widehat{r}_{i,\ell}^{\tau } -\log\left(\frac{Z^{t_2}}{Z^{t_1}}\right)\geq x. \nonumber
\end{align}
Given that  $\log(Z^{t_2}/Z^{t_1})\geq -\eta (t_2-t_1)$ and $\widehat{r}_{i,\ell}^{\tau}\leq 1$ for any $1\leq \tau \leq T$, we can demonstrate that 
\begin{align}
x\leq 2(t_2-t_1)\eta, \nonumber
\end{align}
from which the claim (\ref{eq:cl11}) follows.

We now turn to the remaining claims of Lemma~\ref{lemma:cut_obj}. For each hypothesis $h\in \mathcal{H}$, let us introduce the following vector $\overline{\lambda}_{h}\in\mathbb{R}^{kR}$:
\begin{align}
\overline{\lambda}_h = [\overline{\lambda}_{h,i,\ell}]_{i\in [k],\ell\in \mathcal{L}}\qquad \qquad \text{with } ~\overline{\lambda}_{h,i,\ell} = L^{\ell}_i(h)-\mathsf{OPT}.\label{eq:defoverlam}
\end{align}
Give the $\varepsilon$-optimality of $h^t$ ( see Lemma~\ref{lemma:opth_multi}), we have the following property that holds for any $1\leq \tau,t \leq T$,
\begin{align}
\langle \overline{\lambda}_{h^{\tau}}, u^{t} \rangle\geq \langle \overline{\lambda}_{h^t}, u^{t}\rangle -\varepsilon_1= v^t - \varepsilon_1.\label{eq:s2obj}
\end{align}
Summing over $\tau$ leads to
\begin{align}
\sum_{\tau=t_1}^{t_2-1}\langle \overline{\lambda}_{h^\tau}, u^{t} \rangle\geq (t_2-t_1)(v^t-\varepsilon_1).\label{eq:ksx1}
\end{align}
In the sequel, we divide the remaining proof into several steps.

\medskip
\noindent \textbf{Step 1: decomposing the KL divergence between $u^t$ and $u^{t_2}$.}
Write
\begin{align}
U^t_{i,\ell} = \exp\left( \eta \sum_{\tau=1}^t \hat{r}^{\tau}_{i,\ell}\right) = \exp\left(\eta \sum_{\tau = 1}^t \left(\overline{\lambda}_{h^\tau,i,\ell} + \mathsf{OPT}+ \xi^{\tau}_{i,\ell}\right)\right) \qquad \qquad \text{with }~\xi_{i,\ell}^{\tau}= \widehat{r}_{i,\ell}^{\tau}- \overline{h}^{\tau}_{i,\ell}-\mathsf{OPT},\end{align}
where $\xi_{i,\ell}^{\tau}$ is a zero-mean random variable. Re-define
\begin{align}
\Delta_{t_1,t_2} = \sum_{\tau = t_1}^{t_2-1}\xi^{\tau} \qquad \qquad \text{with } 
\xi^{\tau} = [\xi^{\tau}_{i,\ell}]_{i\in [k],\ell\in \mathcal{L}}
\end{align}

Taking $U^t = [U^t_{i,\ell}]_{i\in [k],\ell\in \mathcal{L}}$ and  denoting $\log(x/y)$  the vector $\{\log(x_{i,\ell}/y_{i,\ell})\}_{i\in [k],\ell\in \mathcal{L}}$ for two $kR$-dimensional vectors $(x,y)$, one can then deduce that 
\begin{align}
\left\langle \frac{1}{\eta}\log\left( \frac{W^{t_2}}{W^{t_1}}\right)  - \Delta_{t_1,t_2}, u^t \right \rangle - (t_2-t_1)\mathsf{OPT} = \sum_{\tau  =t_1}^{t_2-1}\langle \overline{\lambda}_{h^{\tau}}, u^t \rangle \geq (t_2-t_1)(v^t - \epsilon_1),\label{eq:cwf1}
\end{align}
where the last inequality results from \eqref{eq:ksx1}. Recall that $Z^t = \sum_{i\in [k],\ell\in \mathcal{L}}U^t_{i,\ell}$ and $u^t_{i,\ell} = \frac{U^t_{i,\ell}}{Z^t}$. By taking $t_1 = t$, we can derive from \eqref{eq:cwf1} that 
\begin{align}
\left\langle  \log\left(\frac{w^{t_2}}{w^{t}}\right) -\eta \Delta_{t,t_2} , u^t\right \rangle + \log\left( \frac{Z^{t_2}}{Z^t}\right) -\eta (t_2 -t)\mathsf{OPT}\geq \eta(t_2-t)(v^t - \varepsilon_1).
\end{align}
This result in turn assists in bounding the KL divergence between $u^t$ and $u^{t_2}$:
\begin{align}
\mathsf{KL}(u^t \,\|\,u^{t_2})  & :=u^t \cdot \left(\log\left(\frac{u^t}{u^{t_2}}\right) \right) \nonumber
\\ & \leq \log(Z^{t_2}/Z^{t})-\eta(t_2-t)\mathsf{OPT} -\eta u^t \cdot \Delta_{t,t_2}+(t_2-t)\eta(\varepsilon_1-v^t).\label{eq:s3obj}
\end{align}
In the following, we shall cope with the right-hand side of \eqref{eq:s3obj}

\medskip
\noindent \textbf{Step 2: bounding the term $\log(Z^{t_2}/Z^t)$.} 
 With probability at least $1-2T^2k\delta'$, it holds that
\begin{align}
 \log(Z^{t_2}/Z^t)  &:=\sum_{\tau = t}^{t_2-1}\log(Z^{\tau+1}/Z^{\tau}) \nonumber
 \\ & = \sum_{\tau=t}^{t_2-1}\log\left(\sum_{i=1}^k \sum_{\ell\in \mathcal{L}}u^{\tau}_{i,\ell} \exp(\eta \widehat{r}_{i,\ell}^{\tau} ) \right)\nonumber
 \\ & \leq  \sum_{\tau = t}^{t_2-1}\left( \eta \sum_{i=1}^k u_{i,\ell}^{\tau} \widehat{r}_{i,\ell}^{\tau}+ 2\eta^2\right)\nonumber
 \\ & \leq \eta \sum_{\tau = t}^{t_2-1} v^{\tau}  +\eta(t_2-t)\mathsf{OPT} + \eta \sum_{\tau=t}^{t_2-1}\sum_{i=1}^k\sum_{\ell\in \mathcal{L}}\big( u_{i,\ell}^{\tau} (\widehat{r}_{i,\ell}^{\tau}-\overline{\lambda}_{h^{\tau},i,\ell} -\mathsf{OPT}) \big)+ 2(t_2-t)\eta^2\label{eq:eexp3obj}
 \\ & \leq \eta (t_2-t)\varepsilon_1 +\eta(t_2-t)\mathsf{OPT} +\eta \sum_{\tau=t}^{t_2-1}\sum_{i=1}^k\sum_{\ell\in \mathcal{L}}\big( u_{i,\ell}^{\tau} (\widehat{r}_{i,\ell}^{\tau}-\overline{\lambda}_{h^{\tau},i,\ell}-\mathsf{OPT}) \big)+2(t_2-t)\eta^2.\label{eq:s4obj}
\end{align}
Here, \eqref{eq:eexp3obj} holds true due to the fact that $v^t = \langle u^t, \overline{\lambda}_{h^t}\rangle$, and \eqref{eq:s4obj} is comes from the fact that $v^{\tau}\leq  \varepsilon_1$.

\medskip
\noindent \textbf{Step 3: bounding the weighted sum of $\{\xi_{i,\ell}^{\tau}\}$.}
Next, we intend to control the two random terms below
\begin{align}
\eta \sum_{\tau=t}^{t_2-1}\sum_{i=1}^k\sum_{\ell\in \mathcal{L}} u_{i,\ell}^{\tau} \cdot (\widehat{r}_{i,\ell}^{\tau}-\overline{\lambda}_{h^{\tau},i,\ell} -\mathsf{OPT}) &= \eta \sum_{\tau=t}^{t_2-1} \sum_{i=1}^k\sum_{\ell\in \mathcal{L}} u_{i,\ell}^{\tau}\xi_{i,\ell}^{\tau},\label{eq:xxx21term1}
\\-\eta \langle u^t,\Delta_{t,t_2} \rangle&=-\eta \sum_{\tau= t}^{t_2-1} \sum_{i=1}^k \sum_{\ell\in \mathcal{L}} u_{i,\ell}^t \xi_{i,\ell}^{\tau}.\label{eq:xx21obj}
\end{align}
Let $\overline{\mathcal{F}}^{\tau}$ denote all events happening before the $\tau$-th round in Algorithm~\ref{alg:obj}. 
Direct computation yields
\begin{align}
&  \sum_{\tau=t_1}^{t_2-1}\sum_{i=1}^k \sum_{\ell}u_{i,\ell}^{t_1} \left( \widehat{r}_{i,\ell}^{\tau}-\overline{\lambda}_{h,i,\ell}^{\tau} - \mathsf{OPT} \right)  \nonumber
\\  &\qquad = \sum_{\tau=t_1}^{t_2-1}\sum_{i=1}^k \frac{1}{\left\lceil k\overline{w}_i^{\tau}\right\rceil} \sum_{j=1}^{\left\lceil k\overline{w}_i^{\tau}\right\rceil}\left( \sum_{\ell}u_{i,\ell}^{t_1} \ell(h^{\tau}, (x_{i,j}^{\tau}, y_{i,j}^{\tau})) - \sum_{\ell}u_{i,\ell}^{t_1} L_i^{\ell}(h^{\tau}) \right).
\end{align}
Note that $\{\{(x_{i,j},y_{i,j})\}_{j=1}^{\left\lceil k\overline{w}_i^{\tau}\right\rceil   }\}_{i=1}^k$ are independently sampled conditioned on $\overline{\mathcal{F}}^{\tau}$. Recalling the fact that $ |\sum_{\ell}w_{i,\ell}^{t_1} \ell(h^{\tau}, (x_{i,j}^{\tau}, y_{i,j}^{\tau}))|\leq w_{i}^{t_1}\leq  \overline{w}_i^{\tau}$ for any $t_1\leq \tau \leq t_2-1$,  one can apply Freedman's inequality (cf.~Lemma~\ref{freedman}) to see that
\begin{align}
& \left|\sum_{\tau=t_1}^{t_2-1}\sum_{i=1}^k \frac{1}{\left\lceil k\overline{w}_i^{\tau}\right\rceil} \sum_{j=1}^{\left\lceil k\overline{w}_i^{\tau}\right\rceil}\left( \sum_{\ell}w_{i,\ell}^{t_1} \ell(h^{\tau}, (x_{i,j}^{\tau}, y_{i,j}^{\tau})) - \sum_{\ell}w_{i,l}^{t_1} L_i^{\ell}(h^{\tau}) \right)\right|\nonumber
\\ & \leq 2\sqrt{   \log(2/\delta') \sum_{\tau=t_1}^{t_2-1} \sum_{i=1}^k \sum_{j=1}^{ \left\lceil k\overline{w}_i^{\tau}\right\rceil }\frac{(w_i^{t_1})^2}{ (\left\lceil k\overline{w}_i^{\tau}\right\rceil )^2}  } + 2\log(2/\delta') \nonumber 
\\ & =  2\sqrt{   \log(2/\delta') \sum_{\tau=t_1}^{t_2-1} \sum_{i=1}^k\frac{(w_i^{t_1})^2}{ \left\lceil k\overline{w}_i^{\tau}\right\rceil }  } + 2\log(2/\delta') \nonumber 
\\ & \leq   2\sqrt{   \log(2/\delta') \sum_{\tau=t_1}^{t_2-1} \sum_{i=1}^k\frac{w_{i}^{t_1}}{ k  } }+ 2\log(2/\delta') \nonumber 
\\  & \leq 2\sqrt{   \log(2/\delta') \sum_{\tau=t_1}^{t_2-1}\frac{1}{k} } + 2\log(2/\delta')\label{eq:xx2obj} 
\end{align}
holds with probability exceeding $1-\delta'$. 
The bound of \eqref{eq:xxx21term1} is thus finished. We then use similar arguments to show that, with probability exceeding $1-\delta'$,
\begin{align}
& \left|\sum_{\tau=t_1}^{t_2-1}\sum_{i=1}^k \frac{1}{\left\lceil k\overline{w}_i^{\tau}\right\rceil} \sum_{j=1}^{\left\lceil k\overline{w}_i^{\tau}\right\rceil}\left( \sum_{\ell}w_{i,\ell}^{\tau} \ell(h^{\tau}, (x_{i,j}^{\tau}, y_{i,j}^{\tau})) - \sum_{l}w_{i,\ell}^{\tau} L_i^{\ell}(h^{\tau}) \right)\right|\nonumber
\\ & \leq 2\sqrt{   \log(2/\delta') \sum_{\tau=t_1}^{t_2-1} \sum_{i=1}^k \sum_{j=1}^{ \left\lceil k\overline{w}_i^{\tau}\right\rceil }\frac{(w_i^{\tau})^2}{ (\left\lceil k\overline{w}_i^{\tau}\right\rceil )^2}  } + 2\log(2/\delta') \nonumber 
\\ & =  2\sqrt{   \log(2/\delta') \sum_{\tau=t_1}^{t_2-1} \sum_{i=1}^k\frac{(w_i^{\tau})^2}{ \left\lceil k\overline{w}_i^{\tau}\right\rceil }  } + 2\log(2/\delta') \nonumber 
\\ & \leq   2\sqrt{   \log(2/\delta') \sum_{\tau=t_1}^{t_2-1} \sum_{i=1}^k\frac{w_{i}^{\tau}}{ k  } }+ 2\log(2/\delta') \nonumber 
\\  & \leq 2\sqrt{   \log(2/\delta') \sum_{\tau=t_1}^{t_2-1}\frac{1}{k} } + 2\log(2/\delta'). \label{eq:xx3obj} 
\end{align}

\medskip
\noindent 
\textbf{Step 4: bounding the KL divergence between $u^t$ and $u^{t_2}$.}
Combining \eqref{eq:s3obj}, \eqref{eq:s4obj}, \eqref{eq:xx2obj} and \eqref{eq:xx3obj}, and applying the union bounds over $(t,t_2)$, we see that with probability exceeding $1-6T^4k\delta'$,  
\begin{align}
\mathsf{KL}(w^t\,\|\, w^{t_2})\leq \mathsf{KL}(u^t\,\|\, u^{t_2}) & \leq 2(t_2-t)\eta\varepsilon_1 - (t_2-t)\eta v^t  \nonumber
\\ & \quad \quad \quad + 4\eta\sqrt{\frac{(t_2-t)\log(2/\delta')}{k}}+2(t_2-t)\eta^2+4\log(2/\delta')\eta\label{eq:s6obj}
\end{align}
holds for any $1\leq t<t_2\leq T$.
The analysis below then operates under the condition that \eqref{eq:s6obj} holds for any $1\leq t< t_2 \leq T$.

\medskip
\noindent \textbf{Step 5: connecting the KL divergence with the advertised properties.} Recall that  $ j_{\mathrm{max}}=\left\lfloor \log_2(1/\eta)+1 \right \rfloor$. Set
\begin{align}
  \tau_{\widehat{j}} &\coloneqq\min \big\{\tau \mid t_1\leq \tau \leq t_2-1, \,  -v^{\tau}\leq 2^{-(\widehat{j}-1)}  \big\}, \qquad 1\leq \widehat{j} \leq j_{\mathrm{max}} \label{eq:v1j}
 \\  \tau_{j_{\mathrm{max}+1}} &\coloneqq t_2.\nonumber
 \end{align}
By definition, we know that $\tau_1 = t_1$ and $\tau_{j_2}\geq \tau_{j_1}$ for $j_2\geq j_1$. Let $\mathcal{I}$ be the index set of this segment.
Let $y_{j}:= \sum_{i\in \mathcal{I}}w_i^{\tau_j}$. 

We then claim that there exists $1\leq \widetilde{j} \leq j_{\mathrm{max}}$ such that
\begin{align}
  \log\left( \frac{y_{\widetilde{j}+1} }{y_{\widetilde{j}}} \right) &\geq \frac{x}{\log_2(1/\eta)+1} .\label{eq:ss1obj}
\end{align}
\begin{proof}[Proof of \eqref{eq:ss1obj}]Suppose that there exists no $1\leq \widetilde{j}\leq j_{\mathrm{max}}$ satisfying  \eqref{eq:ss1obj}.  Then for any $j$, it holds that $\log\left(  \frac{y_{j+1}}{y_j}\right)\leq \frac{x}{\log_2(1/\eta)+1}$, which implies that $y_{j}\geq y_{j+1}\exp\left(\frac{x}{\log_2(1/\eta)+1}\right)$. As a result, we have  
$$y_{1}\geq y_{j_{\mathrm{max}}+1}\cdot \exp\left( -j_{\mathrm{max}}\cdot \frac{x}{\log_2(1/\eta)+1}\right)\geq p,$$ 
which leads to contradiction (since according to the definition of the $(p,q,x)$-segment, one has $y_1\leq p$).
\end{proof}

Now, assume that $\widetilde{j}$ satisfies \eqref{eq:ss1obj}. From the definition of the $(p,q,x)$-segment, we have $y_{\widetilde{j}}\geq q$. It then follows from
 \eqref{eq:s6obj} that
\begin{align}
\mathsf{KL}(w^{\widetilde{j}}\,\|\, w^{\tau_{\widetilde{j}+1}}) & \leq 2 (\tau_{\widetilde{j}+1}-\tau_{\widetilde{j}})\eta \varepsilon_1 + (\tau_{{\widetilde{j}}+1}-\tau_{\widetilde{j}} )\eta 2^{-(\widetilde{j}-1)} \nonumber
\\ & \quad + 4\eta \sqrt{\frac{(\tau_{\widetilde{j}+1}-\tau_{\widetilde{j}})\log(2/\delta')}{k}}+2(\tau_{\widetilde{j}+1}-\tau_{\widetilde{j}})\eta^2 +4\log(2/\delta')\eta.\nonumber
\end{align}
Since $\log\left( \frac{y_{\widetilde{j}+1}}{y_{\widetilde{j}}}\right)\geq \frac{x}{\log_2(1/\eta)+1}$ and $y_{\widetilde{j}}\geq q$, 
we can invoke Lemma~\ref{lemma:klbound} and Lemma~\ref{lemma:klcmp} to show that  $$\mathsf{KL}(w^{\tau_{\widetilde{j}}} \,\|\,w^{\tau_{\widetilde{j}+1}})\geq 
\mathsf{KL}\left(\mathrm{Ber}\left(y_{\tilde{j}}\right)\,\|\,\mathrm{Ber}\left(y_{\tilde{j}+1}\right)\right) \geq \frac{qx^2}{4(\log_2(1/\eta)+1)^2},$$ 
where $\mathrm{Ber}(x)$ denote the Bernoulli distribution with parameter $x\in [0,1]$. 
As a result, we obtain 
\begin{align}
\frac{qx^2 }{4(\log_2(1/\eta)+1)^2} & \leq 2 (\tau_{\widetilde{j}+1}-\tau_{\widetilde{j}})\eta \varepsilon_1 + (\tau_{{\widetilde{j}}+1}-\tau_{\widetilde{j}} )\eta 2^{-({\widetilde{j}}-1)} \nonumber
\\ & \quad + 4\eta \sqrt{\frac{(\tau_{\widetilde{j}+1}-\tau_{\widetilde{j}})\log(2/\delta')}{k}}+2(\tau_{\widetilde{j}+1}-\tau_{\widetilde{j}})\eta^2 +4\log(2/\delta')\eta.\label{eq:uadobj}
\end{align}
Combining \eqref{eq:uadobj} with the facts that: (i) $\frac{qx^2}{50 (\log_2(1/\eta)+1)^2}\geq \frac{1}{k}$; (ii) $\frac{2^{j_{\mathrm{max}}}}{\eta}\geq \frac{1}{\eta^2}$, we can demonstrate that
\begin{align}
\tau_{\widetilde{j}+1}-\tau_{\widetilde{j}} & \geq  \min\left\{\frac{qx^2 }{100(2\log_2(k)+1)^2}\min\left\{ \frac{1}{\eta\varepsilon_1} ,\frac{2^{\widetilde{j}-1}}{\eta}, \frac{1}{\eta^2} \right\} , \frac{kx^2 2^{-2l_1}}{10000\eta^2 \log(1/\delta')(\log_2(k)+1)^4} \right\} \nonumber
\\ & \geq \frac{qx^2 \cdot 2^{\widetilde{j}-1}}{100 (\log_2(1/\eta) +1)^2\eta  }.\label{eq:xx41obj}
\end{align}
With \eqref{eq:xx41obj} in place, we are ready to finish the proof by dividing into two cases.

\begin{itemize}
\item Case 1: $\tilde{j}=j_{\mathrm{max}}.$ It follows from \eqref{eq:xx41obj} that
$$t_2-t_1\geq\tau_{\tilde{j}+1}-\tau_{\tilde{j}}\geq \frac{qx^2}{100(\log_2(1/\eta)+1)}\frac{2^{j_{\mathrm{max}-1}}}{\eta}\geq \frac{qx^2}{200(\log_2(1/\eta)+1)^2\eta^2}.$$
\item Case 2: $1\leq \tilde{j}\leq j_{\mathrm{max}}-1$. It comes from the definition \eqref{eq:v1j} that 
\begin{align}
\sum_{\tau=t_1}^{t_2-1}\mathds{1}\{-v^{\tau}\geq 2^{-\widehat{j}}\}\geq \sum_{\tau =t_1}^{t_2-1}\mathds{1}\{-v^{\tau}>2^{-\widehat{j}}\}\geq \tau_{\widehat{j}+1}-t_1\geq \tau_{\widehat{j}+1}-\tau_{\widehat{j}}, \qquad \qquad  \text{for any } 1\leq \widehat{j}\leq j_{\mathrm{max}}-1.\nonumber
\end{align}
When $1\leq \tilde{j}\leq j_{\mathrm{max}}-1$, the above display taken collectively with \eqref{eq:s6obj} gives 

\begin{align}
	\sum_{\tau=t_1}^{t_2-1}\mathds{1}\big\{ -v^{\tau}\geq 2^{-\widetilde{j}} \big\}
	\geq  \tau_{\widetilde{j}+1}-\tau_{\widetilde{j}}\geq \frac{qx^2\cdot 2^{\widetilde{j}-1} }{100\big(\log_2(1/\eta)+1\big)^2\eta}
		,
\end{align}
and as a result, 
\begin{align}
\sum_{\tau=t_{1}}^{t_{2}-1}\sum_{\widehat{j}=1}^{j_{\mathrm{max}}-1}\mathds{1}\big\{\max\{-v^{\tau},0\}\geq2^{-\widehat{j}}\big\} & 2^{-(\widehat{j}-1)}\geq\sum_{\tau=t_{1}}^{t_{2}-1}\mathds{1}\big\{-v^{\tau}\geq2^{-\widetilde{j}}\big\}2^{-(\widetilde{j}-1)}\geq\frac{qx^{2}}{100\big(\log_{2}(1/\eta)+1\big)^{2}\eta}.
\label{eq:lls-123-obj}
\end{align}
By observing that 
$$
	\sum_{\widehat{j}=1}^{\infty}\mathds{1}\big\{ x\geq 2^{-\widehat{j}}\big\} \cdot 2^{-(\widehat{j}-1)}\leq 4x 
$$
holds for any $x\geq 0$, one can combine this fact with \eqref{eq:lls-123-obj} to show that
\begin{align}
	4\sum_{\tau=t_{1}}^{t_{2}-1}\max\{-v^{\tau},0\} & \geq\frac{qx^{2}}{100\big(\log_{2}(1/\eta))+1\big)^{2}\eta}.
	\label{eq:sum-max-vtau-proof-obj}
\end{align}
Additionally, recalling that $v^{\tau}\leq \varepsilon_1$, one can deduce that
\[
	\sum_{\tau=t_{1}}^{t_{2}-1}\max\{-v^{\tau},0\}=\sum_{\tau=t_{1}}^{t_{2}-1}(-v^{\tau})-\sum_{\tau=t_{1}}^{t_{2}-1}\min\{-v^{\tau},0\}\leq\sum_{\tau=t_{1}}^{t_{2}-1}(-v^{\tau})+(t_{2}-t_{1})\varepsilon_{1},
\]
which combined with \eqref{eq:sum-max-vtau-proof-obj} yields
\[
	4(t_{2}-t_{1})\varepsilon_{1}+4\sum_{\tau=t_{1}}^{t_{2}-1}(-v^{\tau})\geq\frac{qx^{2}}{100\big(\log_{2}(1/\eta)+1\big)^{2}\eta}.
\]

\end{itemize}
The proof is thus complete.

\bibliography{ref}

\begin{thebibliography}{}

\bibitem[Abernethy et~al., 2022]{abernethy2020active}
Abernethy, J., Awasthi, P., Kleindessner, M., Morgenstern, J., Russell, C., and
  Zhang, J. (2022).
\newblock Active sampling for min-max fairness.
\newblock {\em International Conference on Machine Learning}, pages 53--65.

\bibitem[Arora et~al., 2012]{arora2012multiplicative}
Arora, S., Hazan, E., and Kale, S. (2012).
\newblock The multiplicative weights update method: a meta-algorithm and
  applications.
\newblock {\em Theory of Computing}, 8(1):121--164.

\bibitem[Asi et~al., 2021]{asi2021stochastic}
Asi, H., Carmon, Y., Jambulapati, A., Jin, Y., and Sidford, A. (2021).
\newblock Stochastic bias-reduced gradient methods.
\newblock {\em Advances in Neural Information Processing Systems},
  34:10810--10822.

\bibitem[Awasthi et~al., 2023]{awasthi2023sample}
Awasthi, P., Haghtalab, N., and Zhao, E. (2023).
\newblock Open problem: The sample complexity of multi-distribution learning
  for {VC} classes.
\newblock In {\em Conference on Learning Theory (COLT)}, volume 195, pages
  5943--5949.

\bibitem[Blum et~al., 2021a]{blum2021one}
Blum, A., Haghtalab, N., Phillips, R.~L., and Shao, H. (2021a).
\newblock One for one, or all for all: Equilibria and optimality of
  collaboration in federated learning.
\newblock In {\em International Conference on Machine Learning}, pages
  1005--1014.

\bibitem[Blum et~al., 2017]{blum2017collaborative}
Blum, A., Haghtalab, N., Procaccia, A.~D., and Qiao, M. (2017).
\newblock Collaborative {PAC} learning.
\newblock {\em Advances in Neural Information Processing Systems}, 30.

\bibitem[Blum et~al., 2021b]{blum2021communication}
Blum, A., Heinecke, S., and Reyzin, L. (2021b).
\newblock Communication-aware collaborative learning.
\newblock In {\em AAAI Conference on Artificial Intelligence}, volume~35, pages
  6786--6793.

\bibitem[Blumer et~al., 1989]{blumer1989learnability}
Blumer, A., Ehrenfeucht, A., Haussler, D., and Warmuth, M.~K. (1989).
\newblock Learnability and the {Vapnik-Chervonenkis} dimension.
\newblock {\em Journal of the ACM (JACM)}, 36(4):929--965.

\bibitem[B{\"u}hlmann and Meinshausen, 2015]{buhlmann2015magging}
B{\"u}hlmann, P. and Meinshausen, N. (2015).
\newblock Magging: maximin aggregation for inhomogeneous large-scale data.
\newblock {\em Proceedings of the IEEE}, 104(1):126--135.

\bibitem[Carmon and Hausler, 2022]{carmon2022distributionally}
Carmon, Y. and Hausler, D. (2022).
\newblock Distributionally robust optimization via ball oracle acceleration.
\newblock {\em Advances in Neural Information Processing Systems},
  35:35866--35879.

\bibitem[Chen et~al., 2018]{chen2018tight}
Chen, J., Zhang, Q., and Zhou, Y. (2018).
\newblock Tight bounds for collaborative {PAC} learning via multiplicative
  weights.
\newblock {\em Advances in Neural Information Processing Systems}, 31.

\bibitem[Deng et~al., 2020]{deng2020distributionally}
Deng, Y., Kamani, M.~M., and Mahdavi, M. (2020).
\newblock Distributionally robust federated averaging.
\newblock {\em Advances in Neural Information Processing Systems},
  33:15111--15122.

\bibitem[Du et~al., 2021]{du2021fairness}
Du, W., Xu, D., Wu, X., and Tong, H. (2021).
\newblock Fairness-aware agnostic federated learning.
\newblock In {\em SIAM International Conference on Data Mining (SDM)}, pages
  181--189. SIAM.

\bibitem[Dud{\'\i}k et~al., 2020]{dudik2020oracle}
Dud{\'\i}k, M., Haghtalab, N., Luo, H., Schapire, R.~E., Syrgkanis, V., and
  Vaughan, J.~W. (2020).
\newblock Oracle-efficient online learning and auction design.
\newblock {\em Journal of the ACM (JACM)}, 67(5):1--57.

\bibitem[Dwork et~al., 2021]{dwork2021outcome}
Dwork, C., Kim, M.~P., Reingold, O., Rothblum, G.~N., and Yona, G. (2021).
\newblock Outcome indistinguishability.
\newblock In {\em ACM SIGACT Symposium on Theory of Computing}, pages
  1095--1108.

\bibitem[Freedman, 1975]{freedman1975tail}
Freedman, D.~A. (1975).
\newblock On tail probabilities for martingales.
\newblock {\em the Annals of Probability}, pages 100--118.

\bibitem[Freund and Schapire, 1997]{freund1997decision}
Freund, Y. and Schapire, R.~E. (1997).
\newblock A decision-theoretic generalization of on-line learning and an
  application to boosting.
\newblock {\em Journal of Computer and System Sciences}, 55(1):119--139.

\bibitem[Guo, 2023]{guo2023statistical}
Guo, Z. (2023).
\newblock Statistical inference for maximin effects: Identifying stable
  associations across multiple studies.
\newblock {\em Journal of the American Statistical Association}, pages 1--17.

\bibitem[Haghtalab et~al., 2022]{haghtalab2022demand}
Haghtalab, N., Jordan, M., and Zhao, E. (2022).
\newblock On-demand sampling: Learning optimally from multiple distributions.
\newblock {\em Advances in Neural Information Processing Systems}, 35:406--419.

\bibitem[Haghtalab et~al., 2023]{haghtalab2023unifying}
Haghtalab, N., Jordan, M., and Zhao, E. (2023).
\newblock A unifying perspective on multi-calibration: Game dynamics for
  multi-objective learning.
\newblock In {\em Advances in Neural Information Processing Systems}.

\bibitem[Hashimoto et~al., 2018]{hashimoto2018fairness}
Hashimoto, T., Srivastava, M., Namkoong, H., and Liang, P. (2018).
\newblock Fairness without demographics in repeated loss minimization.
\newblock In {\em International Conference on Machine Learning}, pages
  1929--1938.

\bibitem[Hazan, 2022]{hazan2022introduction}
Hazan, E. (2022).
\newblock {\em Introduction to online convex optimization}.
\newblock MIT Press.

\bibitem[H{\'e}bert-Johnson et~al., 2018]{hebert2018multicalibration}
H{\'e}bert-Johnson, U., Kim, M., Reingold, O., and Rothblum, G. (2018).
\newblock Multicalibration: Calibration for the (computationally-identifiable)
  masses.
\newblock In {\em International Conference on Machine Learning}, pages
  1939--1948.

\bibitem[Hu et~al., 2018]{hu2018does}
Hu, W., Niu, G., Sato, I., and Sugiyama, M. (2018).
\newblock Does distributionally robust supervised learning give robust
  classifiers?
\newblock In {\em International Conference on Machine Learning}, pages
  2029--2037.

\bibitem[Kar et~al., 2019]{kar2019meta}
Kar, A., Prakash, A., Liu, M.-Y., Cameracci, E., Yuan, J., Rusiniak, M., Acuna,
  D., Torralba, A., and Fidler, S. (2019).
\newblock Meta-sim: Learning to generate synthetic datasets.
\newblock In {\em IEEE/CVF International Conference on Computer Vision}, pages
  4551--4560.

\bibitem[Lattimore and Szepesv{\'a}ri, 2020]{lattimore2020bandit}
Lattimore, T. and Szepesv{\'a}ri, C. (2020).
\newblock {\em Bandit algorithms}.
\newblock Cambridge University Press.

\bibitem[Mansour et~al., 2008]{mansour2008domain}
Mansour, Y., Mohri, M., and Rostamizadeh, A. (2008).
\newblock Domain adaptation with multiple sources.
\newblock {\em Advances in neural information processing systems}, 21.

\bibitem[Mohri et~al., 2018]{mohri2018foundations}
Mohri, M., Rostamizadeh, A., and Talwalkar, A. (2018).
\newblock {\em Foundations of machine learning}.
\newblock MIT press.

\bibitem[Mohri et~al., 2019]{mohri2019agnostic}
Mohri, M., Sivek, G., and Suresh, A.~T. (2019).
\newblock Agnostic federated learning.
\newblock In {\em International Conference on Machine Learning}, pages
  4615--4625.

\bibitem[Nguyen and Zakynthinou, 2018]{nguyen2018improved}
Nguyen, H. and Zakynthinou, L. (2018).
\newblock Improved algorithms for collaborative {PAC} learning.
\newblock {\em Advances in Neural Information Processing Systems}, 31.

\bibitem[Peng, 2023]{peng2023sample}
Peng, B. (2023).
\newblock The sample complexity of multi-distribution learning.
\newblock {\em arXiv preprint arXiv:2312.04027}.

\bibitem[Rothblum and Yona, 2021]{rothblum2021multi}
Rothblum, G.~N. and Yona, G. (2021).
\newblock Multi-group agnostic {PAC} learnability.
\newblock In {\em International Conference on Machine Learning}, pages
  9107--9115.

\bibitem[Roughgarden, 2016]{roughgarden2016twenty}
Roughgarden, T. (2016).
\newblock {\em Twenty lectures on algorithmic game theory}.
\newblock Cambridge University Press.

\bibitem[Sagawa et~al., 2019]{sagawa2019distributionally}
Sagawa, S., Koh, P.~W., Hashimoto, T.~B., and Liang, P. (2019).
\newblock Distributionally robust neural networks for group shifts: On the
  importance of regularization for worst-case generalization.
\newblock {\em arXiv preprint arXiv:1911.08731}.

\bibitem[Sagawa et~al., 2020]{sagawa2020investigation}
Sagawa, S., Raghunathan, A., Koh, P.~W., and Liang, P. (2020).
\newblock An investigation of why overparameterization exacerbates spurious
  correlations.
\newblock In {\em International Conference on Machine Learning}, pages
  8346--8356.

\bibitem[Shalev-Shwartz, 2012]{shalev2012online}
Shalev-Shwartz, S. (2012).
\newblock Online learning and online convex optimization.
\newblock {\em Foundations and Trends{\textregistered} in Machine Learning},
  4(2):107--194.

\bibitem[Shalev-Shwartz and Ben-David, 2014]{shalev2014understanding}
Shalev-Shwartz, S. and Ben-David, S. (2014).
\newblock {\em Understanding machine learning: From theory to algorithms}.
\newblock Cambridge university press.

\bibitem[v.~Neumann, 1928]{v1928theorie}
v.~Neumann, J. (1928).
\newblock Zur theorie der gesellschaftsspiele.
\newblock {\em Mathematische annalen}, 100(1):295--320.

\bibitem[Valiant, 1984]{valiant1984theory}
Valiant, L.~G. (1984).
\newblock A theory of the learnable.
\newblock {\em Communications of the ACM}, 27(11):1134--1142.

\bibitem[Vapnik et~al., 1994]{vapnik1994measuring}
Vapnik, V., Levin, E., and Le~Cun, Y. (1994).
\newblock Measuring the {VC}-dimension of a learning machine.
\newblock {\em Neural computation}, 6(5):851--876.

\bibitem[Wainwright, 2019]{wainwright2019high}
Wainwright, M.~J. (2019).
\newblock {\em High-dimensional statistics: A non-asymptotic viewpoint},
  volume~48.
\newblock Cambridge university press.

\bibitem[Wang et~al., 2023]{wang2023distributionally}
Wang, Z., B{\"u}hlmann, P., and Guo, Z. (2023).
\newblock Distributionally robust machine learning with multi-source data.
\newblock {\em arXiv preprint arXiv:2309.02211}.

\bibitem[Xiong et~al., 2023]{xiong2023distributionally}
Xiong, X., Guo, Z., and Cai, T. (2023).
\newblock Distributionally robust transfer learning.
\newblock {\em arXiv preprint arXiv:2309.06534}.

\bibitem[Zhang et~al., 2020]{zhang2020coping}
Zhang, J., Menon, A.~K., Veit, A., Bhojanapalli, S., Kumar, S., and Sra, S.
  (2020).
\newblock Coping with label shift via distributionally robust optimisation.
\newblock In {\em International Conference on Learning Representations}.

\bibitem[Zhang et~al., 2022]{zhang2022horizon}
Zhang, Z., Ji, X., and Du, S. (2022).
\newblock Horizon-free reinforcement learning in polynomial time: the power of
  stationary policies.
\newblock In {\em Conference on Learning Theory (COLT)}, pages 3858--3904.

\bibitem[Zhao et~al., 2020]{zhao2020multi}
Zhao, S., Li, B., Xu, P., and Keutzer, K. (2020).
\newblock Multi-source domain adaptation in the deep learning era: A systematic
  survey.
\newblock {\em arXiv preprint arXiv:2002.12169}.

\end{thebibliography}
\bibliographystyle{apalike}


\end{document}